\def\A{{\bf A}}
\def\Amp{{\A^{\dag}}}
\def\a{{\bf a}}
\def\B{{\bf B}}
\def\bb{{\bf b}}
\def\C{{\bf C}}
\def\Cmp{{\C^{\dag}}}
\def\D{{\bf D}}
\def\d{{\bf d}}
\def\F{{\bf F}}
\def\I{{\bf I}}
\def\M{{\bf M}}
\def\N{{\bf N}}
\def\Q{{\bf Q}}
\def\R{{\bf R}}
\def\S{{\bf S}}
\def\s{{\bf s}}
\def\T{{\bf T}}
\def\U{{\bf U}}
\def\u{{\bf u}}
\def\V{{\bf V}}
\def\v{{\bf v}}
\def\W{{\bf W}}
\def\w{{\bf w}}
\def\X{{\bf X}}
\def\x{{\bf x}}
\def\Y{{\bf Y}}
\def\0{{\bf 0}}
\def\1{{\bf 1}}
\def\UA{{{\U}_{\A}}}
\def\UAk{{{\U}_{\A, k}}}
\def\UAkp{{{\U}_{\A, k \perp}}}
\def\VA{{{\V}_{\A}}}
\def\VAk{{{\V}_{\A, k}}}
\def\SiA{{{\Si}_{\A}}}
\def\SiAk{{{\Si}_{\A, k}}}
\def\SiAkp{{{\Si}_{\A, k \perp}}}
\def\AM{{\mathcal A}}
\def\OM{{\mathcal O}}
\def\PM{{\mathcal P}}
\def\UM{{\mathcal U}}
\def\VM{{\mathcal V}}
\def\RB{{\mathbb R}}
\def\RBmn{{\RB^{m\times n}}}
\def\EB{{\mathbb E}}
\def\PB{{\mathbb P}}
\def\Pii{\mbox{\boldmath$\Pi$\unboldmath}}
\def\Si{\mbox{\boldmath$\Sigma$\unboldmath}}
\def\Lam{\mbox{\boldmath$\Lambda$\unboldmath}}
\def\spann{\mathrm{span}}
\def\tr{\mathrm{tr}}
\def\rk{\mathrm{rank}}
\def\Diag{\mathsf{Diag}}
\def\blkdiag{\mathsf{BlkDiag}}
\def\CUR{{CUR} }
\def\nystrom{{Nystr\"{o}m} }
\def\ALG{{\AM_{\textrm{col}}}}
\def\TimeMulti{{T_{\mathrm{Multiply}}}}
\begin{document}
\title{Improving CUR Matrix Decomposition and the Nystr\"{o}m Approximation via Adaptive Sampling}

\author{\name  Shusen Wang  \email wss@zju.edu.cn \\
        \addr College of Computer Science and Technology \\
               Zhejiang University \\
               Hangzhou, Zhejiang 310027, China \\
        \AND
        \name Zhihua Zhang\thanks{Corresponding author.} \email zhihua@sjtu.edu.cn \\
        \addr Department of Computer Science and Engineering \\
                Shanghai Jiao Tong University \\
                800 Dong Chuan Road, Shanghai, China 200240
        }

\editor{Mehryar Mohri}

\maketitle

\begin{abstract}%
The CUR matrix decomposition and the \nystrom approximation are two important low-rank matrix approximation techniques.
The \nystrom method approximates a symmetric positive semidefinite matrix in terms of a small number of its columns,
while CUR approximates an arbitrary data matrix by a small number of its columns and rows. Thus, CUR decomposition
can be regarded as an extension of the \nystrom approximation.

In this paper we establish a more general error bound for the adaptive column/row sampling algorithm,
based on which we propose more accurate CUR and \nystrom algorithms with expected relative-error bounds.
The proposed CUR and \nystrom algorithms also have low time complexity and can avoid maintaining the whole data matrix in RAM.
In addition, we  give theoretical analysis for the lower error bounds of the standard \nystrom method and the ensemble \nystrom method.
The main theoretical results established in this paper are novel,
and our analysis makes no special assumption on the data matrices.
\end{abstract}

\begin{keywords}
large-scale matrix computation, CUR matrix decomposition, the \nystrom method, randomized algorithms, adaptive sampling
\end{keywords}

\section{Introduction}

Large-scale  matrices emerging from stocks, genomes, web documents, web images and videos
everyday bring new challenges in modern data analysis.
Most efforts have been focused on manipulating, understanding and interpreting large-scale data matrices.
In many cases, matrix factorization methods are employed for constructing parsimonious and
informative representations to facilitate computation and interpretation.
A principled approach is  the truncated singular value decomposition (SVD)
which finds the best low-rank approximation of a  data matrix.
Applications of SVD such as eigenfaces \citep{Sirovich87eigenface,turk1991eigenface} and
latent semantic analysis \citep{deerwester1990lsa} have been illustrated to be very successful.

However, using SVD to find basis vectors and low-rank approximations has its limitations.
As pointed out by \citet{berry2005algorithm},
it is often useful to find a low-rank matrix approximation which posses additional structures such as sparsity or nonnegativity.
Since SVD or the standard QR decomposition for sparse matrices does not preserve sparsity in general,
when the sparse matrix is large, computing or even storing such decompositions becomes challenging.
Therefore it is useful to compute a low-rank matrix decomposition which preserves such structural properties of the original data matrix.

Another limitation of SVD is that the  basis vectors resulting from SVD have little concrete meaning, which
makes it very difficult for us to understand and interpret the data in question.
An example of \citet{drineas2008cur} and \citet{mahoney2009matrix} has well shown this viewpoint; that is,
the vector $[(1/2)\textrm{age} - (1/\sqrt{2})\textrm{height} + (1/2)\textrm{income}]$,
the sum of the significant uncorrelated features from a data set of people's features,
is not particularly informative.
\citet{kuruvilla2002vector} have also claimed:
``it would be interesting to try to find basis vectors for all experiment vectors,
using actual experiment vectors and not artificial bases that offer little insight.''
Therefore, it is of great interest to represent a data matrix in terms of
a small number of actual columns and/or actual rows of the matrix.
{\it Matrix column selection} and the {\it  \CUR matrix decomposition}  provide such techniques.

\subsection{Matrix Column Selection} \label{sec:connection}

Column selection  has been extensively studied in the theoretical computer science (TCS)
and numerical linear algebra (NLA) communities.
The work in TCS mainly focuses on choosing good columns by randomized algorithms with provable error bounds
\citep{frieze2004fast,deshpande2006matrix,drineas2008cur,deshpande2010efficient,boutsidis2011NOC,Guruswami2012optimal}.
The focus in NLA is then on deterministic algorithms, especially the rank-revealing QR factorizations, that select columns by pivoting rules
\citep{foster1986rank,chan1987rank,stewart1999four,bischof1991structure,hong1992rank,chandrasekaran1994rank,gu1996efficient,berry2005algorithm}.
In this paper we focus on randomized algorithms for column selection.

Given a matrix $\A \in \RBmn$,
column selection algorithms aim to choose $c$
columns of $\A$ to construct a matrix $\C \in \RB^{m\times c}$
such that $\|\A - \C \Cmp \A \|_\xi$
achieves the minimum. Here ``$\xi=2$," ``$\xi=F$,"  and ``$\xi=*$'' respectively represent the matrix spectral norm, the matrix
Frobenius norm, and the matrix nuclear norm, and $\Cmp$ denotes the Moore-Penrose inverse of $\C$.
Since there are $(^n_c)$ possible choices of constructing $\C$,
selecting the best subset is a hard problem.

In recent years,
many polynomial-time approximate algorithms have been proposed.
Among them we are especially interested in those algorithms with {\it multiplicative upper bounds};
that is, there exists a polynomial function $f(m,n,k,c)$ such that
with $c$ $(\geq k)$ columns selected from $\A$ the following inequality holds
\[
\|\A - \C \Cmp \A \|_\xi \;\leq\; f(m,n,k,c) \, \| \A - \A_k \|_\xi
\]
with high probability (w.h.p.)\ or in expectation w.r.t.\ $\C$.
We call $f$ the {\it approximation factor}.
The bounds are strong when $f = 1+\epsilon$ for an error parameter $\epsilon$---they are known as {\it relative-error bounds}.
Particularly, the bounds are called {\it constant-factor bounds} when $f$ does not depend on $m$ and $n$ \citep{mahoney2011ramdomized}.
The relative-error bounds and constant-factor bounds of the CUR matrix decomposition and the \nystrom approximation are similarly defined.

However, the column selection method, also known as the $\A \approx \C \X$ decomposition in some applications, has its limitations.
For a large sparse matrix $\A$, its submatrix $\C$ is sparse,
but the coefficient matrix $\X \in \RB^{c\times n}$ is not sparse in general.
The $\C\X$ decomposition suffices when $m \gg n$, because  $\X$ is small in size.
However, when $m$ and $n$ are near equal, computing and storing the dense matrix $\X$ in RAM becomes infeasible.
In such an occasion the CUR matrix decomposition is a very useful alternative.

\subsection{The CUR Matrix Decomposition} \label{sec:introduction:cur}

The CUR matrix decomposition problem has been widely discussed in the literature
\citep{goreinov1997pseudoskeleton,goreinov1997maximalvolume,stewart1999four,tyrtyshnikov2000incompletecross,
berry2005algorithm,drineas2005nystrom,mahoney2008tensor,bien2010cur},
and it has been shown to be very useful in high dimensional data analysis.
Particularly,
a \CUR decomposition algorithm seeks to find a subset of $c$ columns of $\A$ to form a matrix $\C \in \RB^{m{\times} c}$,
a subset of $r$ rows to form a matrix $\R \in \RB^{r{\times} n}$, and an intersection matrix $\U \in \RB^{c{\times}r}$ such that $\|\A - \C \U \R\|_\xi$  is small.
Accordingly, we use $\tilde{\A} = \C \U \R$ to approximate $\A$.

\citet{drineas04fastmonte} proposed a \CUR algorithm with additive-error bound.
Later on, \citet{drineas2008cur} devised a randomized \CUR algorithm
which has relative-error bound w.h.p.\
if sufficiently many columns and rows are sampled.
\citet{mackey2011divide} established a divide-and-conquer method which solves the \CUR problem in parallel.
The \CUR algorithms guaranteed by relative-error bounds are of great interest.

Unfortunately, the existing \CUR algorithms usually require a large number of columns and rows to be chosen.
For example, for an $m{\times} n$ matrix $\A$ and a target rank $k \ll \min\{m,n\}$,
{\it the subspace sampling algorithm} \citep{drineas2008cur}---a classical \CUR algorithm---requires
$\OM(k \epsilon^{-2} \log k)$ columns and $\OM(k \epsilon^{-4} \log^2 k)$ rows to achieve relative-error bound w.h.p.
The subspace sampling algorithm selects columns/rows according to the statistical leverage scores,
so the computational cost of this algorithm is at least
equal to the cost of the truncated SVD of $\A$, that is, $\OM(m n k)$ in general.
However, maintaining a large scale matrix in RAM is often impractical,
not to mention performing SVD.
Recently, \cite{drineas2012fast} devised fast approximation to statistical leverage scores
which can be used to speedup the subspace sampling algorithm heuristically---yet no theoretical results have been reported that
the leverage scores approximation can give provably efficient subspace sampling algorithm.

The \CUR  matrix decomposition problem has a close connection with the column selection problem.
Especially, most \CUR algorithms such as those of \citet{drineas2003pass,drineas04fastmonte,drineas2008cur}
work in a two-stage manner where the first stage is a standard column selection procedure.
Despite their strong resemblance, CUR is a harder problem than column selection
because  ``one can get good columns or rows separately" does not mean that one can get good columns and rows together.
If the second stage is na\"ively solved by a column selection algorithm on $\A^T$,
then the approximation factor will trivially be $\sqrt{2}f$\footnote{It is because $\| \A - \C \U \R\|_F^2
= \|\A - \C \C^\dag \A + \C \C^\dag \A - \C \C^\dag \A \R^\dag \R \|_F^2
= \|(\I - \C \C^\dag) \A  \|_F^2 + \|\C \C^\dag (\A - \A \R^\dag \R) \|_F^2
\leq \|\A - \C \C^\dag \A  \|_F^2 + \|\A - \A \R^\dag \R \|_F^2
\leq 2 f^2 \| \A - \A_k \|_F^2$,
where the second equality follows from $(\I-\C\C^\dag)^T \C\C^\dag=0$.} \citep{mahoney2009matrix}.
Thus, more sophisticated error analysis techniques for the second stage
are indispensable in order to achieve relative-error bound.

\subsection{The \nystrom Methods} \label{sec:nystrom}

The \nystrom approximation is closely related to CUR,
and it can potentially benefit from the advances in CUR techniques.
Different from CUR, the \nystrom methods are used for approximating symmetric positive semidefinite (SPSD)\ matrices.
The  methods approximate an SPSD matrix only using a subset of its columns,
so they can alleviate computation and storage costs when the SPSD matrix in question is large in size.
In fact, the \nystrom methods have been extensively used in the machine learning community.
For example, they have been applied to Gaussian processes \citep{williams2001using},
kernel SVMs \citep{zhang2008improved}, spectral clustering \citep{fowlkes2004spectral},
kernel PCA \citep{talwalkar2008large,zhang2008improved,zhang2010clustered}, etc.

The \nystrom methods approximate any SPSD matrix in terms of a subset of its columns.
Specifically, given an $m{\times} m$ SPSD matrix ${\A}$, they require sampling
$c$ ($<m$) columns of ${\A}$ to construct an $m\times c$ matrix ${\C}$. Since
there exists an $m{\times} m$ permutation matrix $\Pii$ such that $\Pii{\C}$
consists of the first $c$ columns of $\Pii {\A} \Pii^T$, we always assume that ${\C}$
consists of the first $c$ columns of $\A$ without loss of generality. We
partition $\A$ and $\C$ as
\begin{equation} 
\A \;=\;  \begin{bmatrix}
             \W & \A_{2 1}^T \\
             \A_{2 1} & \A_{2 2}
           \end{bmatrix}
          \quad \textrm{ and }
\quad
\C \;=\; \begin{bmatrix}
             \W  \\
             \A_{2 1}
           \end{bmatrix}\textrm{,} \nonumber
\end{equation}
where $\W$ and $\A_{2 1}$ are of sizes $c\times c$ and $(m{-}c) \times c$, respectively.
There are three models which are defined as follows.
\begin{itemize}
\item   {\bf The Standard \nystrom Method}.
    The standard \nystrom approximation to $\A$ is
    \begin{equation} \label{eq:nystrom_approx}
    \tilde{\A}_c^{\textrm{nys}} \; = \; \C \W^\dag \C^T
    \; = \; \left[
        \begin{array}{cc}
          \W & \A_{2 1}^T \\
          \A_{2 1} & \A_{2 1} \W^\dag \A_{2 1}^T \\
        \end{array}
      \right]
      \textrm{.}
    \end{equation}
    Here $\W^\dag$ is  called the {\it intersection matrix}.
    The matrix $(\W_k)^\dag$, where $k\leq c$ and $\W_k$ is the best $k$-rank approximation to $\W$, is also used as an intersection matrix for constructing
    approximations with even lower rank.
    But using $\W^\dag$ results in a tighter approximation than using $(\W_k)^\dag$ usually.
\item   {\bf The Ensemble \nystrom Method} \citep{kumar2009ensemble}.
    It selects a collection of $t$ samples,
    each sample ${\C^{(i)}}$, ($i=1, \cdots, t$), containing $c$ columns of $\A$.
    Then the ensemble method combines the samples to construct an approximation in the form of
    \begin{equation} \label{eq:ensemble_nystrom_approx}
    \tilde{\A}_{t,c}^{\textrm{ens}}
    \; = \; \sum_{i=1}^t \mu^{(i)} {\C^{(i)}} {\W^{(i)}}^\dag {\C^{(i)}}^T \textrm{,}
    \end{equation}
    where $\mu^{(i)}$ are the weights of the samples.
    Typically, the ensemble \nystrom method seeks to find out the weights by minimizing
    $\|\A - \tilde{\A}_{t,c}^{\textrm{ens}} \|_F$ or $\|\A - \tilde{\A}_{t,c}^{\textrm{ens}} \|_2$.
    A simple but effective strategy is to set the weights as $\mu^{(1)}=\cdots=\mu^{(t)}=\frac{1}{t}$.
\item   {\bf The Modified \nystrom Method} (proposed in this paper).
    It is defined as
    \[
        \tilde{\A}_c^{\textrm{mod}} \;=\; \C \big( \C^\dag \A (\C^\dag)^T \big) \C^T .
    \]
    This model is not strictly the \nystrom method because it uses a quite different intersection matrix $\C^\dag \A (\C^\dag)^T$.
    It costs $\OM(m c^2)$ time to compute the Moore-Penrose inverse $\C^\dag$ and $m^2 c$ flops to compute matrix multiplications.
    The matrix multiplications can be executed very efficiently in multi-processor environment,
    so ideally computing the intersection matrix costs time only linear in $m$.
    This model is more accurate (which will be justified in Section~\ref{sec:improved_nystrom} and \ref{sec:lower_bounds})
    but more costly than the conventional ones, so there is a trade-off between time and accuracy when deciding which model to use.
\end{itemize}
Here and later, we call those which use intersection matrix $\W^\dag$ or $(\W_k)^\dag$ {\it the conventional \nystrom methods},
including the standard \nystrom and the ensemble Nystr\"om.

To generate effective approximations,
much work has been built on the upper error bounds of the sampling techniques for the \nystrom method.
Most of the work, for example, \citet{drineas2005nystrom}, \citet{li2010making}, \citet{kumar2009ensemble}, \citet{jin2011improved}, and \citet{kumar2012sampling},
studied the additive-error bound.
With assumptions on matrix coherence,
better additive-error bounds were obtained by \cite{talwalkar2010matrix}, \cite{jin2011improved}, and \cite{mackey2011divide}.
However, as stated by \citet{mahoney2011ramdomized},
additive-error bounds are less compelling than relative-error bounds.
In one recent work, \cite{gittens2013revisiting} provided a relative-error bound for the first time, where the bound is in nuclear norm.

However, the error bounds of the previous \nystrom methods are much weaker than those of the existing CUR algorithms,
especially the relative-error bounds in which we are more interested \citep{mahoney2011ramdomized}.
Actually, as will be proved in this paper, the lower error bounds of the standard \nystrom method and the ensemble \nystrom method
are even much worse than the upper bounds of some existing CUR algorithms.
This  motivates us to improve the \nystrom method by borrowing the techniques in CUR matrix decomposition.

\subsection{Contributions and Outline} \label{sec:summary_results}

The main technical contribution of this work is the adaptive sampling bound in Theorem~\ref{thm:adaptive_bound},
which is an extension of Theorem~2.1 of \citet{deshpande2006matrix}.
Theorem~2.1 of \citet{deshpande2006matrix} bounds the error incurred by projection onto column or row space,
while our Theorem~\ref{thm:adaptive_bound} bounds the error incurred by the projection simultaneously onto column space and row space.
We also show that Theorem~2.1 of \citet{deshpande2006matrix} can be regarded as a special case of Theorem~\ref{thm:adaptive_bound}.

More importantly, our adaptive sampling bound provides an approach for improving CUR and the \nystrom approximation:
no matter which relative-error column selection algorithm is employed,
Theorem~\ref{thm:adaptive_bound} ensures relative-error bounds for CUR and the \nystrom approximation.
We present the results in Corollary~\ref{cor:adaptive_improved}.

Based on the adaptive sampling bound in Theorem~\ref{thm:adaptive_bound} and its corollary~\ref{cor:adaptive_improved},
we provide a concrete \CUR algorithm which beats the best existing algorithm---the subspace sampling algorithm---both theoretically
and empirically.
The CUR algorithm is described in Algorithm~\ref{alg:fast_cur} and analyzed in Theorem~\ref{cor:fast_cur}.
In Table~\ref{tab:comparison} we present a comparison between our proposed CUR algorithm and the subspace sampling algorithm.
As we see, our algorithm requires much fewer columns and rows to achieve relative-error bound.
Our method is more scalable for it works on only a few columns or rows of the data matrix in question;
in contrast, the subspace sampling algorithm maintains the whole data matrix in RAM to implement SVD.

\begin{table}[t]\setlength{\tabcolsep}{0.3pt}
\begin{center}
\begin{footnotesize}
\begin{tabular}{c c c c c }
\hline
      	         &      ~~{\bf \#column ($c$)}~~   &	  ~~{\bf \#row ($r$)}~~   &		{\bf time}   &       {\bf space}	\\
\hline
Adaptive    &$\frac{2 k}{\epsilon}\big(1+o(1)\big)$&$\frac{c}{\epsilon}\big(1+\epsilon \big)$
                    & Roughly $\OM\big( n k^2 \epsilon^{-4} \big) + \TimeMulti\big(m n k \epsilon^{-1}\big)$~~  & $\OM\big(\max\{m c , n r\}\big) $   \\
Subspace    &$\OM\Big(\frac{k\log k}{\epsilon^2}\Big)$ & $\OM\Big(\frac{c\log c}{\epsilon^2}\Big)$
                    &  $\OM\big( m n k \big)$   &   $\OM(m n)$    \\
\hline
\end{tabular}
\end{footnotesize}
\end{center}
\caption{Comparisons between our {\it adaptive sampling} based CUR algorithm and
the best existing algorithm---the {\it subspace sampling} algorithm of \cite{drineas2008cur}.}
\label{tab:comparison}
\end{table}

Another important application of the adaptive sampling bound
is to yield an algorithm for the modified \nystrom method.
The algorithm has a strong relative-error upper bound: for a target rank $k$,
by sampling $\frac{2k}{\epsilon^2} \big(1 + o(1)\big)$ columns it achieves relative-error bound in expectation.
The results are shown in Theorem~\ref{thm:nystrom_bound}.

Finally, we establish a collection of lower error bounds of the standard \nystrom and the ensemble \nystrom that use $\W^\dag$ as the intersection matrix.
We show the lower bounds in Theorem~\ref{thm:lower_bounds_nystrom} and Table~\ref{tab:nystrom_lower_bound_detail};
here Table~\ref{tab:nystrom_lower_bound} briefly summarizes the lower bounds in Table~\ref{tab:nystrom_lower_bound_detail}.
From the table we can see that the upper error bound of our adaptive sampling algorithm for the modified \nystrom method is
even better than the lower bounds of the conventional \nystrom methods.\footnote{This can be valid because the lower bounds
in Table~\ref{tab:nystrom_lower_bound} do not hold when the intersection matrix is not $\W^\dag$.}

\begin{table}
\begin{center}
\begin{tabular}{c| c c c c c c}
\hline
    &$\frac{\|\A-\tilde{\A}\|_F}{\max_{i,j} |a_{i j}|}$& $\frac{\|\A-\tilde{\A}\|_2}{\max_{i,j} |a_{i j}|}$ &
     $\frac{\|\A-\tilde{\A}\|_*}{\max_{i,j} |a_{i j}|}$
        &$\frac{\|\A-\tilde{\A}\|_F}{\|\A-\A_k\|_F}$    & $\frac{\|\A-\tilde{\A}\|_2}{\|\A-\A_k\|_2}$
        & $\frac{\|\A-\tilde{\A}\|_*}{\|\A-\A_k\|_*}$ \\
\hline
Standard     &$\Omega\big( \frac{m\sqrt{k}}{c} \big)$
                    & $\Omega\big( \frac{{m}}{c} \big)$
                    & $\Omega \big( m-c \big)$
                    & $\Omega\Big(\sqrt{1+\frac{m k}{c^2}} \Big)$
                    & $\Omega\big( \frac{m}{c} \big)$
                    & $\Omega \big(1 + \frac{k}{c} \big) $ \\
Ensemble &$\Omega\big( \frac{{m\sqrt{k}}}{c} \big)$
                    & --
                    & $\Omega \big( m-c \big)$
                    &  $\Omega\Big(\sqrt{1+\frac{m k}{c^2}} \Big)$
                    & --
                    & $\Omega \big( 1 + \frac{k}{c} \big) $ \\
\hline
\end{tabular}
\end{center}
\caption{Lower bounds of the standard \nystrom method and the ensemble \nystrom method.
        The blanks indicate the lower bounds are unknown to us.
        Here $m$ denotes the column/row number of the SPSD matrix, $c$ denotes the number of selected columns, and $k$ denotes the target rank.}
        \label{tab:nystrom_lower_bound}
\end{table}

The remainder of the paper is organized as follows.
In Section~\ref{sec:notation} we give the notation  that will be used in this paper.
In Section~\ref{sec:related_work} we survey the previous work on
the randomized column selection, CUR matrix decomposition, and \nystrom approximation.
In Section~\ref{sec:fast_cur} we present our theoretical results and corresponding algorithms.
In Section~\ref{sec:experiments} we empirically evaluate our proposed CUR and \nystrom algorithms. Finally, we conclude
our work in Section~\ref{sec:concl}.
All proofs are deferred to the appendices.

\section{Notation}  \label{sec:notation}

First of all, we present the notation and notion that are used here and later.
We let $\I_m$ denote the $m\times m$ identity matrix,
$\1_m$ denote the $m{\times}1$  vector of ones,
and $\0$ denote a zero vector or matrix with appropriate size.
For a matrix $\A=[a_{ij}] \in \RB^{m\times n}$, we
let $\a^{(i)}$ be its $i$-th row, $\a_j$ be its $j$-th column,
and $\A_{i:j}$ be a submatrix consisting of its $i$ to $j$-th columns ($i\leq j$).

Let $\rho = \rk(\A) \leq \min\{m, n\}$ and $k \leq \rho$. The singular value decomposition (SVD) of $\A$ can be written as
\[
\A = \sum_{i=1}^{\rho} \sigma_{\A,i} \u_{\A,i} \v^T_{\A,i} = \UA \SiA \V_{\A}^T
= \left[
    \begin{array}{cc}
      \UAk & \UAkp \\
    \end{array}
  \right]
  \left[
    \begin{array}{cc}
      \SiAk & \0 \\
      \0 & \SiAkp \\
    \end{array}
  \right]
  \left[
    \begin{array}{c}
      \V_{\A, k}^T \\
      \V_{\A, k\perp}^T \\
    \end{array}
  \right]
  \textrm{,}
\]
where $\UAk$ ($m{\times}k$), $\SiAk$ ($k{\times}k$), and $\VAk$ ($n{\times}k$) correspond to the top $k$ singular values.
We denote $\A_k = \UAk \SiAk \V_{\A, k}^T$ which is the best (or closest) rank-$k$ approximation to $\A$.
We also use $\sigma_i (\A) = \sigma_{\A,i}$ to denote the $i$-th largest singular value.
When $\A$ is SPSD, the SVD is identical to the eigenvalue decomposition, in which case
we have $\UA = \VA$.

We define the matrix norms as follows.
Let $\|\A\|_1 = \sum_{i,j} |a_{i j}|$ be the $\ell_1$-norm,
$\|\A\|_F= (\sum_{i, j} a_{ij}^2)^{1/2}= (\sum_{i} \sigma^2_{\A,i})^{1/2}$ be the Frobenius norm,
$\|\A\|_{2} = \max_{\x \in \RB^{n},  \|\x\|_2=1} \|\A \x \|_2 = \sigma_{\A,1}$ be the spectral norm,
and $\|\A\|_* = \sum_i \sigma_{\A,i}$ be the nuclear norm.
We always use $\|\cdot\|_{\xi}$ to represent  $\|\cdot\|_{2}$, $\|\cdot\|_{F}$, or $\|\cdot\|_{*}$.

Based on SVD, the {\it statistical leverage scores} of the columns of $\A$ relative to the best rank-$k$ approximation to $\A$
is defined as
\begin{equation} \label{eq:leverage_scores}
\ell_j^{[k]} = \big\| \v_{\A,k}^{(j)} \big\|_2^2 \textrm{, } \quad j = 1 , \cdots , n \textrm{.}
\end{equation}
We have that $\sum_{j=1}^n \ell_j^{[k]} = k$.
The leverage scores of the rows of $\A$ are defined according to $\U_{\A,k}$.
The leverage scores play an important role in low-rank matrix approximation.
Informally speaking, the columns (or rows) with high leverage scores have greater influence in rank-$k$ approximation
than those with low leverage scores.

Additionally, let $\Amp = \V_{\A,\rho} \Si_{\A,\rho}^{-1} \U_{\A,\rho}^T$ be the Moore-Penrose inverse of $\A$ \citep{adi2003inverse}.
When $\A$ is nonsingular, the Moore-Penrose inverse is identical to the matrix inverse.
Given matrices $\A \in \RBmn$, $\X \in \RB^{m\times p}$, and $\Y \in \RB^{q\times n}$,
$\X \X^\dag \A = \U_\X \U_{\X}^T \A \in \RBmn$ is the projection of $\A$ onto the column space of $\X$,
and  $\A \Y^\dag \Y = \A \V_\Y \V_{\Y}^T \in \RBmn$ is the projection of $\A$ onto the row space of $\Y$.

Finally, we discuss the computational costs of the matrix operations mentioned above.
For an $m{\times} n$ general matrix $\A$ (assume $m \geq n$),
it takes $\OM(m n^2)$ flops to compute the full SVD
and $\OM(m n k)$ flops to compute the truncated SVD of rank $k$ ($< n$).
The computation of  $\A^\dag$ also takes $\OM(m n^2)$ flops.
It is worth mentioning that, although multiplying an $m{\times} n$ matrix by an $n{\times} p$ matrix runs in $m n p$ flops,
it can be easily performed in parallel \citep{halko2011ramdom}.
In contrast, implementing operations like SVD and QR decomposition in parallel is much more difficult.
So we denote the time complexity of such a matrix multiplication by $\TimeMulti(m n p)$,
which can be tremendously smaller than $\OM(m n p)$ in practice.

\section{Previous Work} \label{sec:related_work}

In Section~\ref{sec:adaptive_sampling_related} we present an adaptive sampling algorithm and
its relative-error bound established by \cite{deshpande2006matrix}.
In Section~\ref{sec:col_selection} we highlight the near-optimal column selection algorithm of \citet{boutsidis2011NOC}
which we will use in our CUR and \nystrom algorithms for column/row sampling.
In Section~\ref{sec:previous_work_cur} we introduce two important \CUR algorithms.
In Section~\ref{sec:previous_work_nystrom} we introduce the only known relative-error algorithm for the standard \nystrom method.

\subsection{The Adaptive Sampling Algorithm} \label{sec:adaptive_sampling_related}

Adaptive sampling is an effective and efficient column sampling algorithm for reducing the error incurred by the first round of sampling.
After one has selected a small subset of columns (denoted $\C_1$),
an adaptive sampling method is used to further select a proportion of columns according to the residual of the first round,
that is, $\A - \C_1 \C_1^\dag \A$.
The approximation error is guaranteed to be decreasing by a factor after the adaptive sampling \citep{deshpande2006matrix}.
We show the result of \citet{deshpande2006matrix} in the following lemma.

\begin{lemma}  [The Adaptive Sampling Algorithm] \emph{\citep{deshpande2006matrix}} \;
\label{lem:ada_sampling}
Given a matrix $\A \in \RBmn$, we let $\C_1\in \RB^{m\times c_1}$ consist of $c_1$ columns of $\A$,
and define the residual $\B = \A - \C_1 \C_1^\dag \A$. Additionally,
for $i = 1,\cdots, n$, we define
\[
p_i \;=\; \|\bb_i\|_2^2 / \|\B\|_F^2.
\]
We further  sample  $c_2$ columns  i.i.d.\ from $\A$,
 in each trial of which the $i$-th column is chosen with probability $p_i$.
Let $\C_2 \in \RB^{m\times c_2}$ contain the $c_2$ sampled columns
and let $\C = [\C_1,\C_2] \in \RB^{m\times (c_1+c_2)}$.
Then, for any integer $k > 0$, the following inequality holds:
\[
\EB \|\A - \C \Cmp \A\|_F^2 \; \leq \; \|\A - \A_k \|_F^2 + \frac{k}{c_2} \|\A - \C_1 \C_1^\dag \A \|_F^2,
\]
where the expectation is taken w.r.t.\ $\C_2$.
\end{lemma}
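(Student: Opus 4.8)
The plan is to follow the volume/adaptive-sampling argument of \citet{deshpande2006matrix}: bound $\EB\|\A-\C\Cmp\A\|_F^2$ by exhibiting an \emph{explicit} random matrix $\hat\A$ whose columns all lie in the column space of $\C$ and whose expected squared error already meets the right-hand side. This reduction is legitimate because $\C\Cmp\A$ is the orthogonal projection of the columns of $\A$ onto $\spann(\C)$, so $\|\A-\C\Cmp\A\|_F\le\|\A-\hat\A\|_F$ holds for \emph{every} realization of $\hat\A$ as long as each column of $\hat\A$ lies in $\spann(\C)$; taking expectations gives $\EB\|\A-\C\Cmp\A\|_F^2\le\EB\|\A-\hat\A\|_F^2$. (The cases $\B=\0$ or $\bb_i=\0$ for some $i$ are trivial or harmless and I would dispose of them at the outset.)

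The construction I would use is $\hat\A=\C_1\C_1^\dag\A+\Z$, where $\Z$ is an unbiased Monte-Carlo estimate, built from the sampled columns, of the rank-$k$ truncation $\B\VAk\VAk^T$ of the residual $\B$ along the dominant row space of $\A$. Writing $j_1,\dots,j_{c_2}$ for the indices sampled in the statement, set
\[
\Z \;=\; \Big(\frac{1}{c_2}\sum_{\ell=1}^{c_2}\frac{1}{p_{j_\ell}}\,\bb_{j_\ell}\,\v_{\A,k}^{(j_\ell)}\Big)\VAk^T ,
\]
where $\v_{\A,k}^{(j)}\in\RB^{1\times k}$ is the $j$-th row of $\VAk$. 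Two facts drive the proof. First, each column of $\frac1{c_2}\sum_\ell p_{j_\ell}^{-1}\bb_{j_\ell}\v_{\A,k}^{(j_\ell)}$ is a linear combination of the sampled residual columns $\bb_{j_\ell}=(\I-\C_1\C_1^\dag)\a_{j_\ell}$, and right-multiplication by $\VAk^T$ preserves this; since $\a_{j_\ell}$ is a column of $\C_2$ and $\C_1\C_1^\dag\a_{j_\ell}\in\spann(\C_1)$, each $\bb_{j_\ell}\in\spann(\C)$, hence so do all columns of $\hat\A$. Second, each summand $p_{j_\ell}^{-1}\bb_{j_\ell}\v_{\A,k}^{(j_\ell)}$ has expectation $\sum_i\bb_i\v_{\A,k}^{(i)}=\B\VAk$, so $\EB\Z=\B\VAk\VAk^T$.

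For the error I would write $\A-\hat\A=\B-\Z=\B(\I-\VAk\VAk^T)+(\EB\Z-\Z)$. The first term has row space inside $\spann\{\v_{\A,k+1},\dots\}$ and the second has row space inside $\spann\{\v_{\A,1},\dots,\v_{\A,k}\}$, so the two matrices are Frobenius-orthogonal for every realization; hence $\|\A-\hat\A\|_F^2=\|\B(\I-\VAk\VAk^T)\|_F^2+\|\EB\Z-\Z\|_F^2$, and I would bound the pieces separately. Since $\I-\C_1\C_1^\dag$ is an orthogonal projector, the bias piece obeys $\|\B(\I-\VAk\VAk^T)\|_F=\|(\I-\C_1\C_1^\dag)\A(\I-\VAk\VAk^T)\|_F\le\|\A(\I-\VAk\VAk^T)\|_F=\|\A-\A_k\|_F$. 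For the variance piece, orthonormality of the columns of $\VAk$ gives $\|\EB\Z-\Z\|_F^2=\|\B\VAk-\frac1{c_2}\sum_\ell p_{j_\ell}^{-1}\bb_{j_\ell}\v_{\A,k}^{(j_\ell)}\|_F^2$, the squared error of an average of $c_2$ i.i.d.\ mean-zero terms, so $\EB\|\EB\Z-\Z\|_F^2\le\frac1{c_2}\EB\|p_j^{-1}\bb_j\v_{\A,k}^{(j)}\|_F^2=\frac1{c_2}\sum_i\|\bb_i\|_2^2\|\v_{\A,k}^{(i)}\|_2^2/p_i$; substituting $p_i=\|\bb_i\|_2^2/\|\B\|_F^2$ collapses this to $\frac1{c_2}\|\B\|_F^2\sum_i\|\v_{\A,k}^{(i)}\|_2^2=\frac{k}{c_2}\|\B\|_F^2$ because $\sum_i\|\v_{\A,k}^{(i)}\|_2^2=\|\VAk\|_F^2=k$. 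Adding the two bounds and recalling $\B=\A-\C_1\C_1^\dag\A$ yields the claim.

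The main thing to get right is the construction, not any single estimate. The target of the Monte-Carlo estimate must be chosen so that (i) it is recoverable from the \emph{sampled} columns of $\A$ alone — which is why one estimates $\B\VAk\VAk^T$ column by column rather than simply projecting $\B$ — and (ii) the residual $\B-\Z$ splits \emph{orthogonally} into a bias part and an independent-sum variance part, so the errors add with no cross term; picking the dominant right-singular subspace $\spann(\VAk)$ of $\A$ (and not, say, of $\B$) is exactly what makes the bias part collapse to $\|\A-\A_k\|_F^2$. Given this, the sampling rule $p_i\propto\|\bb_i\|_2^2$ is forced: it is precisely what turns the per-sample second moment into $\|\B\|_F^2$ times a leverage score, so the variance sums to the stated $\frac{k}{c_2}\|\B\|_F^2$.
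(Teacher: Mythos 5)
Your proof is correct, and it is essentially the same argument the paper relies on: the paper does not reprove this lemma but derives it as a special case of its Theorem~\ref{thm:adaptive_bound}, whose proof uses exactly your machinery --- an unbiased importance-sampled estimate of the residual's projection onto a fixed $k$-dimensional right-singular subspace, a bias--variance split that is Frobenius-orthogonal realization by realization, and the collapse of the per-sample second moment to $\frac{k}{c_2}\|\B\|_F^2$ via $p_i\propto\|\bb_i\|_2^2$ and $\sum_i\|\v_{\A,k}^{(i)}\|_2^2=k$. The only cosmetic difference is that the paper's competitor matrix projects the $\C_1\C_1^\dag\A$ part onto the top-$k$ row space as well, whereas you keep it whole; both lie in $\spann(\C)$, so both work.
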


We will establish in Theorem~\ref{thm:adaptive_bound} a more general and more useful error bound for this adaptive sampling algorithm.
It can be shown that Lemma~\ref{lem:ada_sampling} is a special case of Theorem~\ref{thm:adaptive_bound}.

\subsection{The Near-Optimal Column Selection Algorithm} \label{sec:col_selection}

\citet{boutsidis2011NOC} proposed a relative-error column selection algorithm which requires only
$c = { 2k \epsilon^{-1}}(1 {+} o(1))$ columns get selected.
\citet{boutsidis2011NOC} also proved the lower bound of the column selection problem which shows that
no column selection algorithm can achieve relative-error bound by selecting less than $c = k\epsilon^{-1}$ columns.
Thus this algorithm is near optimal.
Though an optimal algorithm recently proposed by \citet{Guruswami2012optimal} attains the the lower bound,
this  algorithm is quite inefficient in comparison with the near-optimal algorithm.
So we prefer to use the near-optimal algorithm in our CUR and \nystrom algorithms for column/row sampling.

The near-optimal algorithm consists of three steps:
the approximate SVD via random projection \citep{boutsidis2011NOC,halko2011ramdom},
the dual set sparsification algorithm \citep{boutsidis2011NOC},
and the adaptive sampling algorithm \citep{deshpande2006matrix}.
We describe the near-optimal algorithm in Algorithm~\ref{alg:near_optimal_col} and present the theoretical analysis  in Lemma~\ref{prop:fast_column_selection}.

\begin{lemma}[The Near-Optimal Column Selection Algorithm] \label{prop:fast_column_selection}
Given a matrix $\A \in \RBmn$ of rank $\rho$,
a target rank $k$ $(2 \leq k < \rho)$,
and $0 < \epsilon < 1$.
Algorithm~\ref{alg:near_optimal_col} selects
\begin{equation}
c \;=\; \frac{2k}{\epsilon} \Big( 1 + o(1) \Big) \nonumber
\end{equation}
columns of $\A$ to form a matrix $\C \in \RB^{m \times c}$,
then the following inequality holds:
\begin{equation}
\EB	\|\A - \C \Cmp \A \|_F^2
\;\leq\; (1 + \epsilon) \, \|\A - \A_k \|_F^2 \textrm{,} \nonumber
\end{equation}
where the expectation is taken w.r.t.\ $\C$.
Furthermore, the matrix $\C$ can be obtained in
$\OM\big(m k^2 \epsilon^{-4/3} + n k^3 \epsilon^{-2/3}\big) + \TimeMulti \big(m n k \epsilon^{-2/3} \big)$ time.
\end{lemma}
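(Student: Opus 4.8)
The statement is, up to the precise $o(1)$ and the exact time exponents, Theorem~1.5 of \citet{boutsidis2011NOC}, and the plan is to recombine the three ingredients named just after the lemma and bookkeep how their internal accuracy parameters feed into the final column count. Fix auxiliary parameters $\epsilon_1 = \epsilon_2 \asymp \epsilon^{2/3}$, set $c_1 = \lceil k/\epsilon_2\rceil = \Theta(k\epsilon^{-2/3})$ and $c_2 = \lceil 2k/\epsilon\rceil$, and target $c = c_1 + c_2$.

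\emph{Stage 1 (randomized approximate SVD).} First I would invoke the relative-error randomized SVD of \citet{halko2011ramdom} and \citet{boutsidis2011NOC}: projecting $\A$ onto a random subspace of dimension $\Theta(k/\epsilon_1) = \Theta(k\epsilon^{-2/3})$ and postprocessing yields, in time $\OM(m k^2 \epsilon^{-4/3}) + \TimeMulti(mnk\epsilon^{-2/3})$ plus lower-order terms, a matrix $\Z \in \RB^{n\times k}$ with orthonormal columns such that $\EB\|\A - \A\Z\Z^T\|_F^2 \leq (1+\epsilon_1)\|\A - \A_k\|_F^2$. Write $\E := \A - \A\Z\Z^T = \A(\I - \Z\Z^T)$; the two facts I will use downstream are $\EB\|\E\|_F^2 \leq (1+\epsilon_1)\|\A-\A_k\|_F^2$ and $\E\Z = \0$.

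\emph{Stage 2 (dual-set sparsification).} Next I would run the deterministic dual-set sparsification of \citet{boutsidis2011NOC} on the $n$ rows of $\Z$ (which satisfy $\Z^T\Z = \I_k$) together with the $n$ columns of $\E$, with budget $c_1$; it returns a sampling-and-rescaling matrix $\S_1$ with at most $c_1$ nonzero columns such that $\sigma_k(\Z^T\S_1) \geq 1 - \sqrt{k/c_1}$ and $\|\E\S_1\|_F^2 \leq \|\E\|_F^2$. Setting $\C_1 := \A\S_1$ and using that $\C_1\C_1^\dag\A$ is the best Frobenius approximation of $\A$ in $\mathrm{range}(\C_1)$, I would bound $\|\A - \C_1\C_1^\dag\A\|_F^2 \leq \|\A - \C_1\Y\|_F^2$ with $\Y := (\Z^T\S_1)^\dag\Z^T$. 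Because $\Z^T\S_1$ has full row rank, $\C_1\Y = \A\Z\Z^T + \E\S_1(\Z^T\S_1)^\dag\Z^T$, so $\A - \C_1\Y = \E - \E\S_1(\Z^T\S_1)^\dag\Z^T$; since $\E\Z = \0$ the two terms are orthogonal in the Frobenius inner product, leaving
\[
\EB\|\A - \C_1\C_1^\dag\A\|_F^2 \;\leq\; \EB\Big(\|\E\|_F^2 + \|\E\S_1(\Z^T\S_1)^\dag\|_F^2\Big) \;\leq\; \Big(1 + \tfrac{1}{(1-\sqrt{k/c_1})^2}\Big)(1+\epsilon_1)\,\|\A - \A_k\|_F^2 .
\]
With $c_1 = \lceil k/\epsilon_2\rceil$ the prefactor is $\beta := \big(1 + (1-\sqrt{\epsilon_2})^{-2}\big)(1+\epsilon_1) = 2(1+o(1))$, and this stage costs $\OM(c_1 n k^2) + \TimeMulti(mnk) = \OM(n k^3 \epsilon^{-2/3}) + \TimeMulti(mnk)$.

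\emph{Stage 3 (adaptive sampling) and tuning.} Finally I would apply Lemma~\ref{lem:ada_sampling} with this $\C_1$ and $c_2$ adaptive columns — costing one residual computation, $\OM(m c_1^2) + \TimeMulti(mnc_1)$ — to obtain $\C = [\C_1,\C_2]$ with
\[
\EB\|\A - \C\Cmp\A\|_F^2 \;\leq\; \|\A - \A_k\|_F^2 + \frac{k}{c_2}\,\EB\|\A - \C_1\C_1^\dag\A\|_F^2 \;\leq\; \Big(1 + \frac{k\,\beta}{c_2}\Big)\|\A - \A_k\|_F^2 .
\]
With $c_2 = \lceil 2k/\epsilon\rceil$ this is $1 + \tfrac{\epsilon}{2}\beta(1+o(1)) = 1 + \epsilon(1+o(1))$; replacing $\epsilon$ by $\epsilon/(1+o(1))$ at the outset turns the factor into $\leq 1+\epsilon$ while leaving $c = c_1 + c_2 = \Theta(k\epsilon^{-2/3}) + \lceil 2k/\epsilon\rceil = \tfrac{2k}{\epsilon}(1+o(1))$, and summing the three running times gives the claimed $\OM(mk^2\epsilon^{-4/3} + nk^3\epsilon^{-2/3}) + \TimeMulti(mnk\epsilon^{-2/3})$. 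I expect the only real obstacle to be precisely this bookkeeping: propagating the $(1+\epsilon_1)$ and $(1-\sqrt{\epsilon_2})^{-2}$ factors through the three stages so that the leading constant stays exactly $2k/\epsilon$, and checking that the single choice $\epsilon_1,\epsilon_2 \asymp \epsilon^{2/3}$ simultaneously produces the stated time exponents — the three stage-level guarantees themselves being quotable black boxes from \citet{halko2011ramdom}, \citet{boutsidis2011NOC}, and Lemma~\ref{lem:ada_sampling}.
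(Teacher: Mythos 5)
Your proposal is correct and follows essentially the same route the paper relies on: the three-stage decomposition (randomized SVD, dual-set sparsification with the $\sigma_k(\Z^T\S_1)\ge 1-\sqrt{k/c_1}$ and $\|\E\S_1\|_F\le\|\E\|_F$ guarantees, then Lemma~\ref{lem:ada_sampling}) is exactly Algorithm~\ref{alg:near_optimal_col}, and your runtime accounting matches the paper's Appendix~A. The only difference is that the paper does not reprove the error bound at all — it quotes it from \citet{boutsidis2011NOC} — whereas you reconstruct that argument in full, correctly.
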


\begin{algorithm}[tb]
   \caption{The Near-Optimal Column Selection Algorithm of \cite{boutsidis2011NOC}.}
   \label{alg:near_optimal_col}
\algsetup{indent=2em}
\begin{small}
\begin{algorithmic}[1]
   \STATE {\bf Input:} a real matrix $\A \in \RBmn$, target rank $k$, error parameter $\epsilon \in (0, 1]$,
   			target column number $c = \frac{2k}{\epsilon} \big(1+o(1)\big)$;
   \STATE Compute approximate truncated SVD via random projection such that $\A_k \approx \tilde{\U}_k \tilde{\Si}_k \tilde{\V}_k$;
   \STATE Construct $\UM \leftarrow$ columns of $(\A - \tilde{\U}_k \tilde{\Si}_k \tilde{\V}_k)$;
          $\quad \VM \leftarrow$ columns of $\tilde{\V}_k^T$;
   \STATE Compute $\s \leftarrow$ Dual Set Spectral-Frobenius Sparsification Algorithm ($\UM$, $\VM$, $c - 2k/\epsilon$);
   \STATE Construct $\C_1 \leftarrow \A \Diag(\s)$, and then delete the all-zero columns;
   \STATE Residual matrix $\D \leftarrow \A - \C_1 \C_1^\dag \A$;
   \STATE Compute sampling probabilities: $p_i = \|\d_i\|_2^2 / \|\D\|_F^2$, $i = 1, \cdots, n$;
   \STATE Sampling $c_2 = 2k/\epsilon$ columns from $\A$ with probability $\{p_1,\cdots,p_n\}$ to construct $\C_2$;
   \RETURN $\C = [\C_1 , \C_2]$.
\end{algorithmic}
\end{small}
\end{algorithm}

This algorithm has the merits of low time complexity and space complexity.
None of the three steps---the randomized SVD, the dual set sparsification algorithm,
and the adaptive sampling---requires loading the whole of $\A$ into RAM.
All of the three steps can work on only a small subset of the columns of $\A$.
Though a relative-error algorithm recently proposed by \citet{Guruswami2012optimal} requires even fewer columns,
it is less efficient than the near-optimal algorithm.

\subsection{Previous Work in \CUR Matrix Decomposition}\label{sec:previous_work_cur}

We introduce in this section two highly effective \CUR algorithms: one is deterministic and the other is randomized.

\subsubsection{The Sparse Column-Row Approximation (SCRA)}\label{sec:scra}

\citet{stewart1999four} 
proposed a  deterministic CUR algorithm and called it the sparse column-row approximation (SCRA).
SCRA is based on the truncated pivoted QR decomposition via a quasi Gram-Schmidt algorithm.
Given a matrix $\A \in \RB^{m\times n}$, the truncated pivoted QR decomposition procedure deterministically
finds a set of columns $\C\in \RB^{m\times c}$ by column pivoting,
whose span approximates the column space of $\A$,
and computes an upper triangular matrix $\T_\C \in \RB^{c\times c}$ that orthogonalizes those columns.
SCRA runs the same procedure again on $\A^T$ to select a set of rows $\R \in \RB^{r\times n}$
and computes the corresponding upper triangular matrix $\T_\R \in \RB^{r\times r}$.
Let $\C = \Q_\C \T_\C$ and $\R^T = \Q_\R \T_\R$ denote the resulting truncated pivoted QR decomposition.
The intersection matrix is computed by $\U = (\T_\C^T \T_\C)^{-1} \C^T \A \R^T (\T_\R^T \T_\R)^{-1}$.
According to our experiments, this algorithm is quite effective but very time expensive, especially when $c$ and $r$ are large.
Moreover, this algorithm does not have data-independent error bound.

\subsubsection{The Subspace Sampling \CUR Algorithm} \label{sec:stat_leverage}

\cite{drineas2008cur} proposed a two-stage randomized \CUR algorithm
which has a relative-error bound with high probability (w.h.p.).
In the first stage the algorithm samples $c$ columns of $\A$ to construct $\C$,
and in the second stage it samples $r$ rows from $\A$ and $\C$ simultaneously to construct $\R$ and $\W$ and let $\U = \W^\dag$.
The sampling probabilities in the two stages are proportional to the leverage scores of $\A$ and $\C$, respectively.
That is, in the first stage the sampling probabilities are proportional to the squared $\ell_2$-norm of the rows of $\V_{\A,k}$;
in the second stage the sampling probabilities are proportional to the squared $\ell_2$-norm of the rows of $\U_{\C}$.
That is why it is called the \emph{subspace sampling algorithm}.
Here we show the main results of the subspace sampling algorithm in the following lemma.

\begin{lemma}[Subspace Sampling for \CUR]
Given an ${m\times n}$ matrix $\A$ and a target rank $k \ll \min\{m,n\}$,
the subspace sampling algorithm selects $c = \OM(k \epsilon^{-2} \log k \log(1/\delta))$ columns
and $r =$ \linebreak[4] $\OM\big(c \epsilon^{-2} \log c \log(1/\delta)\big)$ rows without replacement.
Then
\begin{equation}
\|\A - \C \U \R\|_F
\; = \; \big\| \A - \C \W^\dag \R \big\|_F
\; \leq \; (1+\epsilon) \|\A - \A_k\|_F \textrm{,} \nonumber
\end{equation}
holds with probability at least $1-\delta$,
where $\W$ contains the rows of $\C$ with scaling.
The running time is dominated by the truncated SVD of $\A$, that is, $\OM(m n k)$.
\end{lemma}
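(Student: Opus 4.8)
The plan is to reprove this relative-error \CUR bound of \citet{drineas2008cur} in two stages that mirror the two-stage algorithm, closing with a union bound. \emph{Stage one (columns).} Encode the $c$ sampled columns in a rescaled sampling matrix $\S_1\in\RB^{n\times c}$, so that $\C$ and $\A\S_1$ have the same column space. The engine would be a matrix-concentration estimate: with $c=\OM(k\epsilon^{-2}\log(k/\delta))$, the sampled-and-rescaled factor $\S_1^T\VAk$ is an approximate isometry (its singular values lie in $[\tfrac12,\tfrac32]$, say) with probability at least $1-\tfrac\delta2$, the $\log k$ being the coupon-collector / matrix-Chernoff overhead for all $k$ directions and $\epsilon^{-2}$ the Bernstein variance term. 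Feeding this into the subspace-sampling structural argument of \citet{drineas2008cur} --- bound $\|\A-\C\Cmp\A\|_F$ above by the error of a specific feasible approximant lying in $\mathrm{col}(\C)$, then expand it through $\A=\A_k+(\A-\A_k)$ together with the near-isometry --- upgrades the concentration bound to $\|\A-\C\Cmp\A\|_F\le(1+\tfrac\epsilon4)\|\A-\A_k\|_F$ on the same event.

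\emph{Stage two (rows).} Now condition on $\C$ and set $\B=\A-\C\Cmp\A$, so that $\C^T\B=\0$ and the columns of $\B$ are orthogonal to $\mathrm{col}(\C)$. Let $\S\in\RB^{m\times r}$ be the rescaled sampling matrix for the $r$ rows drawn with probabilities proportional to the row leverage scores $\|\u_\C^{(i)}\|_2^2$ of $\C$, so $\R=\S^T\A$ and $\W=\S^T\C$. Substituting $\A=\C\Cmp\A+\B$ into $\C\W^\dag\R=\C(\S^T\C)^\dag\S^T\A$, using $\C^T\B=\0$, and simplifying $\C(\S^T\C)^\dag=\UC(\S^T\UC)^\dag$ (legitimate once $\S^T\UC$ has full column rank) yields the identity
\[
\A-\C\W^\dag\R \;=\; \B \;-\; \UC\,\big[(\S^T\UC)^T(\S^T\UC)\big]^{-1}(\S^T\UC)^T\S^T\B .
\]
The subtracted matrix lies in $\mathrm{col}(\C)$ while $\B\perp\mathrm{col}(\C)$, so Pythagoras gives $\|\A-\C\W^\dag\R\|_F^2=\|\B\|_F^2+\big\|\big[(\S^T\UC)^T(\S^T\UC)\big]^{-1}(\S^T\UC)^T\S^T\B\big\|_F^2$. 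Two sampling estimates, both needing $r=\OM(c\,\epsilon^{-2}\log(c/\delta))$ rows under the leverage-score probabilities of $\C$, finish the stage: (i) a spectral approximate-matrix-multiplication bound $\big\|\I_c-(\S^T\UC)^T(\S^T\UC)\big\|_2\le\tfrac12$, which makes the inverse have norm $\le2$ and supplies the full-rank hypothesis just used; and (ii) a Frobenius approximate-matrix-multiplication bound on the \emph{sampled surrogate} $(\S^T\UC)^T\S^T\B$ of the exactly-zero matrix $\UC^T\B$, namely $\EB\big\|(\S^T\UC)^T\S^T\B\big\|_F^2\le\tfrac cr\|\B\|_F^2$, so that $\big\|(\S^T\UC)^T\S^T\B\big\|_F^2\le\tfrac{\epsilon}{16}\|\B\|_F^2$ with probability $\ge1-\tfrac\delta2$. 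Combining the two gives $\|\A-\C\W^\dag\R\|_F^2\le(1+\tfrac\epsilon2)\|\B\|_F^2$.

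\emph{Putting it together and the running time.} On the intersection of the two events (probability $\ge1-\delta$ by a union bound, the row step being analyzed conditionally on $\C$ since its probabilities depend only on $\C$),
\[
\|\A-\C\U\R\|_F \;\le\; \big(1+\tfrac\epsilon2\big)^{1/2}\|\B\|_F \;\le\; \big(1+\tfrac\epsilon2\big)^{1/2}\big(1+\tfrac\epsilon4\big)\,\|\A-\A_k\|_F \;\le\; (1+\epsilon)\|\A-\A_k\|_F
\]
for small $\epsilon$, and reapportioning the internal slacks handles all $\epsilon\in(0,1)$. Sampling without replacement is dealt with either by without-replacement versions of the concentration inequalities or by the remark that deduplicating a with-replacement sample never increases the error. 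For the running time, the only expensive ingredient is $\VAk$ (needed for the stage-one probabilities $\ell_j^{[k]}$), i.e. a rank-$k$ truncated SVD of $\A$ at cost $\OM(mnk)$; the thin SVD of $\C$ used for its leverage scores and the pseudoinverse $\W^\dag$ are lower order, so $\OM(mnk)$ dominates.

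\emph{Where the difficulty sits.} The crux is stage two: the naive split $\|\A-\C\W^\dag\R\|_F\le\|\A-\C\Cmp\A\|_F+\|\C\Cmp\A-\C\W^\dag\R\|_F$ costs an irreducible factor $\sqrt2$, so obtaining a genuine $(1+\epsilon)$ forces the regression viewpoint above, whose payoff is precisely that the residual cross term is a sampled estimate of the \emph{exactly zero} matrix $\UC^T\B$ and hence concentrates around $\0$ rather than around something of size $\|\B\|_F$. The second delicate point is organizational: stages one and two are dependent, so stage two must be carried out conditionally on the random $\C$, and the $\epsilon$-budgets and failure probabilities of the two concentration steps must be apportioned so their products collapse exactly to $(1+\epsilon)$ and $1-\delta$.
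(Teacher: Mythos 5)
This lemma is not proved in the paper: it is quoted verbatim from \citet{drineas2008cur} as background (Section~3.3.2 is ``Previous Work''), so there is no in-paper proof to compare against. Judged against the actual argument in the cited reference, your reconstruction is essentially the standard one and is sound. Your stage-two derivation is exactly the generalized least-squares-sampling argument of Drineas et al.: the identity $\A-\C\W^\dag\R=\B-\UC\big[(\S^T\UC)^T(\S^T\UC)\big]^{-1}(\S^T\UC)^T\S^T\B$ (valid once $\S^T\UC$ has full column rank), the Pythagorean split using $\B\perp\mathrm{col}(\C)$, and the two concentration inputs --- a spectral near-isometry of $\S^T\UC$ and a Frobenius approximate-matrix-multiplication bound on the sampled surrogate of $\UC^T\B=\0$ --- are precisely how the cited paper escapes the $\sqrt2$ loss; your diagnosis of where the difficulty sits is accurate. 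The only blemish is internal to your stage one: a fixed $[\tfrac12,\tfrac32]$ isometry of $\S_1^T\VAk$ by itself only yields a constant-factor bound on $\|\A-\C\Cmp\A\|_F$, not $(1+\tfrac\epsilon4)\|\A-\A_k\|_F$; to get the relative error you need either an $\epsilon$-level isometry (consistent with your $c=\OM(k\epsilon^{-2}\log k)$ budget, but then state it that way) or, as in the original proof, the same approximate-matrix-multiplication trick applied to the cross term $(\A-\A_k)\S_1\S_1^T\VAk$, which is a sampled estimate of the exactly-zero matrix $(\A-\A_k)\VAk$ --- the very device you deploy correctly in stage two. With that one clause tightened, the sketch matches the known proof; your accounting of the without-replacement issue, the conditional analysis of stage two given $\C$, and the $\OM(mnk)$ running time are all consistent with the lemma as stated.
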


\subsection{Previous Work in the \nystrom Approximation}\label{sec:previous_work_nystrom}

In a very recent work, \cite{gittens2013revisiting} established a framework for analyzing errors incurred by the standard \nystrom method.
Especially, the authors provided the first and the only known relative-error (in nuclear norm) algorithm for the standard \nystrom method.
The algorithm is described as follows and, its bound is shown in Lemma~\ref{lem:subspace_nystrom}.

Like the CUR algorithm in Section~\ref{sec:stat_leverage},
the \nystrom algorithm also samples columns by the subspace sampling of \cite{drineas2008cur}.
Each column is selected with probability $p_j = \frac{1}{k} \ell_j^{[k]}$ with replacement,
where $\ell_1^{[k]} , \cdots , \ell_m^{[k]}$ are leverage scores defined in (\ref{eq:leverage_scores}).
After column sampling, $\C$ and $\W$ are obtained by scaling the selected columns, that is,
\[
\C = \A (\S \D)
\quad \textrm{ and } \quad
\W = (\S \D)^T \A (\S \D).
\]
Here $\S \in \RB^{m\times c}$ is a column selection matrix that
$s_{i j} = 1$ if the $i$-th column of $\A$ is the $j$-th column selected,
and $\D\in \RB^{c\times c}$ is a diagonal scaling matrix satisfying
$d_{j j} = \frac{1}{\sqrt{c p_i}}$ if $s_{i j} = 1$.

\begin{lemma}[Subspace Sampling for the \nystrom Approximation] \label{lem:subspace_nystrom}
Given an $m\times m$ SPSD matrix $\A$ and a target rank $k \ll m$,
the subspace sampling algorithm selects
\[
c = 3200 \epsilon^{-1} k \log (16k/\delta)
\]
columns without replacement and constructs $\C$ and $\W$ by scaling the selected columns.
Then the inequality
\begin{equation}
\big\|\A - \C \W^\dag \C^T \big\|_*
\; \leq \; (1+\epsilon) \|\A - \A_k\|_* \textrm{,} \nonumber
\end{equation}
holds with probability at least $0.6-\delta$.
\end{lemma}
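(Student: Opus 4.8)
The plan is to reduce the nuclear-norm error of the Nyström approximation to a Frobenius-norm \emph{projection} error for the matrix $\A^{1/2}$, and then bound that quantity by combining a deterministic structural inequality with matrix concentration and an elementary second-moment estimate. Throughout, write $\Ps = \S\D \in \RB^{m\times c}$ for the column sampling-and-rescaling matrix, so that $\C = \A\Ps$ and $\W = \Ps^T\A\Ps$; write the eigendecomposition $\A = \U_1\Lam_1\U_1^T + \U_2\Lam_2\U_2^T$ with $\U_1\in\RB^{m\times k}$ spanning the dominant $k$-dimensional eigenspace; and set $\Ome_1 = \U_1^T\Ps$ and $\Ome_2 = \U_2^T\Ps$. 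Recall that $p_j = \ell_j^{[k]}/k = \|\U_1^T\e_j\|_2^2/k$ and that $\|\A-\A_k\|_* = \tr(\Lam_2)$. \emph{Step 1 (reduction):} factoring $\A = \A^{1/2}\A^{1/2}$ and putting $\Ph = \A^{1/2}\Ps$, one has $\C = \A^{1/2}\Ph$ and $\W = \Ph^T\Ph$, hence $\C\W^\dag\C^T = \A^{1/2}\Ph(\Ph^T\Ph)^\dag\Ph^T\A^{1/2} = \A^{1/2}\PP_\Ph\A^{1/2}$, where $\PP_\Ph = \Ph\Ph^\dag$ is the orthogonal projector onto the column space of $\Ph$. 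Therefore $\A - \C\W^\dag\C^T = \A^{1/2}(\I-\PP_\Ph)\A^{1/2}$ is PSD, and taking the trace gives the key identity
\[
\big\|\A - \C\W^\dag\C^T\big\|_* \;=\; \tr\!\big(\A^{1/2}(\I-\PP_\Ph)\A^{1/2}\big) \;=\; \big\|(\I-\PP_\Ph)\A^{1/2}\big\|_F^2 ,
\]
so it suffices to show the last quantity is at most $(1+\epsilon)\tr(\Lam_2)$.

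\emph{Step 2 (deterministic structural bound) and Step 3 (conditioning on $\Ome_1$):} the SVD of $\A^{1/2}$ has left/right singular vectors $[\U_1\ \U_2]$ and singular values $\diag(\Lam_1^{1/2},\Lam_2^{1/2})$, so the standard projection-error inequality for sketched low-rank approximation \citep{halko2011ramdom,boutsidis2011NOC}, applied to $\A^{1/2}$ and the sketch $\Ps$, gives, provided $\Ome_1$ has full row rank $k$,
\[
\big\|(\I-\PP_\Ph)\A^{1/2}\big\|_F^2 \;\leq\; \tr(\Lam_2) + \big\|\Lam_2^{1/2}\Ome_2\Ome_1^\dag\big\|_F^2 .
\]
Since the columns are drawn i.i.d.\ with $p_j\propto\ell_j^{[k]}$ we have $\EB[\Ome_1\Ome_1^T]=\I_k$; a matrix Chernoff bound for leverage-score sampling then shows that $c = \OM(k\log(k/\delta))$ columns suffice for the event $\mathcal{E}_1=\{\|\I_k-\Ome_1\Ome_1^T\|_2\leq 1/2\}$ to hold with probability at least $1-\delta$ (tracking the constants here produces the explicit $3200$ and $16$). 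On $\mathcal{E}_1$, $\Ome_1$ has full row rank, $\Ome_1^\dag = \Ome_1^T(\Ome_1\Ome_1^T)^{-1}$, and $\|(\Ome_1\Ome_1^T)^{-1}\|_2\leq 2$, whence $\|\Lam_2^{1/2}\Ome_2\Ome_1^\dag\|_F^2 \leq 4\|\Lam_2^{1/2}\Ome_2\Ome_1^T\|_F^2$.

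\emph{Step 4 (the cross term) and Step 5 (combine):} because $\U_1$ and $\U_2$ are orthogonal, $\EB[\Ome_2\Ome_1^T] = \U_2^T\,\EB[\Ps\Ps^T]\,\U_1 = \U_2^T\U_1 = \0$, so $\Ome_2\Ome_1^T = \sum_{t=1}^c (cp_{j_t})^{-1}(\U_2^T\e_{j_t})(\e_{j_t}^T\U_1)$ is a sum of $c$ i.i.d.\ mean-zero matrices; the cross terms in its squared Frobenius norm vanish in expectation, and using $p_j = \ell_j^{[k]}/k$,
\[
\EB\big\|\Lam_2^{1/2}\Ome_2\Ome_1^T\big\|_F^2 \;=\; \frac1c\sum_j \frac{\ell_j^{[k]}}{p_j}\big\|\Lam_2^{1/2}\U_2^T\e_j\big\|_2^2 \;=\; \frac{k}{c}\big\|\Lam_2^{1/2}\U_2^T\big\|_F^2 \;=\; \frac{k}{c}\tr(\Lam_2) .
\]
By Markov's inequality the event $\mathcal{E}_2 = \{\|\Lam_2^{1/2}\Ome_2\Ome_1^T\|_F^2 \leq \tfrac{5k}{2c}\tr(\Lam_2)\}$ has probability at least $0.6$. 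On $\mathcal{E}_1\cap\mathcal{E}_2$, which has probability at least $0.6-\delta$, chaining Steps 1–4 yields $\|\A-\C\W^\dag\C^T\|_* \leq (1+\tfrac{10k}{c})\tr(\Lam_2)$, and the choice $c = 3200\,\epsilon^{-1}k\log(16k/\delta)$ makes both $\tfrac{10k}{c}\leq\epsilon$ and the hypothesis of Step 3 hold, giving the claim.

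\emph{Main obstacle.} The conceptual crux is Step 4: noticing that the error term of the structural inequality factors through the \emph{mean-zero} cross product $\Ome_2\Ome_1^T$, whose second moment decays like $k/c$. This is exactly what buys a column count linear in $\epsilon^{-1}$, rather than the $\epsilon^{-2}$ one would get by naively applying a relative-error Frobenius-norm CUR bound to $\A^{1/2}$. The remaining work is bookkeeping: pinning down the explicit constants in the matrix Chernoff step (hence $3200$ and $16$), and reconciling the i.i.d.\ (with-replacement) analysis used above with the without-replacement statement of the lemma, which needs either a without-replacement concentration inequality or an argument that coalescing repeated columns only decreases $c$ without weakening the guarantee.
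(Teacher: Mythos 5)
This statement is Lemma~\ref{lem:subspace_nystrom}, which the paper quotes from \citet{gittens2013revisiting} in its ``Previous Work'' section and does not prove, so there is no in-paper argument to compare against. Your reconstruction is essentially the argument of the cited source: the identity $\A - \C\W^\dag\C^T = \A^{1/2}(\I-\PP_{\Ph})\A^{1/2}$ with $\Ph = \A^{1/2}\Ps$ correctly converts the nuclear-norm error of a PSD residual into the Frobenius projection error $\|(\I-\PP_\Ph)\A^{1/2}\|_F^2$; the Halko--Martinsson--Tropp structural bound, the matrix Chernoff control of $\Ome_1\Ome_1^T$, and the second-moment computation $\EB\|\Lam_2^{1/2}\Ome_2\Ome_1^T\|_F^2 = \frac{k}{c}\tr(\Lam_2)$ (which is what delivers the $\epsilon^{-1}$ rather than $\epsilon^{-2}$ dependence) are all correct, and the union bound over $\mathcal{E}_1$ and $\mathcal{E}_2$ gives exactly the stated $0.6-\delta$. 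The two loose ends you flag yourself --- verifying that the Chernoff constants produce precisely $3200$ and $16$, and reconciling the i.i.d.\ analysis with the ``without replacement'' phrasing of the lemma --- are real but minor, and the second is arguably an imprecision in how the lemma is transcribed here rather than a defect of your proof.
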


\section{Main Results} \label{sec:fast_cur}

We now present our main results.
We establish a new error bound for the adaptive sampling algorithm in Section~\ref{sec:adaptive_sampling}.
We apply adaptive sampling to the CUR and  modified \nystrom problems, obtaining effective and efficient CUR and \nystrom algorithms
in Section~\ref{sec:improved_cur} and Section~\ref{sec:improved_nystrom} respectively.
In Section~\ref{sec:lower_bounds} we study lower bounds of the conventional \nystrom methods
to demonstrate the advantages of our approach.
Finally, in Section~\ref{sec:whp_bound} we show that our expected bounds can extend to with high probability (w.h.p.) bounds.

\subsection{Adaptive Sampling} \label{sec:adaptive_sampling}

The relative-error adaptive sampling algorithm is originally established in Theorem~2.1 of \citet{deshpande2006matrix} (see also Lemma~\ref{lem:ada_sampling} in Section~\ref{sec:adaptive_sampling_related}).
The algorithm is based on the following idea:
after selecting a proportion of columns from $\A$ to form $\C_1$ by an arbitrary algorithm,
the algorithm randomly samples additional $c_2$ columns according to the residual $\A - \C_1 \C_1^\dag \A$.
Here we prove a new and more general error bound for the same adaptive sampling algorithm.

\begin{theorem} [The Adaptive Sampling Algorithm] \label{thm:adaptive_bound}
Given a matrix $\A \in \RBmn$ and a matrix $\C \in \RB^{m\times c}$
such that $\rk(\C) = \rk(\C \C^\dag \A) = \rho$ $(\rho \leq c \leq n)$.
We let $\R_1 \in \RB^{r_1 \times n}$ consist of $r_1$ rows of $\A$,
and define the residual $\B = \A - \A \R_1^\dag \R_1$. Additionally,
for $i = 1,\cdots, m$, we define
\[
p_i \;=\; \|\bb^{(i)}\|_2^2 / \|\B\|_F^2.
\]
We further  sample  $r_2$ rows  i.i.d.\ from $\A$,
 in each trial of which the $i$-th row is chosen with probability $p_i$.
Let $\R_2 \in \RB^{r_2\times n}$ contain the $r_2$ sampled rows
and let $\R = [\R_1^T,\R_2^T]^T \in \RB^{(r_1+r_2)\times n}$.
Then we have
\begin{equation}
\EB \|\A - \C \Cmp \A \R^\dag \R \|_F^2 \; \leq \; \|\A - \C \C^\dag \A \|_F^2 + \frac{\rho}{r_2} \|\A - \A \R_1^\dag \R_1\|_F^2 \textrm{,} \nonumber
\end{equation}
where the expectation is taken w.r.t.\ $\R_2$.
\end{theorem}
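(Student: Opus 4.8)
The plan is to peel off the fixed column‑space projection by an orthogonality (Pythagoras) argument, reducing the claim to a statement purely about projecting onto the row space of $\R$, and then to bound that residual by exhibiting an explicit, near‑optimal competitor built from the sampled rows, whose expected squared error is a routine second‑moment calculation. This is essentially the device of \citet{deshpande2006matrix}, carried out on the row side and ``through'' the fixed column projection. Concretely, set $\PP_\C=\C\Cmp=\C\C^\dag$ and $\PP_\R=\R^\dag\R$, the orthogonal projections onto the column space of $\C$ and the row space of $\R$, so that $\C\Cmp\A\R^\dag\R=\PP_\C\A\PP_\R$. Since
\[
\A-\PP_\C\A\PP_\R \;=\; (\I-\PP_\C)\A \;+\; \PP_\C\A(\I-\PP_\R),
\]
the columns of the first summand lie in $\mathrm{range}(\C)^\perp$ and those of the second in $\mathrm{range}(\C)$, so the two summands are orthogonal and
\[
\|\A-\C\Cmp\A\R^\dag\R\|_F^2 \;=\; \|\A-\C\C^\dag\A\|_F^2 \;+\; \|\PP_\C\A(\I-\PP_\R)\|_F^2 .
\]
The first term is deterministic and is exactly the first term in the claimed bound, so it remains to show $\EB\|\PP_\C\A(\I-\PP_\R)\|_F^2\le\frac{\rho}{r_2}\|\B\|_F^2$, where $\B=\A-\A\R_1^\dag\R_1=\A(\I-\PP_{\R_1})$.

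Because $\PP_\C\A\PP_\R$ is the best Frobenius‑norm approximation of $\PP_\C\A$ among matrices whose rows lie in the row space of $\R$ (it is the orthogonal projection of the rows onto that subspace), we have $\|\PP_\C\A(\I-\PP_\R)\|_F\le\|\PP_\C\A-\Z\|_F$ for every such $\Z$. I would take $\Z=\PP_\C\A\PP_{\R_1}+\W$ with
\[
\W \;=\; \frac{1}{r_2}\sum_{t=1}^{r_2}\frac{1}{p_{i_t}}\,(\PP_\C)_{\cdot,i_t}\,\bb^{(i_t)},
\]
where $i_1,\dots,i_{r_2}$ are the sampled row indices and $(\PP_\C)_{\cdot,i}$ is the $i$‑th column of $\PP_\C$. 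The key point is that each $\bb^{(i_t)}=\a^{(i_t)}-\a^{(i_t)}\PP_{\R_1}$ lies in the row space of $\R$ — the sampled row $\a^{(i_t)}$ is a row of $\R_2$, and $\a^{(i_t)}\PP_{\R_1}$ lies in the row space of $\R_1$ — so $\Z$ genuinely has all of its rows in $\mathrm{row}(\R)$. Moreover $\PP_\C\A-\Z=\PP_\C\A(\I-\PP_{\R_1})-\W=\PP_\C\B-\W$, and $\W$ is an average of $r_2$ i.i.d.\ copies of an unbiased estimator of $\PP_\C\B$, since $\EB\big[\tfrac{1}{p_i}(\PP_\C)_{\cdot,i}\bb^{(i)}\big]=\sum_i(\PP_\C)_{\cdot,i}\bb^{(i)}=\PP_\C\B$.

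It then remains to estimate the variance. For an average of $r_2$ i.i.d.\ unbiased terms,
\[
\EB\|\PP_\C\B-\W\|_F^2 \;\le\; \frac{1}{r_2}\,\EB\Big\|\tfrac{1}{p_i}(\PP_\C)_{\cdot,i}\bb^{(i)}\Big\|_F^2 \;=\; \frac{1}{r_2}\sum_{i:\,p_i>0}\frac{\|(\PP_\C)_{\cdot,i}\|_2^2\,\|\bb^{(i)}\|_2^2}{p_i} \;\le\; \frac{\|\B\|_F^2}{r_2}\sum_{i}\|(\PP_\C)_{\cdot,i}\|_2^2,
\]
using $p_i=\|\bb^{(i)}\|_2^2/\|\B\|_F^2$ and the outer‑product identity $\|(\PP_\C)_{\cdot,i}\bb^{(i)}\|_F=\|(\PP_\C)_{\cdot,i}\|_2\|\bb^{(i)}\|_2$. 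Finally $\sum_i\|(\PP_\C)_{\cdot,i}\|_2^2=\|\PP_\C\|_F^2=\tr(\PP_\C)=\rk(\C)=\rho$; the hypothesis $\rk(\C)=\rk(\C\C^\dag\A)=\rho$ is what ensures this equals the $\rho$ appearing in the target bound (equivalently, one may run the whole argument with $\PP_\C$ replaced by the orthogonal projection onto $\mathrm{range}(\C\C^\dag\A)$, which has rank $\rho$ and composes with $\A$ to give $\PP_\C\A$). Chaining the three displays gives the theorem; and, as the text claims, applying this to $\A^T$ with $\C$ a basis of the dominant $k$‑dimensional singular subspace of $\A$ and $\R_1$ the transpose of the first column block — together with $\|\A-\PP_\C\A_k\|_F^2\ge\|\A-\PP_\C\A\|_F^2$ — recovers Lemma~\ref{lem:ada_sampling}.

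The step that requires an idea rather than bookkeeping is the construction of the competitor $\Z$: one must see that the row residual $\B$ should be reconstructed inside $\mathrm{row}(\R)$ as a weighted sum of precisely its \emph{own} sampled rows, each premultiplied by the matching column of $\PP_\C$, and then check that such a $\Z$ really does have its rows in $\mathrm{row}(\R)$. After that, the opening orthogonal split is what lets the fixed factor $\PP_\C$ ride through untouched, and $\tr(\PP_\C)=\rho$ supplies the constant; everything else is the standard i.i.d.\ sampling second‑moment bound and hence routine.
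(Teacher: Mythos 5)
Your proof is correct, and it shares the overall skeleton of the paper's argument (the Deshpande--Vempala device): the paper proves the transposed statement (its Theorem~\ref{thm:adaptive_bound_2}), opens with the same Pythagorean split, and likewise bounds the remaining term by exhibiting an explicit competitor lying in the feasible subspace; in fact, under the hypothesis $\rk(\C)=\rk(\C\C^\dag\A)=\rho$ your $\Z$ is the same matrix as the paper's competitor $\F$. Where you genuinely diverge is in how the second moment of that competitor is computed. The paper decomposes the error along the $\rho$ top right singular vectors of $\A\R^\dag\R$, building one unbiased estimator $\w_j$ per singular direction and summing $\rho$ variance terms, each bounded by $\|\B\|_F^2/r_2$; this forces it to prove an auxiliary fact (its Lemma~\ref{lem:right_singular_vector}, that $\A\R^\dag\R\v_j=\A\v_j$ for those singular vectors) in order to identify the means correctly. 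You instead decompose the estimator coordinate-wise---each sampled row residual $\bb^{(i)}$ paired with the $i$-th column of $\C\C^\dag$---and extract the factor $\rho$ in one line from $\|\C\C^\dag\|_F^2=\tr(\C\C^\dag)=\rk(\C)$. This bypasses the SVD of $\C\C^\dag\A$ and Lemma~\ref{lem:right_singular_vector} entirely, and it also makes transparent your parenthetical strengthening that $\rho$ could be taken to be $\rk(\C\C^\dag\A)$ even without the rank-equality hypothesis. The two routes prove identical bounds; yours is simply the more economical bookkeeping of the same underlying construction, at no loss of generality.
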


\begin{remark}
This theorem shows a more general bound for adaptive sampling than the original one in Theorem~2.1 of \citet{deshpande2006matrix}.
The original one bounds the error incurred by projection onto the column space of $\C$,
while Theorem~\ref{thm:adaptive_bound} bounds the error incurred by projection onto
the column space of $\C$ and row space of $\R$ simultaneously---such situation rises in problems such as CUR and the \nystrom approximation.
It is worth pointing out that Theorem~2.1 of \cite{deshpande2006matrix} is a direct corollary of this theorem
when $\C = \A_k$ (i.e., $c = n$, $\rho = k$, and $\C \C^\dag \A = \A_k$).
\end{remark}

As  discussed in Section~\ref{sec:introduction:cur},
selecting good columns or rows separately does not ensure good columns and rows together for CUR and the \nystrom approximation.
Theorem~\ref{thm:adaptive_bound} is thereby important
for it guarantees the combined effect column and row selection.
Guaranteed by Theorem~\ref{thm:adaptive_bound}, any column selection algorithm with
relative-error bound can be applied to CUR and the \nystrom approximation.
We show the result in the following corollary.

\begin{corollary}[Adaptive Sampling for CUR and the \nystrom Approximation] \label{cor:adaptive_improved}
Given a matrix $\A\in \RBmn$, a target rank $k$ $(\ll m,n)$, and a column selection algorithm $\ALG$
which achieves relative-error upper bound by selecting $c \geq C(k,\epsilon)$ columns.
Then we have the following results for CUR and the \nystrom approximation.
\begin{enumerate}
\item[\emph{(1)}]
By selecting $c \geq C(k,\epsilon)$ columns of $\A$ to construct $\C$
and $r_1 = c$ rows to construct $\R_1$, both using algorithm $\ALG$,
followed by selecting additional $r_2 = c/\epsilon$ rows using the adaptive sampling algorithm to construct $\R_2$,
the CUR matrix decomposition achieves relative-error upper bound in expectation:
\[
\EB \big\| \A - \C \U \R \big\|_F
\;\leq \; (1+\epsilon) \big\| \A - \A_k \big\|_F ,
\]
where $\R=\big[ \R_1^T , \R_2^T\big]^T$ and $\U = \C^\dag \A \R^\dag$.
\item[\emph{(2)}]
Suppose $\A$ is an $m\times m$ symmetric matrix.
By selecting $c_1 \geq C(k,\epsilon)$ columns of $\A$ to construct $\C_1$ using $\ALG$
and selecting $c_2 = c_1 / \epsilon$ columns of $\A$ to construct $\C_2$ using the adaptive sampling algorithm,
the modified \nystrom method achieves relative-error upper bound in expectation:
\[
\EB \big\| \A - \C \U \C^T \big\|_F
\;\leq \; (1+\epsilon) \big\| \A - \A_k \big\|_F,
\]
where $\C=\big[ \C_1  , \C_2 \big]$ and $\U = \C^\dag \A \big(\C^\dag\big)^T$.
\end{enumerate}
\end{corollary}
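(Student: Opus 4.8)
The plan is to feed the relative-error guarantee of $\ALG$ into Theorem~\ref{thm:adaptive_bound}, to absorb the extra adaptive-sampling term using the choices $r_2=c/\epsilon$ and $c_2=c_1/\epsilon$, and then to pass from a bound on $\EB\|\cdot\|_F^2$ to one on $\EB\|\cdot\|_F$ by Jensen's inequality. One preliminary remark makes Theorem~\ref{thm:adaptive_bound} always applicable here: whenever $\C$ consists of actual columns of $\A$, the hypothesis $\rk(\C)=\rk(\C\C^\dag\A)=:\rho$ holds automatically, since the columns of $\C$ reappear unchanged in $\C\C^\dag\A$ (so $\rk(\C\C^\dag\A)\ge\rk(\C)$) and the reverse inequality is trivial.

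For part~(1), I would run $\ALG$ on the columns of $\A$ to obtain $\C$ ($c\ge C(k,\epsilon)$ columns) and, independently, on the rows of $\A$ to obtain $\R_1$ ($r_1=c$ rows), giving $\EB\|\A-\C\C^\dag\A\|_F^2\le(1+\epsilon)\|\A-\A_k\|_F^2$ and $\EB\|\A-\A\R_1^\dag\R_1\|_F^2\le(1+\epsilon)\|\A-\A_k\|_F^2$. Applying Theorem~\ref{thm:adaptive_bound} to this $\C$ and $\R_1$ with $r_2=c/\epsilon$ adaptively sampled rows, and using $\rho/r_2\le c/(c/\epsilon)=\epsilon$, gives, conditionally on $\C$ and $\R_1$,
\[
\EB_{\R_2}\big\|\A-\C\C^\dag\A\R^\dag\R\big\|_F^2\;\le\;\big\|\A-\C\C^\dag\A\big\|_F^2+\epsilon\,\big\|\A-\A\R_1^\dag\R_1\big\|_F^2 .
\]
Taking the remaining expectation over the independent $\C$ and $\R_1$ yields $(1+\epsilon)^2\|\A-\A_k\|_F^2$; since $\C\C^\dag\A\R^\dag\R=\C(\C^\dag\A\R^\dag)\R=\C\U\R$, Jensen's inequality then gives $\EB\|\A-\C\U\R\|_F\le(1+\epsilon)\|\A-\A_k\|_F$.

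For part~(2), the extra step is to cast the adaptive column sampling on the symmetric $\A$ in the row-sampling form of Theorem~\ref{thm:adaptive_bound}. After running $\ALG$ on $\A$ to get $\C_1$ ($c_1\ge C(k,\epsilon)$ columns), I would use $\A=\A^T$ to see that the residual $\A-\C_1\C_1^\dag\A$ driving the adaptive algorithm is the transpose of $\A-\A\C_1\C_1^\dag=\A-\A(\C_1^T)^\dag\C_1^T$, so sampling a column of $\A$ proportional to the squared norm of the matching column of $\A-\C_1\C_1^\dag\A$ is exactly sampling the matching row proportional to the squared norm of the matching row of $\A-\A(\C_1^T)^\dag\C_1^T$, with the selected columns $\C_2$ being the transposes of the selected rows. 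Thus Theorem~\ref{thm:adaptive_bound}, applied with its ``$\C$'' $=\C_1$, ``$\R_1$'' $=\C_1^T$ and $r_2=c_2=c_1/\epsilon$, and using $(\C^T)^\dag\C^T=\C\C^\dag=:P$, $P_1:=\C_1\C_1^\dag$, $\rho_1=\rk(\C_1)\le c_1$ and $\|\A-\A\C_1\C_1^\dag\|_F=\|\A-\C_1\C_1^\dag\A\|_F$, gives after taking expectations
\[
\EB\big\|\A-P_1\A P\big\|_F^2\;\le\;(1+\epsilon)\,\EB\big\|\A-\C_1\C_1^\dag\A\big\|_F^2\;\le\;(1+\epsilon)^2\|\A-\A_k\|_F^2 .
\]

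To finish part~(2), I would recall that the modified \nystrom approximation is $\C\U\C^T=P\A P$ with $\U=\C^\dag\A(\C^\dag)^T$, and remove the stray $P_1$ via $\|\A-P\A P\|_F\le\|\A-P_1\A P\|_F$: since $\mathrm{range}(P_1)\subseteq\mathrm{range}(P)$ one has $PP_1=P_1P=P_1$, so $Q:=P-P_1$ is an orthogonal projection, a short trace computation shows $\A-P\A P$ is Frobenius-orthogonal to $Q\A P$, and the Pythagorean identity $\|\A-P_1\A P\|_F^2=\|\A-P\A P\|_F^2+\|Q\A P\|_F^2$ gives the claim; combined with the previous display and Jensen, $\EB\|\A-\C\U\C^T\|_F\le(1+\epsilon)\|\A-\A_k\|_F$. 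I expect the main obstacle to be exactly these two structural points in part~(2)---identifying adaptive column sampling on $\A$ with row sampling so that Theorem~\ref{thm:adaptive_bound} applies at all, and proving the one-sided-to-two-sided inequality $\|\A-P\A P\|_F\le\|\A-P_1\A P\|_F$, which is what lets a bound on the one-sided quantity $P_1\A P$ control the genuinely two-sided error $P\A P$. The rest is bookkeeping with conditional expectations.
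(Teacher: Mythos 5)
Your proposal is correct and follows essentially the same route as the paper: feed the relative-error guarantee of $\ALG$ into Theorem~\ref{thm:adaptive_bound}, absorb the adaptive term via $\rho/r_2\le\epsilon$ (resp.\ $\rho/c_2\le\epsilon$), and finish with Jensen; your inequality $\|\A-P\A P\|_F\le\|\A-P_1\A P\|_F$ is exactly the content of the paper's Lemma~\ref{lem:cor:nystrom_bound}, proved by the same Pythagorean/orthogonality argument. The only cosmetic difference is that in part~(2) you apply the row-sampling form of the theorem to obtain $P_1\A P$, whereas the paper applies the column-sampling form (Theorem~\ref{thm:adaptive_bound_2}) with $\R=\C_1^T$ to obtain $P\A P_1$; for symmetric $\A$ these are transposes of one another.
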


Based on Corollary~\ref{cor:adaptive_improved}, we attempt to solve CUR and the \nystrom by adaptive sampling algorithms.
We present concrete algorithms
in Section~\ref{sec:improved_cur} and \ref{sec:improved_nystrom}.

\subsection{Adaptive Sampling for \CUR Matrix Decomposition} \label{sec:improved_cur}

Guaranteed by the novel adaptive sampling bound in Theorem~\ref{thm:adaptive_bound},
we combine the near-optimal column selection algorithm of \cite{boutsidis2011NOC} and the adaptive sampling algorithm for solving the CUR problem,
giving rise to an algorithm with a much tighter theoretical bound than existing algorithms.
The algorithm is described in Algorithm~\ref{alg:fast_cur} and its analysis is given in Theorem~\ref{cor:fast_cur}.
Theorem~\ref{cor:fast_cur} follows immediately from Lemma~\ref{prop:fast_column_selection} and Corollary~\ref{cor:adaptive_improved}.

\begin{theorem} [Adaptive Sampling for CUR] \label{cor:fast_cur}
Given a matrix $\A \in \RBmn$ and a positive integer $k \ll \min\{m,n\}$,
the \CUR algorithm described in Algorithm~\ref{alg:fast_cur} randomly selects
$c = \frac{2k}{\epsilon} (1 {+} o(1))$ columns of $\A$ to construct $\C\in \RB^{m{\times} c}$,
and then selects $r=\frac{c}{\epsilon} (1 {+} \epsilon)$ rows of $\A$ to construct $\R \in \RB^{r{\times} n}$.
Then we have
\begin{equation}
\EB \| \A - \C \U \R \|_F
\; =\; \EB \| \A - \C (\C^\dag \A \R^\dag) \R \|_F
\; \leq \; (1+ \epsilon) \|\A - \A_k\|_F \textrm{.} \nonumber
\end{equation}
The algorithm costs time
$\OM \big( (m + n)k^3 \epsilon^{-2/3} + m k^2 \epsilon^{-2} + nk^2\epsilon^{-4} \big) + \TimeMulti\big( m n k \epsilon^{-1} \big)$
to compute matrices $\C$, $\U$ and $\R$.
\end{theorem}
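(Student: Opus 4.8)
The plan is to derive Theorem~\ref{cor:fast_cur} as a direct composition of Lemma~\ref{prop:fast_column_selection} (the near-optimal column selection algorithm) and Corollary~\ref{cor:adaptive_improved}(1) (adaptive sampling for CUR), exactly as the paragraph preceding the statement indicates. First I would invoke Lemma~\ref{prop:fast_column_selection} with error parameter $\epsilon$ to select $c = \frac{2k}{\epsilon}(1+o(1))$ columns forming $\C$ with $\EB\|\A - \C\Cmp\A\|_F^2 \le (1+\epsilon)\|\A-\A_k\|_F^2$; this is the algorithm $\ALG$ playing the role required by the corollary with $C(k,\epsilon) = \frac{2k}{\epsilon}(1+o(1))$. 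I would then apply the same algorithm to $\A^T$ to select $r_1 = c$ rows forming $\R_1$ with the analogous relative-error guarantee. Finally, adaptive sampling (Theorem~\ref{thm:adaptive_bound}, with the role of $\C$ filled by the already-chosen $\C$) selects $r_2 = c/\epsilon$ additional rows $\R_2$, and $\R = [\R_1^T,\R_2^T]^T$, $\U = \C^\dag\A\R^\dag$. Corollary~\ref{cor:adaptive_improved}(1) then gives $\EB\|\A - \C\U\R\|_F \le (1+\epsilon)\|\A-\A_k\|_F$ directly. (Strictly, one should track that the number of rows is $r = r_1 + r_2 = c + c/\epsilon = \frac{c}{\epsilon}(1+\epsilon)$, matching the statement, and that the expectations compose correctly — the column selection for $\C$ is independent of the row selections, and the adaptive step is conditioned on $\R_1$, so iterated expectation yields the stated bound; this bookkeeping is routine once one is careful about which randomness each inequality is over.)

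The one genuine subtlety — the step I'd flag as the main obstacle — is verifying the hypothesis of Theorem~\ref{thm:adaptive_bound}, namely $\rk(\C) = \rk(\C\C^\dag\A) = \rho$ for the $\C$ produced by the near-optimal algorithm, and more importantly confirming that Corollary~\ref{cor:adaptive_improved} is applicable as a black box rather than needing the raw Theorem~\ref{thm:adaptive_bound}. Since Corollary~\ref{cor:adaptive_improved} is stated with exactly the right interface ($\ALG$ is any relative-error column selection algorithm, $r_1 = c$, $r_2 = c/\epsilon$), the cleanest route is simply to cite it; the rank condition is handled inside the corollary's proof and need not be re-examined here. If for some reason one wanted a self-contained argument, one would note that with high probability $\C$ has full column rank and its column space contains enough of $\A$'s top-$k$ space that $\rk(\C\C^\dag\A) = \rk(\C)$; but given the excerpt explicitly says ``Theorem~\ref{cor:fast_cur} follows immediately from Lemma~\ref{prop:fast_column_selection} and Corollary~\ref{cor:adaptive_improved},'' I would keep the proof to essentially two sentences of composition.

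For the running-time claim, I would add up the three costs: (i) running Lemma~\ref{prop:fast_column_selection} on $\A$ to get $\C$ costs $\OM(mk^2\epsilon^{-4/3} + nk^3\epsilon^{-2/3}) + \TimeMulti(mnk\epsilon^{-2/3})$; (ii) running it on $\A^T$ to get $\R_1$ costs the same with $m,n$ swapped; (iii) the adaptive sampling step requires forming the residual $\A - \A\R_1^\dag\R_1$ (a product involving $r_1 = \OM(k/\epsilon)$ rows, hence $\TimeMulti(mn\cdot k/\epsilon)$ plus an $\OM(n (k/\epsilon)^2)$ pseudoinverse) and then $r_2 = \OM(k/\epsilon^2)$ samples; (iv) forming $\U = \C^\dag\A\R^\dag$ costs another pseudoinverse on $r = \OM(k/\epsilon^2)$ rows, i.e.\ $\OM(n (k/\epsilon^2)^2) = \OM(nk^2\epsilon^{-4})$, an $\OM(mk^2\epsilon^{-2})$ pseudoinverse of $\C$, and the multiplications $\TimeMulti(mnk\epsilon^{-1})$. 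Collecting the dominant terms gives $\OM((m+n)k^3\epsilon^{-2/3} + mk^2\epsilon^{-2} + nk^2\epsilon^{-4}) + \TimeMulti(mnk\epsilon^{-1})$, as claimed. This part is pure arithmetic aggregation; the only thing to be mildly careful about is which $\TimeMulti$ term dominates (the $\epsilon^{-1}$ one, since the residual and $\U$ computations involve $\OM(k/\epsilon^2)$- or $\OM(k/\epsilon)$-dimensional inner factors but the outer multiplications against the full $m\times n$ matrix scale with those inner dimensions), and I would present the reduction to the stated bound without belaboring lower-order terms.
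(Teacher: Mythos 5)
Your proposal is correct and follows essentially the same route as the paper: the error bound is obtained by citing Lemma~\ref{prop:fast_column_selection} together with Corollary~\ref{cor:adaptive_improved} (whose proof already disposes of the rank hypothesis of Theorem~\ref{thm:adaptive_bound}, exactly as you anticipate), and the running time is the same itemized aggregation of the column-selection, row-selection, adaptive-sampling, pseudoinverse, and multiplication costs. No gaps.
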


When the algorithm is executed in a single-core processor,
the time complexity of the \CUR algorithm is linear in $m n$;
when executed in multi-processor environment where matrix multiplication is performed in parallel,
ideally the algorithm costs time only linear in $m{+}n$.
Another advantage of this algorithm is that it avoids loading the whole $m{\times} n$ data matrix $\A$ into RAM.
Neither the near-optimal column selection algorithm nor the adaptive sampling algorithm requires loading the whole of $\A$ into RAM.
The most space-expensive operation throughout this algorithm is  computation of the Moore-Penrose inverses of $\C$ and $\R$,
which requires maintaining an $m{\times} c$ matrix or an $r {\times} n$ matrix in RAM.
To compute the intersection matrix $\C^\dag \A \R^\dag$, the algorithm needs to visit each entry of $\A$,
but it is not RAM expensive because the multiplication can be done by computing $\C^\dag \a_j$ for $j=1,\cdots,n$ separately.
The above analysis is also valid for the \nystrom algorithm in Theorem~\ref{thm:nystrom_bound}.

\begin{remark}
If we replace the near-optimal column selection algorithm in Theorem~\ref{cor:fast_cur} by the optimal algorithm of \cite{Guruswami2012optimal}, it suffices to select  $c=k \epsilon^{-1} (1+o(1))$ columns and $r = c \epsilon^{-1} (1+\epsilon)$ rows totally.
But the optimal algorithm is less efficient than the near-optimal algorithm.
\end{remark}

\begin{algorithm}[tb]
   \caption{Adaptive Sampling for CUR.}
   \label{alg:fast_cur}
\algsetup{indent=2em}
\begin{small}
\begin{algorithmic}[1]
   \STATE {\bf Input:} a real matrix $\A \in \RBmn$, target rank $k$, $\epsilon \in (0, 1]$,
   			target column number $c = \frac{2k}{\epsilon} \big(1+o(1)\big)$, target row number $r = \frac{c}{\epsilon} (1+\epsilon)$;
   \STATE Select $c= \frac{2k}{\epsilon} \big(1+o(1)\big)$ columns of $\A$ to construct $\C \in \RB^{m\times c}$ using Algorithm~\ref{alg:near_optimal_col};
   \STATE Select $r_1= c$ rows of $\A$ to construct $\R_1 \in \RB^{r_1\times n}$ using Algorithm~\ref{alg:near_optimal_col};
   \STATE Adaptively sample $r_2 = c/\epsilon$ rows from $\A$ according to the residual $\A - \A \R_1^\dag \R_1$;
   \RETURN $\C$, $\R = [\R^T_1 , \R_2^T]^T$, and $\U = \C^\dag \A \R^\dag$.
\end{algorithmic}
\end{small}
\end{algorithm}

\subsection{Adaptive Sampling for the \nystrom Approximation} \label{sec:improved_nystrom}

Theorem~\ref{thm:adaptive_bound} provides an approach for bounding the approximation errors
incurred by projection simultaneously onto column space and row space.
Thus this approach can be applied to solve the modified \nystrom method.
The following theorem follows directly from Lemma~\ref{prop:fast_column_selection} and Corollary~\ref{cor:adaptive_improved}.

\begin{theorem}[Adaptive Sampling for the Modified \nystrom Method] \label{thm:nystrom_bound}
Given a symmetric matrix $\A \in \RB^{m\times m}$ and a target rank $k$,
with $c_1=\frac{2k}{\epsilon} \big(1 + o(1)\big)$ columns sampled by Algorithm~\ref{alg:near_optimal_col}
and $c_2 = c_1/\epsilon$ columns sampled by the adaptive sampling algorithm,
that is, with totally $c =\frac{2k}{\epsilon^2} \big(1 + o(1)\big)$ columns being sampled,
the approximation error incurred by the modified \nystrom method is upper bounded by
\begin{equation}
\EB \big\|\A - \C \U \C^T \big\|_F
\;\leq \; \EB \Big\|\A - \C \Big( \Cmp \A (\C^\dag)^T \Big) \C^T \Big\|_F
\;\leq\; (1+\epsilon) \|\A - \A_k \|_F \textrm{.} \nonumber
\end{equation}
The algorithm costs time
$\OM\big( m k^2 \epsilon^{-4} + m k^3 \epsilon^{-2/3} \big) + \TimeMulti\big( m^2 k \epsilon^{-2} \big)$ in
computing $\C$ and $\U$.
\end{theorem}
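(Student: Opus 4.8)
The plan is to read Theorem~\ref{thm:nystrom_bound} off part~(2) of Corollary~\ref{cor:adaptive_improved}, with the column selection subroutine $\ALG$ taken to be the near-optimal algorithm of Lemma~\ref{prop:fast_column_selection}, and then to tally the running time by hand.

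First I would note that Lemma~\ref{prop:fast_column_selection}, invoked with error parameter $\epsilon$, exhibits Algorithm~\ref{alg:near_optimal_col} as a relative-error column selection algorithm with $C(k,\epsilon) = \frac{2k}{\epsilon}(1+o(1))$. Feeding this $\ALG$ into Corollary~\ref{cor:adaptive_improved}(2): select $c_1 = \frac{2k}{\epsilon}(1+o(1))$ columns by Algorithm~\ref{alg:near_optimal_col} to form $\C_1$, then $c_2 = c_1/\epsilon$ further columns by the adaptive sampling algorithm --- column $i$ of $\A$ drawn with probability proportional to $\|\d_i\|_2^2$, where $\d_i$ is the $i$-th column of the residual $\A - \C_1\C_1^\dag\A$ --- to form $\C_2$, so that the total is $c = c_1 + c_2 = \frac{2k}{\epsilon^2}(1+o(1))$ and $\C = [\C_1,\C_2]$. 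Corollary~\ref{cor:adaptive_improved}(2) then gives $\EB\|\A - \C\U\C^T\|_F \le (1+\epsilon)\|\A-\A_k\|_F$ with $\U = \C^\dag\A(\C^\dag)^T$, which is exactly the claimed bound; the first inequality in the statement is an equality by the definition of $\U$. The content being used inside the corollary is that, $\A$ being symmetric, this column sampling coincides with the row sampling of Theorem~\ref{thm:adaptive_bound} taken with $\R_1 = \C_1^T$, and $\|\A-\A\C_1\C_1^\dag\|_F = \|\A-\C_1\C_1^\dag\A\|_F$; that the identity $\C\U\C^T = \C\C^\dag\A\C\C^\dag$ together with optimality of $\C\U\C^T$ among all matrices of the form $\C\Y\C^T$ transports the theorem's bound to $\|\A-\C\U\C^T\|_F$; and that, since $\rk(\C_1)\le c_1$ and $c_2 = c_1/\epsilon$, the adaptive residual coefficient $\rho/c_2$ is at most $\epsilon$, which produces $(1+\epsilon)^2\|\A-\A_k\|_F^2$ in squared expectation, after which Jensen's inequality delivers the stated $(1+\epsilon)$ bound on $\EB\|\A-\C\U\C^T\|_F$.

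The only genuinely new computation is the time complexity, which I would get by summing three contributions. (a) Algorithm~\ref{alg:near_optimal_col} on the $m\times m$ input, which by Lemma~\ref{prop:fast_column_selection} with $n=m$ costs $\OM(mk^2\epsilon^{-4/3} + mk^3\epsilon^{-2/3}) + \TimeMulti(m^2 k\epsilon^{-2/3})$. (b) The adaptive round: $\C_1^\dag$ in $\OM(mc_1^2)$, the products forming $\A - \C_1\C_1^\dag\A$ in $\TimeMulti(m^2 c_1)$, the $m$ column norms in $\OM(m^2)$, and the $c_2$ draws in $\OM(c_2)$, with $c_1 = \OM(k\epsilon^{-1})$. (c) Forming $\U = \C^\dag\A(\C^\dag)^T$, dominated by $\C^\dag$ in $\OM(mc^2) = \OM(mk^2\epsilon^{-4})$ together with the two matrix products in $\TimeMulti(m^2 c) = \TimeMulti(m^2 k\epsilon^{-2})$. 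Collecting the dominant terms reproduces $\OM(mk^2\epsilon^{-4} + mk^3\epsilon^{-2/3}) + \TimeMulti(m^2 k\epsilon^{-2})$. The error bound demands nothing beyond invoking the two cited results, so the one place needing care is this time-complexity bookkeeping: keeping $\TimeMulti$ terms separate from $\OM$ terms and observing that the adaptive round inflates the column count to $c = \OM(k\epsilon^{-2})$, which is why $\C^\dag$ (cost $\OM(mc^2) = \OM(mk^2\epsilon^{-4})$) becomes the dominant non-multiplicative term.
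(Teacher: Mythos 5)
Your proposal is correct and follows essentially the same route as the paper: the error bound is obtained by instantiating Corollary~\ref{cor:adaptive_improved}(2) with the near-optimal algorithm of Lemma~\ref{prop:fast_column_selection} as $\ALG$ (with the same symmetric-matrix reduction to Theorem~\ref{thm:adaptive_bound} via $\R_1=\C_1^T$ and the same projection-monotonicity step, which you phrase as least-squares optimality of $\U=\C^\dag\A(\C^\dag)^T$ and the paper phrases as Lemma~\ref{lem:cor:nystrom_bound}), and your three-part time tally matches the paper's accounting term for term.
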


\begin{remark}
The error bound in Theorem~\ref{thm:nystrom_bound} is the only Frobenius norm relative-error bound
for the \nystrom approximation at present,
and it is also a constant-factor bound.
If one uses the optimal column selection algorithm of \cite{Guruswami2012optimal}, which is less efficient,
the error bound is further improved: only $c = \frac{k}{\epsilon^2} (1 + o(1) )$ columns are required.
Furthermore, the theorem requires the matrix $\A$ to be symmetric, which is milder than the
SPSD requirement made in the previous work.
\end{remark}

This is yet the strongest result for the \nystrom approximation problem---much stronger than the best
possible algorithms for the conventional \nystrom method.
We will illustrate this point by revealing the lower error bounds of the conventional \nystrom methods.

\begin{table}
\begin{center}
\begin{small}
\begin{tabular}{c| c c c}
\hline
                        &$\frac{\|\A-\tilde{\A}\|_F}{\max_{i,j} |a_{i j}|}$& $\frac{\|\A-\tilde{\A}\|_2}{\max_{i,j} |a_{i j}|}$ &   $\frac{\|\A-\tilde{\A}\|_*}{\max_{i,j} |a_{i j}|}$\\
\hline
{\it Standard}&$0.99 \sqrt{m-c-k + k \big(\frac{m+99k}{c+99k}\big)^2 } $
                        & $\frac{0.99(m+99)}{c+99}$
                        & $0.99(m-c)\big(1+\frac{k}{c+99k}\big)$ \\
{\it Ensemble}&$0.99 \sqrt{(m-2c + \frac{c}{t} -k)  + k \big( \frac{ m - c + \frac{c}{t} + 99 k }{ c + 99k } \big)^2 }$
                        & --
                        & $0.99(m-c)\big(1+\frac{k}{c+99k}\big)$  \\
\hline
\end{tabular}

\vspace{2mm}

\begin{tabular}{c| c c c}
\hline
                    &$\frac{\|\A-\tilde{\A}\|_F}{\|\A-\A_k\|_F}$
                    & $\frac{\|\A-\tilde{\A}\|_2}{\|\A-\A_k\|_2}$
                    & $\frac{\|\A-\tilde{\A}\|_*}{\|\A-\A_k\|_*}$ \\
\hline
{\it Standard}&$\sqrt{ 1 + \frac{ m^2 k - c^3 }{c^2 (m-k)} }$
                    & ${\frac{m}{c}}$
                    & ${\frac{m - c}{m-k} \big( 1 + \frac{k}{c} \big)} $\\
{\it Ensemble}& $\sqrt{\frac{m - 2c + {c}/{t} - k}{m-k} \Big( 1+ \frac{k(m - 2c + c/t)}{c^2}  \Big)}$
                    & --
                    & ${\frac{m - c}{m-k} \big( 1 + \frac{k}{c} \big)} $ \\
\hline
\end{tabular}
\end{small}
\end{center}
\caption{Lower bounds of the standard \nystrom method and the ensemble \nystrom method.
        The blanks indicate the lower bounds are unknown to us.
        Here $m$ denotes the column/row number of the SPSD matrix, $c$ denotes the number of selected columns, and $k$ denotes the target rank.}
        \label{tab:nystrom_lower_bound_detail}
\end{table}

\subsection{Lower Error Bounds of the Conventional \nystrom Methods} \label{sec:lower_bounds}

We now demonstrate to what an extent our modified \nystrom method is superior over the conventional \nystrom methods
(namely the standard \nystrom defined in (\ref{eq:nystrom_approx}) and the ensemble Nystr\"{o}m in (\ref{eq:ensemble_nystrom_approx}))
by showing the lower error bounds of the conventional \nystrom methods.
The conventional \nystrom methods work no better than the lower error bounds unless additional assumptions are made on the original matrix $\A$.
We show in Theorem~\ref{thm:lower_bounds_nystrom} the lower error bounds of the conventional \nystrom methods;
the results are briefly summarized previously in Table~\ref{tab:nystrom_lower_bound}.

To derive lower error bounds, we construct two adversarial cases for the \nystrom methods.
To derive the spectral norm lower bounds, we use an SPSD matrix $\B$ whose diagonal entries equal to $1$ and off-diagonal entries equal to $\alpha \in [0,1)$.
For the Frobenius norm and nuclear norm bounds,
we construct an $m\times m$ block diagonal matrix $\A$ which has $k$ diagonal blocks,
each of which  is $\frac{m}{k} \times \frac{m}{k}$ in size and constructed in the same way as $\B$.
For the lower bounds on $\frac{\|\A-\tilde{\A}\|_\xi}{\max_{i,j} |a_{i j}|}$, $\alpha$ is set to be constant;
for the bounds on $\frac{\|\A-\tilde{\A}\|_\xi}{\|\A-\A_k\|_\xi}$, $\alpha$ is set to be $\alpha \rightarrow 1$.
The detailed proof of Theorem~\ref{thm:lower_bounds_nystrom} is deferred to Appendix~\ref{sec:proof_lower_bounds}.

\begin{theorem}[Lower Error Bounds of the \nystrom Methods] \label{thm:lower_bounds_nystrom}
Assume we are given an SPSD matrix $\A\in \RB^{m\times m}$ and a target rank $k$.
Let $\A_k$ denote the best rank-$k$ approximation to $\A$.
Let $\tilde{\A}$ denote either the rank-$c$ approximation to $\A$ constructed by the standard \nystrom method in (\ref{eq:nystrom_approx}),
or the approximation constructed by the ensemble \nystrom method in (\ref{eq:ensemble_nystrom_approx}) with $t$ non-overlapping samples, each of which contains $c$ columns of $\A$.
Then there exists an SPSD matrix such that
for any sampling strategy the approximation errors of the conventional \nystrom methods,
that is,  $\|\A - \tilde{\A}\|_\xi$, \emph{($\xi = 2$, $F$, or ``$*$")},
are lower bounded by some factors which are shown in Table~\ref{tab:nystrom_lower_bound_detail}.
\end{theorem}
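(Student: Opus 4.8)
The plan is to exhibit, for each norm $\xi \in \{2, F, *\}$, an explicit adversarial SPSD matrix on which \emph{every} column-sampling strategy for the conventional \nystrom methods incurs the claimed error, so that the stated quantities are genuine lower bounds. First I would treat the spectral-norm case. Take $\B \in \RB^{m \times m}$ with $b_{ii} = 1$ and $b_{ij} = \alpha$ for $i \neq j$, i.e.\ $\B = (1-\alpha)\I_m + \alpha \1_m \1_m^T$. Its eigenvalues are $1 + (m-1)\alpha$ (once) and $1 - \alpha$ (with multiplicity $m-1$). The key observation is that by the symmetry of $\B$ under coordinate permutations, the intersection block $\W$ corresponding to \emph{any} choice of $c$ columns is the same up to relabeling: $\W = (1-\alpha)\I_c + \alpha \1_c \1_c^T$, and $\A_{21} = \alpha \1_{m-c} \1_c^T$. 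I would then compute the Schur complement $\A_{22} - \A_{21} \W^\dag \A_{21}^T$ in closed form (using the Sherman--Morrison formula for $\W^\dag$), obtaining a matrix of the same $(1-\alpha')\I + \alpha' \1\1^T$ form on the remaining $m - c$ coordinates, and read off $\|\A - \tilde\A^{\text{nys}}_c\|_2 = \|\A_{22} - \A_{21}\W^\dag \A_{21}^T\|_2$. Setting $\alpha$ to an appropriate constant and simplifying yields the $\frac{0.99(m+99)}{c+99}$ bound and, dividing by $\|\B - \B_k\|_2 = 1 - \alpha$ and letting $\alpha \to 1$, the $\frac{m}{c}$ bound.

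Next I would handle the Frobenius- and nuclear-norm cases via the block-diagonal construction: let $\A = \blkdiag(\B^{(1)}, \ldots, \B^{(k)})$ with each $\B^{(i)}$ of size $\frac{m}{k} \times \frac{m}{k}$ of the $\B$-type above. Then $\A - \A_k$ is block-diagonal with the rank-$(\frac{m}{k}-1)$ tails, so $\|\A - \A_k\|_F^2 = k(\frac{m}{k}-1)(1-\alpha)^2$ and $\|\A - \A_k\|_* = k(\frac{m}{k}-1)(1-\alpha)$. For the standard \nystrom approximation, suppose $c_i$ columns are sampled from block $i$, so $\sum_i c_i = c$. Because the blocks are decoupled, $\tilde\A$ is block-diagonal and the residual in block $i$ is exactly the $(\frac{m}{k}-c_i)$-dimensional Schur complement computed in the spectral-norm step. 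I would sum the per-block Frobenius (resp.\ nuclear) norms and then \emph{minimize over} the allocation $(c_1, \ldots, c_k)$ subject to $\sum c_i = c$; by convexity the worst case for the algorithm is still bounded below by the value at a balanced or near-balanced allocation, and a short Cauchy--Schwarz / Jensen argument converts $\sum_i$ (per-block bound) into the closed forms $0.99\sqrt{m - c - k + k(\frac{m+99k}{c+99k})^2}$ and $\sqrt{1 + \frac{m^2 k - c^3}{c^2(m-k)}}$ etc., again with $\alpha$ constant or $\alpha \to 1$ as appropriate. For the ensemble method with $t$ non-overlapping samples each of $c$ columns, the same decoupling applies but now $tc$ columns total are used; averaging with weights $\mu^{(i)}$ that sum to $1$, I would use that the average of the $t$ block-diagonal residuals still has a large diagonal part (the coordinates missed by \emph{all} $t$ samples, of which there are at least $m - tc$ in aggregate, contribute a fixed diagonal entry $1$ to $\A$ but $0$ to $\tilde\A$), which produces the $m - 2c + c/t$ type terms after accounting for the within-sample Schur complements.

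The main obstacle is the combinatorial minimization over how the adversary's columns distribute across the $k$ blocks: I must show that \emph{no} clever allocation lets the algorithm escape the bound, which requires checking that the per-block residual, as a function of $c_i$, is convex (so Jensen pushes the sum down to the balanced point) and that the balanced point still gives the stated closed form — this is where the $\frac{m+99k}{c+99k}$ and $\frac{m^2 k - c^3}{c^2(m-k)}$ expressions come from after the algebra. A secondary subtlety is the ensemble case: the weights $\mu^{(i)}$ are arbitrary (subject to summing to $1$), so I need a bound uniform in $\muu$, which I would get from the triangle inequality $\|\A - \sum_i \mu^{(i)} \tilde\A^{(i)}\|_\xi \geq$ (contribution of coordinates untouched by every sample) $-$ (bounded cross terms), isolating the $\Omega(m - tc)$ worth of diagonal mass that no single sample can correct. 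Everything else — the Sherman--Morrison inversion of $\W$, the Schur-complement eigenvalue computation, and the final substitution of $\alpha$ — is routine once the reduction to a single $\B$-block is in place.
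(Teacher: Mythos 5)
Your overall strategy coincides with the paper's: the same two adversarial constructions (the matrix $(1-\alpha)\I_m + \alpha\1_m\1_m^T$ for the spectral norm, and its $k$-block block-diagonal version for the Frobenius and nuclear norms), the same Sherman--Morrison computation of the Schur complement $\B_{22}-\B_{21}\W^\dag\B_{21}^T$, the same use of permutation symmetry to reduce any column choice to a canonical one, and the same convexity/Jensen minimization over the allocation $(c_1,\dots,c_k)$ across blocks. For the standard Nystr\"om bounds your plan is essentially the paper's proof.

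The ensemble case, however, contains a genuine error. You claim that a coordinate missed by all $t$ samples ``contributes a fixed diagonal entry $1$ to $\A$ but $0$ to $\tilde{\A}$,'' and you propose to extract $\Omega(m-tc)$ of uncorrected diagonal mass, uniformly in the weights $\mu^{(i)}$. But the Nystr\"om extension does fill in the unsampled block: $\B_{21}\W^\dag\B_{21}^T$ has all entries equal to $\eta = \frac{c\alpha^2}{1-\alpha+c\alpha}$, so the residual at an unsampled diagonal position is $1-\eta = \frac{(1-\alpha)(1+c\alpha)}{1-\alpha+c\alpha}$, which is $O(1-\alpha)$ and vanishes in the relative-error regime $\alpha\to 1$. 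There is no constant-size diagonal mass to isolate; the lower bound instead comes from summing the $O(1-\alpha)$ residuals over four classes of index pairs (both indices in the same sample, in different samples, one sampled and one not, neither sampled), whose counts and values combine to give the $m-2c+c/t$ coefficient --- for instance the diagonal contribution alone is $tc\cdot\bigl(\tfrac{t-1}{t}\bigr)^2 + (m-tc) = m - 2c + c/t$. This bookkeeping is what the paper's proof does, and your proposed triangle-inequality argument for arbitrary weights rests on the same false premise, so it would not produce the stated bounds; the paper sidesteps the weight issue entirely by fixing $\mu^{(i)}=1/t$, justified by the symmetry of the construction.
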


\begin{remark}
The lower bounds in Table~\ref{tab:nystrom_lower_bound_detail} (or Table~\ref{tab:nystrom_lower_bound}) show
the conventional \nystrom methods
can be sometimes very ineffective. The spectral norm and Frobenius norm bounds even depend on $m$,
so such bounds are not constant-factor bounds.
Notice that the lower error bounds do not meet if $\W^\dag$ is replaced by $\C^\dag \A (\C^\dag)^T$,
so our modified \nystrom method is not limited by such lower bounds.
\end{remark}

\subsection{Discussions of the Expected Relative-Error Bounds} \label{sec:whp_bound}

The upper error bounds established in this paper all hold in expectation.
Now we show that the expected error bounds immediately extend to w.h.p.\ bounds using Markov's inequality.
Let the random variable $X = \|\A - \tilde{\A} \|_F / \|\A - \A_k\|_F$ denote the error ratio, where
\[
\tilde{\A} = \C \U \R \; \textrm{ or } \; \C \U \C^T .
\]
Then we have $\EB (X) \leq 1+\epsilon$ by the preceding theorems.
By applying Markov's inequality we have that
\[
\PB \big(X > 1+s\epsilon \big)
\;<\; \frac{\EB (X)}{1+ s \epsilon}
\;<\; \frac{1+ \epsilon}{1+ s \epsilon},
\]
where $s$ is an arbitrary constant greater than $1$.
Repeating the sampling procedure for $t$ times and letting $X_{(i)}$ correspond to the error ratio of the $i$-th sample,
we obtain an upper bound on the failure probability:
\begin{equation} \label{eq:whp_bound1}
\PB \Big( \min_i \{X_{(i)}\} > 1+s\epsilon \Big)
\; = \; \PB \Big( X_{(i)} > 1+s\epsilon \; \forall i = 1, \cdots, t \Big)
\; < \; \Big(\frac{1+ \epsilon}{1+ s \epsilon} \Big)^t
\; \triangleq \; \delta ,
\end{equation}
which decays exponentially with $t$.
Therefore, by repeating the sampling procedure multiple times and choosing the best sample,
our CUR and \nystrom algorithms are also guaranteed with w.h.p.\ relative-error bounds.
It follows directly from (\ref{eq:whp_bound1}) that, by repeating the sampling procedure for
\[
t \;\geq \; \frac{1+\epsilon}{(s-1)\epsilon} \log \Big(\frac{1}{\delta}\Big)
\]
times, the inequality
\[
\| \A - \tilde{\A} \|_F \;\leq \; (1+ s \epsilon) \: \|\A - \A_k\|_F
\]
holds with probability at least $1-\delta$.

For instance, we let $s = 1 + \log ({1}/{\delta})$, then by repeating the sampling procedure for $t \geq 1 + 1/{\epsilon} $ times, the inequality
\[
\| \A - \tilde{\A} \|_F \;\leq \; \Big(1+ \epsilon + \epsilon \log ({1}/{\delta}) \Big) \: \|\A - \A_k\|_F
\]
holds with probability at least $1-\delta$.

For another instance, we let $s=2$, then by repeating the sampling procedure for $t \geq (1 + 1/{\epsilon}) \log (1/\delta) $ times, the inequality
\[
\| \A - \tilde{\A} \|_F \;\leq \; (1+ 2 \epsilon ) \: \|\A - \A_k\|_F
\]
holds with probability at least $1-\delta$.

\section{Empirical Analysis} \label{sec:experiments}

In Section~\ref{sec:experiments_cur} we empirical evaluate our \CUR algorithms in comparison
with the algorithms introduced in Section~\ref{sec:previous_work_cur}.
In Section~\ref{sec:experiments_nystrom} we conduct empirical comparisons between
the standard \nystrom and our modified Nystr\"{o}m,
and comparisons among three sampling algorithms.
We report the approximation error incurred by each algorithm on each data set.
The error ratio is defined by
\[
\textrm{Error Ratio}
\;=\;  \frac{\| \A - \tilde{\A}\|_F}{\|\A - \A_k\|_F}\textrm{,}
\]
where $\tilde{\A}=\C \U \R$ for the CUR matrix decomposition,
$\tilde{\A}= \C \W^\dag \C^T$ for the standard \nystrom method, and
$\tilde{\A}= \C \big(\C^\dag \A (\C^\dag)^T\big) \C^T$ for the modified \nystrom method.

We conduct experiments on a workstation with two Intel Xeon $2.40$GHz CPUs,
$24$GB RAM, and $64$bit Windows Server 2008 system.
We implement  the algorithms in MATLAB R2011b,
and  use the MATLAB function `$\mathrm{svds}$' for truncated SVD.
To compare the running time, all the computations are carried out in a single thread by
setting `$\mathrm{maxNumCompThreads(1)}$' in MATLAB.

\subsection{Comparison among the CUR Algorithms} \label{sec:experiments_cur}

In this section we empirically compare our adaptive sampling based CUR algorithm (Algorithm~\ref{alg:fast_cur})
with the subspace sampling algorithm of \citet{drineas2008cur}
and the deterministic sparse column-row approximation (SCRA) algorithm of \citet{stewart1999four}.
For SCRA, we use the MATLAB code released by \citet{stewart1999four}.
As for the subspace sampling algorithm, we compute the leverages scores exactly via the truncated SVD.
Although the fast approximation to leverage scores \citep{drineas2012fast} can significantly speedup subspace sampling,
we do not use it because the approximation has no theoretical guarantee when applied to subspace sampling.

\begin{table}[!ht]\setlength{\tabcolsep}{0.3pt}
\begin{center}
\begin{footnotesize}
\begin{tabular}{c c c c c c}
\hline
	{\bf Data Set}	& 	 {\bf Type}	  &			{\bf ~~Size}     &	{\bf \#Nonzero Entries} & {\bf Source}	\\
\hline
    Enron Emails    &   ~~text~~  &   ~~$39,861\times 28,102$~~   &  ~~$3,710,420$ & Bag-of-words, UCI \\
	Dexter			&	~text~    &   $20,000\times 2,600$	    &	~$248,616$	 & \cite{guyon2004result}\\
    Farm Ads		&	   ~text~       &   $54,877\times 4,143$	    &	$821,284$    & \cite{mesterharm2011active}	\\
    Gisette		    &handwritten digit~~&   $13,500\times 5,000$	    &	$8,770,559$	 & \cite{guyon2004result}\\
\hline
\end{tabular}
\end{footnotesize}
\end{center}
\caption{A summary of the data sets for CUR matrix decomposition.}
\label{tab:datasets}
\end{table}

\begin{figure*}
\subfigtopskip = 0pt
\begin{center}
\centering
{\includegraphics[width=60mm,height=40mm]{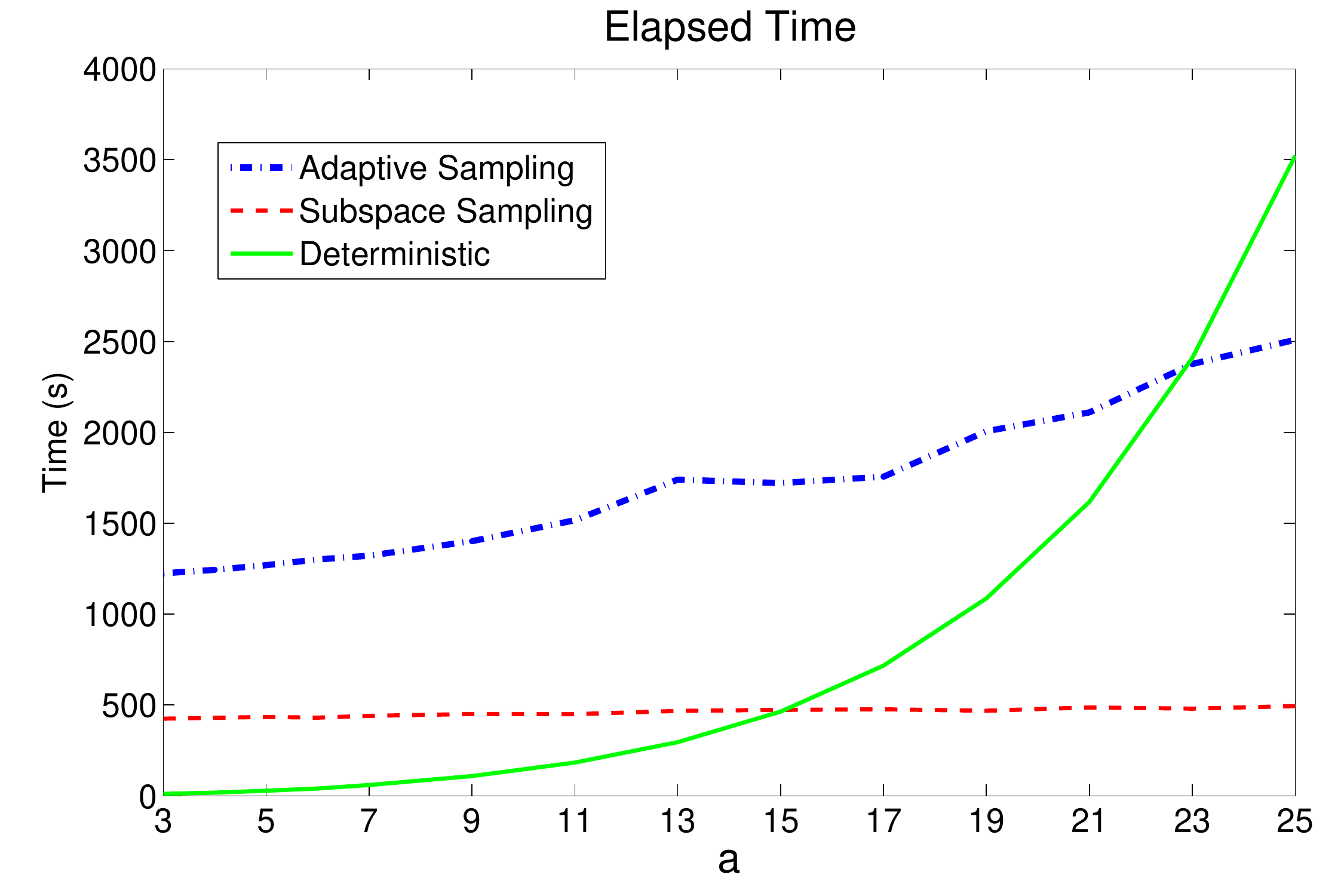}}~
{\includegraphics[width=60mm,height=40mm]{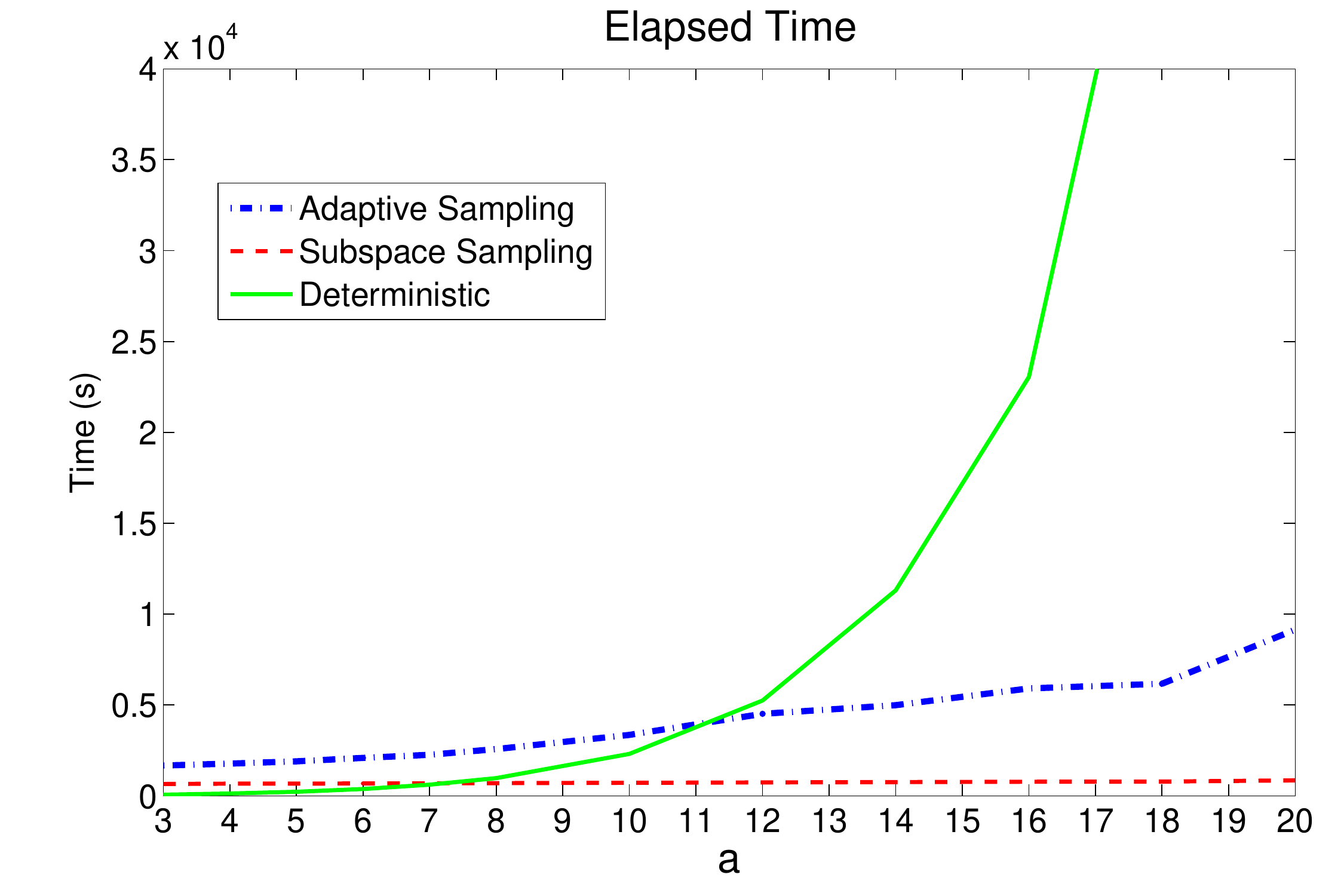}}\\
\subfigure[\textsf{$k = 10$, $c=a k$, and $r=a c$.}]{\includegraphics[width=60mm,height=40mm]{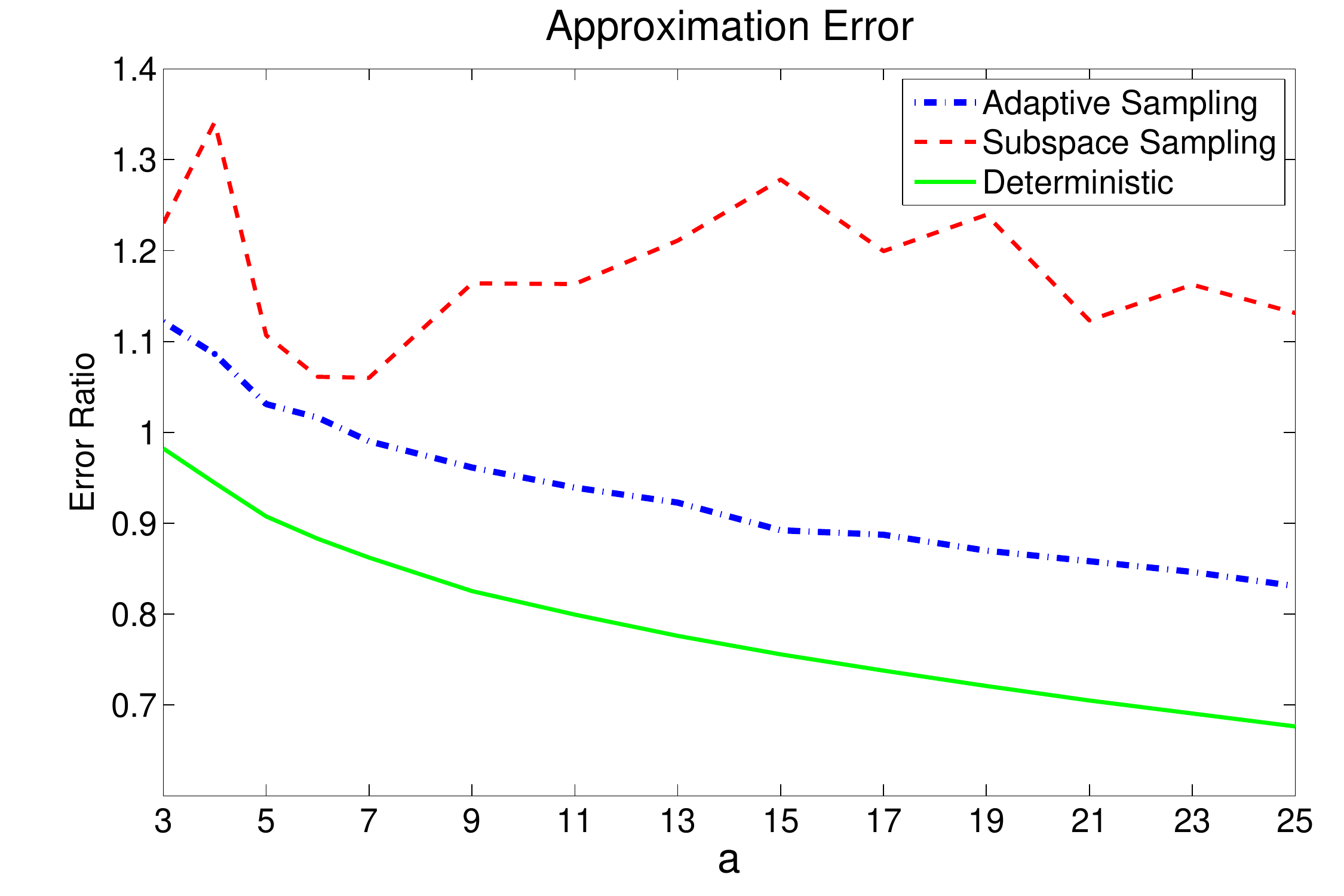}}~
\subfigure[\textsf{$k = 50$, $c=a k$, and $r=a c$.}]{\includegraphics[width=60mm,height=40mm]{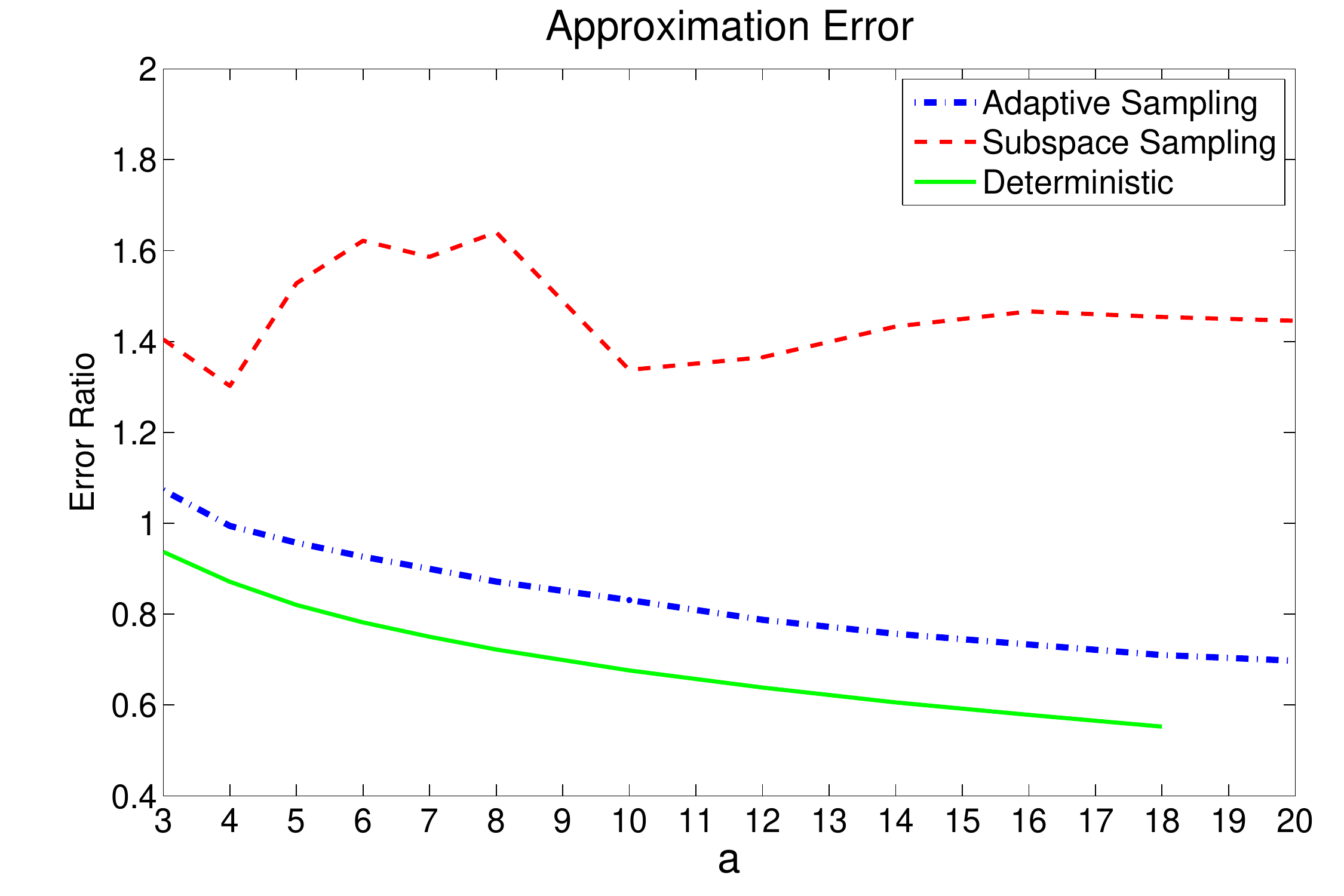}}
\end{center}
   \caption{Results of the CUR algorithms on the Enron data set.}
\label{fig:enron}
\end{figure*}

\begin{figure*}
\subfigtopskip = 0pt
\begin{center}
\centering
{\includegraphics[width=60mm,height=40mm]{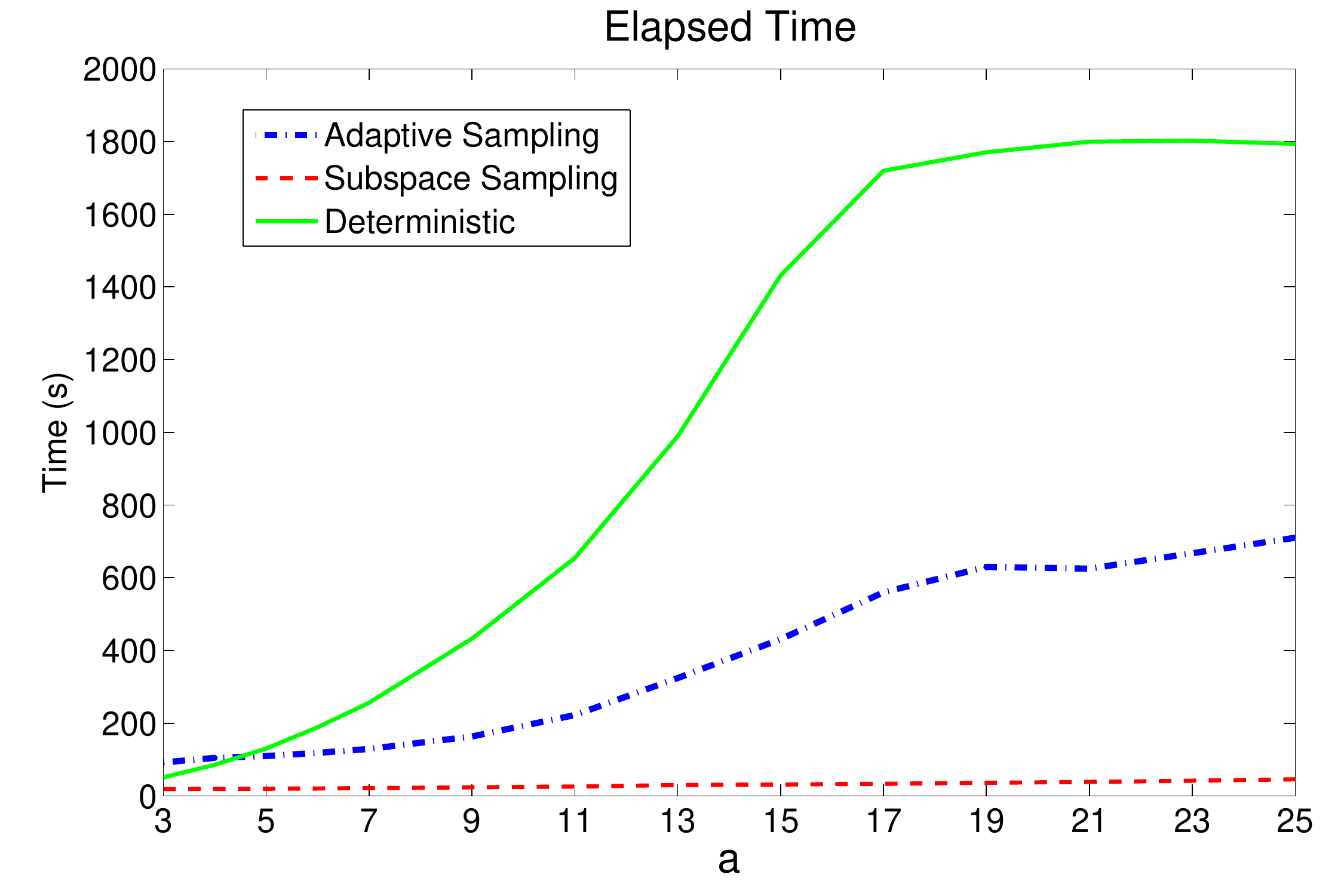}}~
{\includegraphics[width=60mm,height=40mm]{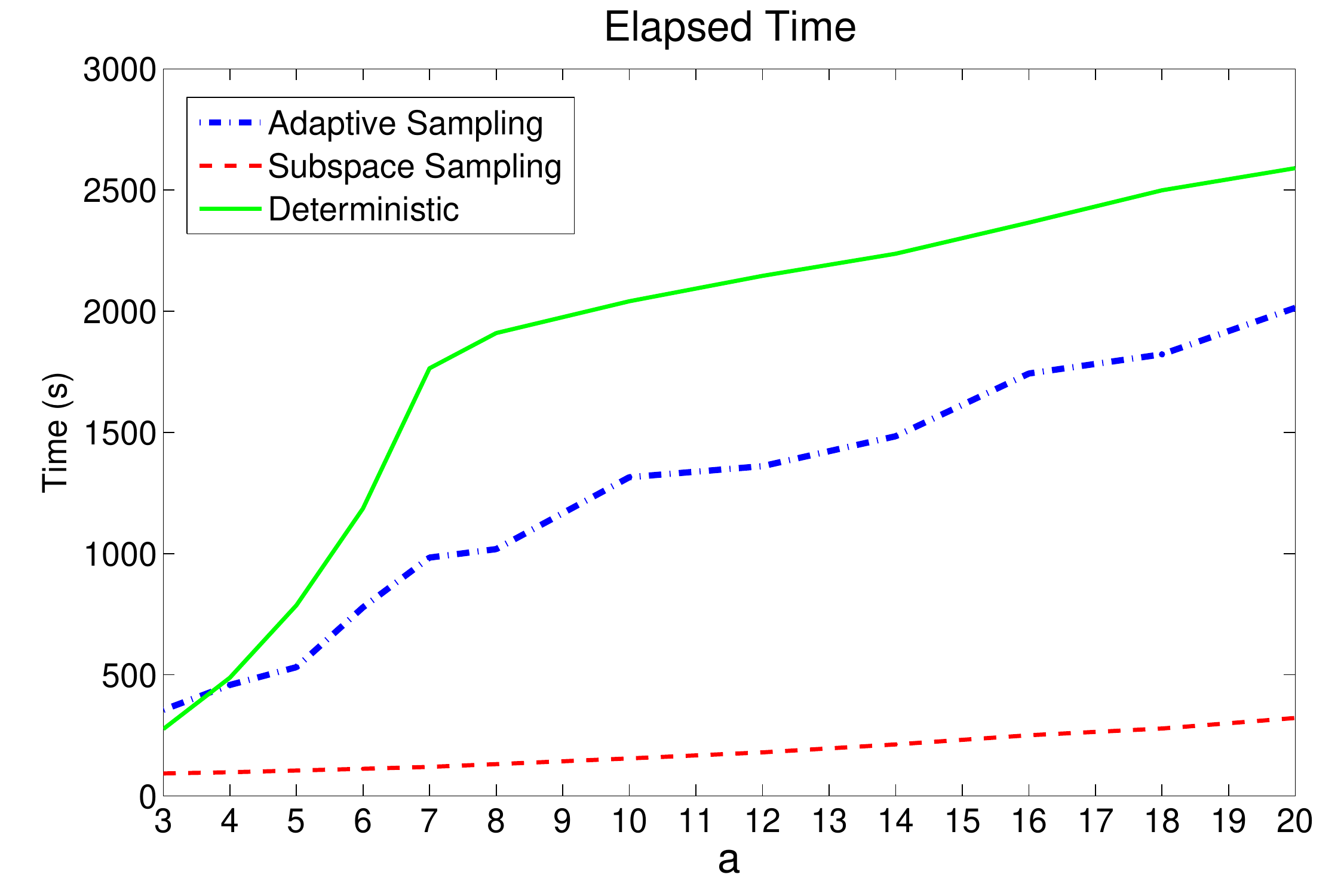}}\\
\subfigure[\textsf{$k = 10$, $c=a k$, and $r=a c$.}]{\includegraphics[width=60mm,height=40mm]{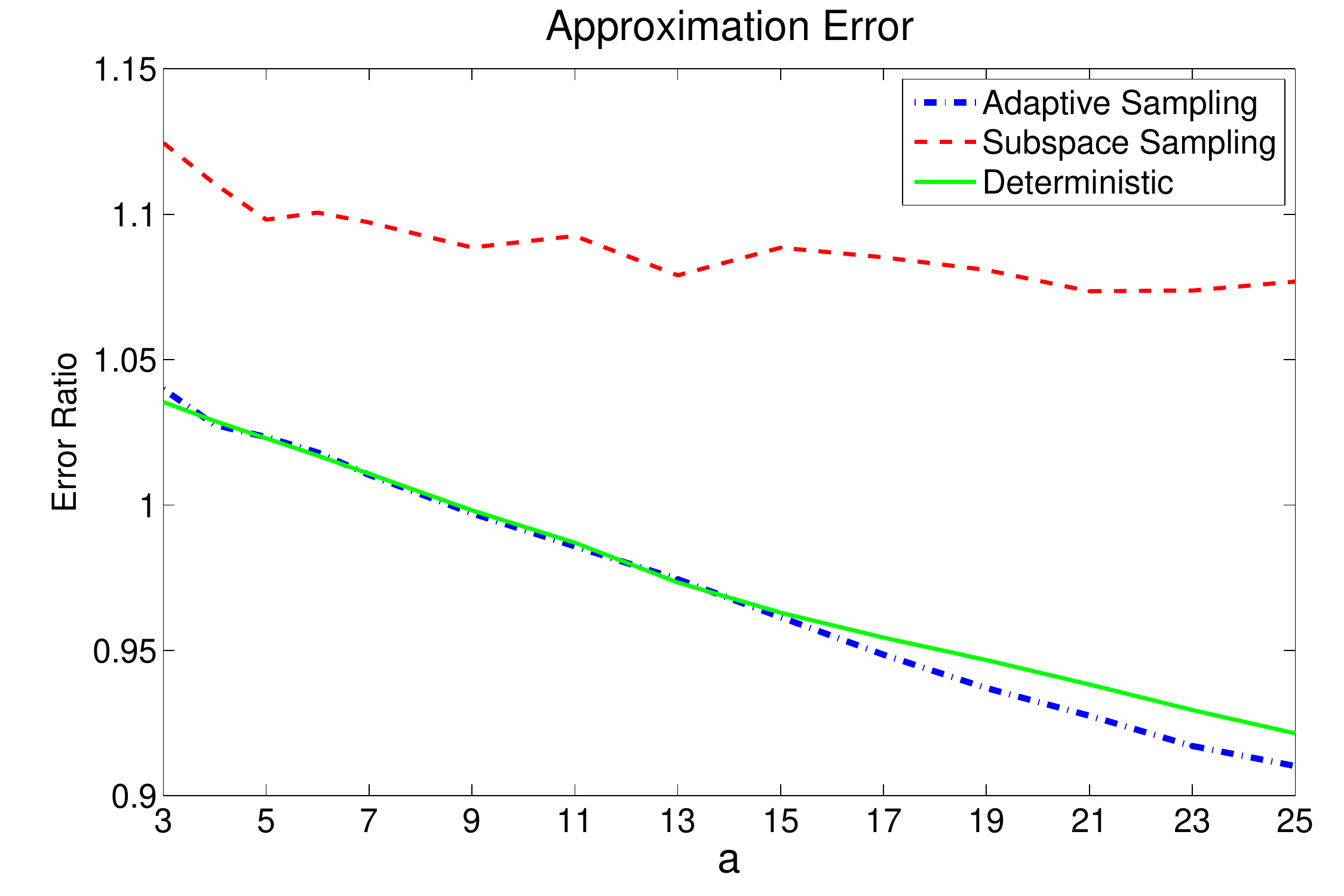}}~
\subfigure[\textsf{$k = 50$, $c=a k$, and $r=a c$.}]{\includegraphics[width=60mm,height=40mm]{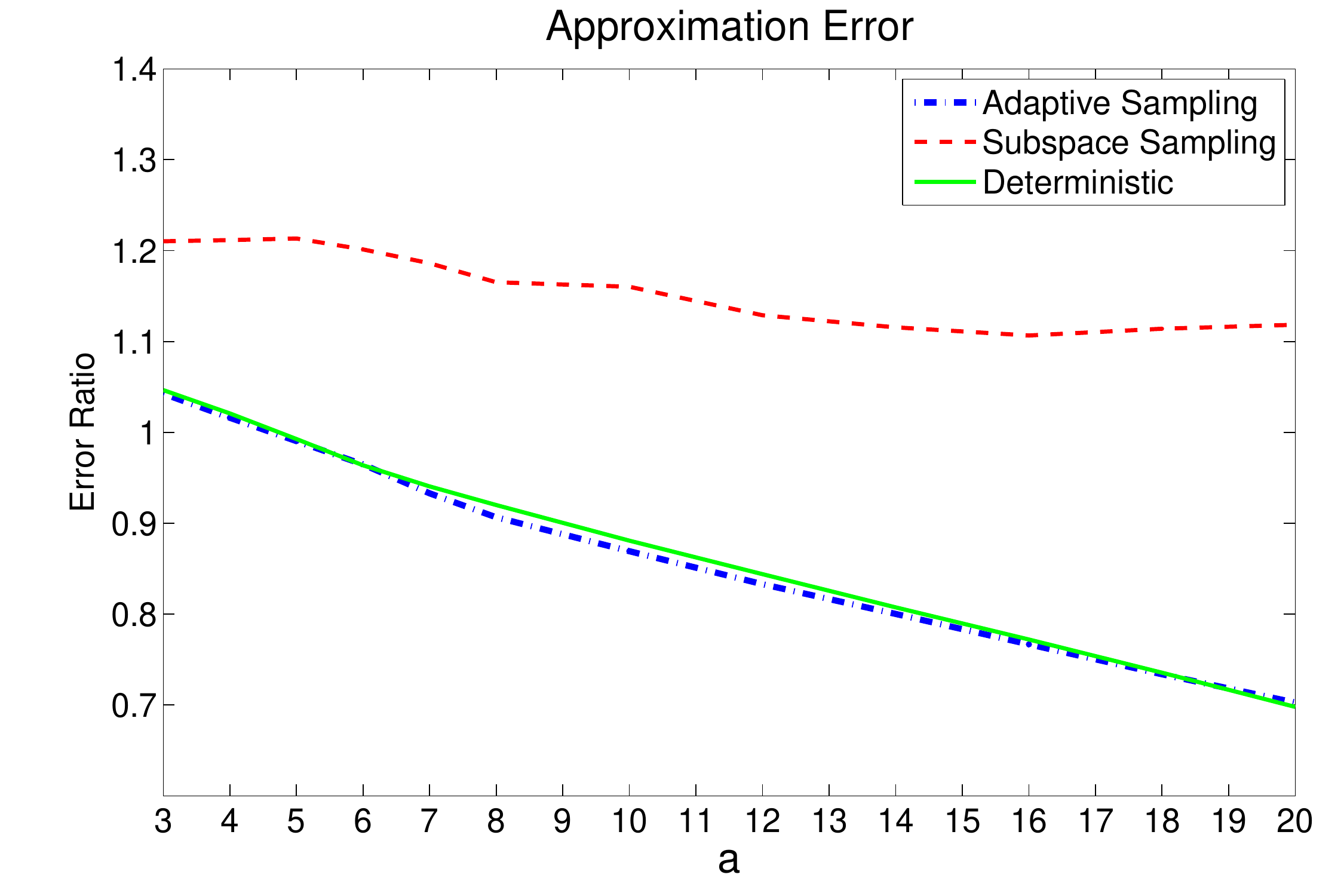}}
\end{center}
   \caption{Results of the CUR algorithms on the Dexter data set.}
\label{fig:dexter}
\end{figure*}

\begin{figure*}
\subfigtopskip = 0pt
\begin{center}
\centering
{\includegraphics[width=60mm,height=40mm]{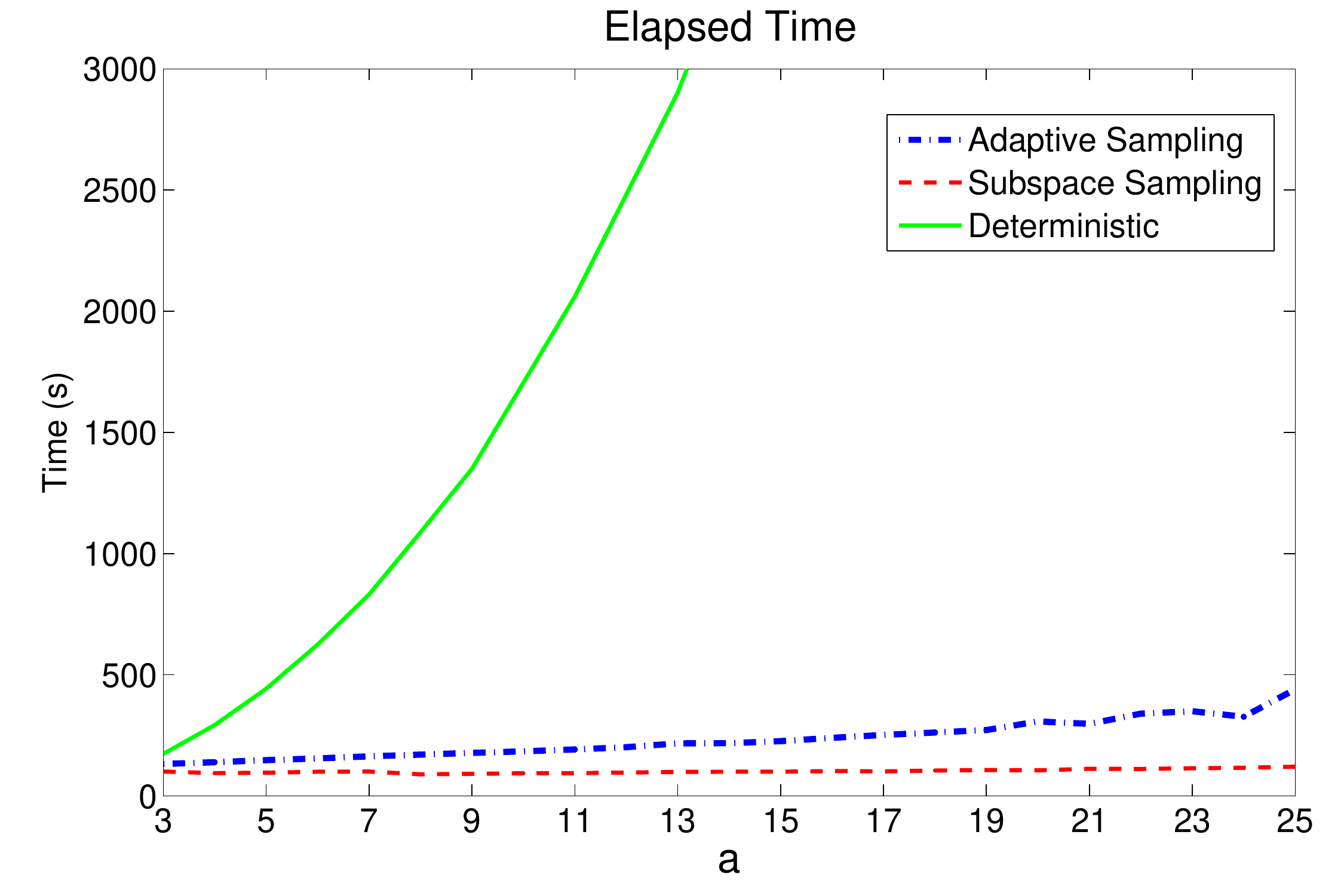}}~
{\includegraphics[width=60mm,height=40mm]{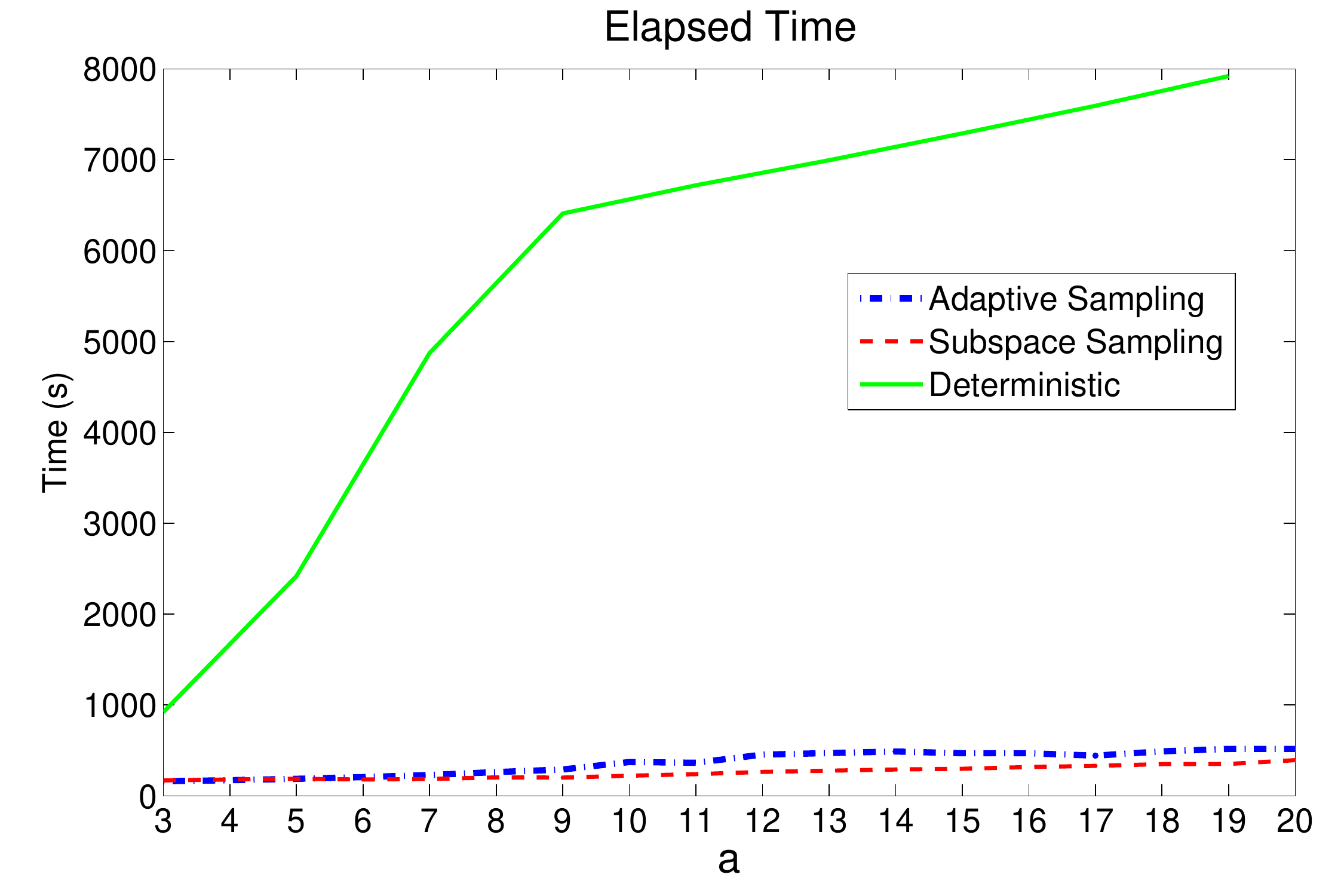}}\\
\subfigure[\textsf{$k = 10$, $c=a k$, and $r=a c$.}]{\includegraphics[width=60mm,height=40mm]{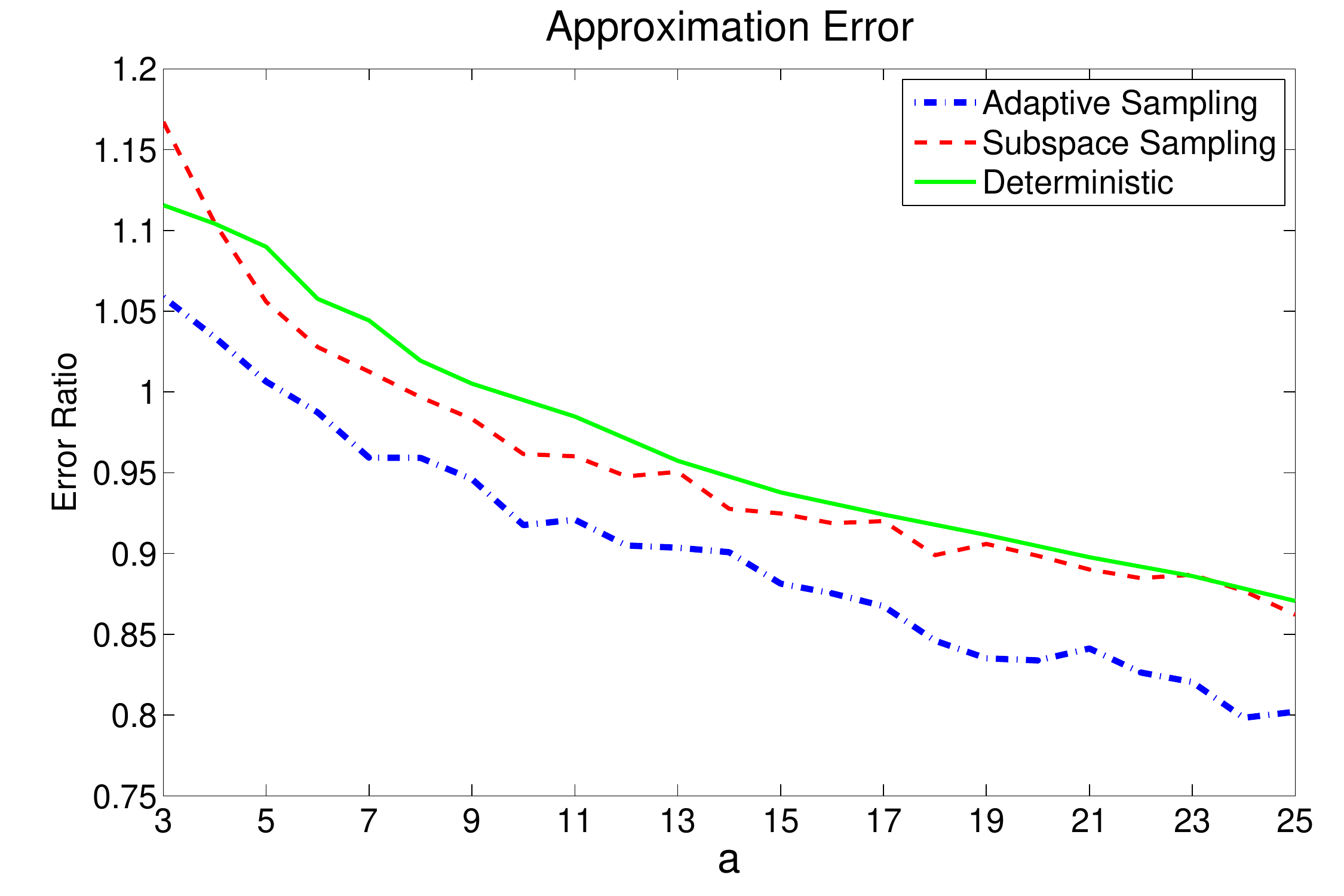}}~
\subfigure[\textsf{$k = 50$, $c=a k$, and $r=a c$.}]{\includegraphics[width=60mm,height=40mm]{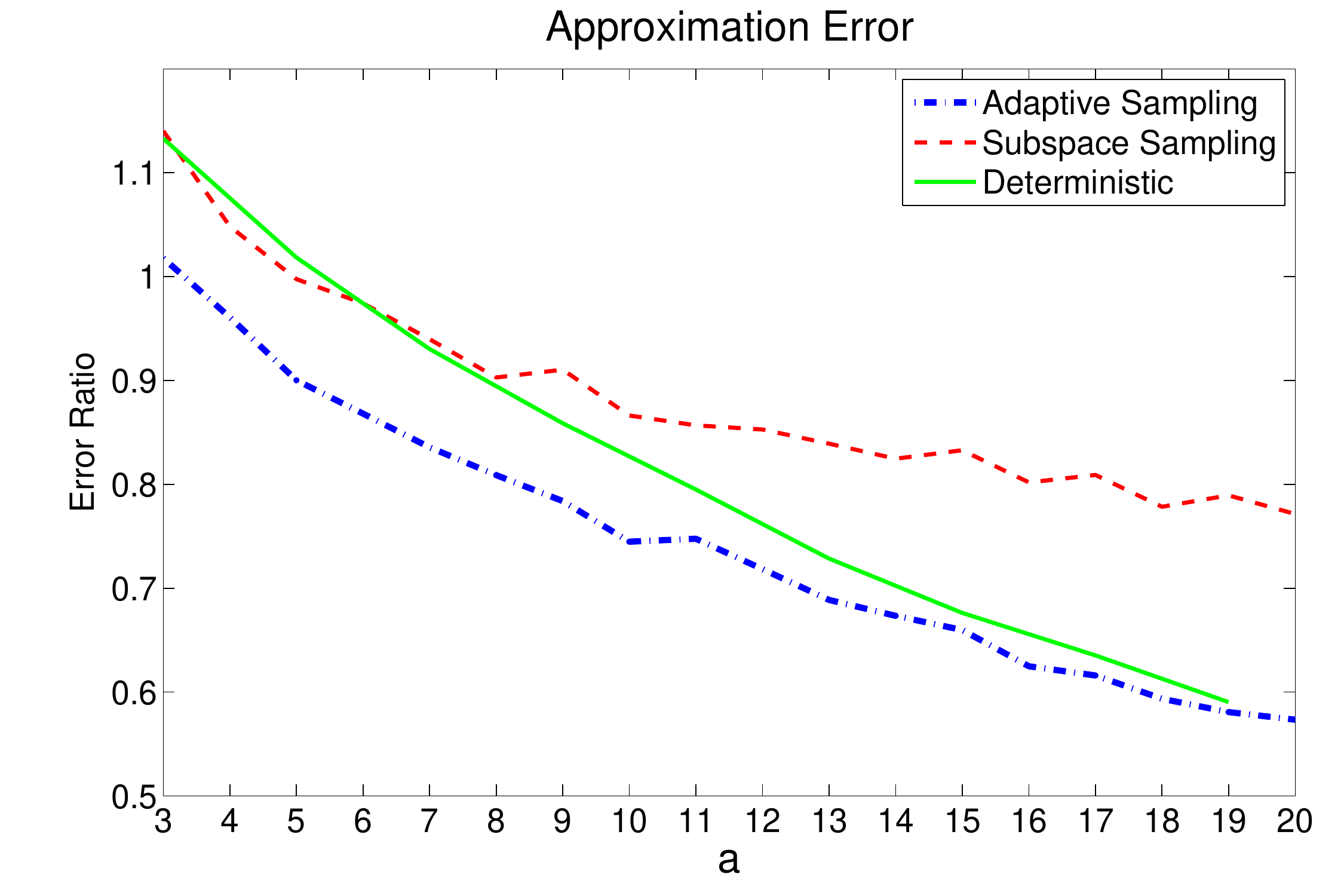}}
\end{center}
   \caption{Results of the CUR algorithms on the Farm Ads data set.}
\label{fig:farmads}
\end{figure*}

\begin{figure*}
\subfigtopskip = 0pt
\begin{center}
\centering
{\includegraphics[width=60mm,height=40mm]{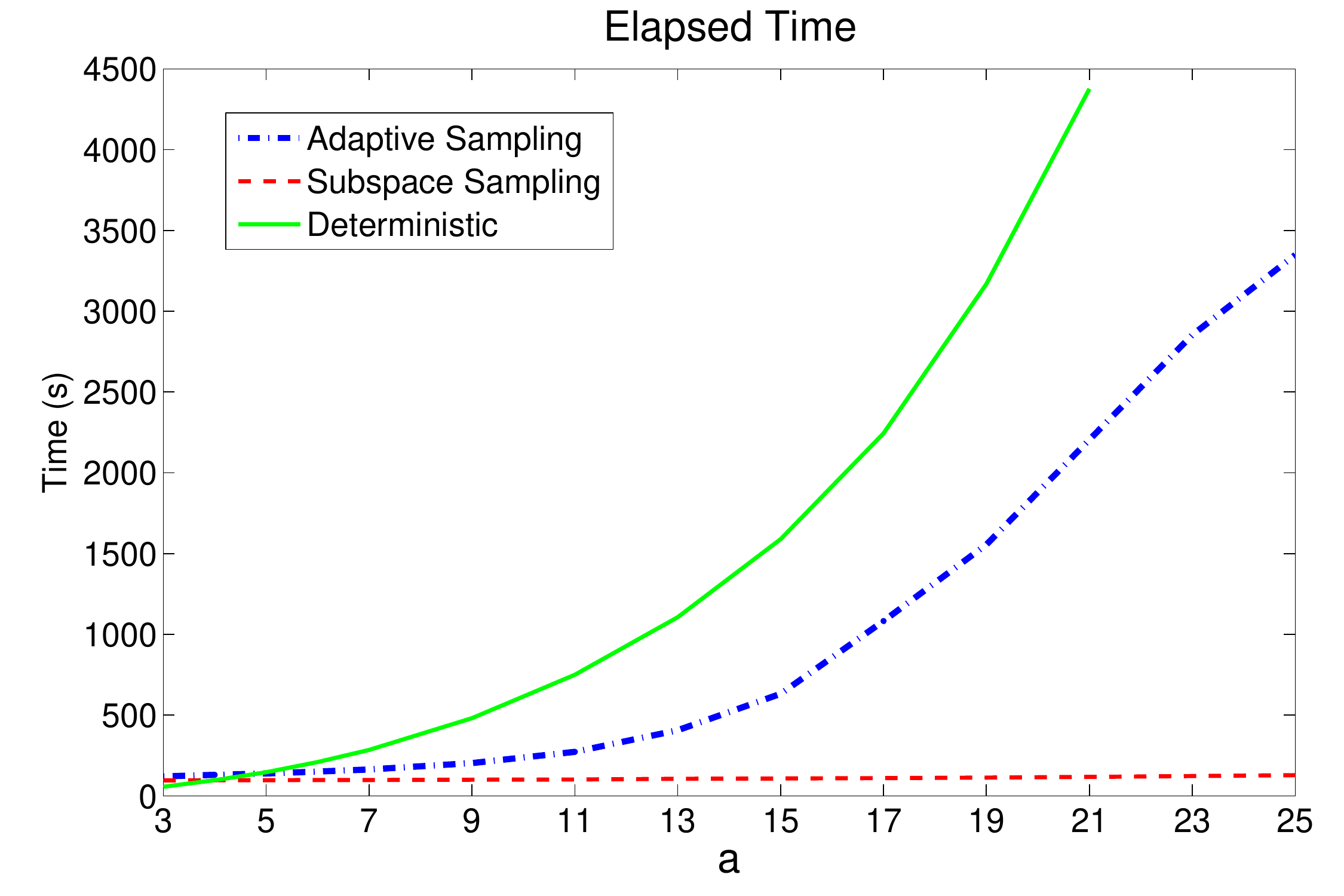}}~
{\includegraphics[width=60mm,height=40mm]{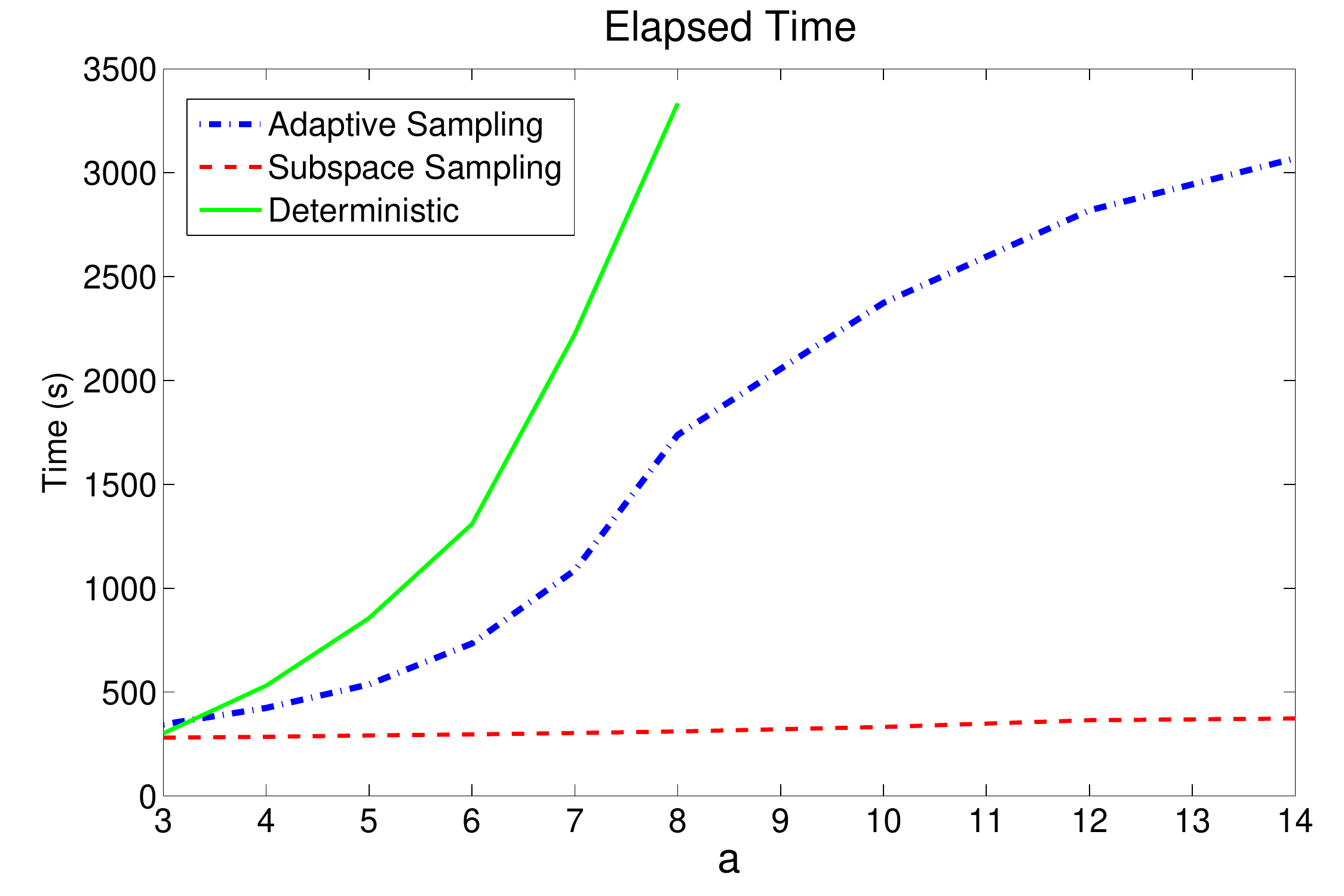}}\\
\subfigure[\textsf{$k = 10$, $c=a k$, and $r=a c$.}]{\includegraphics[width=60mm,height=40mm]{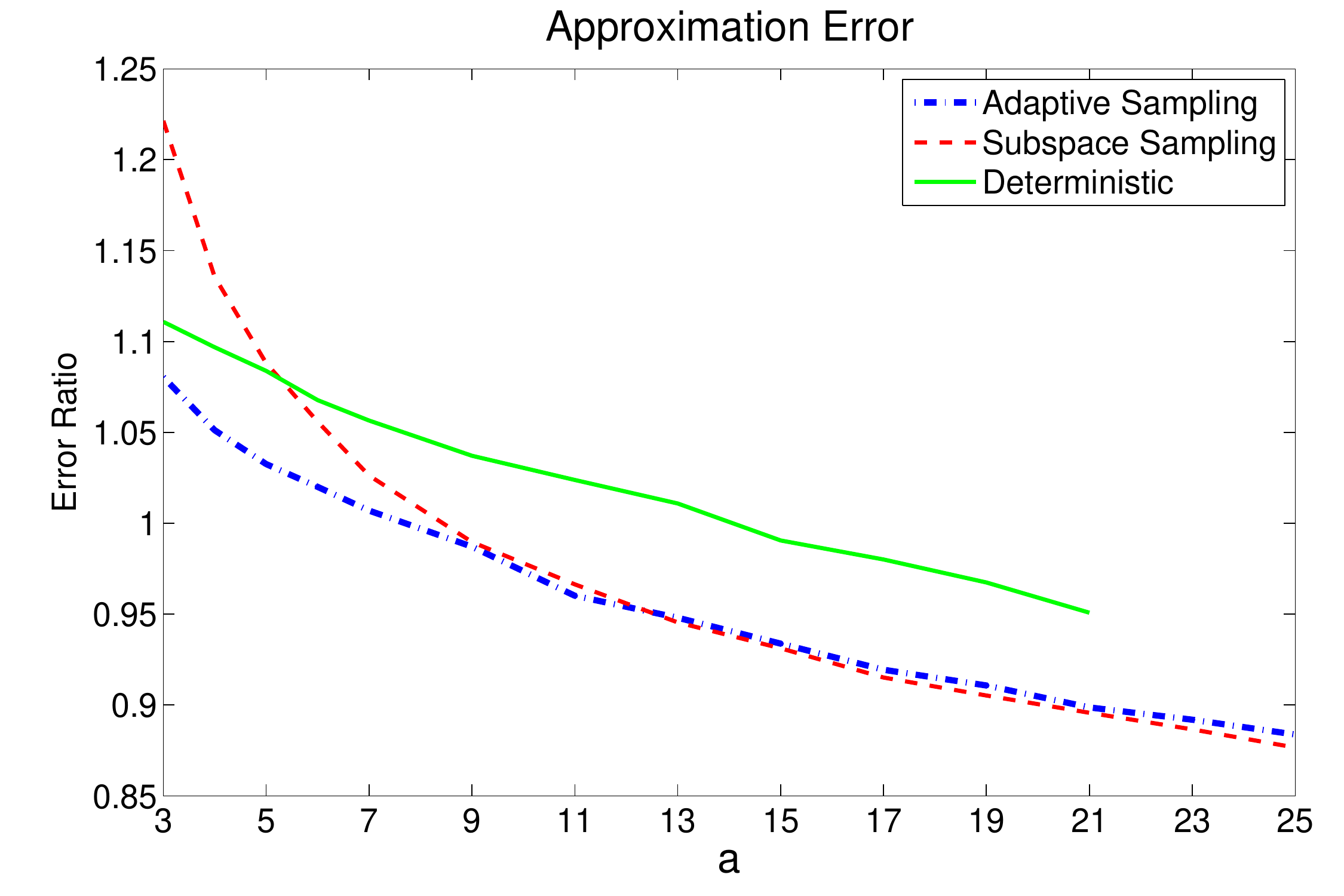}}~
\subfigure[\textsf{$k = 50$, $c=a k$, and $r=a c$.}]{\includegraphics[width=60mm,height=40mm]{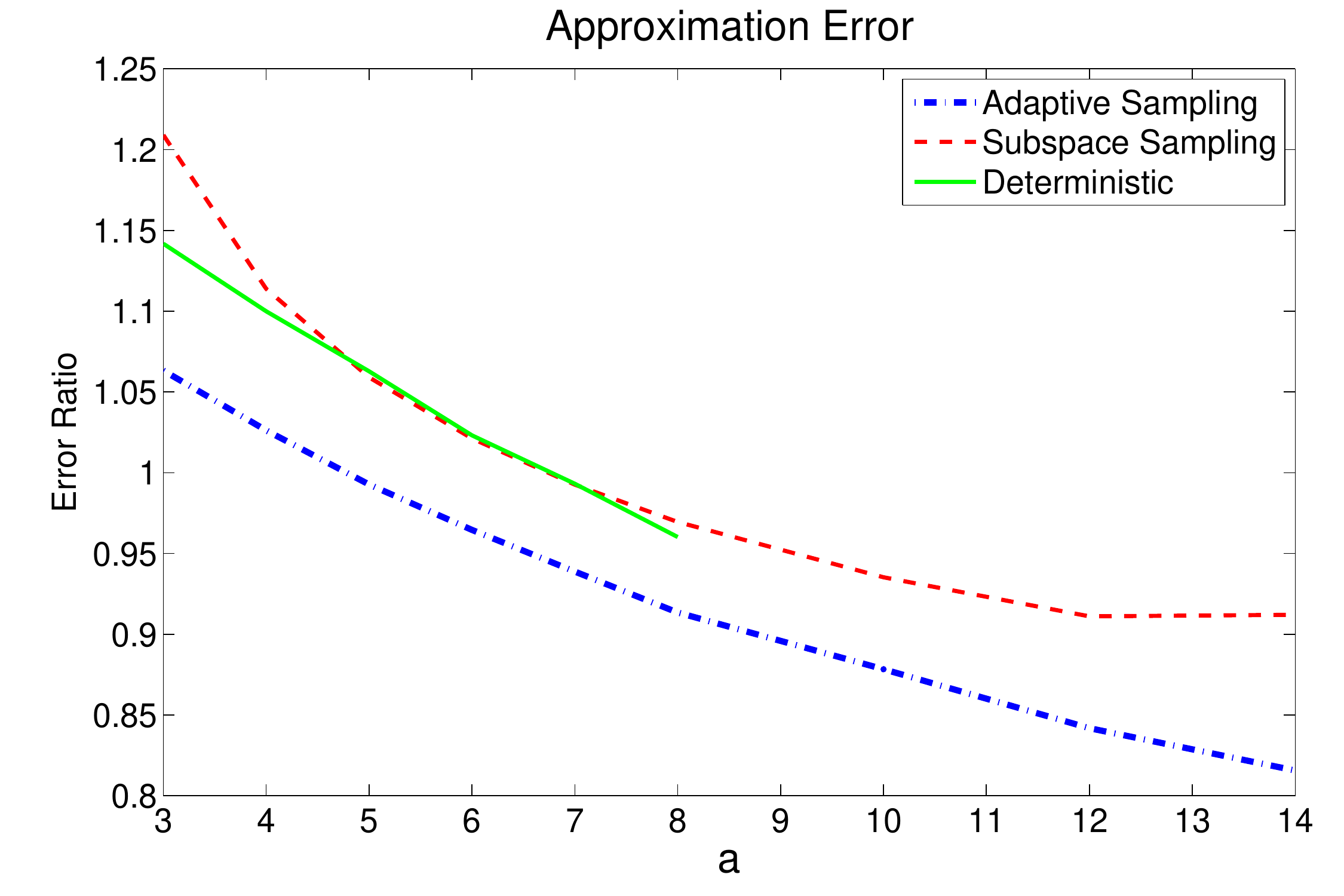}}
\end{center}
   \caption{Results of the CUR algorithms on the Gisette data set.}
\label{fig:gisette}
\end{figure*}

We conduct experiments on four UCI data sets \citep{uci2010} which are summarized in Table~\ref{tab:datasets}.
Each data set is represented as a data matrix, upon which we apply the \CUR algorithms.
According to our analysis, the target rank $k$ should be far less than $m$ and $n$,
and the column number $c$ and row number $r$ should be strictly greater than $k$.
For each data set and each algorithm, we set $k = 10$ or $50$,
and $c = ak$, $r =ac$, where $a$ ranges in each set of experiments.
We repeat each of the two randomized algorithms $10$ times, and report the minimum error ratio
and the total elapsed time of the $10$ rounds.
We depict the error ratios and the elapsed time of the three CUR matrix decomposition algorithms
in Figures \ref{fig:enron}, \ref{fig:dexter}, \ref{fig:farmads}, and \ref{fig:gisette}.

We can see from Figures \ref{fig:enron}, \ref{fig:dexter}, \ref{fig:farmads}, and \ref{fig:gisette}
that our adaptive sampling based \CUR algorithm
has much lower approximation error than the subspace sampling algorithm in all cases.
Our adaptive sampling based algorithm is better than the deterministic SCRA on the Farm Ads data set
and the Gisette data set, worse than SCRA on the Enron data set, and comparable to SCRA
on the Dexter data set.
In addition, the experimental results  match our theoretical analysis in Section~\ref{sec:fast_cur} very well.
The empirical results all obey the theoretical relative-error upper bound
\[
\frac{\|\A - \C \U \R\|_F }{ \|\A - \A_k\|_F}
\; \leq \; 1 + \frac{2 k}{c} \big(1+o(1)\big)
\;=\; 1 + \frac{2}{a} \big(1+o(1)\big) \textrm{.}
\]

As for the running time, the subspace sampling algorithm and our adaptive sampling based algorithm
are much more efficient than SCRA, especially when $c$ and $r$ are large.
Our adaptive sampling based algorithm is comparable to the subspace sampling algorithm
when $c$ and $r$ are small;
however, our algorithm becomes less efficient when $c$ and $r$ are large.
This is due to the following reasons.
First, the computational cost of the subspace sampling algorithm is dominated by the truncated SVD of $\A$,
which is determined by the target rank $k$ and the size and sparsity of the data matrix.
However, the cost of our algorithm grows with $c$ and $r$.
Thus, our algorithm becomes less efficient when $c$ and $r$ are large.
Second, the truncated SVD operation in MATLAB, that is, the `$\mathrm{svds}$' function, gains from sparsity,
but our algorithm does not.
The four data sets are all very sparse, so the subspace sampling algorithm has advantages.
Third, the truncated SVD functions are very well implemented by MATLAB (not in MATLAB language but in Fortran/C).
In contrast, our algorithm is implemented in MATLAB language, which is usually less efficient than Fortran/C.

\subsection{Comparison among the \nystrom Algorithms} \label{sec:experiments_nystrom}

In this section we empirically compare our adaptive sampling algorithm (in Theorem~\ref{thm:nystrom_bound}) with some other
sampling algorithms including the subspace sampling of \citet{drineas2008cur} and the uniform sampling, both without replacement.
We also conduct comparison between the standard \nystrom and our modified Nystr\"{o}m,
both use the three sampling algorithms to select columns.

We test the algorithms on three data sets which are summarized in Table~\ref{tab:datasets_nystrom}.
The experiment setting follows \cite{gittens2013revisiting}.
For each data set we generate a radial basis function (RBF) kernel matrix $\A$ which is defined by
\[
a_{i j} = \exp \bigg( -\frac{\|\x_i - \x_j \|_2^2}{2\sigma^2} \bigg),
\]
where $\x_i$ and $\x_j$ are data instances and $\sigma$ is a scale parameter.
Notice that the RBF kernel is dense in general.
We set $\sigma = 0.2$ or $1$ in our experiments.
For each data set with different settings of $\sigma$, we fix a target rank $k=10$, $20$ or $50$
and vary $c$ in a very large range.
We will discuss the choice of $\sigma$ and $k$ in the following two paragraphs.
We run each algorithm for $10$ times, and report the the minimum error ratio as well as the total elapsed time of the $10$ repeats.
The results are shown in Figures~\ref{fig:abalone}, \ref{fig:wine}, and \ref{fig:letter}.

\begin{table}[t]\setlength{\tabcolsep}{0.3pt}
\begin{center}
\begin{footnotesize}
\begin{tabular}{c c c c c}
\hline
	{\bf Data Set}	&  {\bf ~~\#Instances~~}  &	{\bf ~~\#Attributes~~}   & {\bf ~~Source}	\\
\hline
    Abalone         &   $4,177$  &   $8$   & UCI \citep{uci2010} \\
    Wine Quality    &   $4,898$  &   $12$  & UCI \citep{cortez2009modeling} \\
    Letters         &   $5,000$  &   $16$  & Statlog \citep{michie1994machine} \\
\hline
\end{tabular}

\vspace{4mm}

\begin{tabular}{c | c c c | c c c}
\hline
                                & \multicolumn{3}{|c|}{~~${\|\A - \A_{k}\|_F}/{\|\A\|_F}$~~ }
                                & \multicolumn{3}{c}{ $\frac{m}{k}\mathsf{std}\big(\mathbf{\ell}^{[k]}\big)$ }	\\
                                & ~~$k=10$~~ & ~~$k=20$~~ & ~~$k=50$~~ & ~~$k=10$~~ & ~~$k=20$~~ & ~~$k=50$~~ \\
\hline
Abalone ($\sigma=0.2$)      &  $0.4689$     &  $0.3144$     &  $0.1812$     & $0.8194$ & $0.6717$ & $0.4894$  \\
Abalone ($\sigma=1.0$)      &  $0.0387$    &   $0.0122$    &   $0.0023$     & $0.5879$ & $0.8415$ & $1.3830$  \\
Wine Quality ($\sigma=0.2$) &   $0.8463$    &   $0.7930$    &   $0.7086$    & $1.8703$ & $1.6490$ & $1.3715$  \\
Wine Quality ($\sigma=1.0$) &   $0.0504$   &   $0.0245$    &   $0.0084$     & $0.3052$ & $0.5124$ & $0.8067$  \\
Letters ($\sigma=0.2$)      &   $0.9546$    &   $0.9324$    &   $0.8877$    & $5.4929$ & $3.9346$ & $2.6210$  \\
Letters ($\sigma=1.0$)      &   $0.1254$   &   $0.0735$    &   $0.0319$     & $0.2481$ & $0.2938$ & $0.3833$  \\
\hline
\end{tabular}
\end{footnotesize}
\end{center}
\caption{A summary of the data sets for the \nystrom approximation.
        In the second tabular $\mathsf{std}\big(\mathbf{\ell}^{[k]}\big)$ denotes the standard deviation of the statistical leverage scores
        of $\A$ relative to the best rank-$k$ approximation to $\A$. We use the normalization factor $\frac{m}{k}$ because
        $\frac{m}{k}\mathsf{mean}\big(\mathbf{\ell}^{[k]}\big)=1$.}
\label{tab:datasets_nystrom}
\end{table}

Table~\ref{tab:datasets_nystrom} provides useful implications on choosing the target rank $k$.
In Table~\ref{tab:datasets_nystrom}, $\frac{\|\A - \A_{k}\|_F}{\|\A\|_F}$ denotes ratio that is not captured by the best rank-$k$ approximation to the RBF kernel,
and the parameter $\sigma$ has an influence on the ratio ${\|\A - \A_{k}\|_F}/{\|\A\|_F}$.
When $\sigma$ is large, the RBF kernel can be well approximated by a low-rank matrix,
which implies that (i) a small $k$ suffices when $\sigma$ is large, and (ii) $k$ should be set large when $\sigma$ is small.
So the settings ($\sigma = 1$, $k=10$) and ($\sigma=0.2$, $k=50$) are more reasonable than the rest.
Let us take the RBF kernel in the Abalone data set as an example.
When $\sigma=1$, the rank-$10$ approximation well captures the kernel, so $k$ can be safely set as small as $10$;
when $\sigma=0.2$, the target rank $k$ should be set large, say larger than $50$, otherwise the approximation is rough.

The standard deviation of the leverage scores reflects whether the advanced importance sampling techniques
such as the subspace sampling and adaptive sampling are useful.
Figures \ref{fig:abalone}, \ref{fig:wine}, and \ref{fig:letter} show that the advantage of the subspace sampling and adaptive sampling over
the uniform sampling is significant whenever the standard deviation of the leverage scores is large (see Table~\ref{tab:datasets_nystrom}),
and vise versa.
Actually, as reflected in Table~\ref{tab:datasets_nystrom}, the parameter $\sigma$ influences the homogeneity/heterogeneity
of the leverage scores.
Usually, when $\sigma$ is small, the leverage scores become heterogeneous, and the effect of choosing ``good" columns is significant.

\begin{figure*}
\subfigtopskip = 0pt
\begin{center}
\centering
\includegraphics[width=50mm,height=45mm]{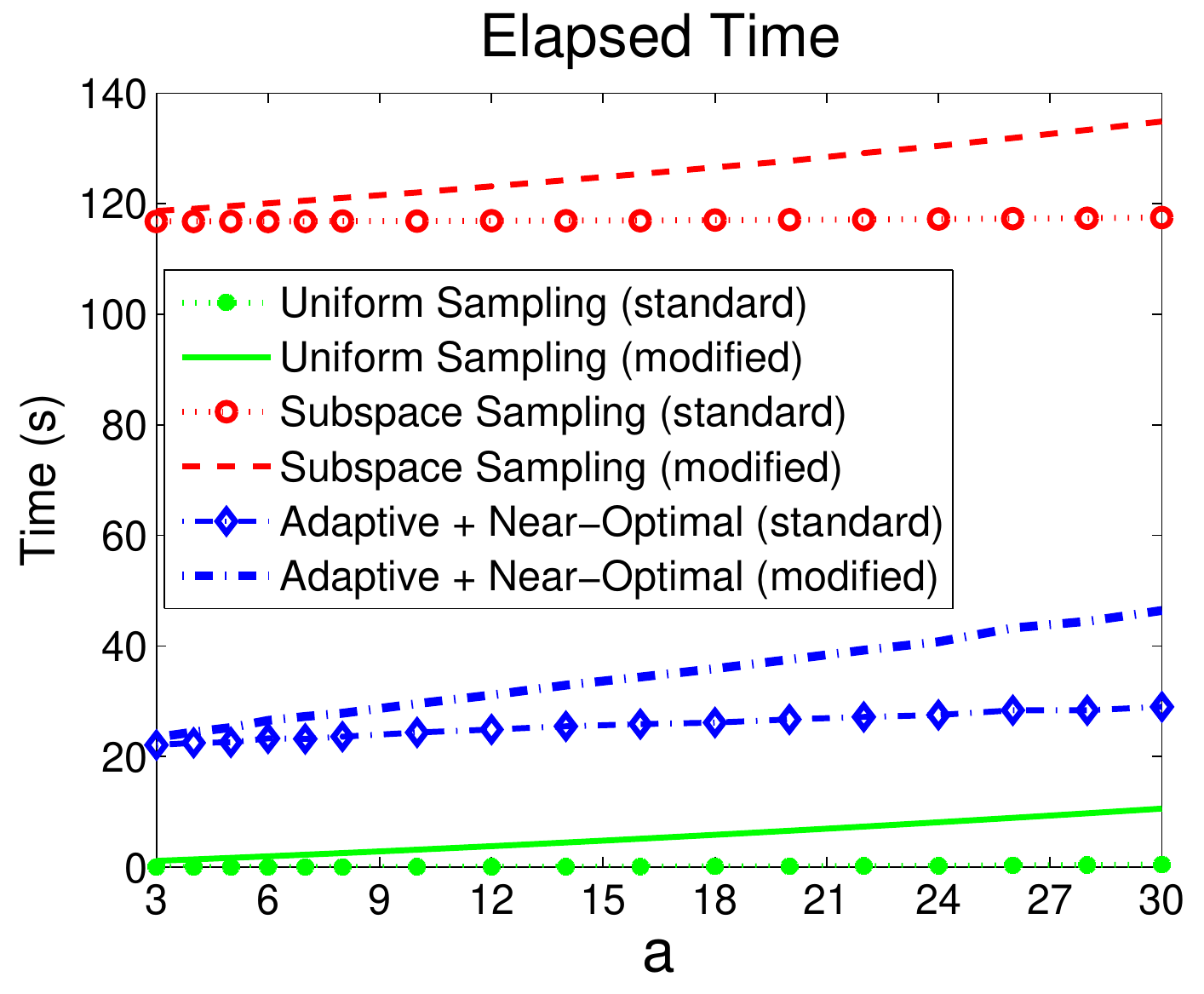}
\includegraphics[width=50mm,height=45mm]{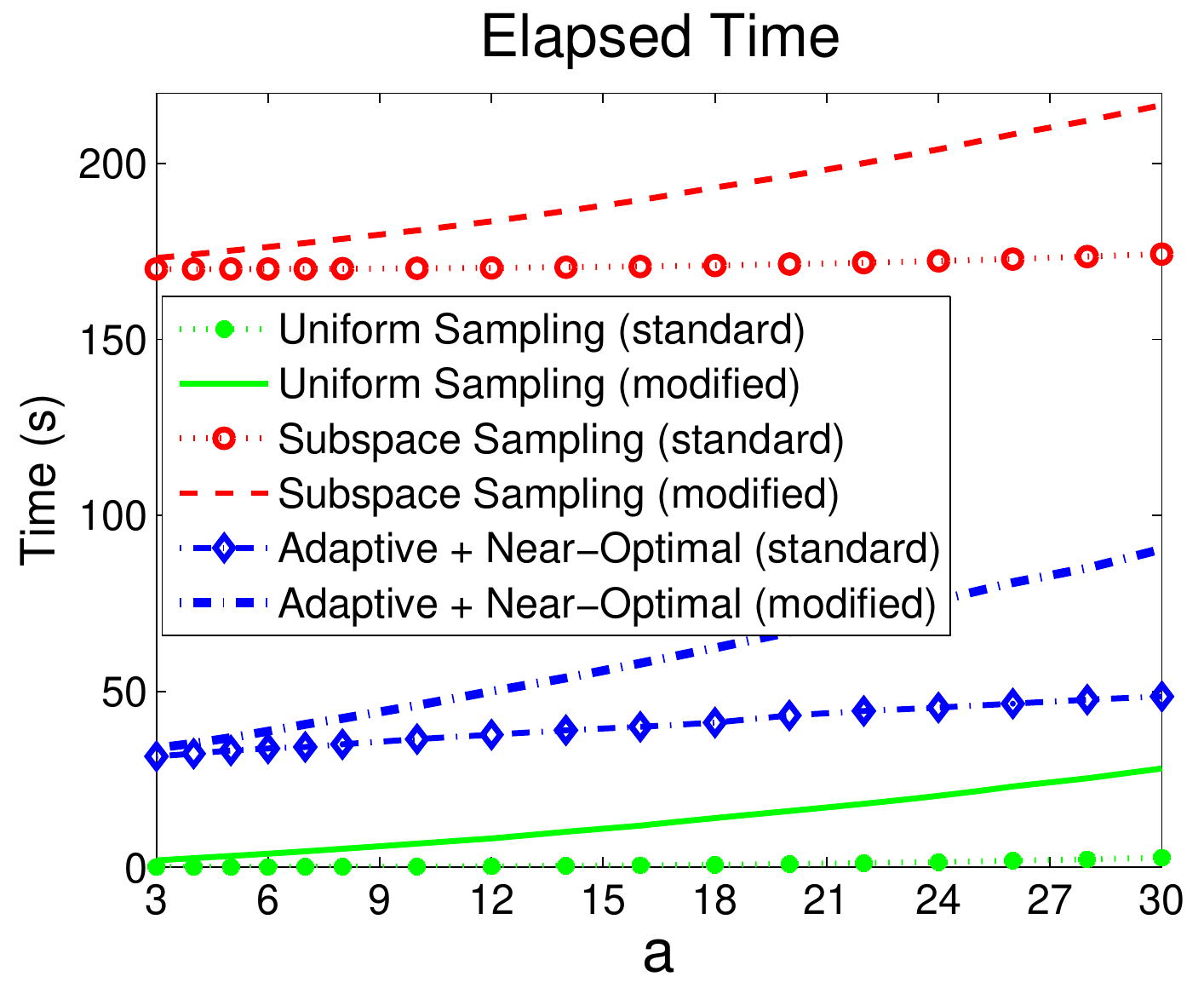}
\includegraphics[width=50mm,height=45mm]{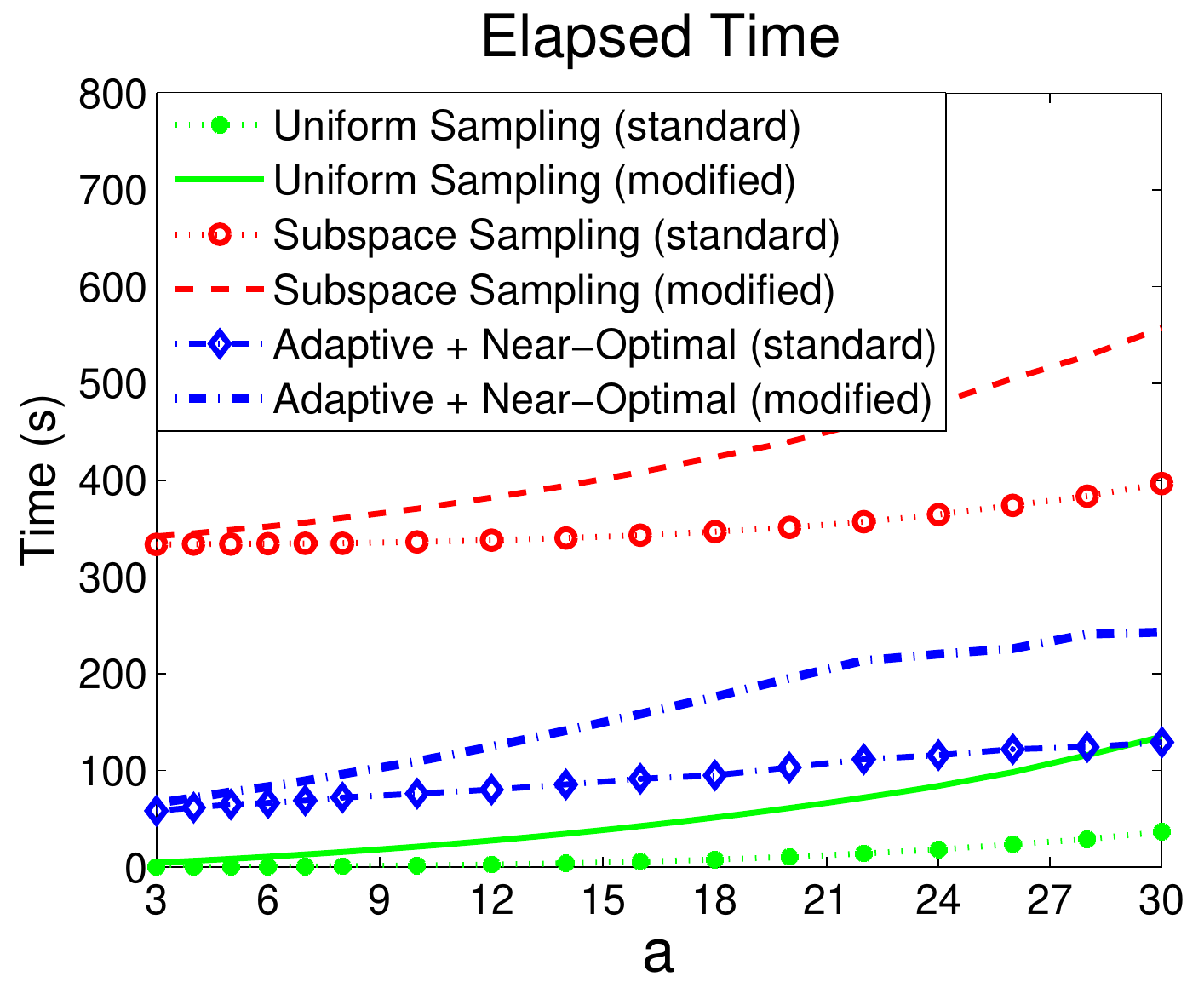}\\
\subfigure[\textsf{$\sigma = 0.2$, $k = 10$, and $c=a k$.}]{\includegraphics[width=50mm,height=45mm]{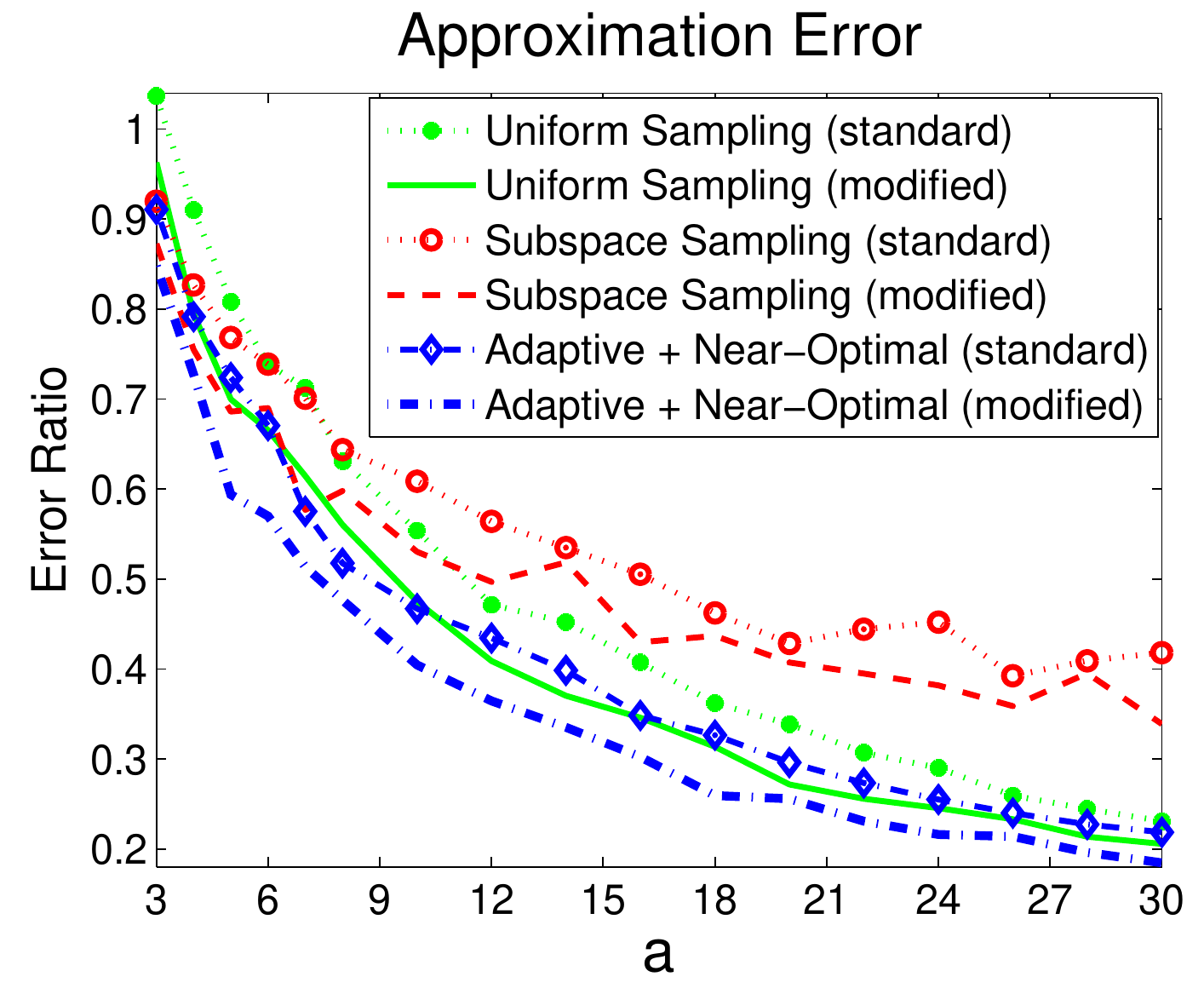}}
\subfigure[\textsf{$\sigma = 0.2$, $k = 20$, and $c=a k$.}]{\includegraphics[width=50mm,height=45mm]{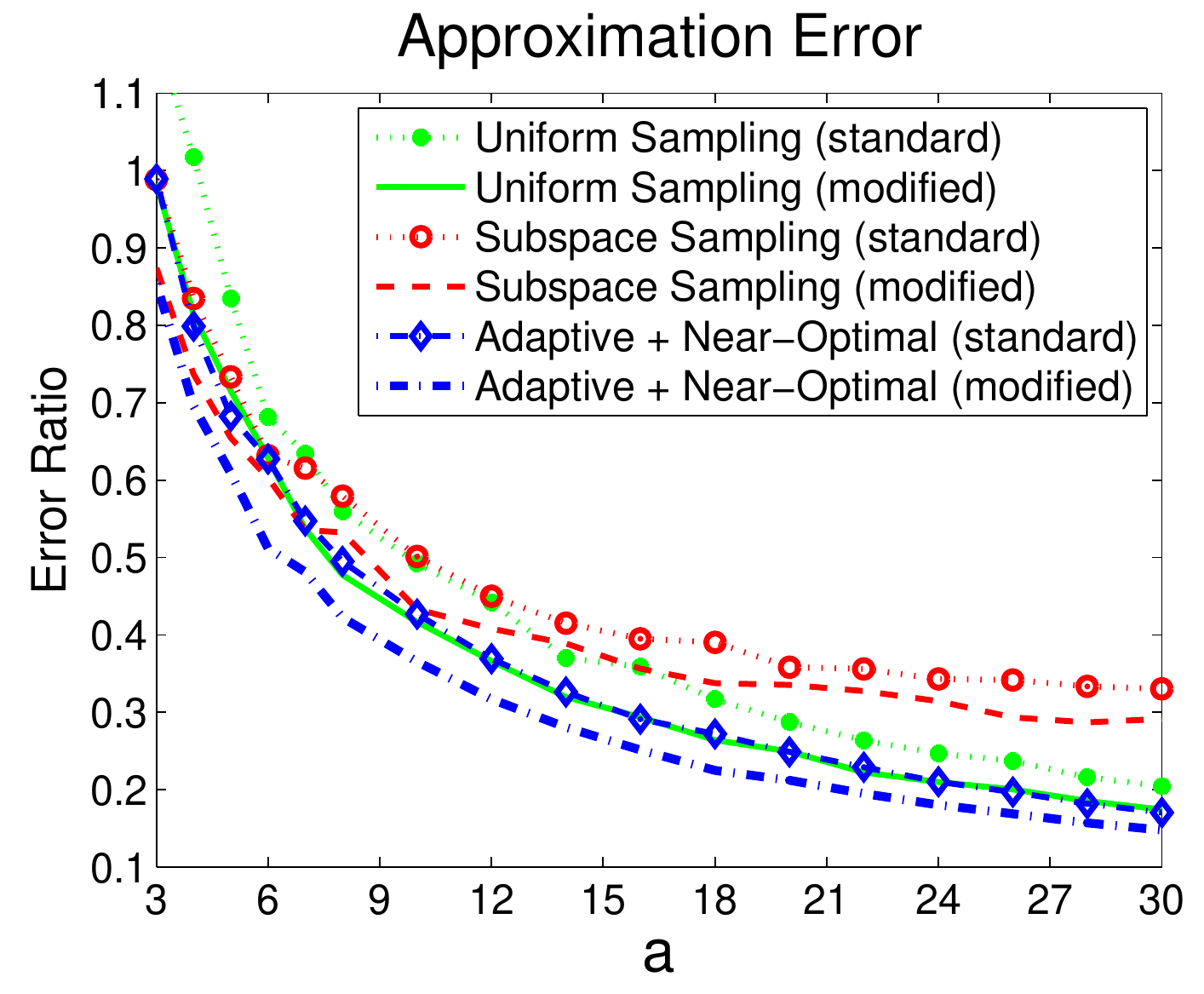}}
\subfigure[\textsf{$\sigma = 0.2$, $k = 50$, and $c=a k$.}]{\includegraphics[width=50mm,height=45mm]{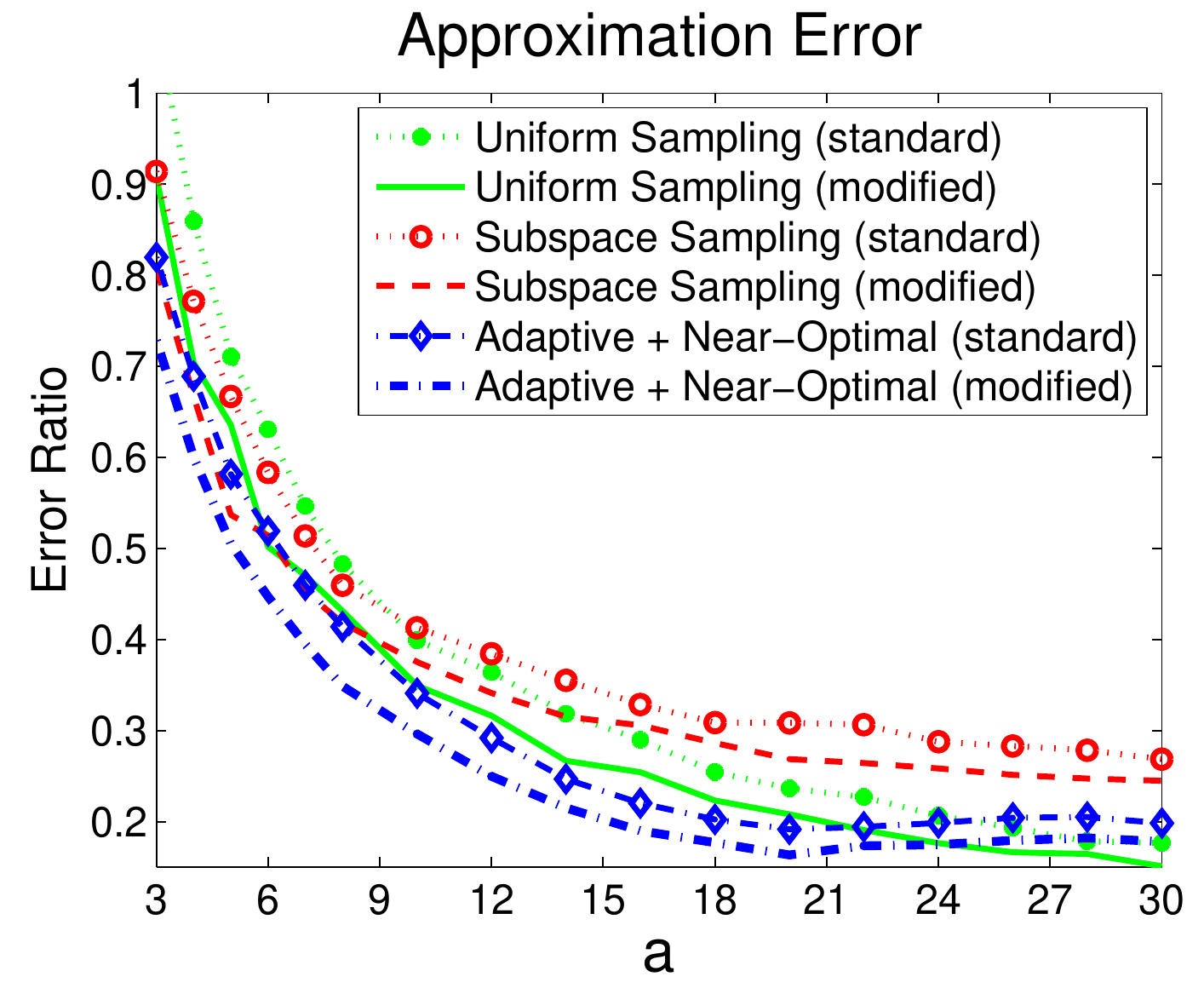}}\\

\vspace{8mm}

\includegraphics[width=50mm,height=45mm]{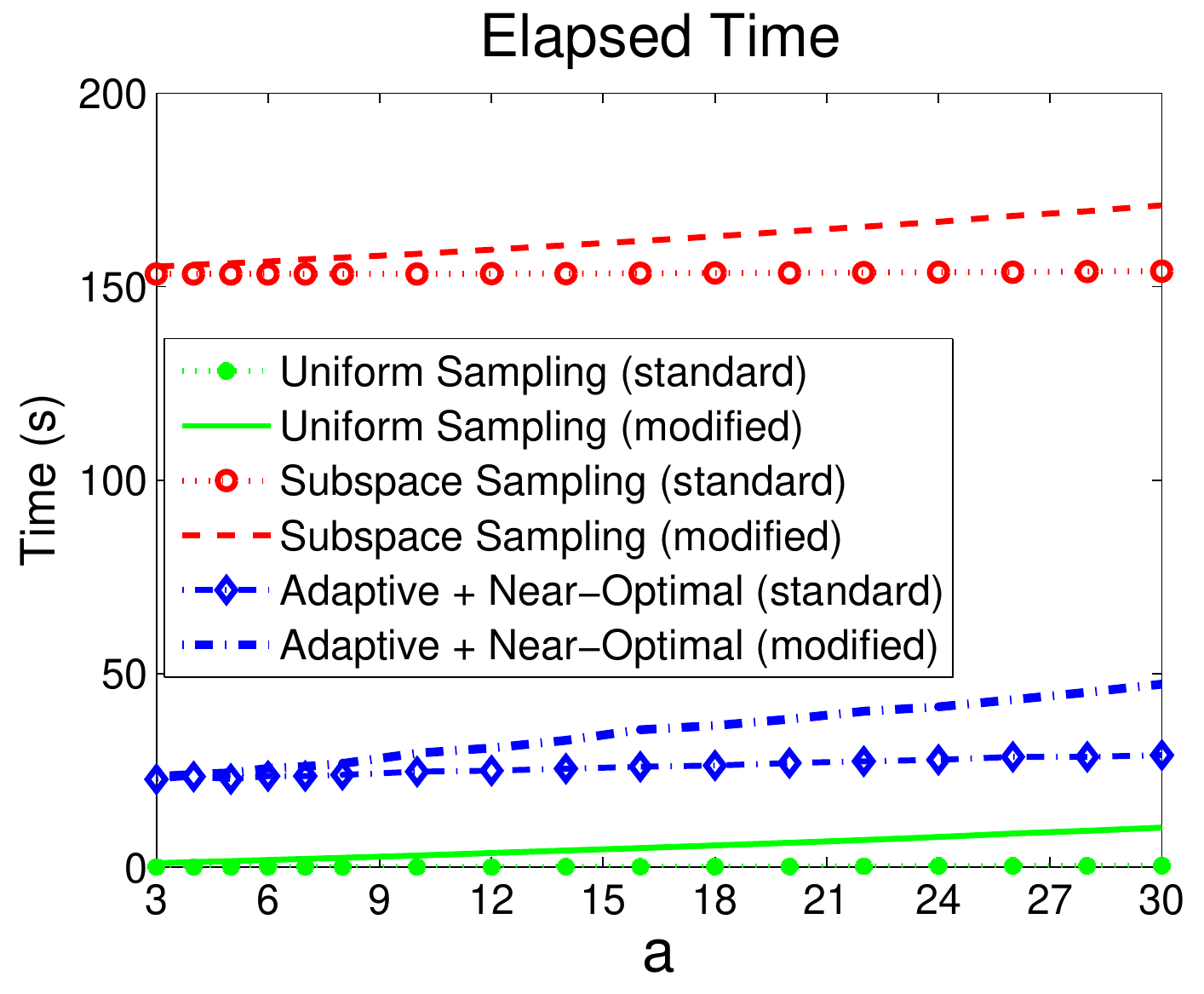}~
\includegraphics[width=50mm,height=45mm]{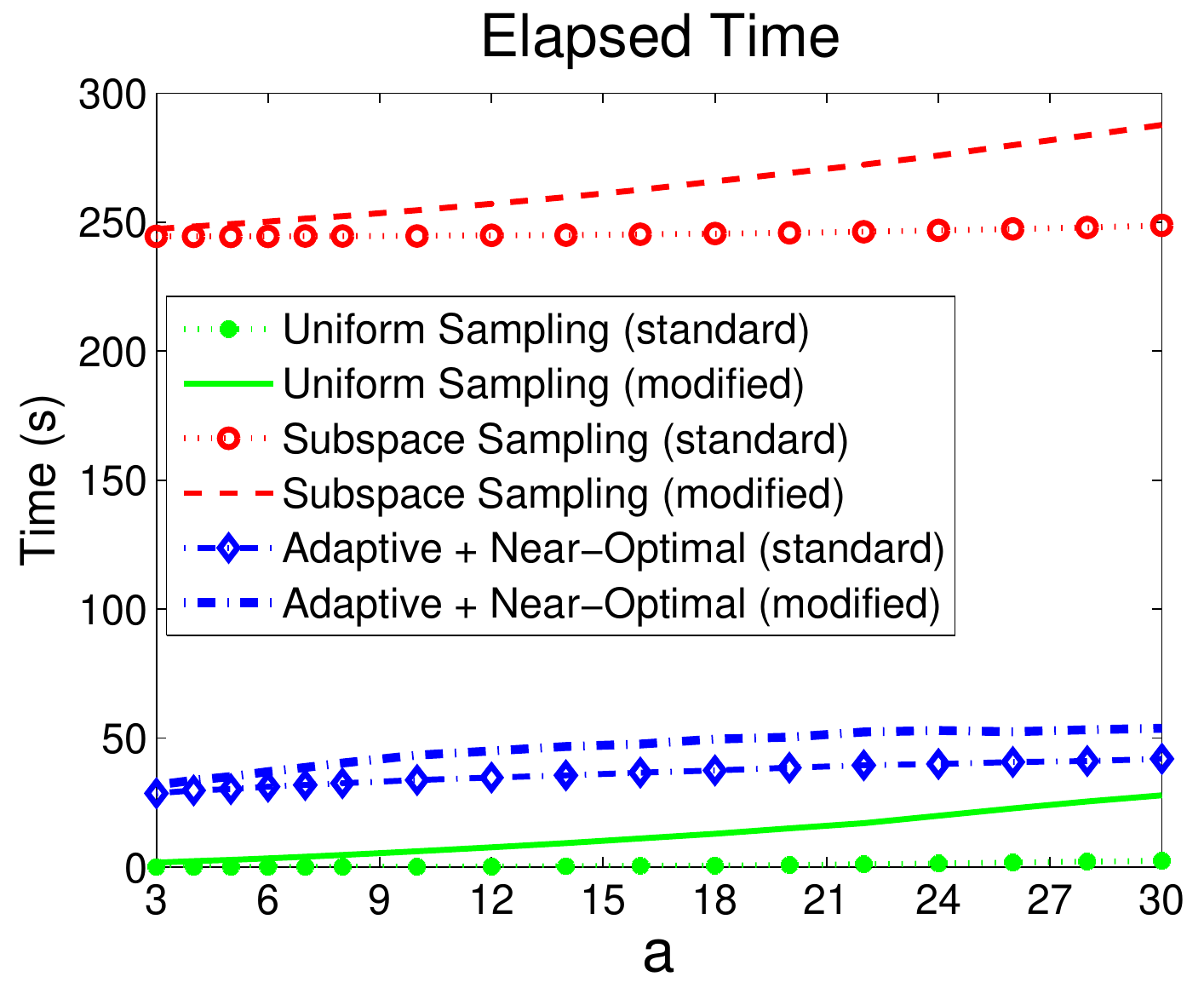}~
\includegraphics[width=50mm,height=45mm]{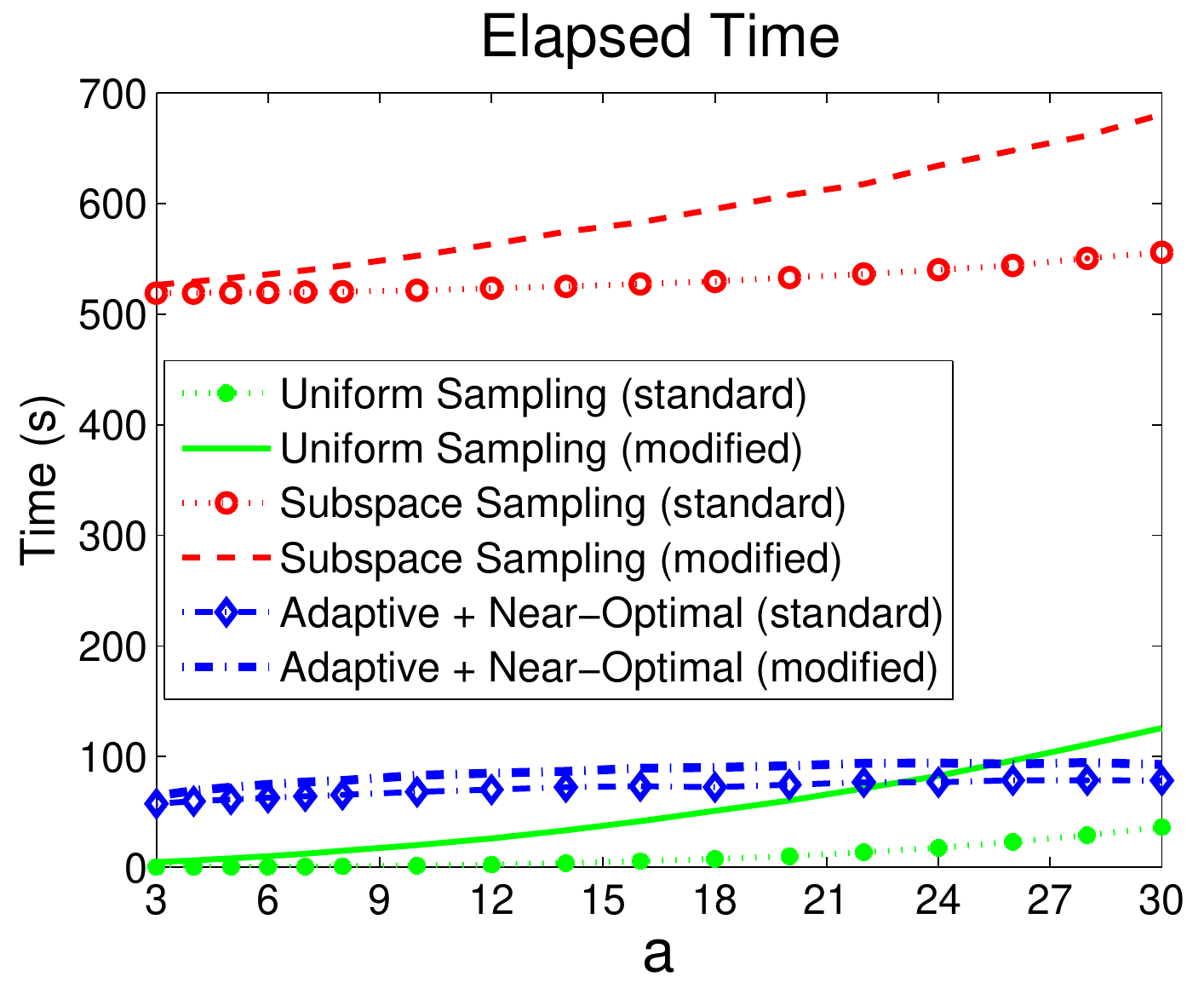}\\
\subfigure[\textsf{$\sigma = 1$, $k = 10$, and $c=a k$.}]{\includegraphics[width=50mm,height=45mm]{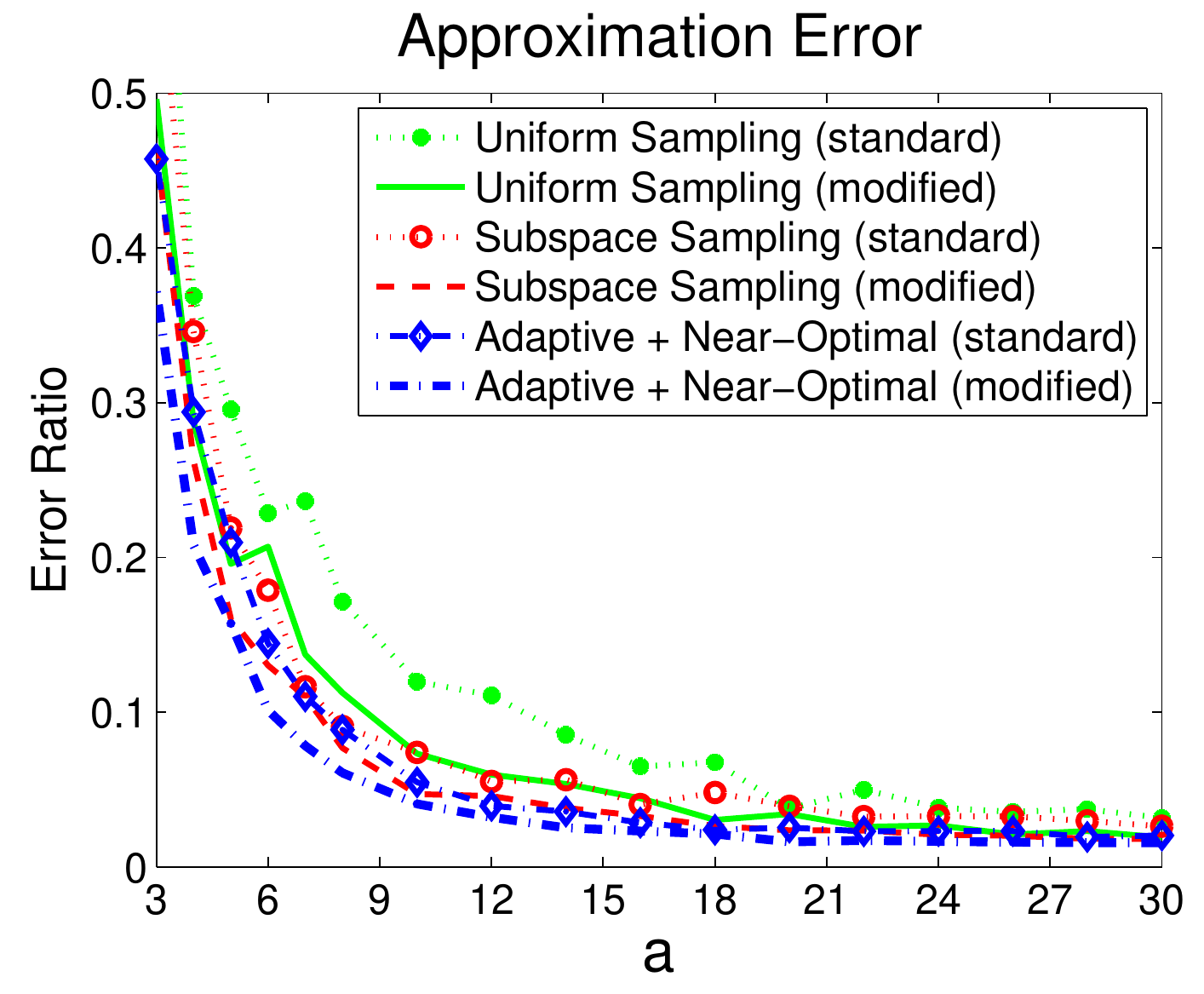}}
\subfigure[\textsf{$\sigma = 1$, $k = 20$, and $c=a k$.}]{\includegraphics[width=50mm,height=45mm]{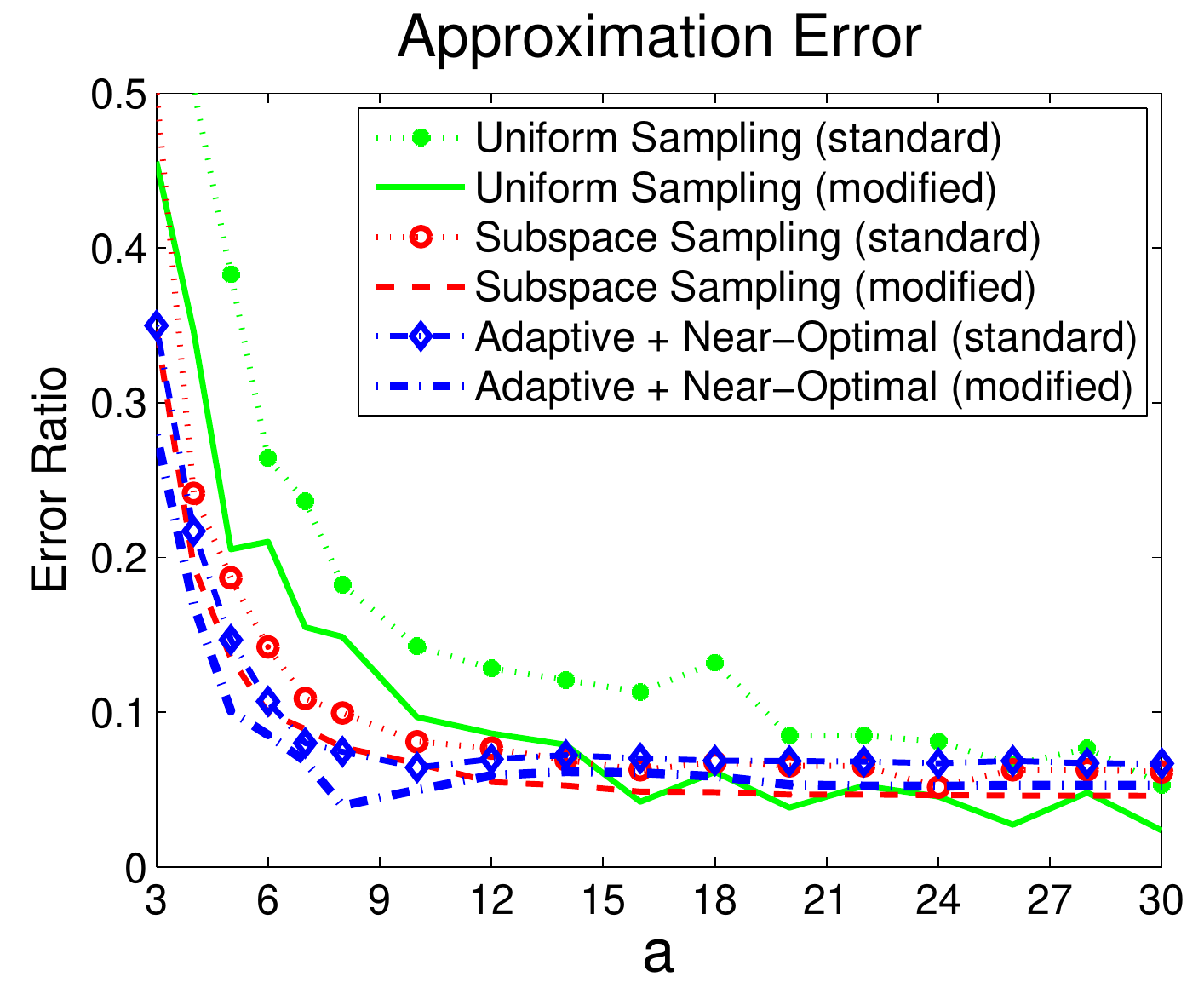}}
\subfigure[\textsf{$\sigma = 1$, $k = 50$, and $c=a k$.}]{\includegraphics[width=50mm,height=45mm]{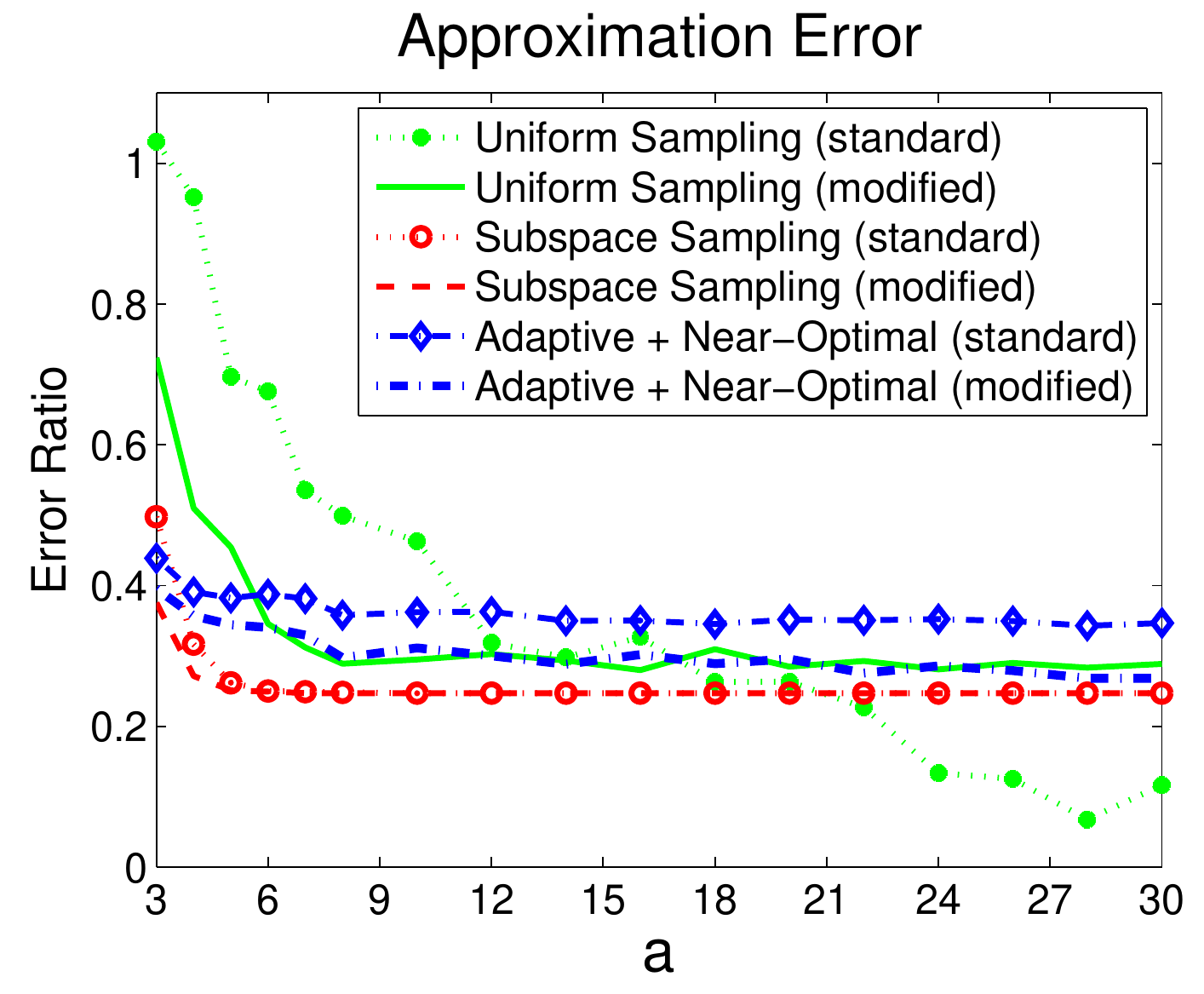}}
\end{center}
   \caption{Results of the \nystrom algorithms on the RBF kernel in the Abalone data set.}
\label{fig:abalone}
\end{figure*}

\begin{figure*}
\subfigtopskip = 0pt
\begin{center}
\centering
\includegraphics[width=50mm,height=45mm]{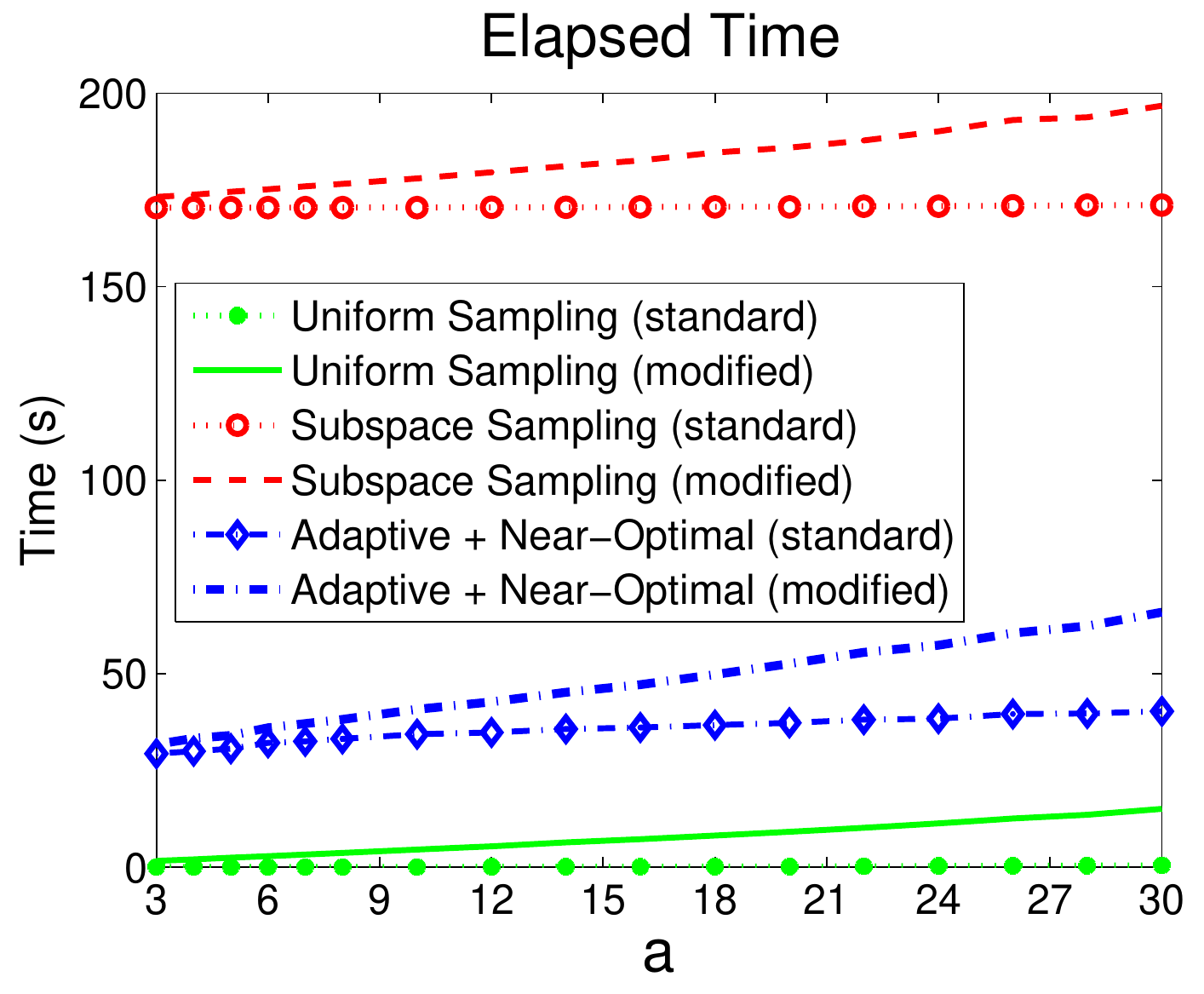}
\includegraphics[width=50mm,height=45mm]{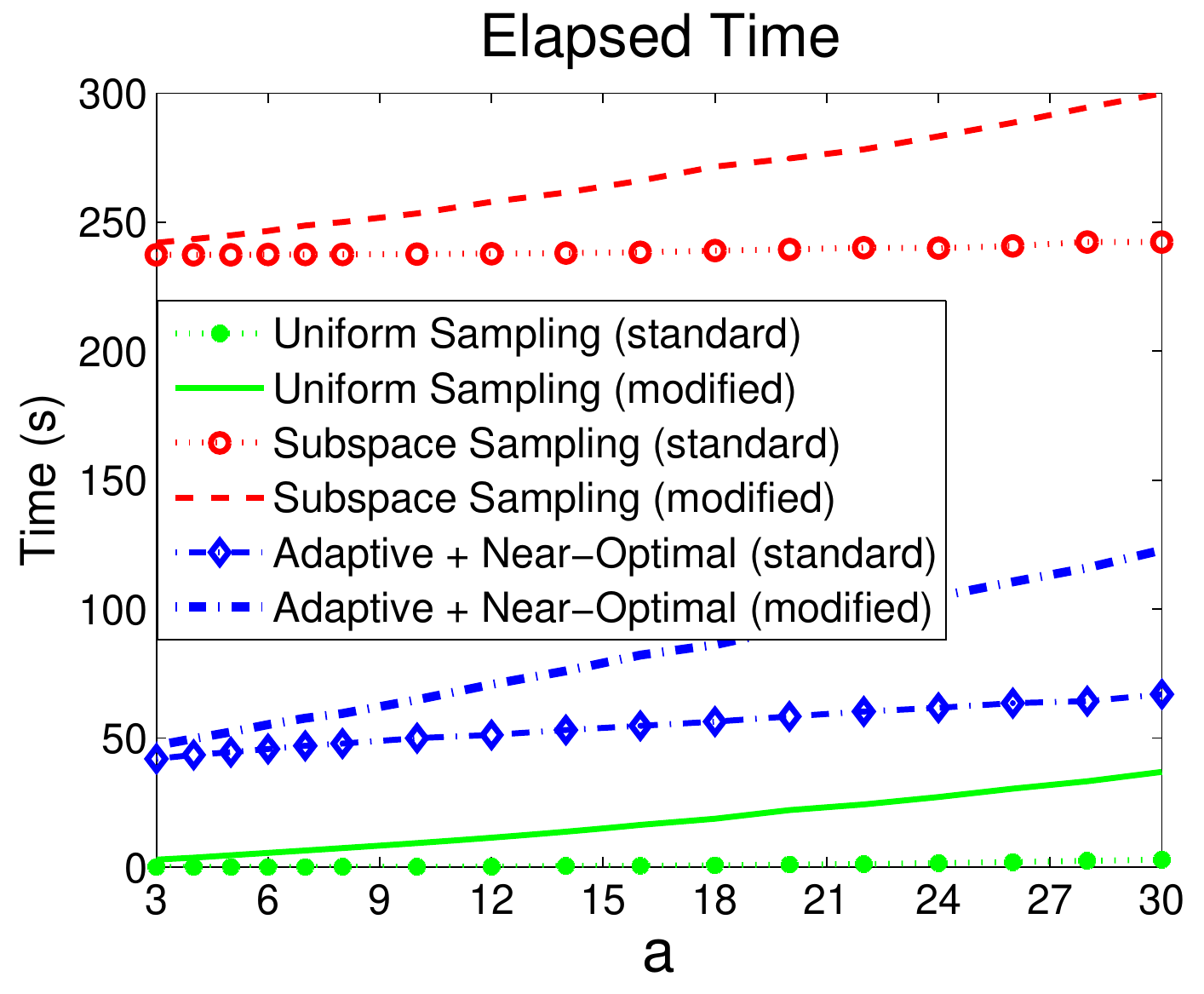}
\includegraphics[width=50mm,height=45mm]{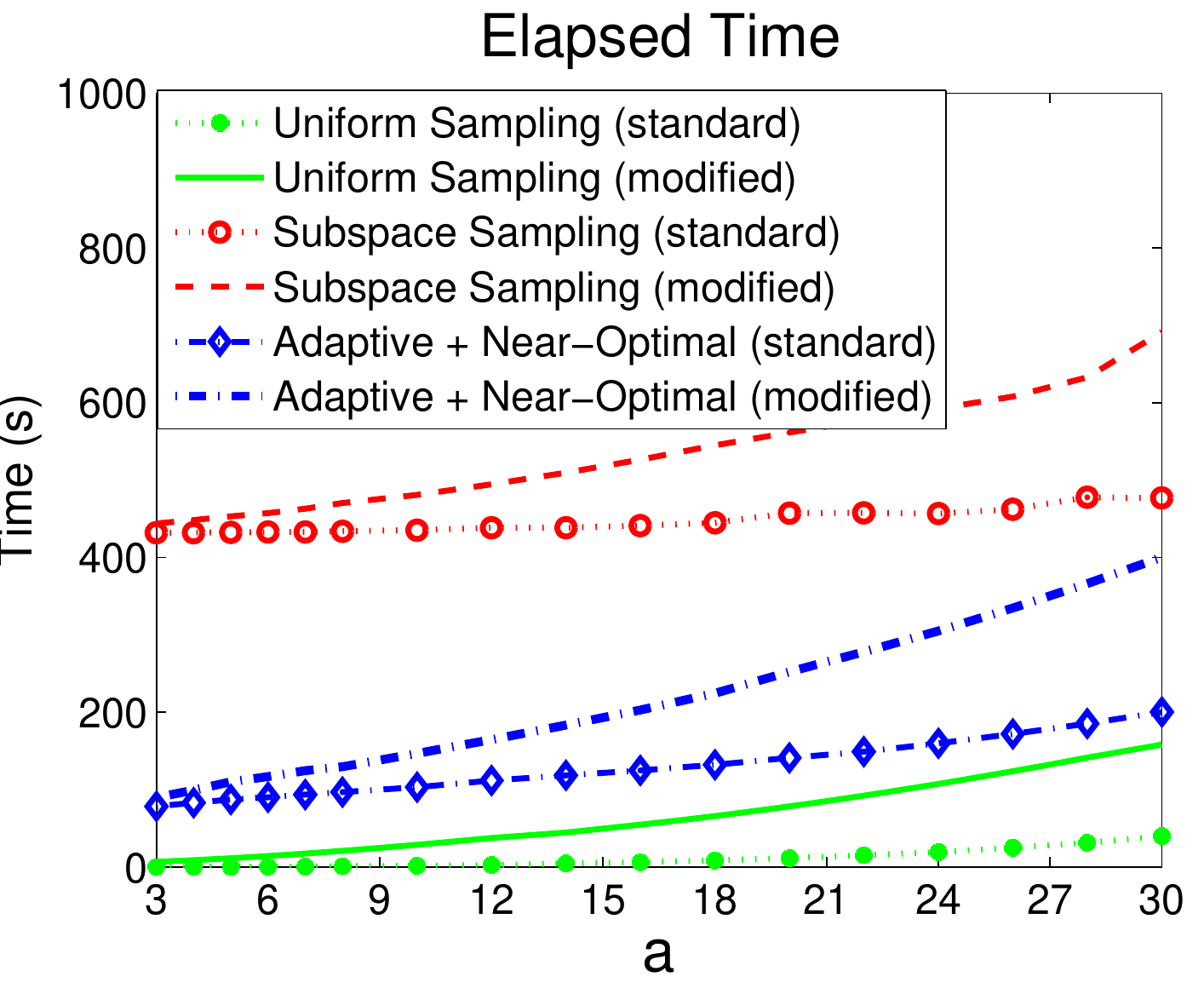}\\
\subfigure[\textsf{$\sigma = 0.2$, $k = 10$, and $c=a k$.}]{\includegraphics[width=50mm,height=45mm]{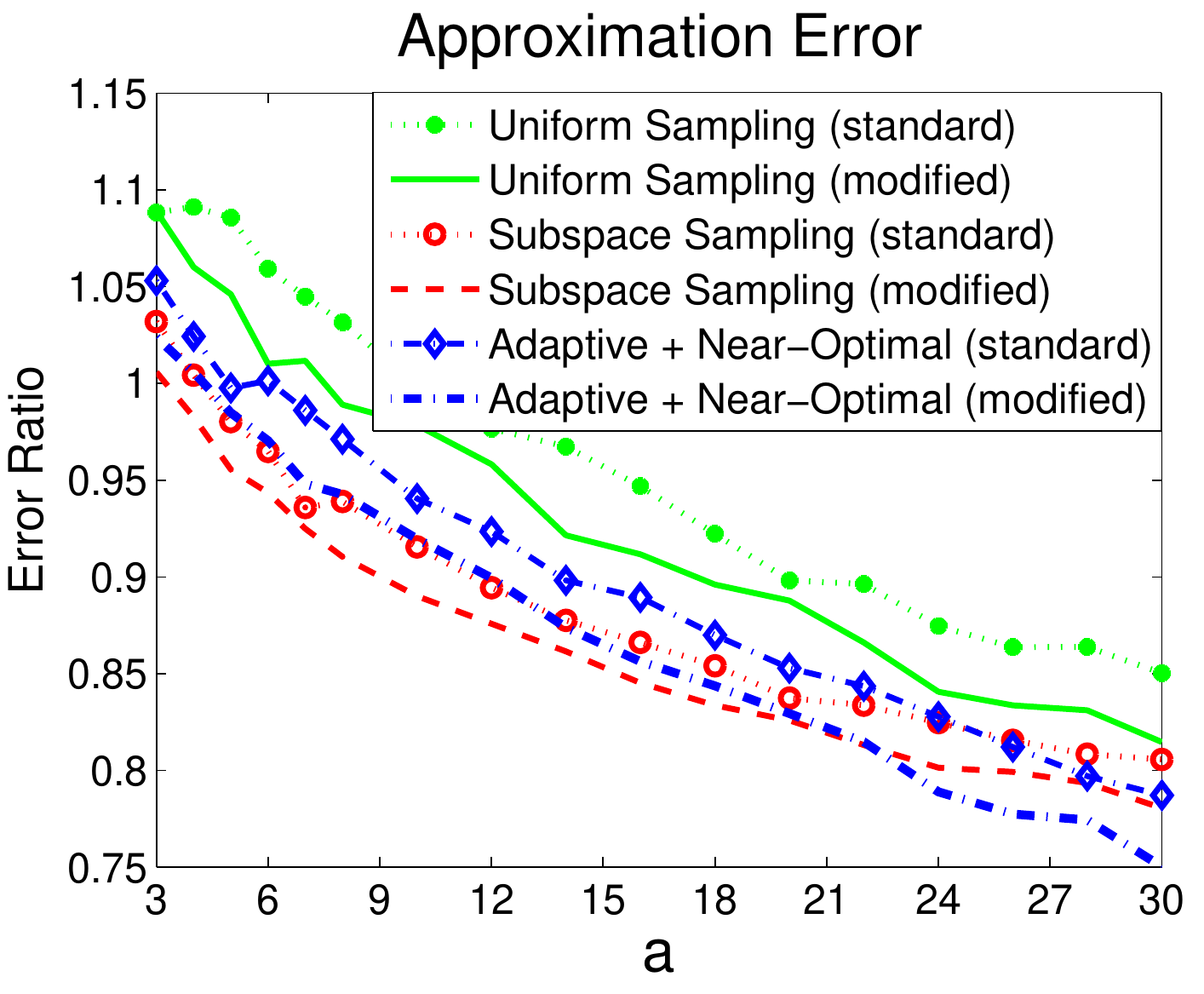}}
\subfigure[\textsf{$\sigma = 0.2$, $k = 20$, and $c=a k$.}]{\includegraphics[width=50mm,height=45mm]{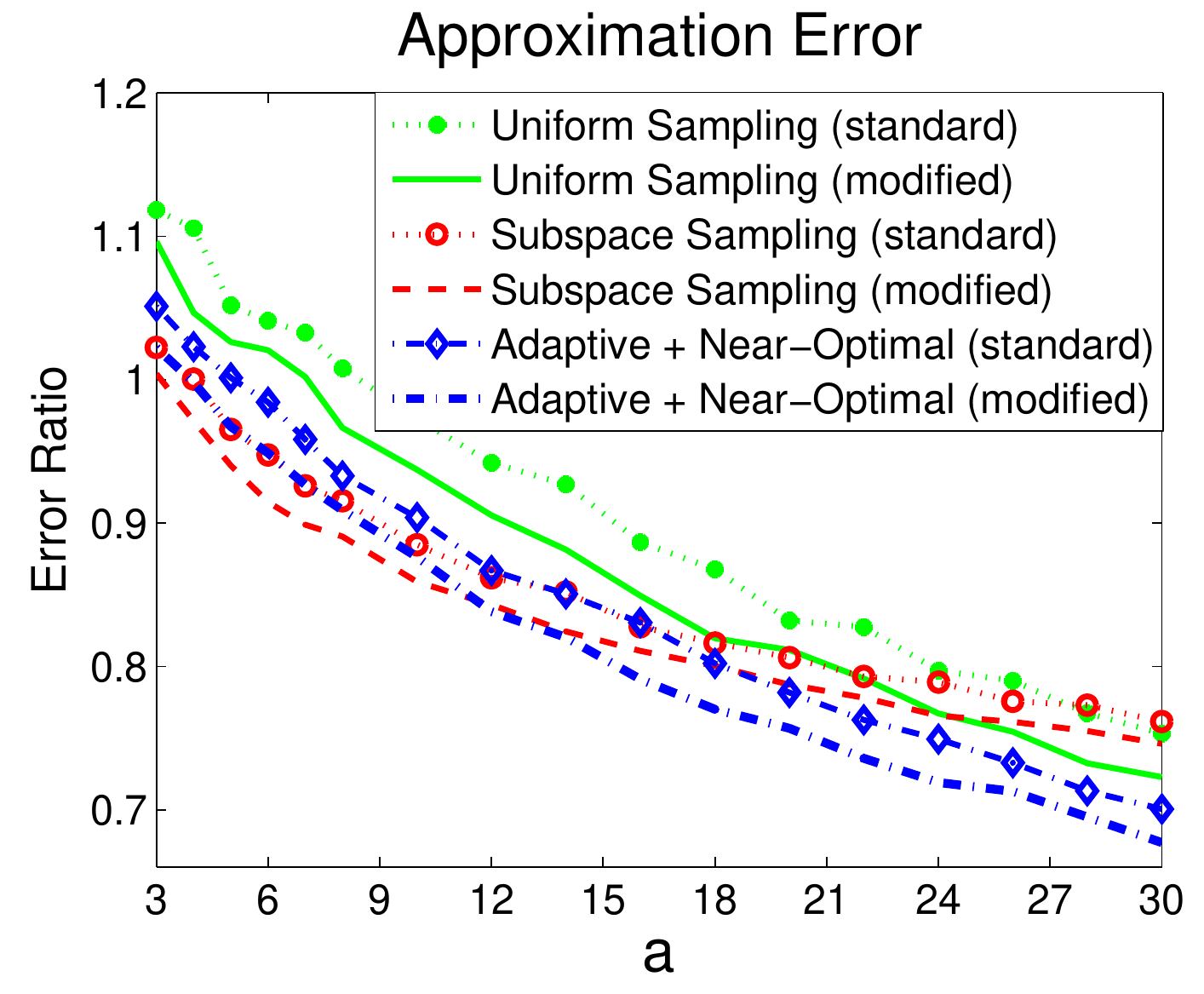}}
\subfigure[\textsf{$\sigma = 0.2$, $k = 50$, and $c=a k$.}]{\includegraphics[width=50mm,height=45mm]{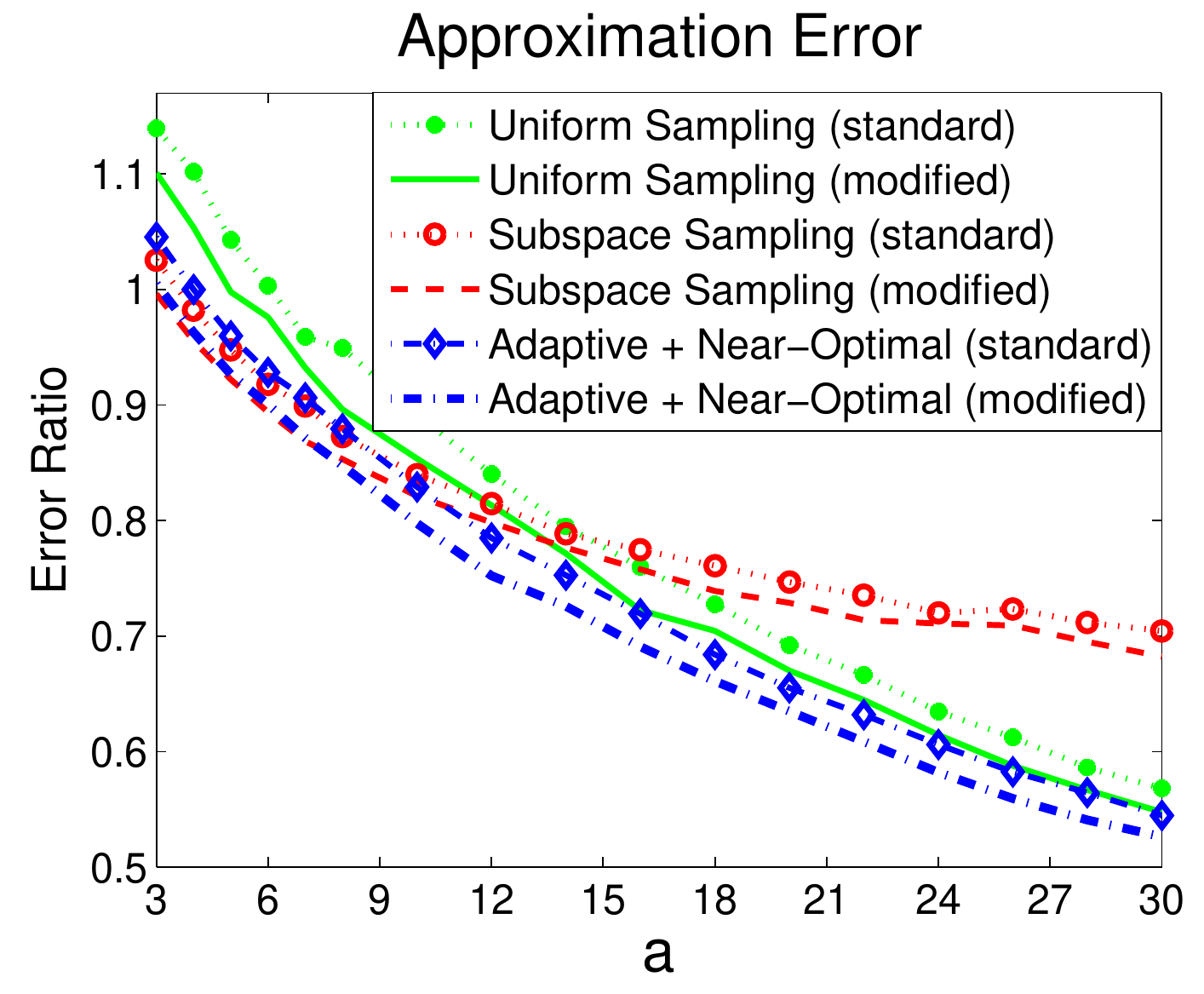}}\\

\vspace{8mm}

\includegraphics[width=50mm,height=45mm]{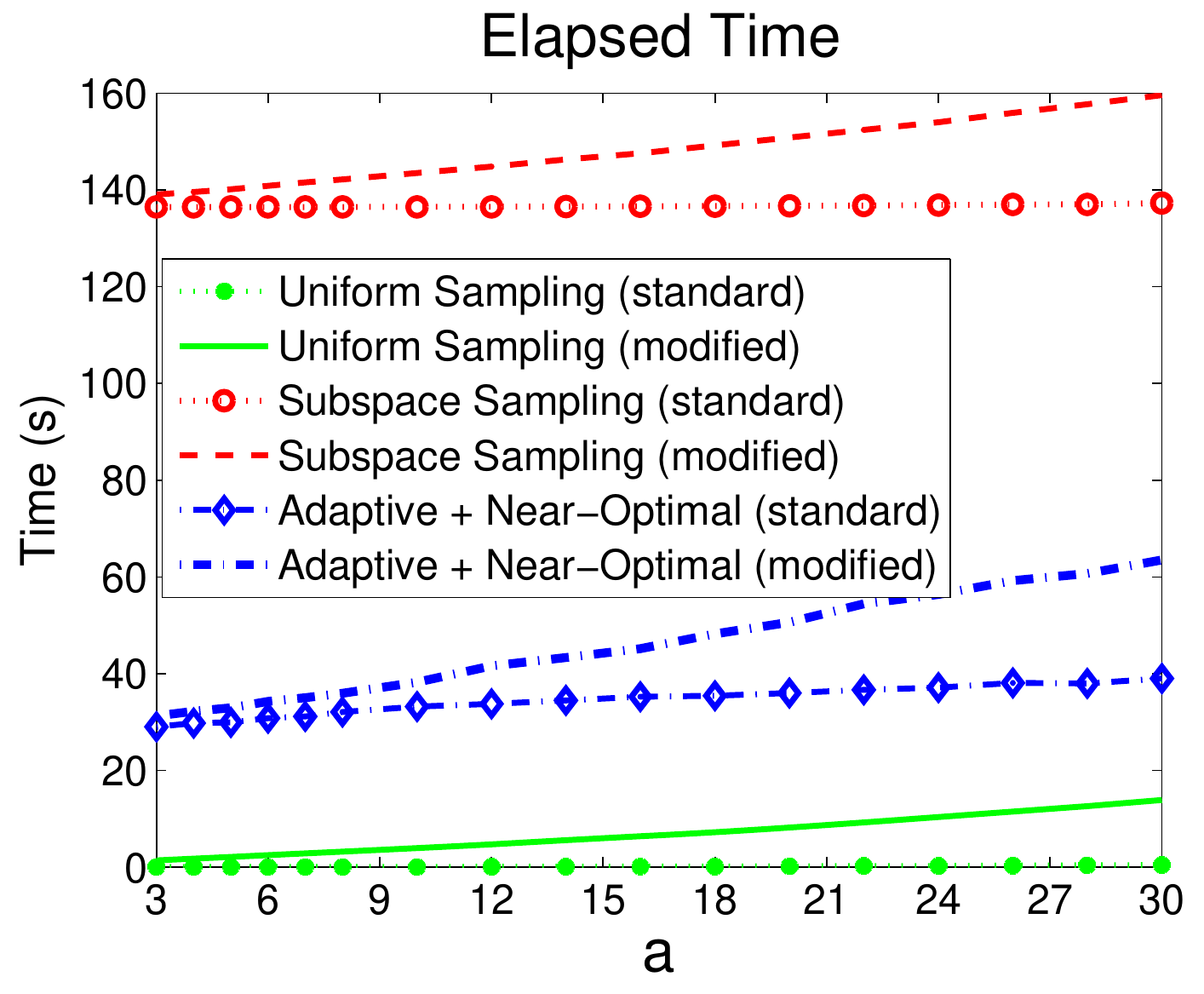}
\includegraphics[width=50mm,height=45mm]{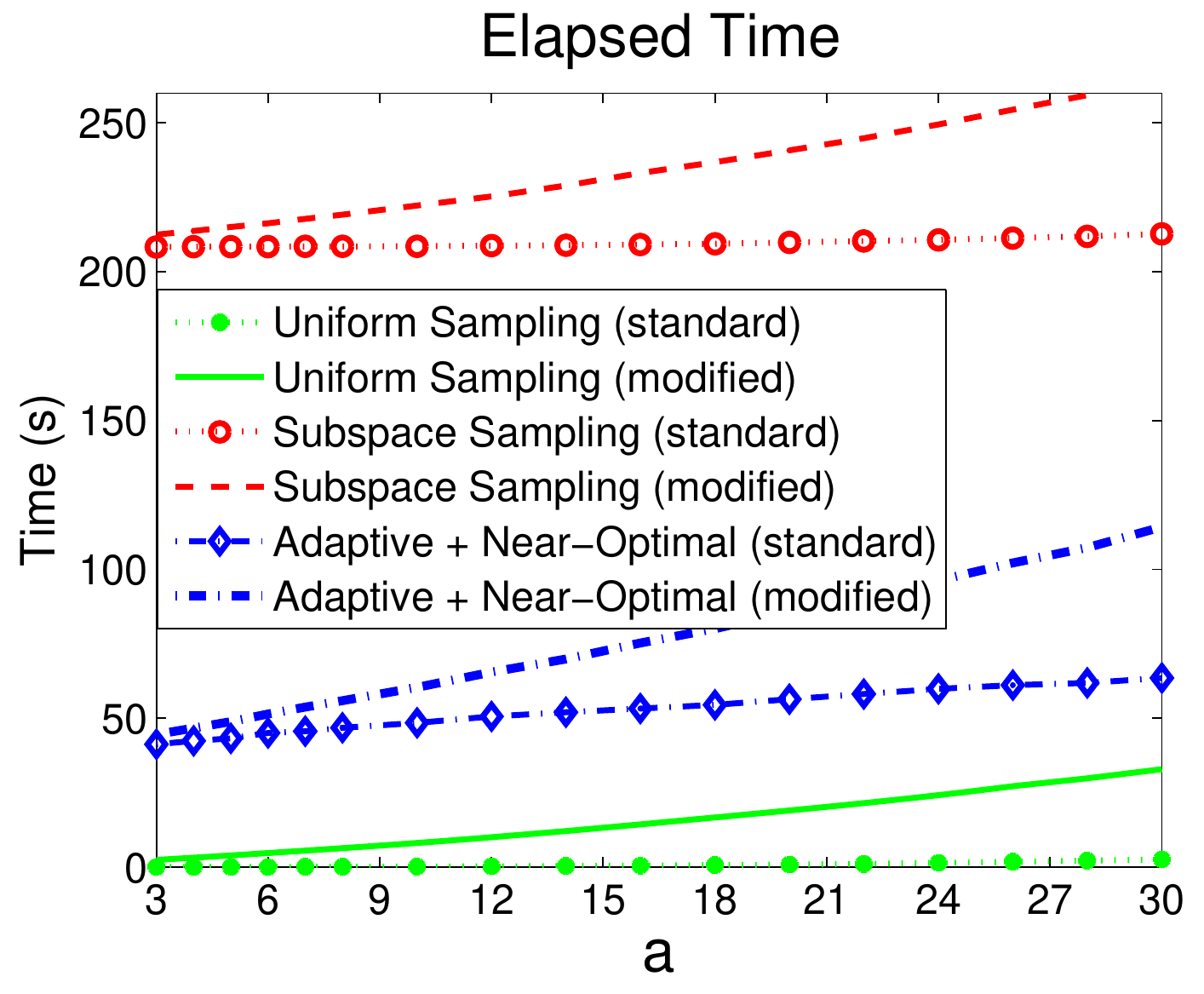}
\includegraphics[width=50mm,height=45mm]{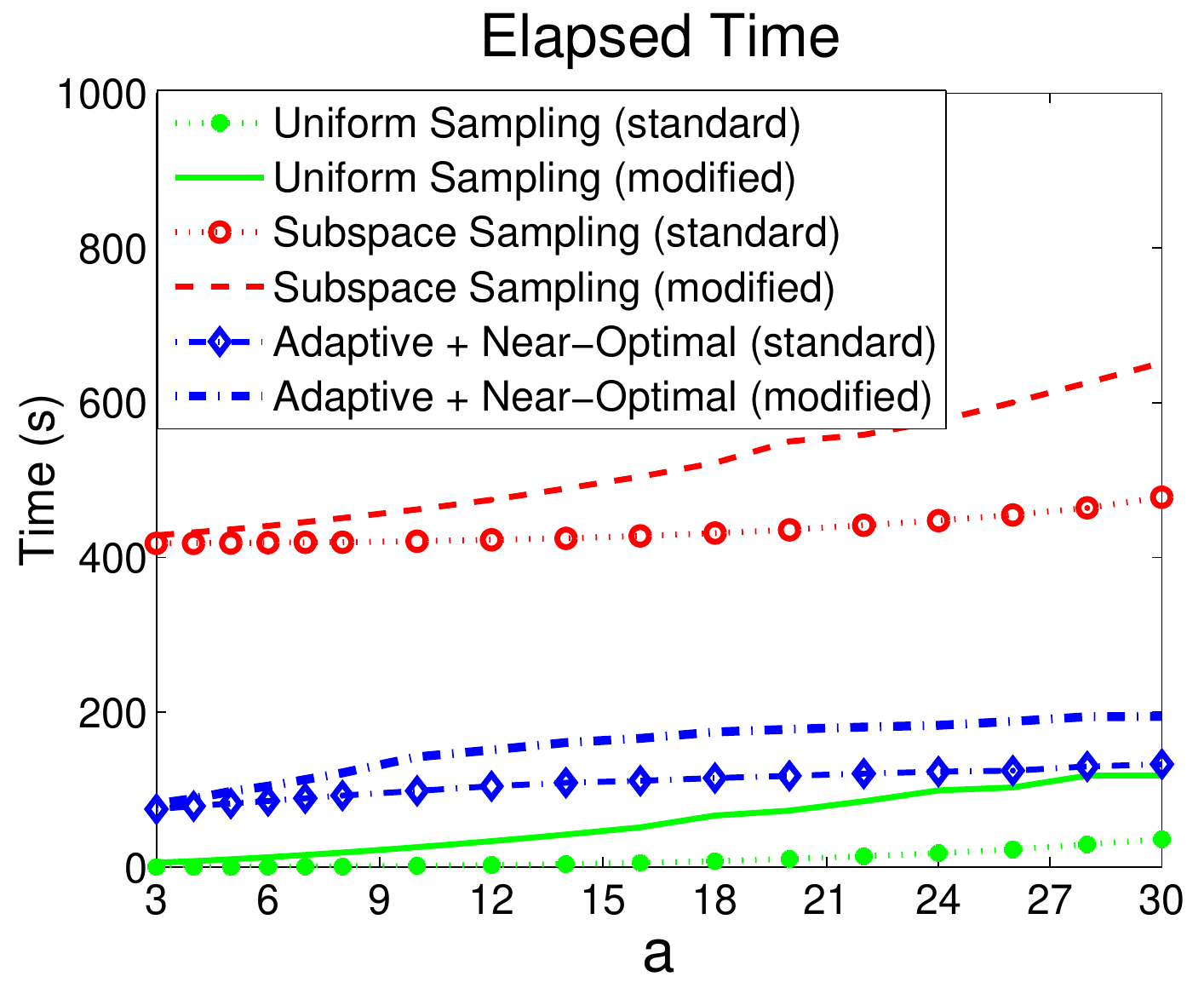}\\
\subfigure[\textsf{$\sigma = 1$, $k = 10$, and $c=a k$.}]{\includegraphics[width=50mm,height=45mm]{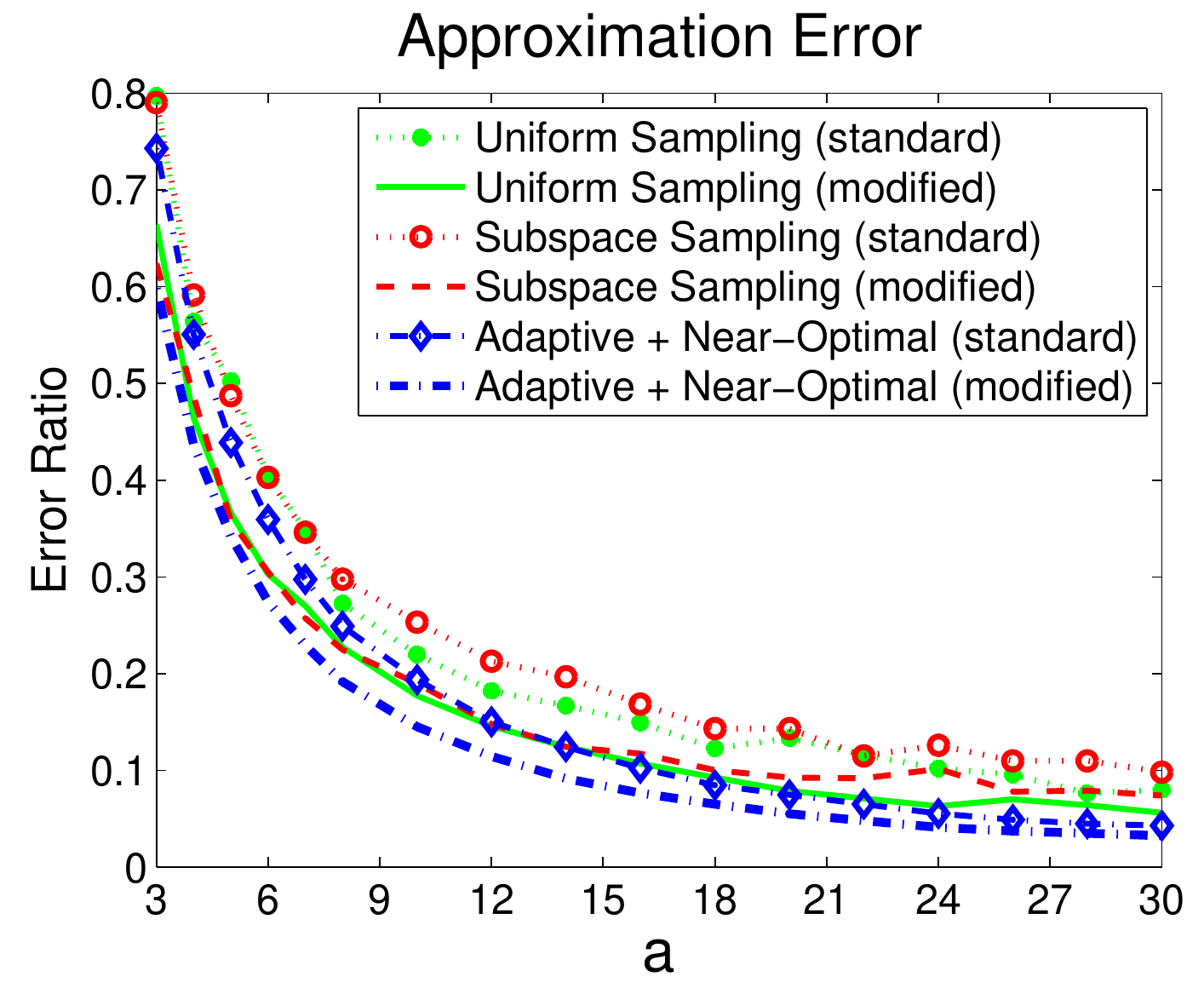}}
\subfigure[\textsf{$\sigma = 1$, $k = 20$, and $c=a k$.}]{\includegraphics[width=50mm,height=45mm]{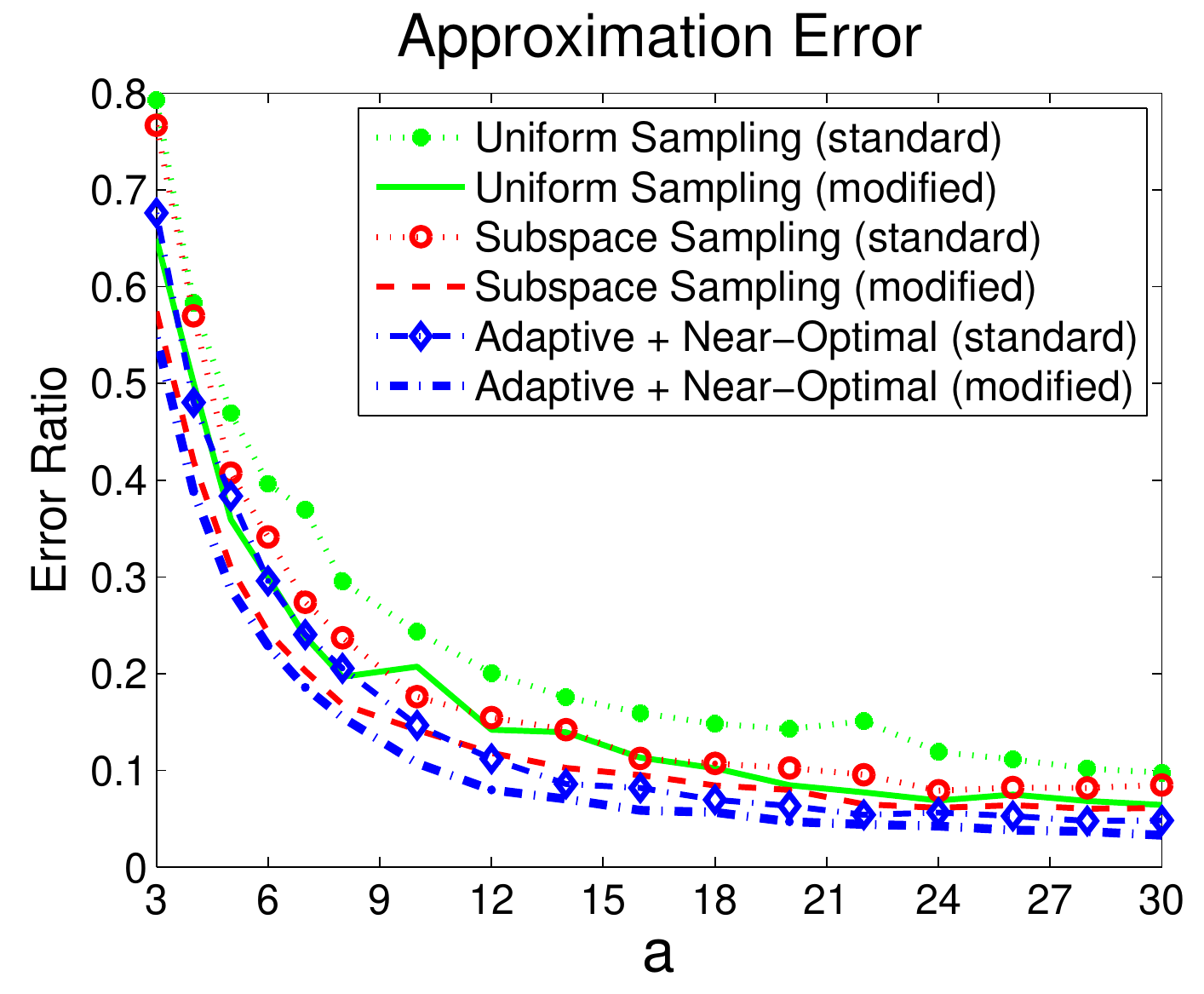}}
\subfigure[\textsf{$\sigma = 1$, $k = 50$, and $c=a k$.}]{\includegraphics[width=50mm,height=45mm]{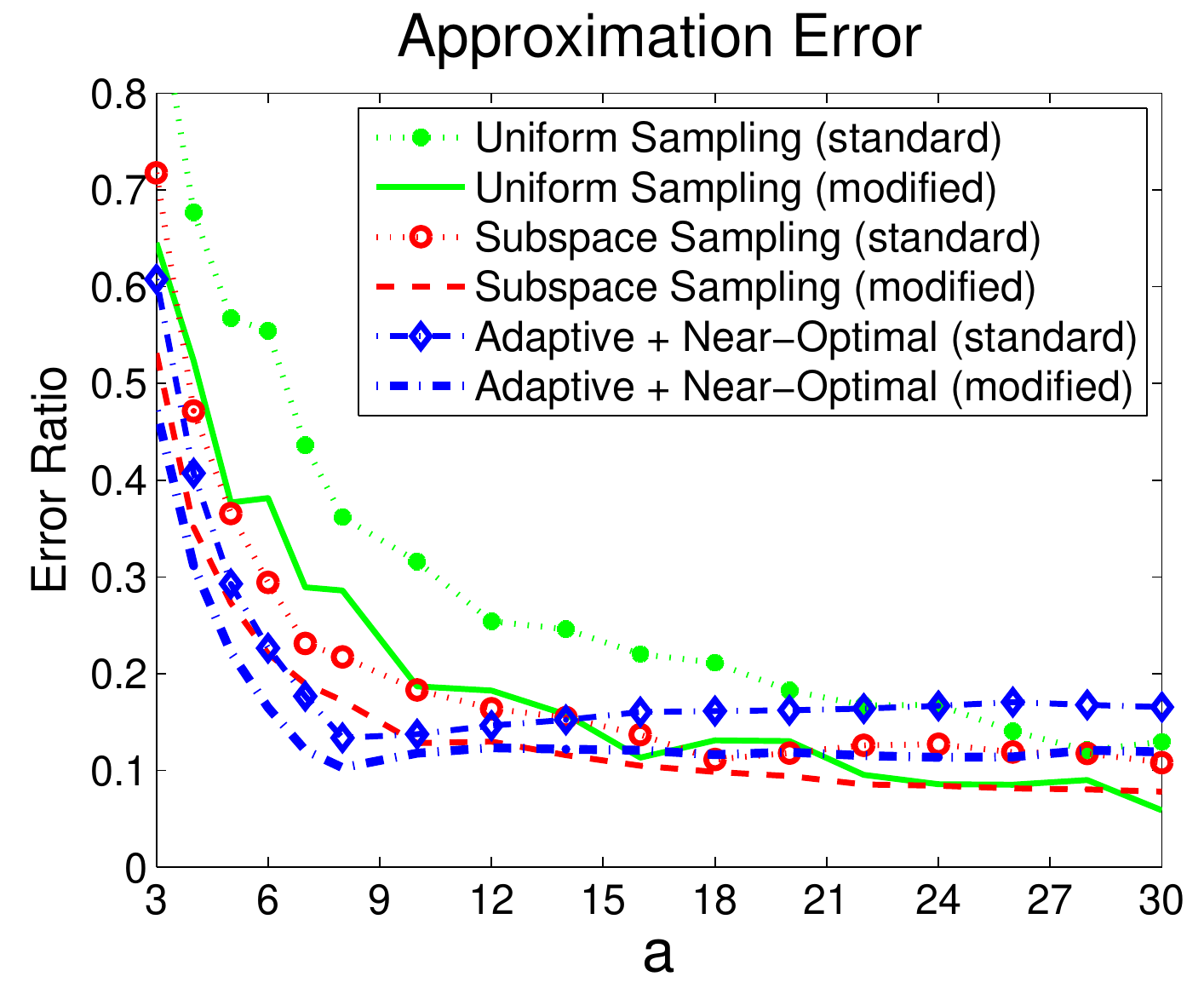}}
\end{center}
   \caption{Results of the \nystrom algorithms on the RBF kernel in the Wine Quality data set.}
\label{fig:wine}
\end{figure*}

\begin{figure*}
\subfigtopskip = 0pt
\begin{center}
\centering
\includegraphics[width=50mm,height=45mm]{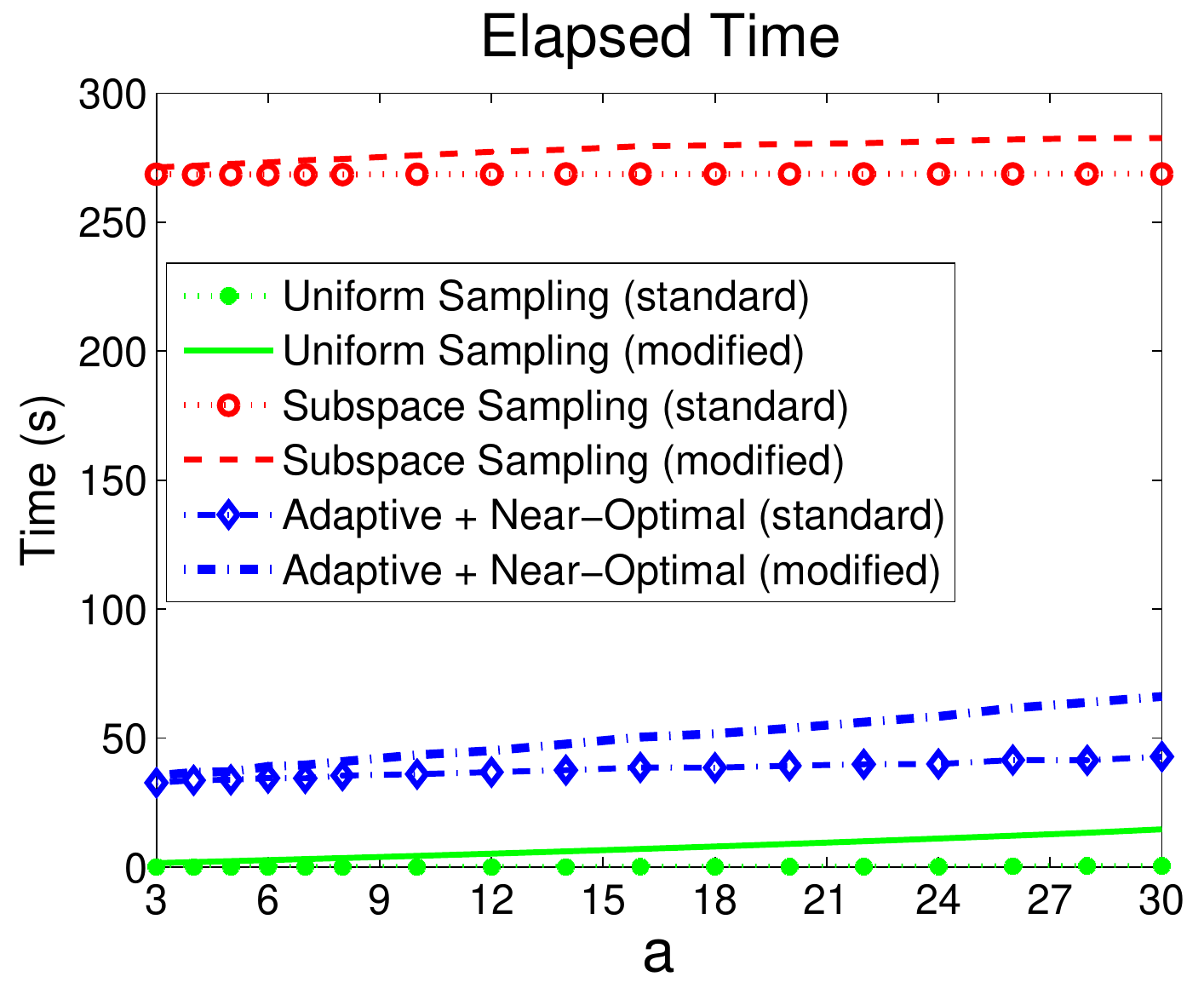}
\includegraphics[width=50mm,height=45mm]{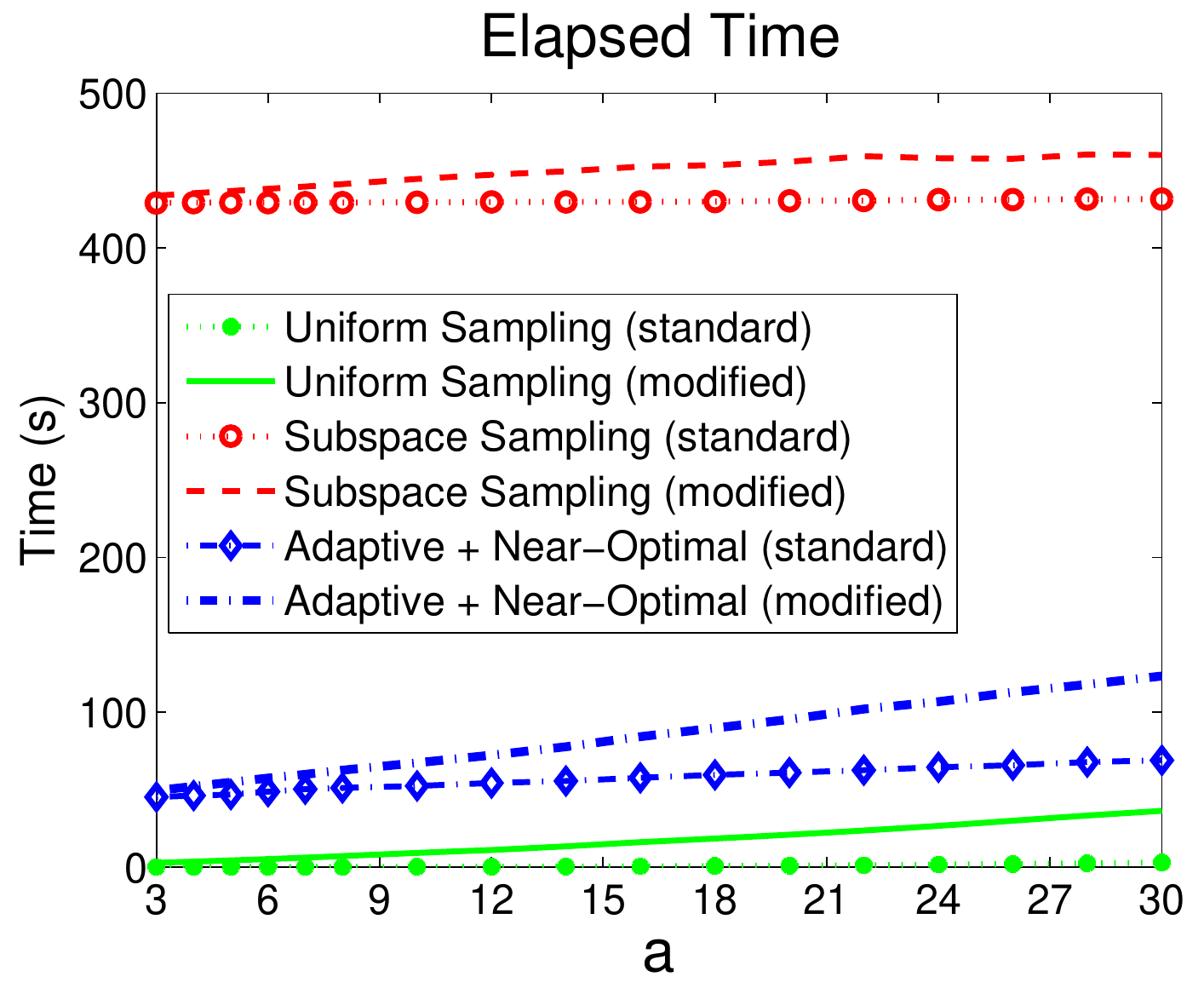}
\includegraphics[width=50mm,height=45mm]{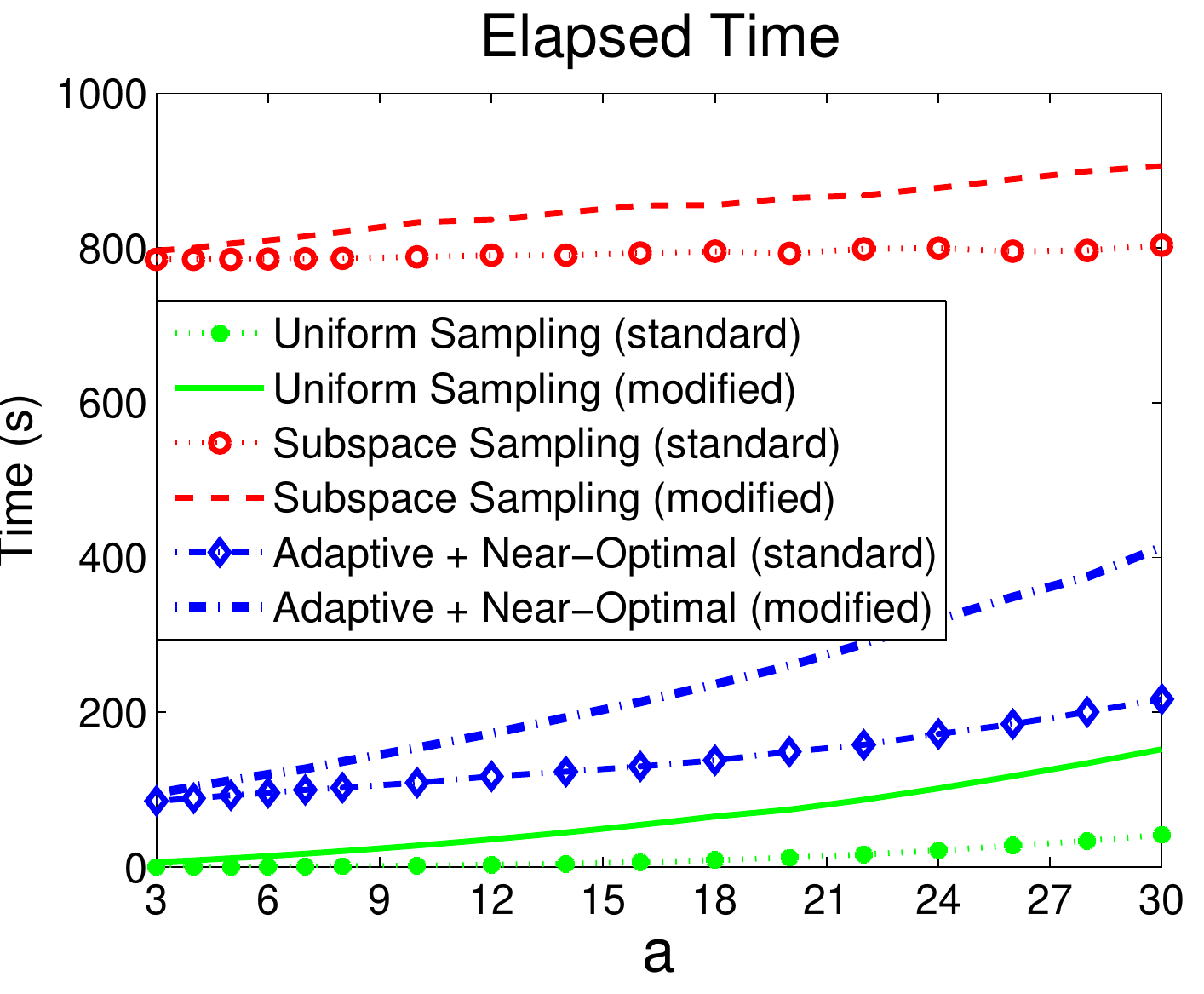}\\
\subfigure[\textsf{$\sigma = 0.2$, $k = 10$, and $c=a k$.}]{\includegraphics[width=50mm,height=45mm]{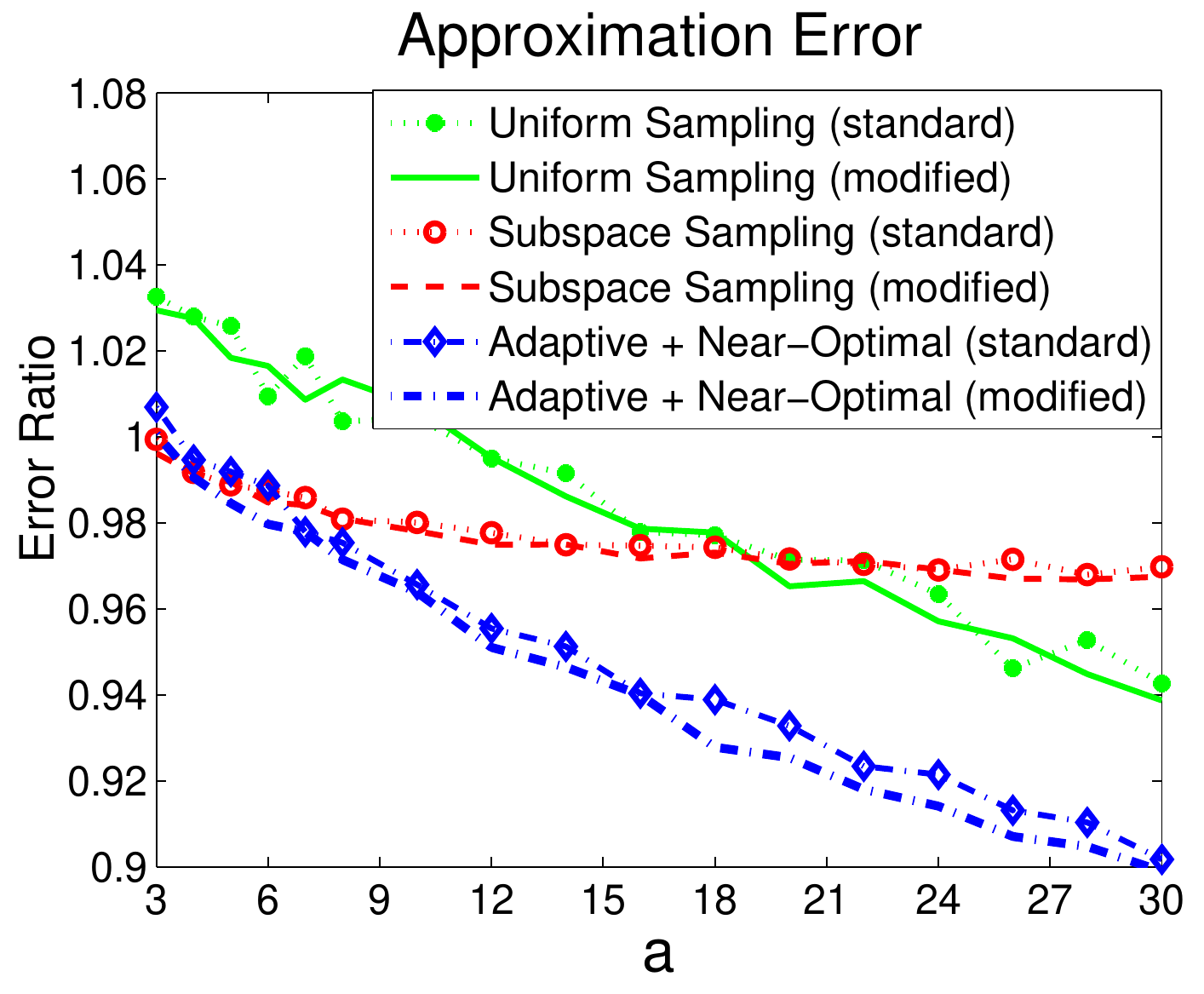}}
\subfigure[\textsf{$\sigma = 0.2$, $k = 20$, and $c=a k$.}]{\includegraphics[width=50mm,height=45mm]{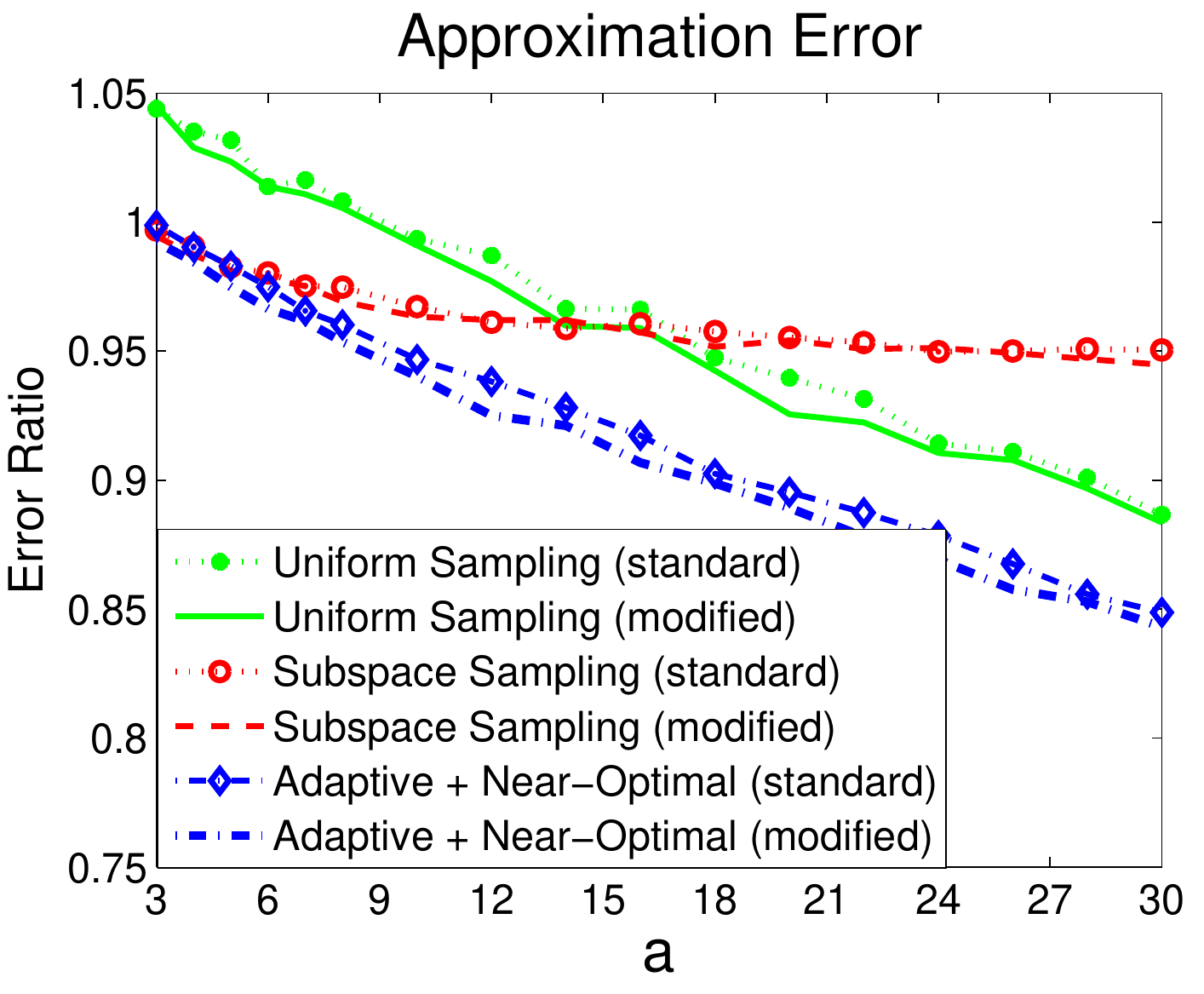}}
\subfigure[\textsf{$\sigma = 0.2$, $k = 50$, and $c=a k$.}]{\includegraphics[width=50mm,height=45mm]{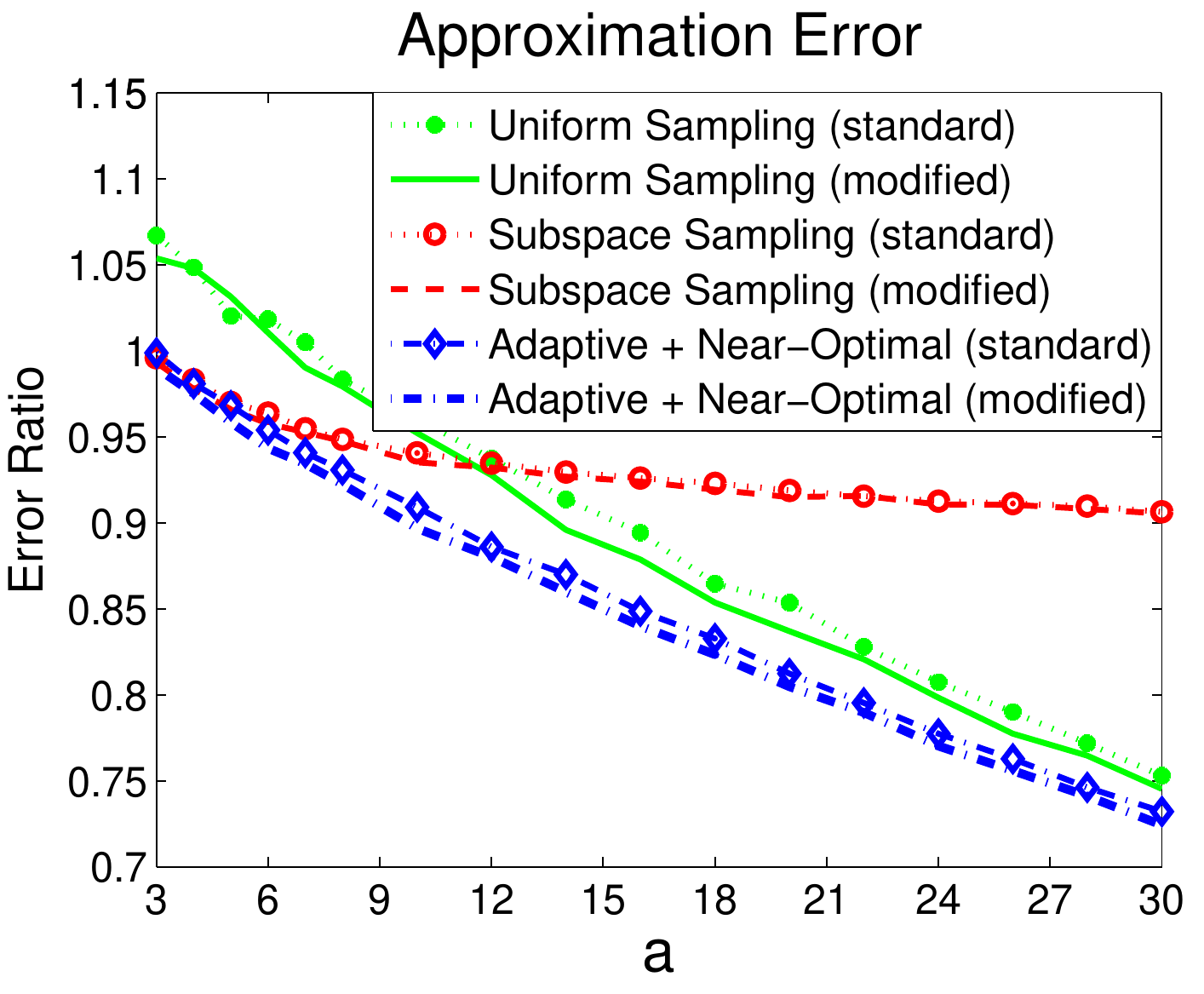}}\\

\vspace{8mm}

\includegraphics[width=50mm,height=45mm]{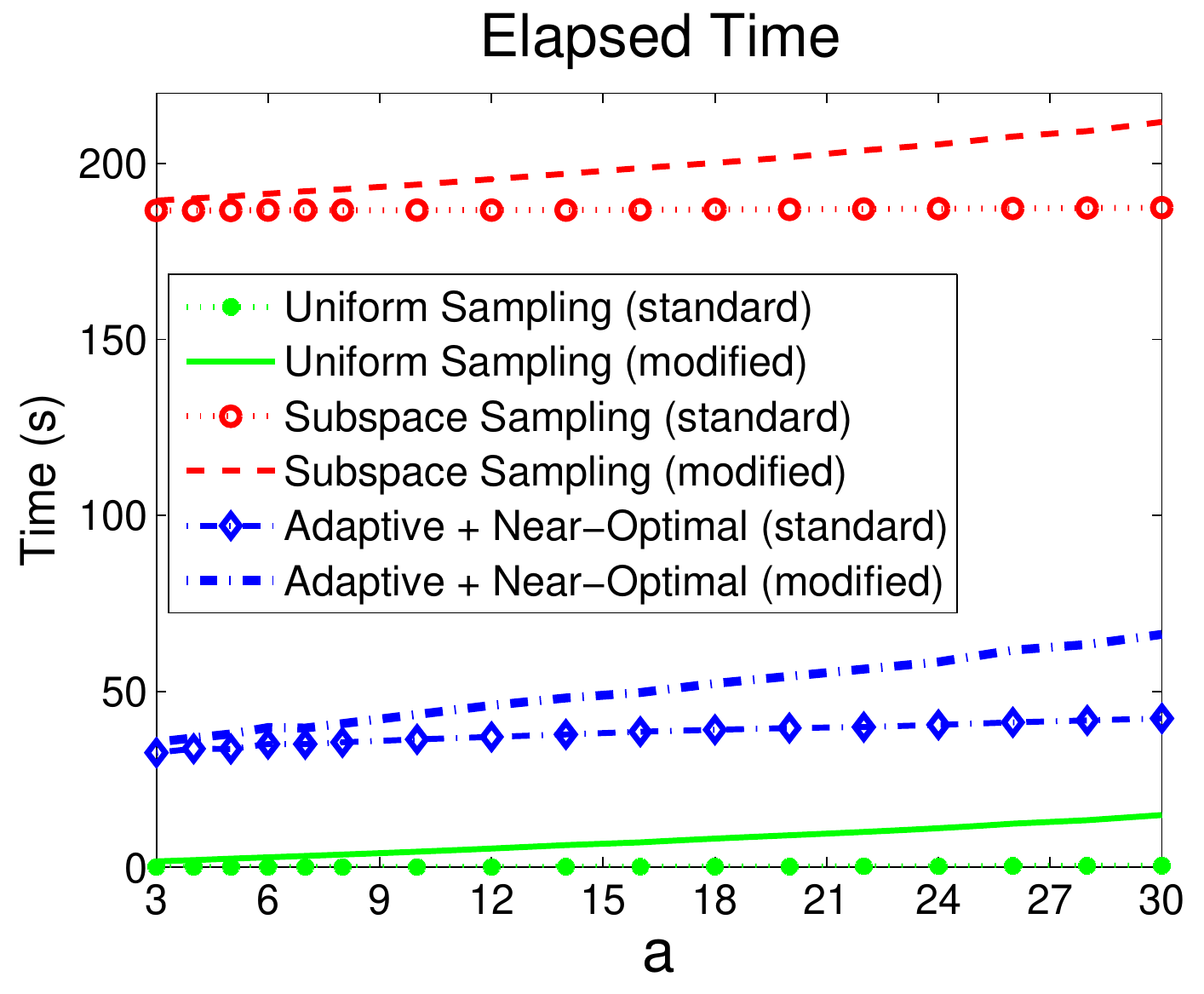}
\includegraphics[width=50mm,height=45mm]{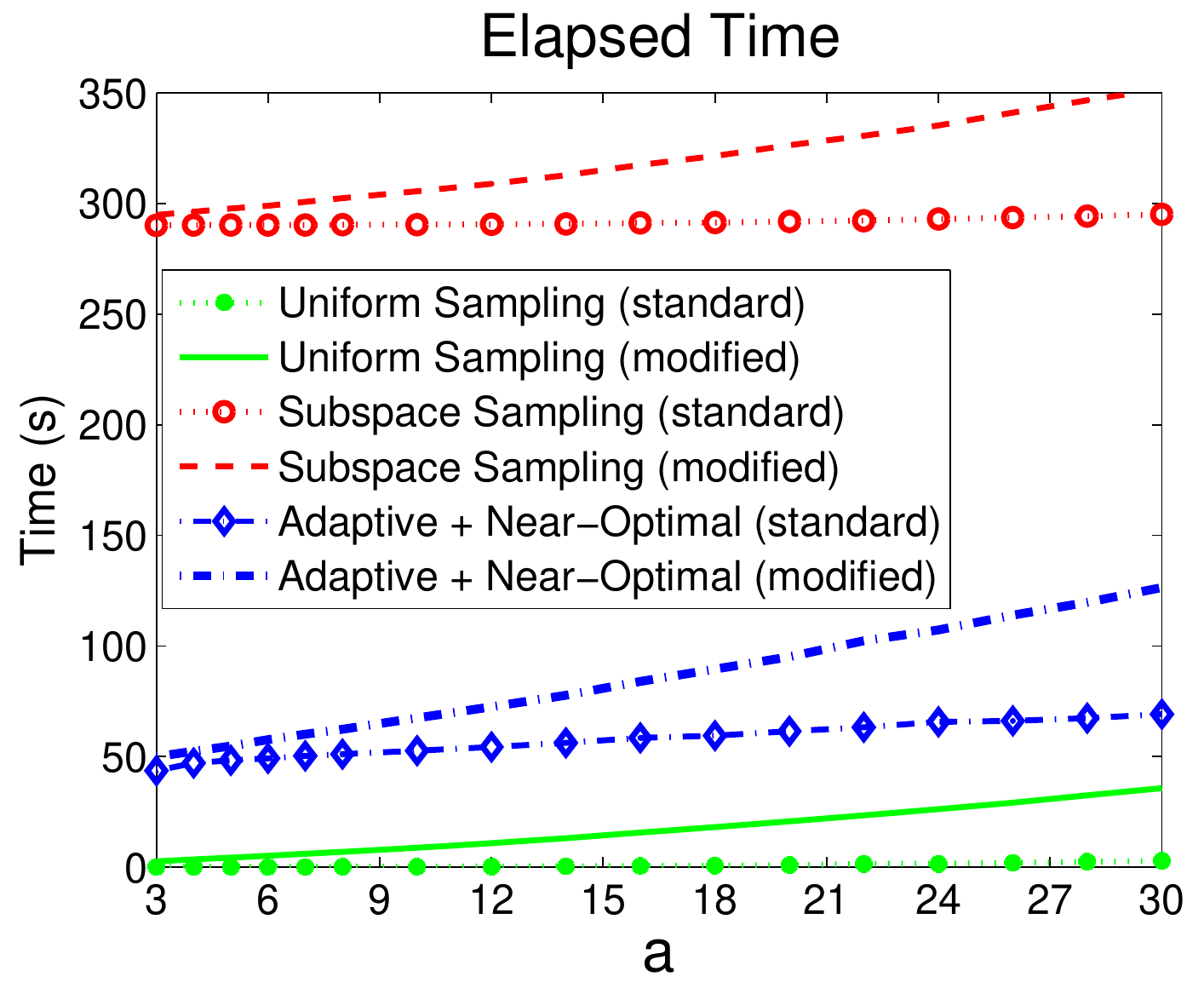}
\includegraphics[width=50mm,height=45mm]{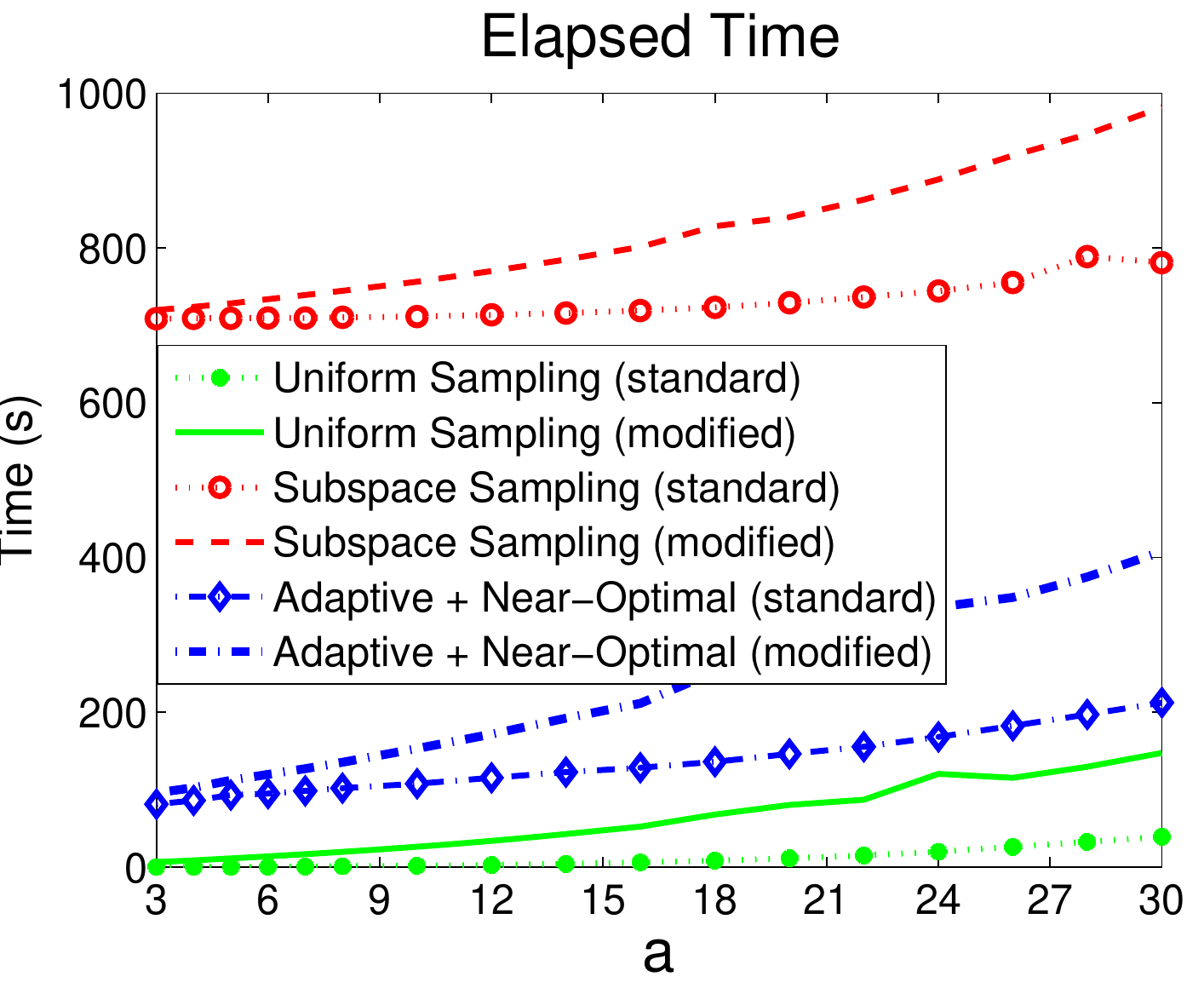}\\
\subfigure[\textsf{$\sigma = 1$, $k = 10$, and $c=a k$.}]{\includegraphics[width=50mm,height=45mm]{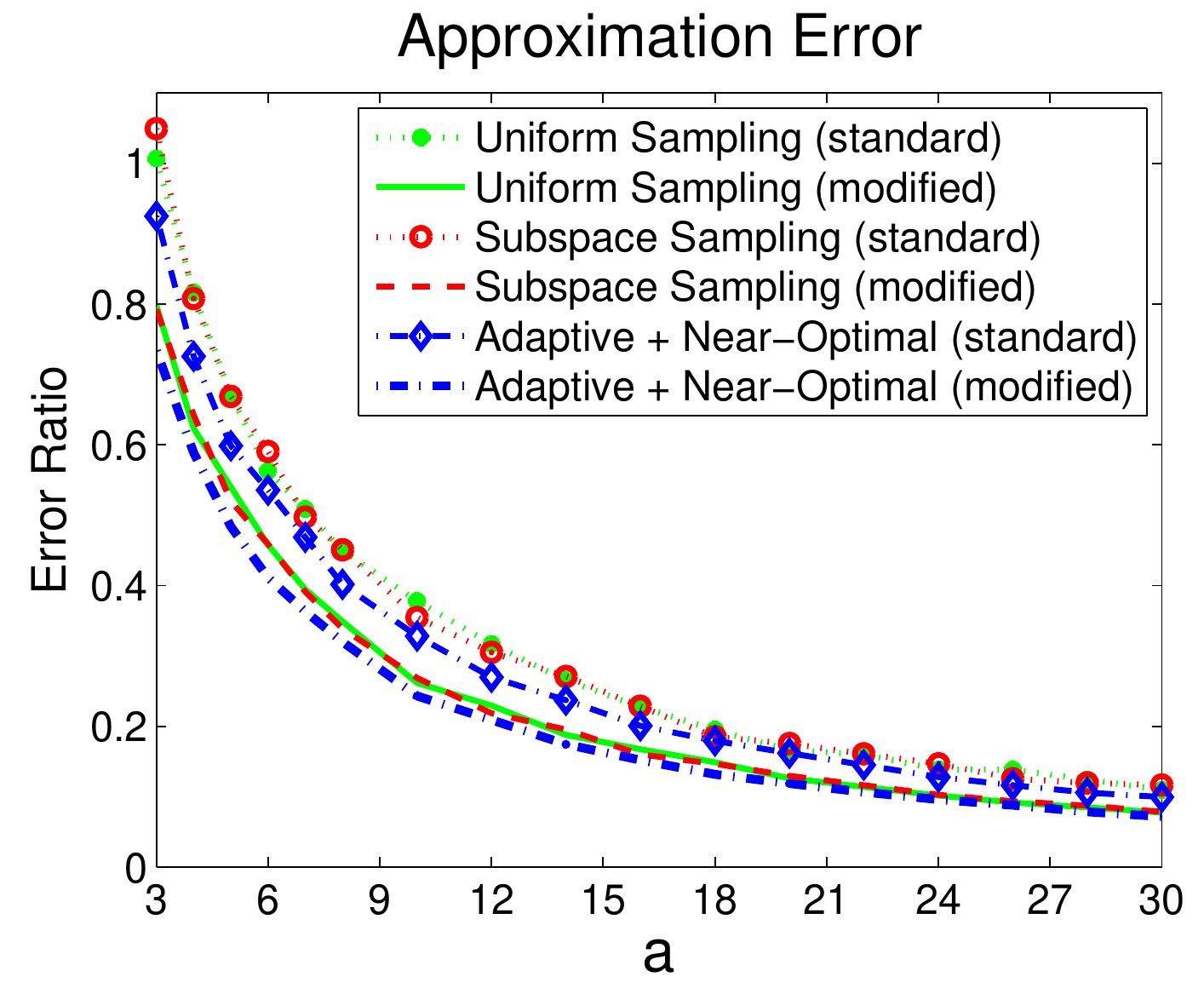}}
\subfigure[\textsf{$\sigma = 1$, $k = 20$, and $c=a k$.}]{\includegraphics[width=50mm,height=45mm]{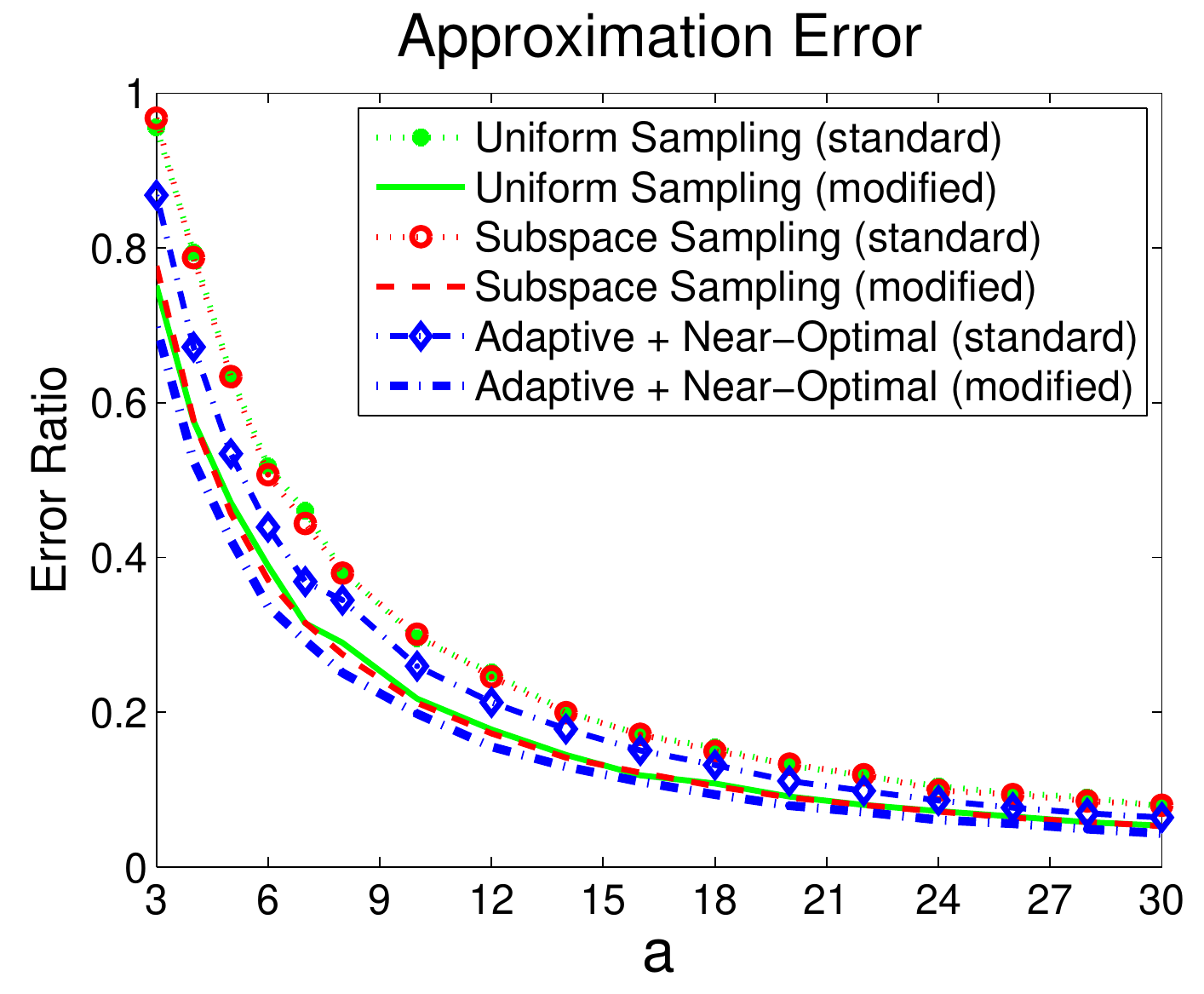}}
\subfigure[\textsf{$\sigma = 1$, $k = 50$, and $c=a k$.}]{\includegraphics[width=50mm,height=45mm]{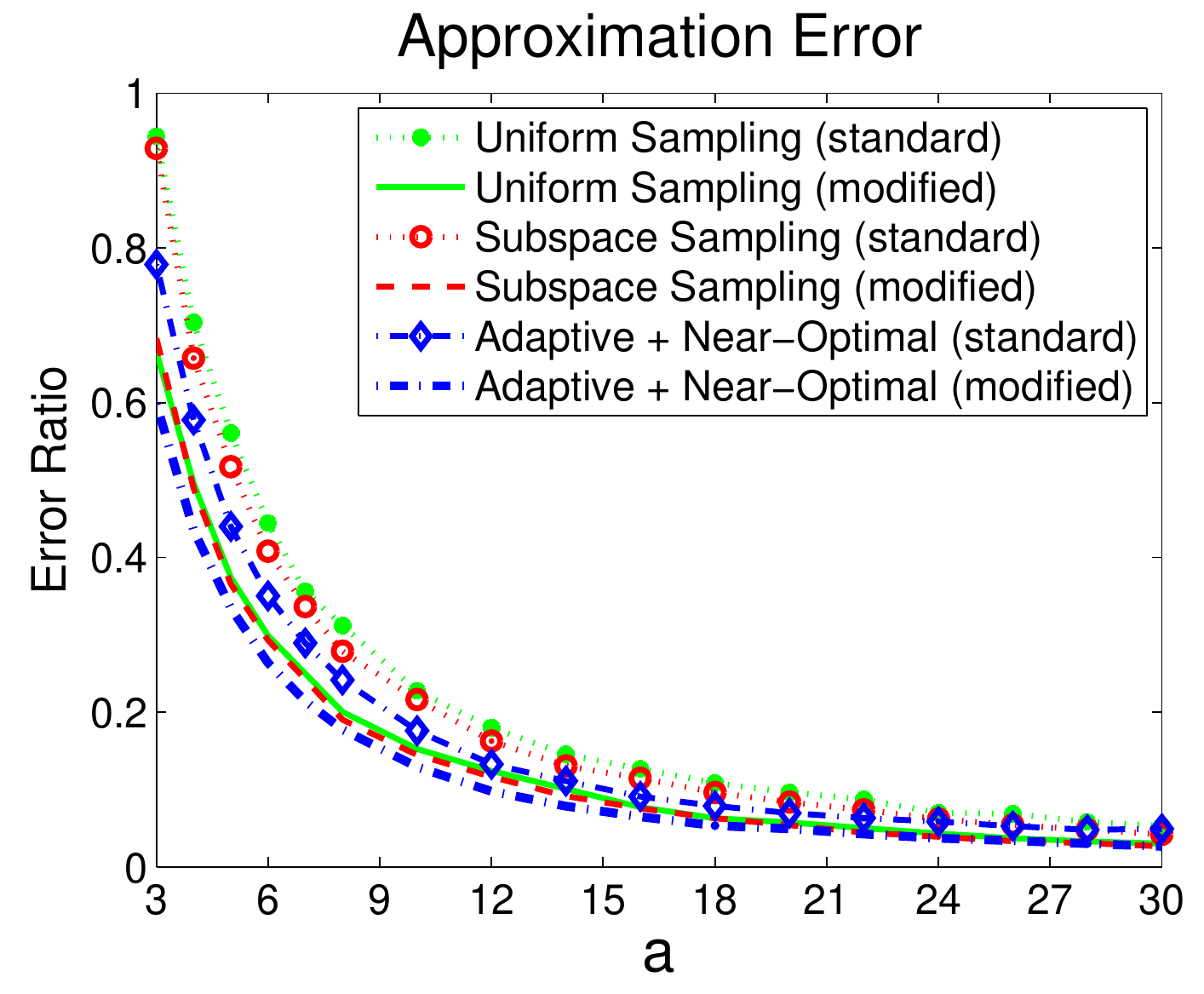}}
\end{center}
   \caption{Results of the \nystrom algorithms on the RBF kernel in the Letters data set.}
\label{fig:letter}
\end{figure*}

The experimental results also show that the subspace sampling and adaptive sampling algorithms significantly outperform
the uniform sampling when $c$ is reasonably small, say $c < 10 k$.
This indicates that the subspace sampling and adaptive sampling algorithms are good at choosing ``good" columns as basis vectors.
The effect is especially evident on the RBF kernel with the scale parameter $\sigma = 0.2$,
where the leverage scores are heterogeneous.
In most cases our adaptive sampling algorithm achieves the lowest approximation error among the three algorithms.
The error ratios of our adaptive sampling for the modified \nystrom are in accordance with the theoretical bound in Theorem~\ref{thm:nystrom_bound}; that is,
\[
\frac{\| \A - \C \U \C^T \|_F }{\|\A - \A_k\|_F}
\; \leq \; 1 + \sqrt{\frac{2k}{c}\big( 1+o(1) \big)}
\; = \; 1 + \sqrt{\frac{2}{a}\big( 1+o(1) \big)} \textrm{.}
\]
As for the running time, our adaptive sampling algorithm is more  efficient than the subspace sampling algorithm.
This is partly because the RBF kernel matrix is dense, and hence the subspace sampling algorithm costs $\OM(m^2 k)$ time to compute the truncated SVD.

Furthermore, the experimental results show that using $\U = \C^\dag \A (\C^\dag)^T$ as the intersection matrix (denoted by ``modified" in the figures)
always leads to much lower error than using $\U = \W^\dag$  (denoted by ``standard").
However, our modified \nystrom method costs more time to compute the intersection matrix than the standard \nystrom method costs.
Recall that the standard \nystrom costs $\OM(c^3)$ time to compute $\U = \W^\dag$ and
that the modified \nystrom costs $\OM(m c^2) + T_{\textrm{Multiply}} (m^2 c)$ time to compute $\U = \C^\dag \A (\C^\dag)^T$.
So the users should make a trade-off between time and accuracy
and decide whether it is worthwhile to sacrifice extra computational overhead for the improvement in accuracy by using the modified Nystr\"{o}m method.

\section{Conclusion}
\label{sec:concl}

In this paper we have built a novel and more general relative-error bound for the adaptive sampling algorithm. Accordingly,
we have devised novel CUR matrix decomposition and  \nystrom approximation algorithms which demonstrate significant improvement over
the classical counterparts.
Our relative-error CUR algorithm requires only $c = {2k}{\epsilon^{-1}}(1+o(1))$ columns and
$r = {c}{\epsilon^{-1}}(1 {+}\epsilon)$ rows selected from the original matrix.
To achieve relative-error bound,
the best previous algorithm---the subspace sampling algorithm---requires $c = \OM(k \epsilon^{-2} \log k)$ columns and
$r = \OM(c \epsilon^{-2} \log c )$ rows.
Our modified \nystrom method is different from the conventional \nystrom methods in that it uses a different intersection matrix.
We have shown that our adaptive sampling algorithm for the modified \nystrom achieves relative-error upper bound
by sampling only $c = {2k}{\epsilon^{-2}}(1{+}o(1))$ columns,
which even beats the lower error bounds of the standard \nystrom and the ensemble Nystr\"{o}m.
Our proposed CUR and \nystrom algorithms are scalable because they need only to maintain a small fraction
of columns or rows in RAM, and their time complexities are low provided that matrix multiplication can be highly efficiently executed.
Finally, the empirical comparison has also demonstrated the effectiveness and efficiency of our algorithms.

\acks{This work has been supported in part by the Natural Science Foundations of China (No. 61070239)
and the Scholarship Award for Excellent Doctoral Student granted by Chinese Ministry of Education.}

\appendix

\section{The Dual Set Sparsification Algorithm} \label{sec:dualset}

For the sake of self-contained, we attach the dual set sparsification algorithm  and describe some implementation details.
The deterministic dual set sparsification algorithm is established by \cite{boutsidis2011NOC}
and severs as an important step in the near-optimal column selection algorithm
(described in Lemma~\ref{prop:fast_column_selection} and Algorithm~\ref{alg:near_optimal_col} in this paper).
We show the dual set sparsification algorithm algorithm in Algorithm~\ref{alg:dual_set_sparsification}
and its bounds in Lemma~\ref{lem:sparsification},
and we also analyze the time complexity using our defined notation.

\begin{lemma}[Dual Set Spectral-Frobenius Sparsification] \label{lem:sparsification}
Let $\UM = \{\x_1, \cdots, \x_n\} \subset \RB^l$ $(l < n)$ contain the columns of an arbitrary matrix $\X \in \RB^{l\times n}$.
Let $\VM = \{\v_1, \cdots, \v_n\} \subset \RB^k$ $(k < n)$ be a decompositions of the identity,
that is, $\sum_{i=1}^n \v_i \v_i^T = \I_k$.
Given an integer $r$ with $k < r < n$,
Algorithm~\ref{alg:dual_set_sparsification} deterministically computes
a set of weights $s_i \geq 0$ ($i = 1,\cdots, n$) at most $r$ of which are non-zero, such that
\[
\lambda_k \Big( \sum_{i=1}^n s_i \v_i \v_i^T \Big) \geq \Big( 1 - \sqrt{\frac{k}{r}} \Big)^2 \qquad \mbox{ and }  \qquad
\tr \Big( \sum_{i=1}^n s_i \x_i \x_i^T \Big) \leq \|\X\|_F^2.
\]
The weights $s_i$  can be computed deterministically in $\OM \big(rnk^2\big) + \TimeMulti\big(n l \big)$ time.
\end{lemma}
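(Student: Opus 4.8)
\emph{Proof plan.} I would prove the lemma by the deterministic greedy ``barrier'' method of Batson--Spielman--Srivastava, run in the two-set (``dual set'') form of \citet{boutsidis2011NOC}. Algorithm~\ref{alg:dual_set_sparsification} starts from the zero weight vector and performs exactly $r$ iterations; in iteration $\tau$ it picks one index $j$ and a scalar $t>0$ and increments $s_j$ by $t$, which simultaneously adds the rank-one terms $t\,\v_j\v_j^T$ and $t\,\x_j\x_j^T$ to the two running matrices
\[
\Q_\tau \;=\; \sum_{i=1}^n s_i^{(\tau)}\,\v_i\v_i^T \;\in\; \RB^{k\times k},
\qquad
\PP_\tau \;=\; \sum_{i=1}^n s_i^{(\tau)}\,\x_i\x_i^T \;\in\; \RB^{l\times l},
\]
with $\Q_0=\0$ and $\PP_0=\0$. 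Since at most one weight can become nonzero per iteration, at most $r$ of the $s_i$ are nonzero at the end, which gives the support claim. The whole difficulty is to show that at \emph{every} step an index $j$ and a weight $t$ can be found that keep the smallest eigenvalue of $\Q_\tau$ on a prescribed upward trajectory while keeping $\tr(\PP_\tau)$ within a linearly growing budget.

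\emph{Potentials and the per-step existence argument.} I would track the spectral side through the lower-barrier potential $\Phi_\ell(\Q)=\tr\big((\Q-\ell\I_k)^{-1}\big)$, which is finite exactly while $\lambda_k(\Q)>\ell$, and raise the barrier $\ell_\tau$ by a fixed increment $\delta_L>0$ at each step; I would track the Frobenius side simply by the linear quantity $\tr(\PP_\tau)=\sum_i s_i^{(\tau)}\|\x_i\|_2^2$, which is allowed to increase by a fixed amount $\delta_U>0$ per step, the total budget being calibrated (possibly together with a final rescaling of all weights) so as to deliver $\tr(\cdot)\le\|\X\|_F^2$ at the end. A Sherman--Morrison computation for the update $\Q\mapsto\Q+t\,\v_j\v_j^T$ shows that $\Phi_{\ell_{\tau+1}}(\Q_{\tau+1})\le\Phi_{\ell_\tau}(\Q_\tau)$ holds as soon as $1/t\le L_j$, where
\[
L_j \;=\; \frac{\v_j^T(\Q_\tau-\ell_{\tau+1}\I_k)^{-2}\v_j}{\Phi_{\ell_\tau}(\Q_\tau)-\Phi_{\ell_{\tau+1}}(\Q_\tau)} \;-\; \v_j^T(\Q_\tau-\ell_{\tau+1}\I_k)^{-1}\v_j ,
\]
which is negative exactly for the ``blocked'' indices, while the budget constraint $\tr(\PP_{\tau+1})\le\tr(\PP_\tau)+\delta_U$ holds as soon as $1/t\ge U_j:=\|\x_j\|_2^2/\delta_U$. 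Hence it suffices to exhibit a $j$ with $0<U_j\le L_j$ and then take any $1/t\in[U_j,L_j]$. This is where the two normalizations enter: summing over $j$ and using $\sum_j\v_j\v_j^T=\I_k$ turns $\sum_j L_j$ into a trace expression bounded below in terms of $\Phi_{\ell_\tau}(\Q_\tau)$, $\Phi_{\ell_{\tau+1}}(\Q_\tau)$ and $\delta_L$, while $\sum_j U_j=\|\X\|_F^2/\delta_U$; maintaining the invariant that $\Phi_{\ell_\tau}(\Q_\tau)$ stays below a fixed threshold---which in turn forces $\delta_L$ small enough that merely raising the barrier does not blow the potential up---makes $\sum_j L_j\ge\sum_j U_j$, so a feasible $j$ exists.

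\emph{Telescoping, rescaling, and complexity; the main obstacle.} Since every step leaves $\Phi_{\ell_\tau}(\Q_\tau)$ non-increasing and keeps $\tr(\PP_\tau)$ within budget, an induction over the $r$ iterations yields $\lambda_k(\Q_r)>\ell_r$ and $\tr(\PP_r)\le\|\X\|_F^2$; choosing the initial barrier $\ell_0<0$ and the increment $\delta_L$ compatibly with the averaging step and then rescaling all weights by the appropriate factor of order $1/r$ converts $\lambda_k(\Q_r)>\ell_r$ into $\lambda_k\big(\sum_i s_i\,\v_i\v_i^T\big)\ge(1-\sqrt{k/r})^2$ while leaving $\tr\big(\sum_i s_i\,\x_i\x_i^T\big)\le\|\X\|_F^2$ intact. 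For the running time, one maintains the $k\times k$ matrix $\Q_\tau$, forms $(\Q_\tau-\ell_{\tau+1}\I_k)^{-1}$ and its square once per iteration in $\OM(k^3)$ time, and then evaluates the two quadratic forms appearing in $L_j$ for all $n$ indices in $\OM(nk^2)$ time, for a total of $\OM(rnk^2)$; the only other work is reading $\X$ to form the scalars $\|\x_j\|_2^2$, which costs $\TimeMulti(nl)$. The main obstacle is precisely this per-step existence argument---getting $\sum_j U_j\le\sum_j L_j$ to hold hinges on a careful joint calibration of $\delta_L$, $\delta_U$ and $\ell_0$ and on correctly handling the blocked indices ($L_j\le0$) as well as the fact that the lower-barrier resolvent estimate is valid only once one knows $\lambda_k(\Q_\tau)$ stays strictly above the \emph{raised} barrier $\ell_{\tau+1}$; everything else is bookkeeping.
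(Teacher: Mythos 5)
Your plan is essentially the argument behind this lemma. Note that the paper itself does not prove the two spectral/trace inequalities---it cites \citet{boutsidis2011NOC}, whose proof is exactly the Batson--Spielman--Srivastava lower-barrier argument you describe (potential $\tr\big((\Q-\ell\I_k)^{-1}\big)$, barrier increment $\delta_L=1$ starting from $\ell_0=-\sqrt{rk}$, per-step trace budget $\delta_U=\|\X\|_F^2/(1-\sqrt{k/r})$, averaging over $j$ via $\sum_j\v_j\v_j^T=\I_k$ and $\sum_j\|\x_j\|_2^2=\|\X\|_F^2$ to get $\sum_j L_j\ge 1-\sqrt{k/r}=\sum_j U_j$, then rescaling all weights by $(1-\sqrt{k/r})/r$)---and only the running-time claim is argued in the paper, in a way that matches your accounting. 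One small slip: in your expression for $L_j$ the denominator should be $\Phi_{\ell_{\tau+1}}(\Q_\tau)-\Phi_{\ell_\tau}(\Q_\tau)$, the (positive) increase in potential caused by raising the barrier, not its negative; compare the selection rule in Algorithm~\ref{alg:dual_set_sparsification}.
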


Here we mention some implementation issues of Algorithm~\ref{alg:dual_set_sparsification}
which were not described in detail by \cite{boutsidis2011NOC}.
In each iteration the algorithm performs once eigenvalue decomposition: $\A_\tau = \W \Lam \W^T$.
Here $\A_\tau$ is guaranteed to be SPSD in each iteration.
Since
\[
\Big(\A_\tau - \alpha \I_k \Big)^q
\; =\; \W \Diag\Big( (\lambda_1 - \alpha)^q , \cdots, (\lambda_k - \alpha)^q \Big) \W^T \textrm{,}
\]
$( \A_\tau - (L_\tau + 1) \I_k )^q$ can be efficiently computed based on the eigenvalue decomposition of $\A_\tau$.
With the eigenvalues at hand, $\phi (L, \A_\tau)$ can also be computed directly.

The algorithm runs in $r$ iterations.
In each iteration, the eigenvalue decomposition of $\A_\tau$ requires $\OM(k^3)$,
and the $n$ comparisons in Line~\ref{alg:dual_set_sparsification:4} each requires $\OM(k^2)$.
Moreover, computing $\|\x_i\|_2^2$ for each $\x_i$ requires $\TimeMulti(nl)$.
Overall, the running time of Algorithm~\ref{alg:dual_set_sparsification} is at most
$\OM(r k^3) + \OM(r n k^2) + \TimeMulti(nl) = \OM(rnk^2) + \TimeMulti(nl)$.

\begin{algorithm}[tb]
   \caption{Deterministic Dual Set Spectral-Frobenius Sparsification Algorithm.}
   \label{alg:dual_set_sparsification}
\algsetup{indent=2em}
\begin{small}
\begin{algorithmic}[1]
   \STATE {\bf Input:} 	$\UM = \{\x_i\}_{i=1}^n \subset \RB^l$, ($l < n$);
   							$\VM = \{\v_i\}_{i=1}^n \subset \RB^k$, with $\sum_{i=1}^n \v_i \v_i^T = \I_k$ ($k < n$);
   							$k < r < n$;
   \STATE {\bf Initialize:} $\s_0 = \0$, $\A_0 = \0$;
   \STATE Compute $\|\x_i\|_2^2$ for $i = 1, \cdots, n$, and then compute $\delta_U = \frac{ \sum_{i=1}^n \|\x_i\|_2^2 }{ 1 - \sqrt{k/r} }$;
   \FOR{$\tau = 0$ to $r-1$}
   \STATE Compute the eigenvalue decomposition of $\A_{\tau}$;
   	\STATE Find any index $j$ in $\{1 , \cdots , n\}$ and compute a weight $t > 0$ such that
   				\begin{eqnarray}
   				\delta_U^{-1} \| \x_j \|_2^2 \; \leq \;  t^{-1} \;  \leq \;  \frac{\v_j^T \Big(\A_\tau - (L_\tau + 1) \I_k  \Big)^{-2} \v_j  }{ \phi (L_\tau + 1 , \A_\tau) - \phi(L_\tau, \A_\tau) }
   														- \v_j^T \Big(\A_\tau - (L_\tau + 1) \I_k \Big)^{-1} \v_j	\textrm{;}	 \nonumber
   				\end{eqnarray}
   				where
   				\begin{equation}
   				\phi (L, \A)   =   \sum_{i=1}^k \Big(\lambda_i (\A) - L \Big)^{-1} \textrm{, }	\qquad	\qquad
   				\quad L_\tau = \tau - \sqrt{rk} \textrm{;} \nonumber
   				\end{equation} \label{alg:dual_set_sparsification:4}
   	\STATE Update the $j$-th component of $\s_\tau$ and $\A_\tau$: $\quad \s_{\tau + 1} [j] = \s_{\tau} [j] + t$, $\quad \A_{\tau + 1} = \A_\tau + t \v_j \v_j^T$;
   \ENDFOR
   \RETURN $\s = \frac{1 - \sqrt{k/r}}{r} \s_r$.
\end{algorithmic}
\end{small}
\end{algorithm}

The near-optimal column selection algorithm described in Lemma~\ref{prop:fast_column_selection} has three steps:
randomized SVD via random projection which costs $\OM\big( m k^2 \epsilon^{-4/3} \big) + \TimeMulti \big(m n k \epsilon^{-2/3} \big)$ time,
the dual set sparsification algorithm which costs $\OM\big( n k^3 \epsilon^{-2/3} \big) + \TimeMulti \big(m n \big)$ time,
and the adaptive sampling algorithm which costs $\OM\big( m k^2 \epsilon^{-4/3} \big) + \TimeMulti\big(m n k \epsilon^{-2/3}\big)$ time.
Therefore, the near-optimal column selection algorithm
costs totally $\OM\big(m k^2 \epsilon^{-4/3} + n k^3 \epsilon^{-2/3}\big) + \TimeMulti \big(m n k \epsilon^{-2/3} \big)$ time.

\section{Proofs of the Adaptive Sampling Bounds} \label{sec:proofs}

We present the proofs of Theorem~\ref{thm:adaptive_bound}, Corollary~\ref{cor:adaptive_improved}, Theorem~\ref{cor:fast_cur}, and Theorem~\ref{thm:nystrom_bound}
in Appendices~\ref{sec:proof:thm:adaptive_bound}, \ref{sec:proof:cor:adaptive_improved}, \ref{sec:proof:thm:fast_cur}, and \ref{sec:proof:thm:nystrom_bound}, respectively.

\subsection{The Proof of Theorem~\ref{thm:adaptive_bound}} \label{sec:proof:thm:adaptive_bound}

Theorem~\ref{thm:adaptive_bound} can be equivalently expressed in Theorem~\ref{thm:adaptive_bound_2}.
In order to stick to the column space convention throughout this paper,
we prove Theorem~\ref{thm:adaptive_bound_2} instead of Theorem~\ref{thm:adaptive_bound}.

\begin{theorem}[The Adaptive Sampling Algorithm] \label{thm:adaptive_bound_2}
Given a matrix $\A \in \RBmn$ and
a matrix $\R \in \RB^{r\times n}$ such that $\rk(\R) = \rk(\A \R^\dag \R) = \rho$ $(\rho \leq r \leq m)$,
let $\C_1 \in \RB^{m\times c_1}$ consist of $c_1$ columns of $\A$,
and define the residual $\B = \A - \C_1 \C_1^\dag \A$.
For $i = 1,\cdots, n$, let
\[
p_i \;=\; \|\bb_i\|_2^2 / \|\B\|_F^2,
\]
where $\bb_i$ is the $i$-th column of the matrix $\B$.
Sample further $c_2$ columns from $\A$ in $c_2$ i.i.d.\ trials,
where in each trial the $i$-th column is chosen with probability $p_i$.
Let $\C_2 \in \RB^{m\times c_2}$ contain the $c_2$ sampled columns
and  $\C = [\C_1,\C_2] \in \RB^{m\times (c_1 + c_2)}$ contain the columns of both $\C_1$ and $\C_2$,
all of which are columns of $\A$.
Then the following inequality holds:
\[
\EB \|\A - \C \Cmp \A \R^\dag \R \|_F^2
\; \leq \; \|\A - \A \R^\dag \R \|_F^2 + \frac{\rho}{c_2} \|\A - \C_1 \C_1^\dag \A\|_F^2.
\]
where the expectation is taken w.r.t. $\C_2$.
\end{theorem}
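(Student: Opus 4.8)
The plan is to adapt the argument of \citet{deshpande2006matrix} so that the row-space projector $\R^\dag\R$ is carried through from the outset. The backbone is an elementary \emph{best-approximation} observation: writing $P_\C=\C\Cmp$ for the orthogonal projector onto $\mathrm{range}(\C)$ and $P_\R=\R^\dag\R$ for the orthogonal projector onto the row space of $\R$, the matrix $P_\C\A P_\R=\C\Cmp\A\R^\dag\R$ is the Frobenius-closest matrix to $\A$ among all $\Y$ with $\mathrm{range}(\Y)\subseteq\mathrm{range}(\C)$ and $\mathrm{range}(\Y^T)\subseteq\mathrm{range}(\R^T)$. Indeed $\A-P_\C\A P_\R=(\I-P_\C)\A+P_\C\A(\I-P_\R)$, and for any such $\Y$ the difference $\M:=P_\C\A P_\R-\Y$ satisfies $P_\C\M=\M=\M P_\R$; using $(\I-P_\C)P_\C=0$, $P_\R(\I-P_\R)=0$, symmetry of the two projectors, and cyclicity of the trace, both $\langle(\I-P_\C)\A,\M\rangle$ and $\langle P_\C\A(\I-P_\R),\M\rangle$ vanish, so $\|\A-\Y\|_F^2=\|\A-P_\C\A P_\R\|_F^2+\|\M\|_F^2$. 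Hence it suffices to exhibit one convenient \emph{random} matrix $\Y$ of this bi-restricted form and to bound $\EB\|\A-\Y\|_F^2$.

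Next I would construct that witness. Let $\v_1,\dots,\v_\rho$ be an orthonormal basis of the row space of $\R$, so $\R^\dag\R=\sum_{j=1}^{\rho}\v_j\v_j^T$, and let $\bb_1,\dots,\bb_n$ be the columns of $\B=\A-\C_1\C_1^\dag\A$. For the sampled indices $i_1,\dots,i_{c_2}$ set
\[
\w_j\;=\;\frac{1}{c_2}\sum_{t=1}^{c_2}\frac{v_{j,i_t}}{p_{i_t}}\,\bb_{i_t},\qquad j=1,\dots,\rho,
\]
with the convention that a term is $\0$ whenever $\bb_{i_t}=\0$ (such columns have $p_i=0$ and lie outside the sampling support). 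Since each $\bb_{i_t}=\a_{i_t}-\C_1\C_1^\dag\a_{i_t}$ lies in $\mathrm{range}(\C)$, so does every $\w_j$; therefore
\[
\Y\;:=\;\C_1\C_1^\dag\A\,\R^\dag\R+\sum_{j=1}^{\rho}\w_j\v_j^T
\]
has column space inside $\mathrm{range}(\C)$ and row space inside $\mathrm{span}\{\v_1,\dots,\v_\rho\}=\mathrm{range}(\R^T)$, as needed, and a one-line computation gives $\EB[\w_j]=\sum_i p_i(v_{j,i}/p_i)\bb_i=\B\v_j$.

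Then I would evaluate the expected error. Decompose $\A=\A\R^\dag\R+\A(\I-\R^\dag\R)$; the second piece has rows orthogonal to $\mathrm{span}\{\v_j\}$ while $\A\R^\dag\R-\Y$ has rows inside it, so by orthogonality $\|\A-\Y\|_F^2=\|\A-\A\R^\dag\R\|_F^2+\|\A\R^\dag\R-\Y\|_F^2$, which already supplies the first term of the bound. For the second term, $\A\R^\dag\R-\C_1\C_1^\dag\A\,\R^\dag\R=\B\R^\dag\R=\sum_{j=1}^{\rho}(\B\v_j)\v_j^T$, hence $\A\R^\dag\R-\Y=\sum_{j=1}^{\rho}(\B\v_j-\w_j)\v_j^T$ and, by orthonormality of the $\v_j$, its squared Frobenius norm is $\sum_{j=1}^{\rho}\|\B\v_j-\w_j\|_2^2$. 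Taking expectations and using that $\w_j$ is an average of $c_2$ i.i.d.\ terms with mean $\B\v_j$, the key estimate is
\[
\EB\|\B\v_j-\w_j\|_2^2\;\le\;\frac{1}{c_2}\sum_{i=1}^{n}p_i\Big\|\tfrac{v_{j,i}}{p_i}\bb_i\Big\|_2^2\;=\;\frac{1}{c_2}\sum_{i=1}^{n}v_{j,i}^2\,\frac{\|\bb_i\|_2^2}{p_i}\;=\;\frac{\|\B\|_F^2}{c_2}\sum_{i=1}^{n}v_{j,i}^2\;=\;\frac{\|\B\|_F^2}{c_2},
\]
where the crucial cancellation is $\|\bb_i\|_2^2/p_i=\|\B\|_F^2$ together with $\|\v_j\|_2=1$. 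Summing over $j=1,\dots,\rho$ gives $\tfrac{\rho}{c_2}\|\B\|_F^2=\tfrac{\rho}{c_2}\|\A-\C_1\C_1^\dag\A\|_F^2$, and combining with the best-approximation inequality $\EB\|\A-\C\Cmp\A\R^\dag\R\|_F^2\le\EB\|\A-\Y\|_F^2$ completes the argument; note that only $\rk(\R)=\rho$ actually enters the estimate.

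I expect the main obstacle to be conceptual rather than computational: one must recognize that, although simultaneously projecting onto a column space \emph{and} a row space is strictly harder than either alone (the na\"ive two-step reduction loses a $\sqrt2$ factor, as recalled in Section~\ref{sec:introduction:cur}), it can still be reduced to the one-sided variance estimate of \citet{deshpande2006matrix} by insisting that the witness $\Y$ live in the ``tensor product'' of the two subspaces and invoking the optimality of $\C\Cmp\A\R^\dag\R$. The rest is bookkeeping---handling the degenerate columns $\bb_i=\0$ consistently and checking that $\Y$ is genuinely bi-restricted, both immediate from the construction. The column-space reformulation (Theorem~\ref{thm:adaptive_bound}) then follows by transposition, and the special case $\C=\A_k$ recovers Theorem~2.1 of \citet{deshpande2006matrix}.
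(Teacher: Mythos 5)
Your proof is correct, and its engine is the same as the paper's: you build the same random witness matrix (your $\Y$ coincides with the paper's $\F=\big(\sum_q\sigma_q^{-1}\w_q\u_q^T\big)\A\R^\dag\R$ after expanding the SVD of $\A\R^\dag\R$), and the decisive step is the identical variance estimate $\EB\|\w_j-\B\v_j\|_2^2\le\frac{1}{c_2}\sum_i p_i\|v_{j,i}\bb_i/p_i\|_2^2=\frac{1}{c_2}\|\B\|_F^2$ driven by the cancellation $\|\bb_i\|_2^2/p_i=\|\B\|_F^2$. Where you genuinely diverge is in the supporting machinery. The paper takes $\{\v_j\}$ to be the top $\rho$ right singular vectors of $\A\R^\dag\R$, which forces it to prove the auxiliary Lemma~\ref{lem:right_singular_vector} (that $\A\R^\dag\R\v_j=\A\v_j$ for those particular vectors) and to invoke only the one-sided optimality of column-space projection, $\|\A\R^\dag\R-\C\C^\dag\A\R^\dag\R\|_F\le\|\A\R^\dag\R-\F\|_F$. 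You instead take $\{\v_j\}$ to be an \emph{arbitrary} orthonormal basis of the row space of $\R$ and replace that lemma with the two-sided best-approximation property of $\C\C^\dag\A\R^\dag\R$ among matrices whose column and row spaces are bi-restricted to $\mathrm{range}(\C)$ and $\mathrm{range}(\R^T)$; the identity $\B\R^\dag\R=\sum_j(\B\v_j)\v_j^T$ then makes $\|\A\R^\dag\R-\Y\|_F^2=\sum_j\|\B\v_j-\w_j\|_2^2$ immediate by orthonormality. This buys you a cleaner proof with no SVD of $\A\R^\dag\R$ and no auxiliary lemma, at the cost of having to verify the (easy) bi-restricted Pythagoras identity; the paper's route keeps everything phrased through singular vectors, which aligns with its column-space conventions elsewhere. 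Both decompositions of the error into $\|\A-\A\R^\dag\R\|_F^2$ plus a row-space-restricted remainder are the same, and your observation that only $\rk(\R)=\rho$ is actually needed is accurate.
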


\begin{proof}
With a little abuse of symbols, we use bold uppercase letters to denote random matrices
and bold lowercase to denote random vectors,
without distinguishing between random matrices/vectors and non-random matrices/vectors.

We denote the $j$-th column of
$\V_{\A\R^\dag \R, \rho} \in \RB^{n \times \rho}$ as $\v_{j}$,
and the $(i,j)$-th entry of $\V_{\A\R^\dag \R, \rho}$ as $v_{i j}$.
Define  random vectors $\x_{j,(l)} \in \RB^{m}$  such that for $j = 1,\cdots,n$ and $l = 1,\cdots,c_2$,
\begin{equation}
\x_{j,(l)} = \frac{v_{i j}}{p_i} \bb_i = \frac{v_{i j}}{p_i} \Big( \a_i - \C_1 \C_1^\dag \a_i \Big)
\quad \textrm{with probability } p_i \textrm{,} \quad \textrm{for } i = 1, \cdots, n \textrm{,} \nonumber
\end{equation}
Notice that $\x_{j,(l)}$ is a linear function of a column of $\A$ sampled from the above defined distribution.
We have that
\begin{eqnarray}
\EB [\x_{j,(l)}] 				& = & \sum_{i=1}^n p_i \frac{v_{i j}}{p_i} \bb_i \quad = \quad \B \v_{j} \textrm{,} \nonumber\\
\EB \| \x_{j, (l)} \|_2^2 	& = & \sum_{i=1}^n p_i \frac{v^2_{i j}}{p_i^2} \|\bb_i\|_2^2
\quad =\quad   \sum_{i=1}^n \frac{v^2_{i j}}{\|\bb_i\|_2^2 / \|\B\|_F^2 } \|\bb_i\|_2^2 \quad =\quad  \|\B\|_F^2 \textrm{.}\nonumber
\end{eqnarray}
Then we let $\x_j = \frac{1}{c_2} \sum_{l=1}^{c_2} \x_{j,(l)}$, we have
\begin{eqnarray}
\EB [\x_{j}]  & = & \EB [\x_{j,(l)}] \quad = \quad \B \v_{j} \textrm{,} \nonumber \\
\EB \| \x_{j} - \B \v_{j} \|_2^2 & = & \EB \Big\| \x_{j} - \EB[\x_{j}] \Big\|_2^2
= \frac{1}{c_2} \EB \Big\| \x_{j, (l)}  - \EB [\x_{j, (l)} ]  \Big\|_2^2
= \frac{1}{c_2} \EB \| \x_{j, (l)}  - \B \v_{j}  \|_2^2  \textrm{.}\nonumber
\end{eqnarray}
According to the construction of $\x_1, \cdots, \x_\rho$,
we define the $c_2$ columns of $\A$ to be $\C_2 \in \RB^{m\times c_2}$.
Note that all the random vectors $ \x_1 \cdots, \x_\rho$ lie in the subspace $\spann(\C_1)+ \spann(\C_2)$.
We define random vectors
\begin{equation}
\w_j = \C_1 \C_1^\dag \A \R^\dag \R \v_j + \x_j = \C_1 \C_1^\dag \A \v_j + \x_j \textrm{,}
\qquad \textrm{for } j = 1,\cdots,\rho \textrm{,} \nonumber
\end{equation}
where the second equality follows from Lemma~\ref{lem:right_singular_vector}; that is, $\A \R^\dag \R \v_j = \A \v_j$
if $\v_j$ is one of the top $\rho$ right singular vectors of $\A \R^\dag \R$.
Then we have that any set of random vectors $\{\w_1,\cdots,\w_\rho \}$ lies in $\spann(\C) = \spann(\C_1) + \spann(\C_2)$.
Let $\W = [\w_1, \cdots , \w_\rho] $ be a  random matrix,
we have that $\spann(\W) \subset \spann(\C)$.
The expectation of $\w_j$ is
\begin{equation}
\EB [\w_j] = \C_1 \C_1^\dag \A \v_j + \EB [\x_j] = \C_1 \C_1^\dag \A \v_j + \B \v_j = \A \v_j \textrm{,} \nonumber
\end{equation}
therefore we have that
\begin{equation}
\w_j - \A \v_j = \x_j - \B \v_j \textrm{.}\nonumber
\end{equation}
The expectation of $\|\w_j - \A \v_j\|_2^2$ is
\begin{eqnarray}
\EB \|\w_j - \A \v_j \|_2^2 	& = & \EB \| \x_j - \B \v_j \|_2^2
												\quad = \quad \frac{1}{c_2} \EB \| \x_{j,(l)} - \B \v_j \|_2^2 \nonumber \\
												& = & \frac{1}{c_2} \EB \| \x_{j,(l)} \|_2^2 - \frac{2}{c_2} (\B \v_j )^T \EB [\x_{j,(l)}] + \frac{1}{c_2} \|\B \v_j \|_2^2 \nonumber \\
												& = & \frac{1}{c_2} \EB \| \x_{j,(l)} \|_2^2 - \frac{1}{c_2}  \| \B \v_j \|_2^2 \nonumber
												\quad = \quad \frac{1}{c_2} \|\B\|_F^2 - \frac{1}{c_2}  \| \B \v_j \|_2^2 \nonumber \\
												&\leq & \frac{1}{c_2} \|\B\|_F^2 \textrm{.} \label{eq:adaptive_bound:1}
\end{eqnarray}

To complete the proof,
we denote
\[
\F = (\sum_{q=1}^\rho \sigma_q^{-1} \w_q \u_q^T ) \A \R^\dag \R \textrm{,}
\]
where $\sigma_q$ is the $q$-th largest singular value of $\A \R^\dag \R$
and $\u_q$ is the corresponding left singular vector of $\A \R^\dag \R$.
The column space of $\F$ is contained in $\spann (\W)$ ($\subset \spann(\C)$),
and thus
\[
\|\A\R^\dag \R - \C \C^\dag \A \R^\dag \R\|_F^2 \leq \|\A\R^\dag \R - \W \W^\dag \A \R^\dag \R\|_F^2 \leq \| \A \R^\dag \R - \F \|_F^2 \textrm{.}
\]
We use $\F$ to bound the error $\|\A \R^\dag \R - \C \C^\dag \A \R^\dag \R \|_F^2$. That is,
\begin{eqnarray}
\EB \| \A - \C \C^\dag \A \R^\dag \R \|_F^2 	& = & \EB \| \A - \A \R^\dag \R + \A \R^\dag \R - \C \C^\dag \A \R^\dag \R \|_F^2 \nonumber \\
		& = & \EB \Big[\| \A - \A \R^\dag \R \|_F^2 + \| \A \R^\dag \R - \C \C^\dag \A \R^\dag \R \|_F^2 \Big] \label{eq:adaptive_bound:2} \\
		&\leq & \| \A - \A \R^\dag \R \|_F^2 + \EB \| \A \R^\dag \R - \F \|_F^2 \textrm{,}	 \nonumber
\end{eqnarray}
where (\ref{eq:adaptive_bound:2}) is due to that $\A(\I - \R^\dag \R)$
is orthogonal to $(\I - \C \C^\dag) \A \R^\dag \R$.
Since $\A \R^\dag \R$ and $\F$ both lie on the space spanned by the right singular vectors of ${\A\R^\dag \R}$
(i.e., $\{\v_j \}_{j=1}^\rho$),
we decompose $\A \R^\dag \R - \F$ along $\{\v_j \}_{j=1}^\rho$, obtaining that
\begin{eqnarray}
\EB \| \A - \C \C^\dag \A \R^\dag \R \|_F^2	&\leq & \| \A - \A \R^\dag \R \|_F^2 + \EB \| \A \R^\dag \R - \F \|_F^2 \textrm{,}	\nonumber \\
																& = & \| \A - \A \R^\dag \R \|_F^2 +  \sum_{j = 1}^\rho \EB \Big\| (\A \R^\dag \R - \F) \v_{j} \Big\|_2^2  \nonumber \\
																& = & \| \A - \A \R^\dag \R \|_F^2 +  \sum_{j = 1}^\rho \EB \Big\| \A \R^\dag \R \v_{j} - (\sum_{q=1}^\rho \sigma_q^{-1} \w_q \u_q^T ) \sigma_j \u_j  \Big\|_2^2 \nonumber \\
																& = & \| \A - \A \R^\dag \R \|_F^2 +  \sum_{j = 1}^\rho \EB \Big\| \A \R^\dag \R \v_{j} - \w_j  \Big\|_2^2 \nonumber \\
																& = & \| \A - \A \R^\dag \R \|_F^2 +  \sum_{j = 1}^\rho \EB \| \A \v_j - \w_j  \|_2^2 \label{eq:adaptive_bound:4}  \\
																&\leq & \| \A - \A \R^\dag \R \|_F^2 +  \frac{\rho}{c_2} \|\B\|_F^2 \textrm{,} \label{eq:adaptive_bound:5}
\end{eqnarray}
where (\ref{eq:adaptive_bound:4}) follows from Lemma~\ref{lem:right_singular_vector}
and (\ref{eq:adaptive_bound:5}) follows from (\ref{eq:adaptive_bound:1}).
\end{proof}

\begin{lemma} \label{lem:right_singular_vector}
We are given a matrix $\A \in \RBmn$ and a matrix $\R \in \RB^{r\times n}$ such that
$\rk(\A \R^\dag \R) = \rk(\R) = \rho$ $(\rho \leq r \leq m)$.
Letting $\v_j \in \RB^n$ be the $j$-th top right singular vector of $\A \R^\dag \R$,
we have that
\begin{equation}
\A \R^\dag \R \v_j
\;=\; \A \v_j \textrm{,} \qquad \textrm{for } j = 1, \cdots, \rho  \textrm{.} \nonumber
\end{equation}
\end{lemma}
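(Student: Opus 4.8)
The plan is to exploit that $\R^\dag\R$ is the orthogonal projector onto the row space of $\R$, so it is symmetric and idempotent, and then to observe that each of the singular vectors $\v_1,\dots,\v_\rho$ already lies inside that row space and is therefore fixed by the projector. Once $\R^\dag\R\v_j=\v_j$ is established, the claim $\A\R^\dag\R\v_j=\A\v_j$ is immediate.

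First I would record the elementary facts about $\R^\dag\R$. Writing an SVD of $\R$ and restricting to its $\rho$ nonzero singular values, one gets $\R^\dag\R=\V_\R\V_\R^T$ with $\V_\R^T\V_\R=\I_\rho$; hence $(\R^\dag\R)^T=\R^\dag\R$ and $(\R^\dag\R)^2=\R^\dag\R$, i.e.\ $\R^\dag\R$ is the orthogonal projection onto $\mathrm{range}(\R^\dag\R)=\spann(\R^T)$, and it acts as the identity on this subspace. Next I would locate the vectors $\v_j$: since $\rk(\A\R^\dag\R)=\rho$ by hypothesis, the top $\rho$ right singular vectors of $\A\R^\dag\R$ are exactly those associated with its nonzero singular values, so $\v_1,\dots,\v_\rho$ span, and in particular lie in, the row space of $\A\R^\dag\R$, which equals $\mathrm{range}\big((\A\R^\dag\R)^T\big)=\mathrm{range}\big(\R^\dag\R\A^T\big)\subseteq\mathrm{range}(\R^\dag\R)$.

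Finally, combining the two observations: $\v_j\in\mathrm{range}(\R^\dag\R)$ and $\R^\dag\R$ is the identity on its range, so $\R^\dag\R\v_j=\v_j$, and therefore $\A\R^\dag\R\v_j=\A(\R^\dag\R\v_j)=\A\v_j$ for $j=1,\dots,\rho$, as claimed. I do not expect a genuine obstacle here; the one point requiring care is the rank bookkeeping, where the hypothesis $\rk(\A\R^\dag\R)=\rk(\R)=\rho$ is used to guarantee that the top $\rho$ right singular vectors of $\A\R^\dag\R$ all have nonzero singular value (hence lie in the row space rather than partly in its orthogonal complement). Without that, the statement would fail for indices $j$ beyond the true rank.
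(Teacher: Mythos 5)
Your proof is correct and follows essentially the same route as the paper's: both arguments reduce to showing that $\v_1,\dots,\v_\rho$ lie in the row space of $\R$ (equivalently, in $\mathrm{range}(\R^\dag\R)$), so that the projector $\R^\dag\R$ fixes them. The paper establishes this containment by explicitly writing the compact SVD of $\A\R^\dag\R$ as $\tilde{\U}\tilde{\Si}(\V_{\R,\rho}\tilde{\V})^T$, whereas you obtain it more abstractly from $\mathrm{range}\big((\A\R^\dag\R)^T\big)\subseteq\mathrm{range}(\R^\dag\R)$ together with the rank hypothesis; the two are equivalent, and your bookkeeping of where that hypothesis is needed is accurate.
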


\begin{proof}
First let $\V_{\R,\rho} \in \RB^{n \times \rho}$ contain the top $\rho$ right singular vectors of $\R$.
Then the projection of $\A$ onto the row space of $\R$ is $\A \R^\dag \R = \A \V_{\R,\rho} \V_{\R,\rho}^T$.
Let the thin SVD of $\A \V_{\R,\rho} \in \RB^{m\times \rho}$ be $\tilde{\U} \tilde{\Si} \tilde{\V}^T$,
where $\tilde{\V} \in \RB^{\rho \times \rho}$.
Then the compact SVD of $\A \R^\dag \R$ is
\[
\A \R^\dag \R
\;=\; \A \V_{\R,\rho} \V_{\R,\rho}^T
\;=\; \tilde{\U} \tilde{\Si} \tilde{\V}^T \V_{\R,\rho}^T \textrm{.}
\]
According to the definition,
$\v_j$ is the $j$-th column of $(\V_{\R,\rho} \tilde{\V}) \in \RB^{n\times \rho}$.
Thus $\v_j$ lies on the column space of $\V_{\R,\rho}$,
and $\v_j$ is orthogonal to $\V_{\R,\rho\perp}$.
Finally, since $\A - \A \R^\dag \R = \A \V_{\R,\rho\perp} \V_{\R,\rho\perp}^T$,
we have that $\v_j$ is orthogonal to $\A - \A \R^\dag \R$,
that is, $(\A - \A \R^\dag \R) \v_j = \0$,
which directly proves the lemma.
\end{proof}

\subsection{The Proof of Corollary~\ref{cor:adaptive_improved}} \label{sec:proof:cor:adaptive_improved}

Since $\C$ is constructed by columns of $\A$ and
the column space of $\C$ is contained in the column space of $\A$,
we have $\rk(\C \C^\dag \A) = \rk(\C) = \rho \leq c$. Consequently,
the assumptions of Theorem~\ref{thm:adaptive_bound} are satisfied.
The assumptions in turn imply
\begin{eqnarray}
\| \A - \C \C^\dag \A \|_F & \leq & (1+\epsilon) \| \A - \A_k \|_F \textrm{,} \nonumber \\
\| \A - \A \R_1^\dag \R_1 \|_F & \leq & (1+\epsilon) \| \A - \A_k \|_F \textrm{,} \nonumber
\end{eqnarray}
and $c / r_2 = \epsilon$.
It then follows from Theorem~\ref{thm:adaptive_bound} that
\begin{eqnarray}
\EB_\R \big\|\A - \C \Cmp \A \R^\dag \R \big\|_F^2
& = & \EB_{\R_1} \Big[ \EB_{\R_2} \Big[ \|\A - \C \Cmp \A \R^\dag \R \|_F^2 \Big| \R_1 \Big]  \Big]\nonumber \\
& \leq & \EB_{\R_1} \Big[ \|\A - \C \Cmp \A \|_F^2 + \frac{\rho}{r_2} \|\A - \A \R_1^\dag \R_1 \|_F^2 \Big] \nonumber \\
& \leq & \|\A - \C \Cmp \A \|_F^2 + \frac{c}{r_2} (1+\epsilon) \|\A - \A_k\|_F^2 \nonumber \\
& = & \|\A - \C \Cmp \A \|_F^2 + \epsilon (1+\epsilon) \|\A - \A_k\|_F^2 \textrm{.} \nonumber
\end{eqnarray}
Furthermore, we have that
\begin{eqnarray}
\Big[ \EB \|\A - \C \U \R\|_F \Big]^2
& \leq & \EB \|\A - \C \U \R\|_F^2
\quad = \quad \EB_{\C,\R} \| \A - \C \Cmp \A \R^\dag \R \|_F^2 \nonumber \\
& = & \EB_{\C} \Big[ \EB_{\R} \Big[ \| \A - \C \Cmp \A \R^\dag \R \|_F^2  \Big| \C \Big] \Big]  \nonumber \\
&\leq & \EB_{\C} \Big[ \|\A - \C \Cmp \A\|_F^2 + \epsilon (1+\epsilon) \|\A - \A_k\|_F^2 \Big]  \nonumber \\
&\leq & (1+\epsilon)^2 \| \A - \A_k \|_k^2 \textrm{,} \nonumber
\end{eqnarray}
which yields the error bound for CUR matrix decomposition.

When the matrix $\A$ is symmetric, the matrix $\C_1^T$ consists of the rows $\A$,
and thus we can use Theorem~\ref{thm:adaptive_bound_2} (which is identical to Theorem~\ref{thm:adaptive_bound}) to prove the error bound for the \nystrom approximation.
By replacing  $\R$ in Theorem~\ref{thm:adaptive_bound_2} by $\C_1^T$,
we have that
\begin{eqnarray}
\EB \big\|\A - \C \C^\dag \A {(\C_1^\dag)}^T {\C_1^T} \big\|_F^2
& \leq & \big\| \A - \A {(\C_1^\dag)}^T {\C_1^T} \big\|_F^2 + \frac{c_1}{c_2} \big\|\A - \C_1 \C_1^\dag \A \big\|_F^2 \nonumber \\
& = &  \Big( 1 + \frac{c_1}{c_2} \Big) \big\|\A - \C_1 \C_1^\dag \A \big\|_F^2 \textrm{,} \nonumber
\end{eqnarray}
where the expectation is taken w.r.t.\ $\C_2$.
Together with the inequality
\begin{equation}
\big\|\A - \C \C^\dag \A {(\C^\dag)}^T {\C^T} \big\|_F^2
\; \leq \; \big\|\A - \C \C^\dag \A {(\C_1^\dag)}^T {\C_1^T} \big\|_F^2 \nonumber
\end{equation}
given by Lemma~\ref{lem:cor:nystrom_bound},
we have that
\begin{eqnarray}
\EB_{\C_1, \C_2} \big\|\A - \C \C^\dag \A {(\C^\dag)}^T {\C^T} \big\|_F^2
& \leq & \EB_{\C_1, \C_2} \big\|\A - \C \C^\dag \A {(\C_1^\dag)}^T {\C_1^T} \big\|_F^2 \nonumber \\
& = &  \Big( 1 + \frac{c_1}{c_2} \Big) \EB_{\C_1} \big\|\A - \C_1 \C_1^\dag \A \big\|_F^2 \nonumber \\
& = & ( 1 {+} \epsilon )^2 \big\|\A {-} \A_k \big\|_F^2 \textrm{.}\nonumber
\end{eqnarray}
Hence $\EB \big\|\A {-} \C \C^\dag \A {(\C^\dag)}^T {\C^T} \big\|_F
\leq \Big[\EB \big\|\A {-} \C \C^\dag \A {(\C^\dag)}^T {\C^T} \big\|_F^2 \Big]^{-\frac{1}{2}}
\leq ( 1 {+} \epsilon ) \big\|\A {-} \A_k \big\|_F$.

\begin{lemma} \label{lem:cor:nystrom_bound}
Given an $m {\times} m$ matrix $\A$ and an $m {\times} c$ matrix $\C = [\C_1 , \C_2]$,
the following inequality holds:
\begin{equation}
\big\|\A - \C \C^\dag \A {(\C^\dag)}^T {\C^T} \big\|_F^2
\; \leq \; \big\|\A - \C \C^\dag \A {(\C_1^\dag)}^T {\C_1^T} \big\|_F^2 \textrm{.} \nonumber
\end{equation}
\end{lemma}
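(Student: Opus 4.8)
The plan is to strip the statement down to a fact about orthogonal projectors. Write $\PP = \C\C^\dag$ and $\PP_1 = \C_1\C_1^\dag$ for the orthogonal projectors onto $\spann(\C)$ and $\spann(\C_1)$; since $\spann(\C_1)\subseteq\spann(\C)$ we have $\PP\PP_1 = \PP_1\PP = \PP_1$. Using that orthogonal projectors are symmetric, $(\C^\dag)^T\C^T = (\C\C^\dag)^T = \PP$ and $(\C_1^\dag)^T\C_1^T = \PP_1$, so the asserted inequality is exactly
\[
\big\|\A - \PP\A\PP\big\|_F^2 \;\le\; \big\|\A - \PP\A\PP_1\big\|_F^2 .
\]
I would prove this by splitting each side with the Frobenius (Pythagorean) identity and then comparing the leftover terms.

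First I would write $\A - \PP\A\PP_1 = (\I-\PP)\A + \PP\A(\I-\PP_1)$. The two summands are orthogonal in the Frobenius inner product because $(\I-\PP)\PP = \0$, whence
\[
\big\|\A - \PP\A\PP_1\big\|_F^2 \;=\; \big\|(\I-\PP)\A\big\|_F^2 + \big\|\PP\A(\I-\PP_1)\big\|_F^2 ,
\]
and identically $\|\A - \PP\A\PP\|_F^2 = \|(\I-\PP)\A\|_F^2 + \|\PP\A(\I-\PP)\|_F^2$. So it suffices to show $\|\PP\A(\I-\PP)\|_F \le \|\PP\A(\I-\PP_1)\|_F$.

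For that last step I would use $\I-\PP_1 = (\I-\PP) + (\PP-\PP_1)$, noting that $\PP-\PP_1$ is itself an orthogonal projector (onto $\spann(\C)\cap\spann(\C_1)^\perp$) and that $(\PP-\PP_1)(\I-\PP) = \0 = (\I-\PP)(\PP-\PP_1)$. Then $\PP\A(\I-\PP)$ and $\PP\A(\PP-\PP_1)$ are Frobenius-orthogonal, so
\[
\big\|\PP\A(\I-\PP_1)\big\|_F^2 \;=\; \big\|\PP\A(\I-\PP)\big\|_F^2 + \big\|\PP\A(\PP-\PP_1)\big\|_F^2 \;\ge\; \big\|\PP\A(\I-\PP)\big\|_F^2 ,
\]
which finishes the argument.

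This is essentially bookkeeping with projectors, so I do not anticipate a genuine obstacle; the only point requiring care is the chain of identities $(\I-\PP)\PP = \0$, $\PP\PP_1 = \PP_1$, and $(\PP-\PP_1)(\I-\PP) = \0$ that make the two Pythagorean splittings legitimate. An equivalent and slightly shorter route would be to expand traces: $\|\PP\A(\I-\PP_1)\|_F^2 - \|\PP\A(\I-\PP)\|_F^2 = \tr\big((\PP-\PP_1)\A^T\PP\A\big)$, which is nonnegative because $\PP-\PP_1$ and $\A^T\PP\A$ are both positive semidefinite (write $\PP-\PP_1 = (\PP-\PP_1)^{1/2}(\PP-\PP_1)^{1/2}$ and cycle the trace).
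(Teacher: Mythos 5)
Your proposal is correct and follows essentially the same route as the paper: both arguments use the Pythagorean splitting $\A - \PP\A\Q = (\I-\PP)\A + \PP\A(\I-\Q)$ on each side and reduce the claim to $\|\PP\A(\I-\PP)\|_F \le \|\PP\A(\I-\PP_1)\|_F$. The only difference is that the paper closes this last step by citing the projection-monotonicity fact of Halko et al.\ (if $\spann(\M)\subset\spann(\N)$ then $\|\bar{\PM}_{\M}\A\|_\xi \ge \|\bar{\PM}_{\N}\A\|_\xi$), whereas you prove it directly via the further orthogonal decomposition $\I-\PP_1 = (\I-\PP)+(\PP-\PP_1)$, which makes the argument self-contained.
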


\begin{proof}
Let $\PM_\C \A = \C \C^\dag \A$ denote the projection of $\A$ onto the column space of $\C$,
and $\bar{\PM}_\C = \I_m - \C \C^\dag$ denote the projector onto the space orthogonal to the column space of $\C$.
It has been shown by \cite{halko2011ramdom} that, for any matrix $\A$, if $\spann(\M)\subset\spann(\N)$,
then the following inequalities hold:
\[
\|\PM_\M \A \|_\xi \leq \|\PM_\N \A\|_\xi \quad \textrm{ and }\quad
\|\bar{\PM}_\M \A \|_\xi \geq \|\bar{\PM}_\N \A\|_\xi \textrm{.}
\]
Accordingly, $\A \PM^T_{\R^T} = \A \R^\dag \R$ is the projection of $\A$ onto the row space of $\R\in\RB^{r\times n}$.
We further have that
\begin{eqnarray}
\|\A - \PM_\C \A \PM^T_\C\|_F^2
& = & \|\A - \PM_\C \A + \PM_\C \A - \PM_\C \A \PM^T_\C\|_F^2 \nonumber \\
& = & \|\bar{\PM}_\C \A + \PM_\C \A \bar{\PM}^T_\C\|_F^2
\;=\;  \|\bar{\PM}_\C \A\|_F^2 + \|\PM_\C \A \bar{\PM}^T_\C\|_F^2  \nonumber
\end{eqnarray}
and
\begin{eqnarray}
\|\A - \PM_\C \A \PM^T_{\C_1}\|_F^2
& = & \|\A - \PM_\C \A + \PM_\C \A - \PM_\C \A \PM^T_{\C_1} \|_F^2 \nonumber \\
& = & \|\bar{\PM}_\C \A + \PM_\C \A \bar{\PM}^T_{\C_1} \|_F^2
\;=\;  \|\bar{\PM}_\C \A\|_F^2 + \|\PM_\C \A \bar{\PM}^T_{\C_1} \|_F^2 \textrm{,} \nonumber
\end{eqnarray}
where the last equalities follow from $\PM_\C \perp \bar{\PM}_\C$.
Since $\spann(\C_1) \subset \spann(\C)$, we have $\|\PM_\C \A \bar{\PM}^T_{\C_1} \|_F^2 \geq \|\PM_\C \A \bar{\PM}^T_{\C} \|_F^2$,
which proves the lemma.
\end{proof}

\subsection{The Proof of Theorem~\ref{cor:fast_cur}} \label{sec:proof:thm:fast_cur}

The error bound follows directly from Lemma~\ref{prop:fast_column_selection} and Corollary~\ref{cor:adaptive_improved}.
The near-optimal column selection algorithm costs
$\OM\big(m k^2 \epsilon^{-4/3} + n k^3 \epsilon^{-2/3}\big) + \TimeMulti \big(m n k \epsilon^{-2/3} \big)$
time to construct $\C$ and
$\OM\big(n k^2 \epsilon^{-4/3} + m k^3 \epsilon^{-2/3}\big) + \TimeMulti \big(m n k \epsilon^{-2/3} \big)$
time to construct $\R_1$.
Then the adaptive sampling algorithm costs
$\OM\big( n k^2 \epsilon^{-2} \big) + \TimeMulti\big( m n k \epsilon^{-1} \big)$
time to construct $\R_2$.
Computing the Moore-Penrose inverses of $\C$ and $\R$ costs
$\OM(mc^2) + \OM(nr^2) = \OM\big( m k^2 \epsilon^{-2} + n k^2 \epsilon^{-4} \big)$ time.
The multiplication of $\C^\dag \A \R^\dag$ costs $\TimeMulti(m n c) = \TimeMulti(m n k \epsilon^{-1})$ time.
So the total time complexity is
$\OM \big( (m + n)k^3 \epsilon^{-2/3} + m k^2 \epsilon^{-2} + nk^2\epsilon^{-4} \big) + \TimeMulti\big( m n k \epsilon^{-1} \big)$.

\subsection{The Proof of Theorem~\ref{thm:nystrom_bound}} \label{sec:proof:thm:nystrom_bound}

The error bound follows immediately from Lemma~\ref{prop:fast_column_selection} and Corollary~\ref{cor:adaptive_improved}.
The near-optimal column selection algorithm costs
$\OM\big(m k^2 \epsilon^{-4/3} + m k^3 \epsilon^{-2/3}\big) + \TimeMulti \big(m^2 k \epsilon^{-2/3} \big)$
time to select $c_1 = \OM(k \epsilon^{-1})$ columns of $\A$ construct $\C_1$.
Then the adaptive sampling algorithm costs
$\OM\big( m k^2 \epsilon^{-2} \big) + \TimeMulti\big( m^2 k \epsilon^{-1} \big)$
time to select $c_2 = \OM(k \epsilon^{-2})$ columns construct $\C_2$.
Finally it costs $\OM(m c^2) + \TimeMulti ( m^2 c ) = \OM(m k^2 \epsilon^{-4}) + \TimeMulti\big( m^2 k \epsilon^{-2} \big)$
time to construct the intersection matrix $\U = \C^\dag \A (\C^\dag)^T$.
So the total time complexity is $\OM\big( m k^2 \epsilon^{-4} + m k^3 \epsilon^{-2/3} \big) + \TimeMulti\big( m^2 k \epsilon^{-2} \big)$.

\section{Proofs of the Lower Error Bounds} \label{sec:proof_lower_bounds}

In Appendix~\ref{sec:construction_bad_case} we construct two adversarial cases which will be used throughout this appendix.
In  Appendix~\ref{sec:lower_bound_standard} we prove the lower bounds of the standard \nystrom method.
In  Appendix~\ref{sec:lower_bound_ensemble} we prove the lower bounds of the ensemble \nystrom method.
Theorems~\ref{thm:nystrom_error}, \ref{thm:lower_bound_conventional}, \ref{thm:lower_bound}, \ref{thm:lower_bound_ensemble_additive}, and \ref{thm:lower_bound_ensemble}
are used for proving Theorem~\ref{thm:lower_bounds_nystrom}.

\subsection{Construction of the Adversarial Cases} \label{sec:construction_bad_case}

We now consider the construction of adversarial cases for the spectral norm bounds and the Frobenius norm and nuclear norm bounds, respectively.

\subsubsection{The Adversarial Case for the Spectral Norm Bound}
We construct an $m {\times} m$ positive definite matrix $\B$ as follows:
\begin{equation} \label{eq:construction_bad_nystrom}
\B = (1-\alpha) \I_m + \alpha \1_{m} \1_m^T
= \left[
    \begin{array}{cccc}
      1 & \alpha & \cdots & \alpha \\
      \alpha & 1 &  \cdots & \alpha \\
      \vdots & \vdots  & \ddots & \vdots \\
      \alpha & \alpha & \cdots & 1 \\
    \end{array}
  \right]
= \left[
    \begin{array}{cc}
      \W & \B_{2 1}^T \\
      \B_{2 1} & \B_{2 2} \\
    \end{array}
  \right]
  \textrm{,}
\end{equation}
where $\alpha \in [0, 1)$.
It is easy to verify $\x^T \B \x > 0$ for any nonzero $\x \in \RB^{m}$.
We show some properties of $\B$ in Lemma~\ref{lem:nystrom_residual}.

\begin{lemma} \label{lem:nystrom_residual}
Let $\B_k$ be the best rank-$k$ approximation to the matrix $\B$ defined in (\ref{eq:construction_bad_nystrom}).
Then we have that
\begin{eqnarray}
&
\|\B\|_F \;=\; \sqrt{m^2 \alpha^2 + m(1-\alpha^2)}
\textrm{, }&
\quad
\|\B - \B_k \|_F \;=\; \sqrt{m-k}\, (1-\alpha)  \textrm{, }
\nonumber \\
&
\|\B\|_2 \;=\; 1+ m\alpha -\alpha
\textrm{ , }&
\quad
\|\B - \B_k \|_2 \;=\; 1-\alpha
\textrm{,}\nonumber \\
&
\|\B\|_* \;=\; m
\textrm{,}&
\quad
\|\B - \B_k \|_* \;=\; (m-k)(1-\alpha)
\textrm{, } \nonumber
\end{eqnarray}
where $1\leq k\leq m-1$.
\end{lemma}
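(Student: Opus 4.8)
The plan is to diagonalize $\B$ explicitly and then read off all six quantities directly from its spectrum. First I would note that $\B = (1-\alpha)\I_m + \alpha\1_m\1_m^T$ is a rank-one perturbation of a scalar matrix, so an eigenbasis consists of $\1_m$ together with any orthonormal basis of the hyperplane $\1_m^\perp$. Since $\B\1_m = (1-\alpha+m\alpha)\1_m$, the vector $\1_m$ is an eigenvector with eigenvalue $\lambda_1 = 1+(m-1)\alpha = 1+m\alpha-\alpha$, and every vector orthogonal to $\1_m$ is annihilated by $\1_m\1_m^T$, hence is an eigenvector with eigenvalue $1-\alpha$ of multiplicity $m-1$. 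Because $\alpha\in[0,1)$ and $m\geq 2$ (which is forced by $1\leq k\leq m-1$), we have $\lambda_1 = 1+(m-1)\alpha \geq 1-\alpha > 0$, so $\B$ is positive definite and its singular values coincide with its eigenvalues, ordered as $\sigma_1(\B)=1+(m-1)\alpha$ and $\sigma_2(\B)=\cdots=\sigma_m(\B)=1-\alpha$.

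With the spectrum in hand, the three norms of $\B$ are immediate. The spectral norm is the top singular value, $\|\B\|_2 = \sigma_1(\B) = 1+m\alpha-\alpha$. The nuclear norm is the sum of the singular values, which for an SPSD matrix equals the trace, so $\|\B\|_* = \tr(\B) = m$ (consistently, $(1+(m-1)\alpha)+(m-1)(1-\alpha)=m$). For the Frobenius norm I would use the entrywise formula $\|\B\|_F^2 = \sum_{i,j} b_{ij}^2 = m\cdot 1 + (m^2-m)\alpha^2 = m^2\alpha^2 + m(1-\alpha^2)$, which also matches $\sigma_1(\B)^2 + (m-1)\sigma_2(\B)^2$.

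For the residuals, recall that $\B_k$ is obtained by keeping the $k$ largest singular values, so $\B-\B_k$ has singular values $\sigma_{k+1}(\B),\dots,\sigma_m(\B)$; for any $k$ with $1\leq k\leq m-1$ these are exactly $m-k$ copies of $1-\alpha$. Hence $\|\B-\B_k\|_2 = 1-\alpha$, $\|\B-\B_k\|_* = (m-k)(1-\alpha)$, and $\|\B-\B_k\|_F = \sqrt{(m-k)(1-\alpha)^2} = \sqrt{m-k}\,(1-\alpha)$, which are the claimed identities.

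I do not anticipate a real obstacle: the argument is a direct computation once the rank-one eigenstructure is recognized. The only point worth a remark is the degenerate case $\alpha=0$, in which all singular values collapse to $1$ and $\B_k$ is no longer unique; nevertheless the truncated list $\{\sigma_{k+1}(\B),\dots,\sigma_m(\B)\}$ — and therefore every residual norm above — is still well-defined and given by the same formulas. I would also flag that the hypothesis $\alpha\geq 0$ is exactly what guarantees that $\sigma_1(\B)$ is the eigenvalue attached to $\1_m$ rather than to a direction in $\1_m^\perp$.
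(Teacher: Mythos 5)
Your proposal is correct and follows essentially the same route as the paper: both arguments rest on identifying the spectrum of $\B$ as $\{1+(m-1)\alpha,\; 1-\alpha,\dots,1-\alpha\}$ via its rank-one-perturbation structure (the paper finds $\sigma_1$ by maximizing the Rayleigh quotient and then deflates, whereas you verify the eigenbasis $\{\1_m\}\cup\1_m^\perp$ directly, which is a marginally cleaner presentation of the same idea). All six norm computations then follow identically.
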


\begin{proof}
The squared Frobenius norm of $\B$ is
\[
\|\B\|_F^2 \;=\; \sum_{i, j} b_{i j}^2 \;=\; m + (m^2-m)\alpha^2 .
\]
Then we study the singular values of $\B$. Since $\B$ is SPSD,
here we do not distinguish between its singular values and eigenvalues.

The spectral norm, that is, the largest singular value, of $\B$ is
\[
\|\B\|_2 \;=\; \sigma_{1} \;=\; \lambda_1
\;=\; \max_{\|\x\|_2 \leq 1} \x^T \B \x
\;=\; \max_{\|\x\|_2 \leq 1}  (1-\alpha) \|\x\|_2^2 + \alpha (\1_m^T \x)^2
\;=\; 1 -\alpha + m\alpha \textrm{,}
\]
where the maximum is attained when $\x =  \frac{1}{\sqrt{m}} \1_m$.
Thus $\u_1 = \frac{1}{\sqrt{m}} \1_m$ is the top singular vector of $\B$.
Then the projection of $\B$ onto the subspace orthogonal to $\u_1$ is
\[
\B_{1 \perp} \triangleq \B - \B_1
\;=\; \B - \sigma_1 \u_1 \u_1^T
\;=\; \frac{1-\alpha}{m} (m \I_m - \1_m \1_m^T) .
\]
Then for all $j > 1$, the $j$-th top eigenvalue $\sigma_j$ and eigenvector $\u_j$,
that is, the singular value and singular vector, of $\B$ satisfy
\[
\sigma_j \u_j
\;=\; \B \u_j
\;=\; \B_{1 \perp} \u_j
\;=\; \frac{1-\alpha}{m} \big( m \u_j - (\1_m^T \u_j) \1_m \big)
\;=\; \frac{1-\alpha}{m} ( m \u_j - \0 ) ,
\]
where the last equality follows from $\u_j \perp \u_1$, that is, $\1_m^T \u_j =0$.
Thus $\sigma_j = 1-\alpha$, and
\[
\|\B - \B_k\|_2 \;=\; \sigma_{k+1}
\;=\; 1-\alpha
\]
for all $1 \leq k < m$.
Finally we have that
\begin{eqnarray}
\|\B - \B_k \|_F^2 &=& \|\B\|_F^2 - \sum_{i=1}^k \sigma_{i}^2 \;=\; (m-k)(1-\alpha)^2 \textrm{,} \nonumber \\
\|\B - \B_k\|_* &=& (m-k)\sigma_{2} \;=\; (m-k)(1-\alpha) \textrm{,} \nonumber \\
\|\B \|_* &=& \sum_{i=1}^m \sigma_{i} \;=\; (1+m\alpha -\alpha) + (m-1)(1-\alpha) \;=\; m \textrm{,} \nonumber
\end{eqnarray}
which complete our proofs.
\end{proof}

\subsubsection{The Adversarial Case for The Frobenius Norm and Nuclear Norm Bounds}
Then we construct another adversarial case for proving the Frobenius norm and nuclear norm bounds.
Let $\B$ be a $p\times p$ matrix with diagonal entries equal to one and off-diagonal entries equal to $\alpha$.
Let $m=k p$ and we construct an $m\times m$ block diagonal matrix $\A$ as follows:
\begin{eqnarray}\label{eq:construction_bad_nystrom_blk}
\A
\; = \; \blkdiag(\underbrace{\B, \cdots, \B}_{k \; \textrm{blocks}})
\; = \; \left[
          \begin{array}{cccc}
            \B & \0 & \cdots & \0 \\
            \0 & \B & \cdots & \0 \\
            \vdots & \vdots & \ddots & \vdots \\
            \0 & \0 & \cdots & \B \\
          \end{array}
        \right]
  \textrm{.}
\end{eqnarray}

\begin{lemma} \label{lem:nystrom_residual_new}
Let $\A_k$ be the best rank-$k$ approximation to the matrix $\A$ defined in (\ref{eq:construction_bad_nystrom_blk}).
Then we have that
\begin{eqnarray}
\sigma_1 (\A) & = & \cdots \; = \; \sigma_k (\A) = 1 + p\alpha - \alpha  \textrm{,}\nonumber \\
\sigma_{k+1} (\A) & = & \cdots \; = \; \sigma_m (\A) = 1  - \alpha \textrm{,} \nonumber \\
\big\| \A - \A_k \big\|_F & = & (1-\alpha) \sqrt{m-k}  \textrm{,} \nonumber \\
\big\| \A - \A_k \big\|_* & = & (1-\alpha) (m-k)  \textrm{.} \nonumber
\end{eqnarray}
\end{lemma}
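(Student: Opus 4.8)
The plan is to reduce everything to the spectral analysis of the single block $\B$, which has already essentially been carried out in the proof of Lemma~\ref{lem:nystrom_residual}. First I would observe that the $p\times p$ matrix appearing here, namely $\B = (1-\alpha)\I_p + \alpha \1_p \1_p^T$, is exactly the matrix of (\ref{eq:construction_bad_nystrom}) with $m$ replaced by $p$. Hence, by that earlier computation, its eigenvalues are $1 + (p-1)\alpha$ with multiplicity one (eigenvector $\frac{1}{\sqrt{p}}\1_p$) and $1-\alpha$ with multiplicity $p-1$ (eigenvectors spanning $\1_p^{\perp}$).

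Next I would use the fact that the spectrum of a block diagonal matrix is the multiset union of the spectra of its diagonal blocks: if $\u$ is an eigenvector of $\B$ with eigenvalue $\lambda$, then placing $\u$ in one coordinate block and zeros in the other $k-1$ blocks yields an eigenvector of $\A$ with the same eigenvalue, and ranging over all $k$ blocks and all block eigenvectors produces an orthonormal eigenbasis of $\RB^m$. Therefore $\A$ has eigenvalue $1+p\alpha-\alpha$ with multiplicity $k$ and eigenvalue $1-\alpha$ with multiplicity $k(p-1)=m-k$. Since $\A$ is SPSD these are also its singular values, and since $\alpha\in[0,1)$ we have $1+p\alpha-\alpha \geq 1-\alpha$, so ordering them decreasingly gives $\sigma_1(\A)=\cdots=\sigma_k(\A)=1+p\alpha-\alpha$ and $\sigma_{k+1}(\A)=\cdots=\sigma_m(\A)=1-\alpha$, which is the first half of the claim.

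Finally, for the residual norms I would invoke the Eckart--Young characterization: $\A_k$ retains precisely the top $k$ singular values, so $\|\A-\A_k\|_F^2 = \sum_{i=k+1}^m \sigma_i(\A)^2 = (m-k)(1-\alpha)^2$ and $\|\A-\A_k\|_* = \sum_{i=k+1}^m \sigma_i(\A) = (m-k)(1-\alpha)$, which give $\|\A-\A_k\|_F = (1-\alpha)\sqrt{m-k}$ and $\|\A-\A_k\|_* = (1-\alpha)(m-k)$.

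I do not anticipate a real obstacle here; the statement is essentially a bookkeeping exercise. The only points requiring a little care are the multiplicity count (the value $1-\alpha$ occurs $m-k$ times, being contributed $p-1$ times by each of the $k$ blocks) and the verification that $1+p\alpha-\alpha$ is indeed the \emph{largest} eigenvalue, which uses $\alpha\geq 0$; both become immediate once the block structure is made explicit.
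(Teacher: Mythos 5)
Your proposal is correct and follows exactly the route the paper intends: the paper's own ``proof'' is just the one-line remark that the lemma follows easily from Lemma~\ref{lem:nystrom_residual}, and your argument---applying that lemma to each $p\times p$ block, taking the multiset union of spectra over the $k$ blocks to get eigenvalue $1+p\alpha-\alpha$ with multiplicity $k$ and $1-\alpha$ with multiplicity $k(p-1)=m-k$, then reading off the residual norms from the trailing singular values---is precisely the omitted bookkeeping, carried out correctly.
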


Lemma~\ref{lem:nystrom_residual_new} can be easily proved using Lemma~\ref{lem:nystrom_residual}.

\subsection{Lower Bounds of the Standard \nystrom Method} \label{sec:lower_bound_standard}

\begin{theorem} \label{thm:nystrom_error}
For an $m\times m$ matrix $\B$ with diagonal entries equal to one and off-diagonal entries equal to $\alpha \in [0,1)$,
the approximation error incurred by the standard \nystrom method is lower bounded by
\begin{eqnarray}
\big\| \B - \tilde{\B}_c^{\textrm{nys}} \big\|_F
&\geq& (1-\alpha) \sqrt{(m-c) \Big(1 + \frac{ m+c + \frac{2}{\alpha} - 2}{( c + \frac{1-\alpha}{\alpha} )^2} \Big)} \textrm{,} \nonumber \\
\big\| \B - \tilde{\B}_c^{\textrm{nys}} \big\|_2
&\geq& \frac{(1-\alpha) \Big(m + \frac{1-\alpha}{\alpha}\Big)}{c + \frac{1-\alpha}{\alpha}} \textrm{,} \nonumber \\
\big\|\B - \tilde{\B}_c^{\textrm{nys}}\big\|_*
& \geq & (m-c) (1-\alpha) \frac{1+c\alpha}{1 + c\alpha - \alpha} \textrm{.} \nonumber
\end{eqnarray}
Furthermore, the matrix $(\B - \tilde{\B}_c^{\textrm{nys}})$ is SPSD.
\end{theorem}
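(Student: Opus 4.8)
The plan is to exploit the complete permutation symmetry of $\B$ to reduce the error to a single explicit Schur–complement computation, evaluate that Schur complement in closed form, and then read off its three norms. Since $\B=(1-\alpha)\I_m+\alpha\1_m\1_m^T$ is invariant under simultaneous row/column permutations, the standard Nyström error $\|\B-\C\W^\dag\C^T\|_\xi$ is the same for every choice of $c$ columns; it therefore suffices to take $\C$ to be the first $c$ columns of $\B$ and partition $\B=\bigl[\begin{smallmatrix}\W & \B_{21}^T\\ \B_{21}&\B_{22}\end{smallmatrix}\bigr]$. Here $\W=(1-\alpha)\I_c+\alpha\1_c\1_c^T$ is positive definite (so $\W^\dag=\W^{-1}$), $\B_{21}=\alpha\1_{m-c}\1_c^T$, and $\B_{22}=(1-\alpha)\I_{m-c}+\alpha\1_{m-c}\1_{m-c}^T$. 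A direct block multiplication shows
\[
\B-\C\W^{-1}\C^T=\begin{bmatrix}\0 & \0\\ \0 & \S\end{bmatrix},\qquad \S:=\B_{22}-\B_{21}\W^{-1}\B_{21}^T ,
\]
the Schur complement of $\W$ in $\B$. Because $\B$ is positive definite, $\S$ is positive semidefinite, which already establishes the last assertion that $\B-\tilde{\B}_c^{\textrm{nys}}$ is SPSD, and moreover $\|\B-\tilde{\B}_c^{\textrm{nys}}\|_\xi=\|\S\|_\xi$ for $\xi\in\{2,F,*\}$.

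Next I would compute $\S$ explicitly. By the Sherman–Morrison formula $\W^{-1}=\tfrac{1}{1-\alpha}\bigl(\I_c-\tfrac{\alpha}{1-\alpha+c\alpha}\1_c\1_c^T\bigr)$, hence $\1_c^T\W^{-1}\1_c=\tfrac{c}{1-\alpha+c\alpha}$ and $\B_{21}\W^{-1}\B_{21}^T=\tfrac{\alpha^2c}{1-\alpha+c\alpha}\1_{m-c}\1_{m-c}^T$. Substituting gives
\[
\S=(1-\alpha)\I_{m-c}+\beta\,\1_{m-c}\1_{m-c}^T ,\qquad \beta:=\frac{\alpha(1-\alpha)}{1+c\alpha-\alpha}\ \geq 0 ,
\]
a matrix of the same form as $\B$ itself. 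Its eigenvalues are $(1-\alpha)+(m-c)\beta$ with multiplicity one (eigenvector $\1_{m-c}$) and $1-\alpha$ with multiplicity $m-c-1$, and one checks $(1-\alpha)+(m-c)\beta=(1-\alpha)\tfrac{1+m\alpha-\alpha}{1+c\alpha-\alpha}$.

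Finally I would substitute these eigenvalues into the norm definitions, obtaining
\[
\|\S\|_2=(1-\alpha)+(m-c)\beta,\qquad \|\S\|_*=(m-c)\bigl((1-\alpha)+\beta\bigr),\qquad \|\S\|_F^2=\bigl((1-\alpha)+(m-c)\beta\bigr)^2+(m-c-1)(1-\alpha)^2 ,
\]
and simplify using $1+c\alpha-\alpha=\alpha\bigl(c+\tfrac{1-\alpha}{\alpha}\bigr)$ and $1+m\alpha-\alpha=\alpha\bigl(m+\tfrac{1-\alpha}{\alpha}\bigr)$ to recover exactly the three claimed quantities; in fact each stated bound turns out to be an equality for this $\B$. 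The spectral and nuclear cases are immediate, and the Frobenius case reduces to the elementary identity $(1+m\alpha-\alpha)^2-(1+c\alpha-\alpha)^2=(m-c)\alpha\,(m\alpha+c\alpha+2-2\alpha)$, which is just a difference of squares. I expect no conceptual obstacle: the only genuine work is the bookkeeping with $\beta$ and this last algebraic simplification; the one point to watch is the degenerate case $\alpha=0$ (where $\B=\I_m$ and the stated formulas are to be read as limits), so one may assume $\alpha\in(0,1)$ throughout.
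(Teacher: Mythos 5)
Your proposal is correct and follows essentially the same route as the paper's proof: reduce to the Schur complement $\B_{22}-\B_{21}\W^{-1}\B_{21}^T$, invert $\W$ by Sherman--Morrison, recognize the residual as $(1-\alpha)\I_{m-c}+\beta\1\1^T$ (your $\beta$ equals the paper's $\alpha-\eta$), and read off the three norms from its eigenvalues. The only cosmetic differences are that you justify SPSD-ness via the Schur complement of a positive definite matrix rather than by checking $\eta\leq\alpha$, and compute $\|\cdot\|_F$ from eigenvalues rather than by summing squared entries; both are equivalent.
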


\begin{proof}
The matrix $\B$ is partitioned as in (\ref{eq:construction_bad_nystrom}).
The residual of the \nystrom approximation is
\begin{equation} \label{eq:nystrom_error1}
 \| \B - \tilde{\B}_c^{\textrm{nys}} \|_\xi
\;=\; \| \B_{2 2} - \B_{2 1} \W^\dag \B_{2 1}^T \|_\xi \textrm{,}
\end{equation}
where $\xi = 2$, $F$, or $*$.
Since $\W = (1-\alpha) \I_{c} + \alpha \1_{c} \1_{c}^T$ is nonsingular when $\alpha \in [0,1)$,
so $\W^\dag = \W^{-1}$. We apply the Sherman-Morrison-Woodbury formula
\[
(\A + \B \C \D)^{-1} \; = \; \A^{-1} - \A^{-1} \B (\C^{-1} + \D \A^{-1} \B)^{-1} \D \A^{-1}
\]
to compute $\W^{\dag}$, yielding
\[
\W^\dag \; = \; \frac{1}{1-\alpha} \I_c - \frac{\alpha}{(1-\alpha)(1-\alpha+c\alpha)} \1_{c} \1_{c}^T \textrm{.}
\]
According to the construction,
$\B_{2 1}$ is an $(m{-}c)\times c$ matrix with all entries equal to $\alpha$,
it follows that $\B_{2 1} \W^\dag \B_{2 1}^T$ is an $(m{-}c) {\times} (m{-}c)$ matrix
with all entries equal to
\begin{equation} \label{eq:all_xi_matrix}
\eta
\;\triangleq \; \alpha^2 \1_{c}^T \W^\dag \1_{c}
\; = \; \frac{c\alpha^2}{1-\alpha + c\alpha} \textrm{.}
\end{equation}
Then we obtain that
\begin{eqnarray} \label{eq:nystrom_error1_new}
\B_{2 2} - \B_{2 1} \W^\dag \B_{2 1}^T
& = &
(1{-}\alpha)\I_{m{-}c} + (\alpha -\eta) \1_{m{-}c} \1_{m{-}c}^T \textrm{.}
\end{eqnarray}
It is easy to check that $\eta \leq \alpha \leq 1$, thus the matrix $(1{-}\alpha)\I_{m{-}c} + (\alpha -\eta) \1_{m{-}c} \1_{m{-}c}^T$
is SPSD, and so is $(\B - \tilde{\B}_c^{\textrm{nys}})$.

Combining (\ref{eq:nystrom_error1}) and (\ref{eq:nystrom_error1_new}),
we have that
\begin{eqnarray} \label{eq:nystrom_error2}
\| \B - \tilde{\B}_c^{\textrm{nys}} \|_F^2
&=& \big\| (1{-}\alpha)\I_{m{-}c} + (\alpha -\eta) \1_{m{-}c} \1_{m{-}c}^T \big\|_F^2    \nonumber \\
&=& (m{-}c) \big( 1 {-} \eta \big)^2 + \Big((m{-}c)^2-(m{-}c)\Big) \big( \alpha {-} \eta \big)^2 \nonumber \\
&=& (m{-}c) (1 {-} \alpha)^2 \Big(1 + \frac{ \alpha^2 (m{+}c) + 2(\alpha {-} \alpha^2) }{(1 {-} \alpha {+} c\alpha)^2} \Big) \nonumber \\
&=& (m{-}c) (1{-}\alpha)^2 \Big(1 + \frac{ m{+}c + \frac{2}{\alpha} - 2 }{( c + \frac{1 {-} \alpha}{\alpha} )^2} \Big) \textrm{,} \label{eq:residual_nystrom_fro}
\end{eqnarray}
which proves the Frobenius norm of the residual.

Now we compute the spectral norm of the residual.
Based on the results above we have that
\[
\big\| \B - \tilde{\B}_c^{\textrm{nys}} \big\|_2
\;=\; \big\| (1{-}\alpha)\I_{m{-}c} + (\alpha{-}\eta) \1_{m{-}c} \1_{m{-}c}^T \big\|_2 \textrm{.}
\]
Similar to the proof of Lemma~\ref{lem:nystrom_residual},
it is easily obtained that $\frac{1}{\sqrt{m-c}} \1_{m-c}$ is the top singular vector of the SPSD matrix
$(1-\alpha)\I_{m{-}c} + (\alpha-\eta)\1_{m{-}c} \1_{m{-}c}^T$,
so the top singular value is
\begin{equation}
\sigma_1 \big( \B - \tilde{\B}_c^{\textrm{nys}} \big)
\;=\; (m-c)(\alpha-\eta)+1-\alpha
\;=\; \frac{(1-\alpha) \Big(m + \frac{1-\alpha}{\alpha}\Big)}{c + \frac{1-\alpha}{\alpha}} \textrm{,}\label{eq:residual_nystrom_spe}
\end{equation}
which proves the spectral norm bound because $\|\B - \tilde{\B}_c^{\textrm{nys}}\|_2 = \sigma_1 \big( \B - \tilde{\B}_c^{\textrm{nys}} \big)$.

It is also easy to show the rest singular values obey
\begin{eqnarray}
\sigma_2 \big( \B - \tilde{\B}_c^{\textrm{nys}} \big)
&=& \cdots
\quad=\quad \sigma_{m-c} \big( \B - \tilde{\B}_c^{\textrm{nys}} \big)
\quad\geq\quad 0  \textrm{,} \nonumber\\
\sigma_{m-c+1} \big( \B - \tilde{\B}_c^{\textrm{nys}} \big)
&=& \cdots
\quad=\quad \sigma_{m} \big( \B - \tilde{\B}_c^{\textrm{nys}} \big)
\quad=\quad 0 \textrm{.} \nonumber
\end{eqnarray}
Thus we have, for $i = 2,\cdots, m-c$,
\[
\sigma_i^2 \big( \B - \tilde{\B}_c^{\textrm{nys}} \big)
\; = \;  \frac{\| \B - \tilde{\B}_c^{\textrm{nys}} \|_F^2 - \sigma_1^2 \big( \B - \tilde{\B}_c^{\textrm{nys}} \big)}{m-c-1}
\; = \; (1 - \alpha)^2 \textrm{.}
\]
The nuclear norm of the residual $\big( \B - \tilde{\B}_c^{\textrm{nys}} \big)$ is
\begin{eqnarray}
\|\B - \tilde{\B}_c^{\textrm{nys}}\|_*
& = & \sum_{i=1}^m \sigma \big( \B - \tilde{\B}_c^{\textrm{nys}} \big) \nonumber\\
& = & \sigma_1 \big( \B - \tilde{\B}_c^{\textrm{nys}} \big) + (m-c-1)\, \sigma_2 \big( \B - \tilde{\B}_c^{\textrm{nys}} \big) \nonumber\\
& = & (m-c)(1-\eta) \nonumber\\
& = & (m-c) (1-\alpha) \Big( 1 + \frac{1}{c+\frac{1-\alpha}{\alpha}} \Big)  \textrm{.}\label{eq:residual_nystrom_nuclear}
\end{eqnarray}
The theorem follows from equalities (\ref{eq:residual_nystrom_fro}), (\ref{eq:residual_nystrom_spe}), and (\ref{eq:residual_nystrom_nuclear}).
\end{proof}

Now we use the matrix $\A$ constructed in (\ref{eq:construction_bad_nystrom_blk}) to show the Frobenius norm and nuclear norm lower bound.
The bound is stronger than the one in Theorem~\ref{thm:nystrom_error} by a factor of $k$.

\begin{theorem} \label{thm:lower_bound_conventional}
For the $m\times m$ SPSD matrix $\A$ defined in (\ref{eq:construction_bad_nystrom_blk}),
the approximation error incurred by the standard \nystrom method is lower bounded by
\begin{eqnarray}
{\big\| \A - \C \W^\dag \C^T \big\|_F}
& \geq &
(1-\alpha)\sqrt{m - c - k + \frac{k(m + \frac{1-\alpha}{\alpha}k)^2}{(c + \frac{1-\alpha}{\alpha}k )^2} }
\textrm{,} \nonumber \\
{\big\| \A - \C \W^\dag \C^T \big\|_*}
& \geq &
(1-\alpha) (m - c)\Big(1 + \frac{k}{c  + \frac{1-\alpha}{\alpha}k } \Big) \textrm{,}\nonumber
\end{eqnarray}
where $k < m$ is an arbitrary positive integer.
\end{theorem}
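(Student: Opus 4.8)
The plan is to exploit the block-diagonal structure of $\A$ in \eqref{eq:construction_bad_nystrom_blk}: selecting $c$ columns of $\A$ decouples into selecting $c_i$ columns from the $i$-th diagonal block, where $c_1+\cdots+c_k=c$, and the Nyström approximation then decomposes blockwise; one applies Theorem~\ref{thm:nystrom_error} to each block and optimizes over the split $(c_1,\dots,c_k)$.

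\textbf{Decoupling.} Suppose the standard Nyström method uses a column set $S$ with $|S|=c$, and let $c_i$ be the number of selected columns lying in the $i$-th coordinate block, so $\sum_i c_i=c$ (picking a column more than once only shrinks the effective count and cannot decrease the error, so we may assume distinct columns). Since $\A=\blkdiag(\B,\ldots,\B)$, each column of $\A$ is supported in one block; reordering the selected columns blockwise gives $\C=\blkdiag(\C_1,\ldots,\C_k)$ and $\W=\blkdiag(\W_1,\ldots,\W_k)$ with $\C_i\in\RB^{p\times c_i}$, $\W_i\in\RB^{c_i\times c_i}$ built from $c_i$ columns of the single block $\B$, where $p:=m/k$. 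As the Moore-Penrose inverse of a block-diagonal matrix is block-diagonal, $\C\W^\dag\C^T=\blkdiag(\C_1\W_1^\dag\C_1^T,\ldots,\C_k\W_k^\dag\C_k^T)$, whose $i$-th block equals the standard Nyström approximation $\tilde{\B}_{c_i}^{\textrm{nys}}$ of $\B$ from $c_i$ columns (by symmetry of $\B$ its residual depends only on $c_i$; the endpoints $c_i=0$ and $c_i=p$ give residuals $\B$ and $\0$). Since the singular values of a block-diagonal matrix are the union of its blocks' singular values,
\[
\big\|\A-\C\W^\dag\C^T\big\|_F^2=\sum_{i=1}^k\big\|\B-\tilde{\B}_{c_i}^{\textrm{nys}}\big\|_F^2,\qquad
\big\|\A-\C\W^\dag\C^T\big\|_*=\sum_{i=1}^k\big\|\B-\tilde{\B}_{c_i}^{\textrm{nys}}\big\|_*.
\]

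\textbf{Per-block bound, convexity, and optimization.} Applying Theorem~\ref{thm:nystrom_error} to the $p\times p$ matrix $\B$ and writing $\beta:=(1-\alpha)/\alpha$, a short simplification (using $\tfrac{2}{\alpha}-2=2\beta$) collapses its expressions to
\[
\big\|\B-\tilde{\B}_x^{\textrm{nys}}\big\|_F^2\geq(1-\alpha)^2\Big(p-1-x+\frac{(p+\beta)^2}{(x+\beta)^2}\Big)=:g(x),\qquad
\big\|\B-\tilde{\B}_x^{\textrm{nys}}\big\|_*\geq(1-\alpha)\Big(p-1-x+\frac{p+\beta}{x+\beta}\Big)=:g_*(x),
\]
valid for $0\le x\le p$ (sanity check: $g(0),g_*(0)$ recover $\|\B\|_F^2,\|\B\|_*$ and $g(p)=g_*(p)=0$). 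Both $g$ and $g_*$ are convex on $(-\beta,\infty)\supset[0,p]$, being affine plus positive multiples of $(x+\beta)^{-2}$ and $(x+\beta)^{-1}$. Relaxing $c_1,\dots,c_k$ to nonnegative reals with $\sum_i c_i=c$ only weakens the quantity being lower-bounded, so by Jensen's inequality $\sum_i g(c_i)$ and $\sum_i g_*(c_i)$ are minimized over this simplex at $c_i=c/k$ (admissible since $c/k\le m/k=p$). Substituting $p=m/k$, $x=c/k$, using $k(p-1-c/k)=m-c-k$ and $k\,(p+\beta)^2(c/k+\beta)^{-2}=k\,(m+\tfrac{1-\alpha}{\alpha}k)^2(c+\tfrac{1-\alpha}{\alpha}k)^{-2}$ (resp.\ with the first powers for $g_*$), one obtains
\[
\big\|\A-\C\W^\dag\C^T\big\|_F^2\geq(1-\alpha)^2\Big(m-c-k+\frac{k\,(m+\tfrac{1-\alpha}{\alpha}k)^2}{(c+\tfrac{1-\alpha}{\alpha}k)^2}\Big),
\]
and, after the elementary identity $m-c-k+k(m+\beta k)(c+\beta k)^{-1}=(m-c)\big(1+k(c+\beta k)^{-1}\big)$, the claimed nuclear-norm bound; a square root in the Frobenius case finishes the proof.

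\textbf{Main obstacle.} I expect the difficulty to lie not in any single inequality but in making the decoupling fully rigorous — handling blocks that receive zero selected columns (empty $\W_i$), and arguing that sampling with replacement cannot help a lower bound — together with verifying the algebraic simplification that turns the unwieldy formulas of Theorem~\ref{thm:nystrom_error} into the clean convex functions $g,g_*$; once those are in place, the optimization over $(c_1,\dots,c_k)$ is immediate from convexity.
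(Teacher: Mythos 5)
Your proposal is correct and follows essentially the same route as the paper's proof: decompose $\C\W^\dag\C^T$ blockwise using the block-diagonal structure of $\A$, apply Theorem~\ref{thm:nystrom_error} to each block, and minimize the resulting sum over the allocation $c_1+\cdots+c_k=c$ at the equal split $c_i=c/k$ (the paper does this via the change of variables $\hat{c}_i=c_i+\tfrac{1-\alpha}{\alpha}$ and the observation that $\sum_i\hat{c}_i^{-2}$ is minimized at equal $\hat{c}_i$, which is the same convexity fact you invoke through Jensen). Your algebraic simplifications of the per-block bounds into $g$ and $g_*$ check out and match the paper's intermediate expressions.
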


\begin{proof}
Let $\C$ consist of $c$ column sampled from $\A$ and $\hat{\C}_i$ consist of $c_i$ columns sampled from the $i$-th block diagonal matrix in $\A$.
Without loss of generality, we assume $\hat{\C}_i$ consists of the first $c_i$ columns of $\B$,
and accordingly $\hat{\W}_i$ consists of the top left $c_i \times c_i$ block of $\B$.
Thus $\C = \blkdiag\big(\hat{\C}_1 , \cdots , \hat{\C}_k \big)$ and $\W= \blkdiag\big(\hat{\W}_1 , \cdots , \hat{\W}_k \big)$.
\begin{eqnarray}
\tilde{\A}^{\textrm{nys}}_{c}
\;=\; \C \W^\dag \C
& = & \left[
          \begin{array}{ccc}
            \hat{\C}_1 &  & \0 \\
             & \ddots &  \\
            \0 &  & \hat{\C}_k \\
          \end{array}
        \right]
        \left[
          \begin{array}{ccc}
            \hat{\W}_1 &  & \0 \\
             & \ddots &  \\
            \0 &  & \hat{\W}_k \\
          \end{array}
        \right]^\dag
        \left[
          \begin{array}{ccc}
            \hat{\C}_1^T &  & \0 \\
             & \ddots &  \\
            \0 &  & \hat{\C}_k^T \\
          \end{array}
        \right]  \nonumber \\
& = & \left[
          \begin{array}{ccc}
            \hat{\C}_1 &  & \0 \\
             & \ddots &  \\
            \0 &  & \hat{\C}_k \\
          \end{array}
        \right]
        \left[
          \begin{array}{ccc}
            \hat{\W}_1^\dag &  & \0 \\
             & \ddots &  \\
            \0 &  & \hat{\W}_k^\dag \\
          \end{array}
        \right]
        \left[
          \begin{array}{ccc}
            \hat{\C}_1^T &  & \0 \\
             & \ddots &  \\
            \0 &  & \hat{\C}_k^T \\
          \end{array}
        \right]  \nonumber\\
& = & \left[
          \begin{array}{ccc}
           \hat{\C}_1 \hat{\W}_1^\dag \hat{\C}_1^T &  & \0 \\
             & \ddots &  \\
            \0 &  & \hat{\C}_k \hat{\W}_k^\dag \hat{\C}_k^T\\
          \end{array}
        \right] \label{eq:thm:lower_bound_conventional:1}
  \textrm{.}
\end{eqnarray}
Then it follows from Theorem~\ref{thm:nystrom_error} that
\begin{eqnarray}
\big\| \A - \tilde{\A}^{\textrm{nys}}_{c} \big\|_F^2
& = & \sum_{i=1}^k \big\| \B -  \hat{\C}_i \hat{\W}_i^\dag \hat{\C}_i^T \big\|_F^2 \nonumber \\
& = & \sum_{i=1}^k (p-c_i) (1-\alpha)^2 \Big( 1 + \frac{p+c_i + 2\frac{1-\alpha}{\alpha}}{(c_i+\frac{1-\alpha}{\alpha})^2} \Big)\nonumber \\
& = & (1-\alpha)^2 \sum_{i=1}^k (\hat{p}-\hat{c}_i) \Big( 1 + \frac{\hat{p}+\hat{c}_i}{\hat{c}_i^2} \Big)\nonumber\\
& = & (1-\alpha)^2 \Big(m - c - k + \hat{p}^2 \sum_{i=1}^k \hat{c}_i^{-2} \Big)\textrm{,} \nonumber
\end{eqnarray}
where $\hat{p} = p + \frac{1-\alpha}{\alpha}$ and $\hat{c_i} = c_i + \frac{1-\alpha}{\alpha}$.
Since $\sum_{i=1}^k \hat{c}_i = c + \frac{1-\alpha}{\alpha} k \triangleq \hat{c}$,
the term $\sum_{i=1}^k \hat{c}_i^{-2}$ is minimized when $\hat{c}_1 = \cdots = \hat{c}_k$.
Thus $\sum_{i=1}^k \hat{c}_i^{-2} = k \frac{k^2}{\hat{c}^{2}} = k^3 \hat{c}^{-2}$.
Finally we have that
\begin{eqnarray}
\big\| \A - \tilde{\A}^{\textrm{nys}}_{c} \big\|_F^2
& = & (1-\alpha)^2 \Big(m - c - k + \hat{p}^2 \sum_{i=1}^k \hat{c}_i^{-2} \Big) \nonumber \\
& \geq & (1-\alpha)^2 \Big(m - c - k + \frac{k(m + \frac{1-\alpha}{\alpha}k)^2}{(c + \frac{1-\alpha}{\alpha}k )^2} \Big)\textrm{,} \nonumber
\end{eqnarray}
by which the Frobenius norm bound follows.

Since the matrices $\B -  \hat{\C}_i \hat{\W}_i^\dag \hat{\C}_i^T$ are all SPSD by Theorem~\ref{thm:nystrom_error},
so the matrix $(\A - \tilde{\A}^{\textrm{nys}}_{c})$ is also SPSD.
We have that
\begin{eqnarray}
\big\| \A - \tilde{\A}^{\textrm{nys}}_{c} \big\|_*
& = & \sum_{i=1}^k \big\| \B -  \hat{\C}_i \hat{\W}_i^\dag \hat{\C}_i^T \big\|_* \nonumber \\
& \geq & (1-\alpha) \sum_{i=1}^k (p-c_i)\Big(1 + \frac{1}{c_i + \frac{1-\alpha}{\alpha} } \Big)\nonumber \\
& \geq & (1-\alpha) \, k \, (\frac{m}{k} - \frac{c}{k})\Big(1 + \frac{1}{c/k + \frac{1-\alpha}{\alpha} } \Big) \nonumber \\
& = & (1-\alpha) (m - c)\Big(1 + \frac{k}{c  + \frac{1-\alpha}{\alpha}k } \Big) \textrm{,} \nonumber
\end{eqnarray}
where the former inequality follows from Theorem~\ref{thm:nystrom_error},
and the latter inequality follows by minimizing w.r.t. $c_1, \cdots , c_k$ subjecting to $c_1+ \cdots + c_k = c$.
\end{proof}

\begin{theorem} \label{thm:lower_bound}
There exists an $m{\times}m$ SPSD matrix $\A$ such that
the approximation error incurred by the standard \nystrom method is lower bounded by
\begin{eqnarray}
\frac{\big\| \A - \C \W^\dag \C^T \big\|_F}{\big\| \A - \A_k \big\|_F}
& \geq & \sqrt{ 1 + \frac{ m^2 k - c^3 }{c^2 (m-k)} } \textrm{,} \nonumber \\
\frac{\| \A - \C \W^\dag \C^T\|_2}{\| \A - \A_k\|_2}
& \geq & \frac{m}{c} \textrm{,} \nonumber \\
\frac{\| \A - \C \W^\dag \C^T\|_*}{\| \A - \A_k\|_*}
& \geq & \frac{m - c}{m-k}\Big( 1 + \frac{k}{c  } \Big) \textrm{,} \nonumber
\end{eqnarray}
where $k<m$ is an arbitrary positive integer.
\end{theorem}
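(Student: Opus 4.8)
The plan is to reuse the two adversarial matrices constructed in Appendix~\ref{sec:construction_bad_case} and to obtain the \emph{ratio} lower bounds by dividing the absolute-error lower bounds already proved in Theorem~\ref{thm:nystrom_error} and Theorem~\ref{thm:lower_bound_conventional} by the corresponding residual norms $\|\A - \A_k\|_\xi$ computed in Lemma~\ref{lem:nystrom_residual} and Lemma~\ref{lem:nystrom_residual_new}, and then to let $\alpha \to 1$. The key observation that makes this work is that in each of these quantities the numerator and the denominator both carry a common factor $1-\alpha$, so this factor cancels in the ratio and what is left is a purely dimension-dependent expression that is monotone in $\alpha$ and tends to the claimed bound as $\alpha \to 1$.

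Concretely, for the spectral-norm inequality I would take $\A$ to be the matrix $\B$ of (\ref{eq:construction_bad_nystrom}) with any target rank $1 \le k < m$: Lemma~\ref{lem:nystrom_residual} gives $\|\B - \B_k\|_2 = 1-\alpha$, while Theorem~\ref{thm:nystrom_error} gives $\|\B - \tilde{\B}_c^{\textrm{nys}}\|_2 \ge (1-\alpha)\big(m + \tfrac{1-\alpha}{\alpha}\big)/\big(c + \tfrac{1-\alpha}{\alpha}\big)$, so the error ratio is at least $\big(m + \tfrac{1-\alpha}{\alpha}\big)/\big(c + \tfrac{1-\alpha}{\alpha}\big)$, which increases to $m/c$ as $\alpha \to 1$. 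For the Frobenius- and nuclear-norm inequalities I would take $\A$ to be the block-diagonal matrix (\ref{eq:construction_bad_nystrom_blk}) with $k$ blocks, so that Lemma~\ref{lem:nystrom_residual_new} supplies $\|\A - \A_k\|_F = (1-\alpha)\sqrt{m-k}$ and $\|\A - \A_k\|_* = (1-\alpha)(m-k)$; dividing the bounds of Theorem~\ref{thm:lower_bound_conventional} by these, the $(1-\alpha)$ cancels and the ratios become $\sqrt{\big(m-c-k + k(m+\tfrac{1-\alpha}{\alpha}k)^2/(c+\tfrac{1-\alpha}{\alpha}k)^2\big)/(m-k)}$ and $(m-c)\big(1 + k/(c+\tfrac{1-\alpha}{\alpha}k)\big)/(m-k)$. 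Sending $\alpha \to 1$ sends $\tfrac{1-\alpha}{\alpha} \to 0$, and after the elementary identity $(m-c-k)c^2 + k m^2 = c^2(m-k) + m^2 k - c^3$ these limits are exactly $\sqrt{1 + (m^2 k - c^3)/(c^2(m-k))}$ and $\tfrac{m-c}{m-k}(1 + k/c)$.

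The one place that calls for care is the meaning of the limit. For every fixed $\alpha \in (0,1)$ the error ratio is in fact \emph{strictly} smaller than the stated value (each ratio is strictly decreasing in $\tfrac{1-\alpha}{\alpha}$ because $m > c$), so the numbers in the statement are the suprema of the achievable error ratios, attained only in the limit $\alpha \to 1$; equivalently, for every $\varepsilon > 0$ there is a choice of $\alpha$ close enough to $1$ for which the construction's error ratio exceeds the stated bound minus $\varepsilon$. This is the usual sense in which such lower bounds are understood, and it is all that the subsequent discussion uses. Beyond this bookkeeping there is no real obstacle: the optimisation over how the $c$ sampled columns are split among the $k$ blocks (minimising over $c_1,\dots,c_k$ subject to $\sum_i c_i = c$) was already carried out inside Theorem~\ref{thm:lower_bound_conventional}, and in the spectral case the single-block matrix $\B$ is used directly, so no new estimates are needed.
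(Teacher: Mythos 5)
Your proposal is correct and follows exactly the paper's own proof: the spectral bound comes from dividing Theorem~\ref{thm:nystrom_error} by Lemma~\ref{lem:nystrom_residual} for the matrix in (\ref{eq:construction_bad_nystrom}), and the Frobenius and nuclear bounds from dividing Theorem~\ref{thm:lower_bound_conventional} by Lemma~\ref{lem:nystrom_residual_new} for the block-diagonal matrix in (\ref{eq:construction_bad_nystrom_blk}), with $\alpha \to 1$ in both cases. Your explicit remark that the stated constants are suprema attained only in the limit (so the bounds should be read in the ``for every $\varepsilon>0$'' sense) is a point the paper glosses over, and your verification of the identity $c^2(m-c-k)+km^2 = c^2(m-k)+m^2k-c^3$ fills in the one piece of algebra the paper leaves implicit.
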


\begin{proof}
For the spectral norm bound
we use the matrix $\A$ constructed in (\ref{eq:construction_bad_nystrom}) and set $\alpha \rightarrow 1$,
then it follows directly from Lemma~\ref{lem:nystrom_residual} and Theorem~\ref{thm:nystrom_error}.
For the Frobenius norm and nuclear norm bounds,
we use the matrix $\A$ constructed in (\ref{eq:construction_bad_nystrom_blk}) and set $\alpha \rightarrow 1$,
then it follows directly from Lemma~\ref{lem:nystrom_residual_new} and Theorem~\ref{thm:lower_bound_conventional}.
\end{proof}

\subsection{Lower Bounds of the Ensemble \nystrom Method} \label{sec:lower_bound_ensemble}

The ensemble \nystrom method \citep{kumar2009ensemble} is previously defined in (\ref{eq:ensemble_nystrom_approx}).
To derive lower bounds of the ensemble \nystrom method, we assume that the $t$ samples are non-overlapping.
According to the construction of the matrix $\B$ in (\ref{eq:construction_bad_nystrom}),
each of the $t$ non-overlapping samples are equally ``important'',
so without loss of generality we set the $t$ samples with equal weights: $\mu^{(1)}=\cdots=\mu^{(t)}=\frac{1}{t}$.

\begin{lemma}\label{lem:lower_bound_ensemble_additive}
Assume that the ensemble \nystrom method selects a collection of $t$ samples,
each sample ${\C^{(i)}}$ ($i=1, \cdots, t$) contains $c$ columns of $\B$ without overlapping.
For an $m\times m$ matrix $\B$ with all diagonal entries equal to one and off-diagonal entries equal to $\alpha\in [0,1)$,
the approximation error incurred by the ensemble \nystrom method is lower bounded by
\begin{eqnarray}
\big\|\B - \tilde{\B}_{t, c}^{\textrm{ens}} \big\|_F
&\geq & (1-\alpha) \sqrt{ \Big(m - 2c + \frac{c}{t}\Big)
        \Big( 1+ \frac{m+\frac{c}{t}+ \frac{2}{\alpha} - 2}{(c + \frac{1-\alpha}{\alpha})^2}  \Big)} \textrm{,} \nonumber\\
\big\|\B - \tilde{\B}_{t, c}^{\textrm{ens}} \big\|_*
&\geq & (1-\alpha) (m-c) \frac{c + \frac{1}{\alpha}}{c + \frac{1-\alpha}{\alpha}}  \textrm{.} \nonumber
\end{eqnarray}
where $\tilde{\B}_{t, c}^{\textrm{ens}} = \frac{1}{t} \sum_{i=1}^t \C^{(i)} {\W^{(i)}}^\dag {\C^{(i)}}^T$.
Furthermore, the matrix $(\B - \tilde{\B}_{t, c}^{\textrm{ens}})$ is SPSD.
\end{lemma}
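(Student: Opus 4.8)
The plan is to reduce the ensemble residual to an explicit sum of rank‑structured pieces, each of which is governed by the computations already performed in the proof of Theorem~\ref{thm:nystrom_error}. Write $S_i\subset\{1,\dots,m\}$ for the $c$‑element index set of the $i$‑th sample $\C^{(i)}$, so the $S_i$ are pairwise disjoint (in particular $tc\le m$). Since $\B=(1-\alpha)\I_m+\alpha\1_m\1_m^T$ is invariant under simultaneous row/column permutations, I may assume the $S_i$ are consecutive blocks; in any case, reading off the proof of Theorem~\ref{thm:nystrom_error} for an arbitrary $c$‑subset shows that the single‑sample residual is
\[
\M_i \;:=\; \B - \C^{(i)}{\W^{(i)}}^\dag{\C^{(i)}}^T \;=\; (1-\alpha)\PP_i + (\alpha-\eta)\,\1_{\bar S_i}\1_{\bar S_i}^T,
\]
where $\bar S_i$ is the complement of $S_i$, $\PP_i$ is the diagonal $0/1$ projector onto the coordinates in $\bar S_i$, $\1_{\bar S_i}\in\RB^m$ is the indicator vector of $\bar S_i$, and $\eta=c\alpha^2/(1-\alpha+c\alpha)$ as in~(\ref{eq:all_xi_matrix}). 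Because $0\le\eta\le\alpha<1$, both summands are PSD, so each $\M_i$ is PSD and hence $\B-\tilde{\B}_{t,c}^{\textrm{ens}}=\frac1t\sum_{i=1}^t\M_i$ is PSD; this settles the last claim of the lemma.

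For the nuclear‑norm bound I would use that the nuclear norm of a PSD matrix equals its trace: $\|\B-\tilde{\B}_{t,c}^{\textrm{ens}}\|_*=\tr\big(\frac1t\sum_i\M_i\big)=\frac1t\sum_i(1-\eta)|\bar S_i|=(1-\eta)(m-c)$, and then rewrite $1-\eta=(1-\alpha)\frac{c+1/\alpha}{c+(1-\alpha)/\alpha}$ to recover the stated expression (this comes out as an equality, so the stated $\ge$ holds). Note this step uses only $|S_i|=c$ and $\mu^{(i)}=1/t$, not disjointness.

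The Frobenius‑norm bound is the substantive part. I would expand $\|\B-\tilde{\B}_{t,c}^{\textrm{ens}}\|_F^2=\frac1{t^2}\sum_{i,j}\langle\M_i,\M_j\rangle_F$ and evaluate each inner product from the two‑term form of $\M_i$: every cross‑product reduces to a function of $g_{ij}:=|\bar S_i\cap\bar S_j|$, giving $\langle\M_i,\M_j\rangle_F=(1-\alpha)^2 g_{ij}+2(1-\alpha)(\alpha-\eta)g_{ij}+(\alpha-\eta)^2 g_{ij}^2$. Disjointness forces $g_{ij}=m-c$ for $i=j$ and $g_{ij}=m-2c$ for $i\neq j$. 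Using the identities $\alpha-\eta=(1-\alpha)/d$ and $1+\alpha-2\eta=(1-\alpha)(d+2)/d$ with $d:=c+(1-\alpha)/\alpha$, the whole expression collapses to
\[
\big\|\B-\tilde{\B}_{t,c}^{\textrm{ens}}\big\|_F^2 \;=\; (1-\alpha)^2\bigg[\frac{(d+2)\bar g}{d}+\frac{1}{d^2}\cdot\frac{1}{t^2}\sum_{i,j}g_{ij}^2\bigg], \qquad \bar g:=\frac{1}{t^2}\sum_{i,j}g_{ij}=m-2c+\frac{c}{t}.
\]
The final step is a one‑line Jensen/Cauchy–Schwarz bound, $\frac1{t^2}\sum_{i,j}g_{ij}^2\ge\big(\frac1{t^2}\sum_{i,j}g_{ij}\big)^2=\bar g^{\,2}$, which yields $\|\B-\tilde{\B}_{t,c}^{\textrm{ens}}\|_F^2\ge(1-\alpha)^2\bar g\big(1+\frac{\bar g+2d}{d^2}\big)$; substituting $\bar g+2d=m+c/t+2/\alpha-2$ and $d=c+(1-\alpha)/\alpha$ gives exactly the claimed inequality.

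I expect the main obstacle to be bookkeeping in the Frobenius step: carrying out the bilinear expansion of $\langle\M_i,\M_j\rangle_F$ cleanly, verifying the two auxiliary identities for $\alpha-\eta$ and $1+\alpha-2\eta$, and—the one genuinely non‑mechanical move—recognizing that the cross‑term sum is an average of the $g_{ij}^2$ whose lower bound by $\bar g^{\,2}$ is precisely what turns the exact Frobenius identity into the stated bound. A secondary care point is to keep the hypothesis $tc\le m$ (implicit in "non‑overlapping") in view, so that $m-2c\ge0$ and all the counting identities remain valid.
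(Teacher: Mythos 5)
Your proof is correct and reaches exactly the stated bounds; the only difference from the paper's argument is in how the Frobenius norm of the averaged residual is computed. Both proofs start from the same identification of the single-sample residual $\M_i=(1-\alpha)\PP_i+(\alpha-\eta)\1_{\bar S_i}\1_{\bar S_i}^T$ and the same trace argument for the nuclear norm. For the Frobenius norm, the paper permutes the samples into consecutive blocks and counts entries of $\B-\tilde{\B}_{t,c}^{\textrm{ens}}$ region by region (a six-case table), then closes with the algebraic inequality $\frac{c(m-c)}{t}\geq\frac{c}{t}\big(m-2c+\frac{c}{t}\big)$; you instead expand $\big\|\frac1t\sum_i\M_i\big\|_F^2=\frac1{t^2}\sum_{i,j}\langle\M_i,\M_j\rangle_F$ in terms of the overlaps $g_{ij}=|\bar S_i\cap\bar S_j|$ and close with $\frac1{t^2}\sum_{i,j}g_{ij}^2\geq\bar g^{\,2}$. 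These are the same computation in disguise --- the slack in both final inequalities is precisely the variance $c^2(t-1)/t^2$ of the overlap counts --- but your Gram-expansion bookkeeping is cleaner: it dispenses with the consecutive-blocks normalization, makes the exact identity for $\|\B-\tilde{\B}_{t,c}^{\textrm{ens}}\|_F^2$ transparent (one can check it agrees term-for-term with the paper's $m^2-4cm+4c^2+\frac{2cm-3c^2}{t}$), and isolates the single place where disjointness is used ($g_{ii}=m-c$, $g_{ij}=m-2c$ for $i\neq j$). Your side remark that the nuclear-norm/trace identity needs only $|S_i|=c$ and equal weights, not disjointness, is also accurate.
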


\begin{proof}
We use the matrix $\B$ constructed in (\ref{eq:construction_bad_nystrom}).
It is easy to check that ${\W^{(1)}} = \cdots = {\W^{(t)}}$,
so we use the notation ${\W}$ instead.
We assume that the samples contain the firs $tc$ columns of $\B$ and each sample contains neighboring columns,
that is,
$$
\B \; = \; \big[\C^{(1)},\cdots, \C^{(t)}, \; \B_{(tc+1) : m} \big].
$$

If a sample ${\C}$ contains the first $c$ columns of $\B$,
then
\begin{small}
\[
\C \W^\dag {\C}^T
\;=\; \left[
    \begin{array}{cc}
      \W & \B_{2 1}^T \\
      \B_{2 1} & \B_{2 1} \W^\dag \B_{2 1}^T \\
    \end{array}
  \right] \quad
  \textrm{ and }\quad
\B - \C \W^\dag {\C}^T
\;=\; \left[
    \begin{array}{cc}
      \0 & \0 \\
      \0 & \B_{2 2} - \B_{2 1} \W^\dag \B_{2 1}^T \\
    \end{array}
  \right]
  \textrm{;}
\]
\end{small}
otherwise, after permuting the rows and columns of $\B - \C \W^\dag {\C}^T$,
we get the same result:
\begin{small}
\[
\Pii \big( \B - \C \W^\dag {\C}^T \big)  \Pii^T
\;=\; \B - \Pii \big( \C \W^\dag {\C}^T \big) \Pii^T
\;=\; \left[
    \begin{array}{cc}
      \0 & \0 \\
      \0 & \B_{2 2} - \B_{2 1} \W^\dag \B_{2 1}^T \\
    \end{array}
  \right]
  \textrm{,}
\]
\end{small}
where $\Pii$ is a permutation matrix.
As was shown in Equation~(\ref{eq:all_xi_matrix}),
$\B_{2 1} \W^\dag \B_{2 1}^T$ is an $(m{-}c) {\times} (m{-}c)$ matrix with all entries equal to
\[
\eta\;=\; \frac{c\alpha^2}{1-\alpha + c\alpha} \textrm{.}
\]

\begin{figure*}
\subfigtopskip = 0pt
\begin{center}
\centering
\includegraphics[width=70mm]{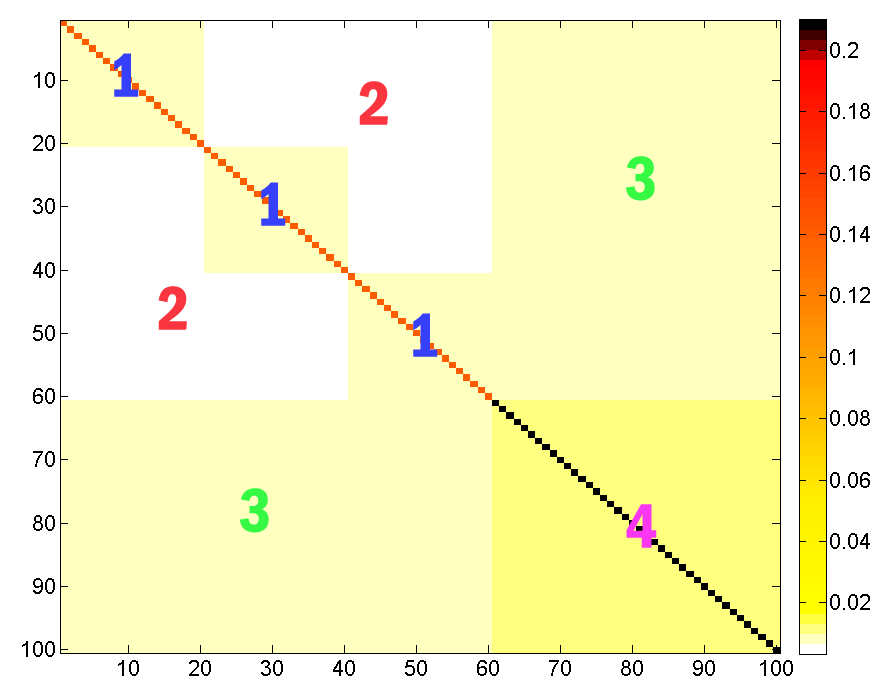}
\end{center}
   \caption{An illustration of the matrix $\B - \B_{t, c}^{\textrm{ens}}$ for the ensemble \nystrom method
            where $\B$ is defined in (\ref{eq:construction_bad_nystrom}).
            Here we set $m=100$, $c=20$, $\alpha=0.8$, and $t=3$.
            For the ensemble \nystrom method without overlapping, the matrix $\B - \B_{t, c}^{\textrm{ens}}$
            can always be partitioned into four regions as annotated.}
\label{fig:ensemble_resudial}
\end{figure*}

Based on the properties of the matrix $\B - \C^{(i)} {\W^{(i)}}^\dag {\C^{(i)}}^T$,
we study the values of the entries of $\B - \tilde{\B}_{t, c}^{\textrm{ens}}$.
We can express it as
\begin{equation}\label{eq:lem:lower_bound_ensemble_additive:expand}
\B - \tilde{\B}_{t, c}^{\textrm{ens}}
= \B - \frac{1}{t} \sum_{i=1}^t \C^{(i)} {\W^{(i)}}^\dag {\C^{(i)}}^T
= \frac{1}{t} \sum_{i=1}^t \Big( \B - \C^{(i)} {\W}^\dag {\C^{(i)}}^T\Big) ,
\end{equation}
and then a discreet examination reveals that $\B - \tilde{\B}_{t, c}^{\textrm{ens}}$
can be partitioned into four kinds of regions as illustrated in Figure~\ref{fig:ensemble_resudial}.
We annotate the regions in the figure and summarize the values of entries in each region in the table below.
(Region~1 and 4 are further partitioned into diagonal entries and off-diagonal entries.)

\begin{footnotesize}
\begin{tabular}{c| cccccc}
  \hline
  Region    &  1 (diag) &  1 (off-diag) &       2       &       3       &  4 (diag) &   4 (off-diag)    \\
  \hline
  \#Entries&   $tc$    &   $tc^2-tc$   & $(tc)^2-tc^2$ &   $2tc(m-tc)$ &  $m-tc$   & $(m-tc)^2-(m-tc)$ \\
  Value     &$\frac{t-1}{t}(1-\eta)$ &$\frac{t-1}{t}(\alpha-\eta)$&$\frac{t-2}{t}(\alpha-\eta)$&$\frac{t-1}{t}(\alpha-\eta)$& $1-\eta$ & $\alpha-\eta$ \\
  \hline
\end{tabular}
\end{footnotesize}


Now we do  summation over the entries of $\B - \tilde{\B}_{t, c}^{\textrm{ens}}$ to compute its squared Frobenius norm:
\begin{small}
\begin{eqnarray}
\big\|\B - \tilde{\B}_{t, c}^{\textrm{ens}} \big\|^2_F
&=& tc \Big[ \frac{t-1}{t}(1-\eta) \Big]^2 
    + \cdots + \big[(m-tc)^2-(m-tc)\big](\alpha-\eta)^2  \nonumber \\
&=& (1-\alpha) (1+\alpha-2\eta) (m - 2c + \frac{c}{t} ) +
    (\alpha - \eta)^2 \Big( 4c^2 - 4cm + m^2 + \frac{2cm - 3c^2}{t} \Big) \nonumber \\
&=& (1-\alpha)^2 \Big(m - 2c + \frac{c}{t}\Big)
    + \frac{(1-\alpha)^2}{(c + \frac{1-\alpha}{\alpha})^2}
    \Big[ (m-2c+\frac{c}{t})\big(\frac{2}{\alpha} - 2 + m\big) + \frac{c(m-c)}{t}\Big]  \nonumber \\
&\geq & (1-\alpha)^2 \Big(m - 2c + \frac{c}{t}\Big)
        \Big( 1+ \frac{m+ \frac{c}{t} + \frac{2}{\alpha} - 2}{(c + \frac{1-\alpha}{\alpha})^2}  \Big) \textrm{,} \nonumber
\end{eqnarray}
\end{small}
where the last inequality follows from $\frac{c(m-c)}{t}
= \frac{c}{t}\Big( (m-2c+\frac{c}{t}) + (c - \frac{c}{t}) \Big)
\geq \frac{c}{t} \Big(m-2c +\frac{c}{t}\Big)$.

Furthermore, since the matrices $\B - \C^{(i)} {\W}^\dag {\C^{(i)}}^T$ are all SPSD by Theorem~\ref{thm:nystrom_error},
so their sum is also SPSD.
Then the SPSD property of $(\B - \tilde{\B}_{t, c}^{\textrm{ens}})$ follows from (\ref{eq:lem:lower_bound_ensemble_additive:expand}).
Therefore, the nuclear norm of $(\B - \tilde{\B}_{t, c}^{\textrm{ens}})$ equals to the matrix trace, that is,
\begin{eqnarray}
\big\|\B - \tilde{\B}_{t, c}^{\textrm{ens}} \big\|_*
&=& \tr\big( \B - \tilde{\B}_{t, c}^{\textrm{ens}} \big) \nonumber \\
&=& t c \cdot \frac{t-1}{t} (1-\eta) + (m-t c) \cdot (1-\eta) \nonumber \\
&=& (1-\alpha) (m-c) \frac{c + \frac{1}{\alpha}}{c + \frac{1-\alpha}{\alpha}} \textrm{,} \nonumber
\end{eqnarray}
which proves the nuclear norm bound in the lemma.
\end{proof}

\begin{theorem} \label{thm:lower_bound_ensemble_additive}
Assume that the ensemble \nystrom method selects a collection of $t$ samples,
each sample ${\C^{(i)}}$ ($i=1, \cdots, t$) contains $c$ columns of $\A$ without overlapping.
For a the matrix $\A$ defined in (\ref{eq:construction_bad_nystrom_blk}),
the approximation error incurred by the ensemble \nystrom method is lower bounded by
\begin{eqnarray}
\big\|\A - \tilde{\A}_{t, c}^{\textrm{ens}} \big\|_F
&\geq & (1-\alpha) \sqrt{ \Big(m-2c + \frac{c}{t} -k\Big)  +
        k \bigg( \frac{ m - c + \frac{c}{t} + k\frac{1-\alpha}{\alpha} }{ c + k\frac{1-\alpha}{\alpha} } \bigg)^2} \textrm{,} \nonumber\\
\big\|\A - \tilde{\A}_{t, c}^{\textrm{ens}} \big\|_*
&\geq & (1-\alpha) (m-c) \frac{c + \frac{1}{\alpha}k}{c + \frac{1-\alpha}{\alpha}k}\textrm{,} \nonumber
\end{eqnarray}
where $\tilde{\A}_{t, c}^{\textrm{ens}} = \frac{1}{t} \sum_{i=1}^t \C^{(i)} {\W^{(i)}}^\dag {\C^{(i)}}^T$.
\end{theorem}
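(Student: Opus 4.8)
The plan is to reproduce the argument in the proof of Theorem~\ref{thm:lower_bound_conventional}, but with Lemma~\ref{lem:lower_bound_ensemble_additive} playing the role that Theorem~\ref{thm:nystrom_error} played there. The first step is to exploit block-diagonality: since $\A=\blkdiag(\B,\dots,\B)$ and each sample $\C^{(i)}$ consists of columns drawn from the diagonal blocks, every term $\C^{(i)}{\W^{(i)}}^\dag{\C^{(i)}}^T$ is itself block diagonal, and therefore so is $\tilde{\A}_{t,c}^{\textrm{ens}}=\blkdiag(\tilde{\B}_1,\dots,\tilde{\B}_k)$ with $\tilde{\B}_j=\frac1t\sum_{i=1}^t\hat{\C}_j^{(i)}\hat{\W}_j^{(i)\dag}\hat{\C}_j^{(i)T}$, where $\hat{\C}_j^{(i)}$ collects the $c_{ij}$ columns that sample $i$ takes from block $j$; here $\sum_j c_{ij}=c$ for each $i$ and, by the non-overlap assumption, $\sum_i c_{ij}\le p=m/k$ for each $j$. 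Consequently $\|\A-\tilde{\A}_{t,c}^{\textrm{ens}}\|_F^2=\sum_{j=1}^k\|\B-\tilde{\B}_j\|_F^2$, and once we know the summands are SPSD, $\|\A-\tilde{\A}_{t,c}^{\textrm{ens}}\|_*=\sum_{j=1}^k\tr(\B-\tilde{\B}_j)$.

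The second step analyzes a single block. By the computation in the proof of Theorem~\ref{thm:nystrom_error}, $\B-\hat{\C}_j^{(i)}\hat{\W}_j^{(i)\dag}\hat{\C}_j^{(i)T}$ is SPSD, vanishes on the rows and columns selected by sample $i$, and equals $(1-\alpha)\I+(\alpha-\eta_{ij})\1\1^T$ on the complementary index set, with $\eta_{ij}=c_{ij}\alpha^2/(1-\alpha+c_{ij}\alpha)$. Averaging these with weight $1/t$ and carrying out the region-by-region entry count of the proof of Lemma~\ref{lem:lower_bound_ensemble_additive} (Figure~\ref{fig:ensemble_resudial})---now keeping the $\eta_{ij}$ distinct---expresses $\|\B-\tilde{\B}_j\|_F^2$ and $\tr(\B-\tilde{\B}_j)$ as explicit functions of $p$, $\alpha$ and the counts $c_{1j},\dots,c_{tj}$; in the balanced case $c_{1j}=\dots=c_{tj}=c/k$ these reduce exactly to the single-matrix bounds of Lemma~\ref{lem:lower_bound_ensemble_additive} with parameters $(m/k,\,c/k)$. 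Moreover, each $\B-\tilde{\B}_j$ is a $1/t$-average of SPSD matrices, hence SPSD, which justifies the nuclear-norm-equals-trace identity used above and yields the final SPSD assertion.

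The third and final step minimizes $\sum_j\|\B-\tilde{\B}_j\|_F^2$ (and, for the nuclear bound, $\sum_j\tr(\B-\tilde{\B}_j)$) over all admissible count matrices $(c_{ij})$. By the symmetry of $\B$ only the counts matter, and one checks that the per-block objective is, for a fixed block total $\sum_i c_{ij}$, minimized when the $t$ per-sample counts are equal, and that the block totals are then in turn best taken equal as well---a convexity/majorization statement that is the exact analogue of minimizing $\sum_i\hat c_i^{-2}$ at equal arguments in the proof of Theorem~\ref{thm:lower_bound_conventional}. Substituting $c_{ij}=c/k$ into the per-block formulas, summing the $k$ identical contributions, and simplifying with $m=kp$ and $\beta=k(1-\alpha)/\alpha$ produces the two displayed inequalities (as a sanity check, $k=1$ recovers Lemma~\ref{lem:lower_bound_ensemble_additive} and $t=1$ recovers the standard-Nyström bound of Theorem~\ref{thm:lower_bound_conventional}). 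The main obstacle is precisely this optimization: Lemma~\ref{lem:lower_bound_ensemble_additive} is stated only for equal-size samples, so one must either redo its region-by-region entry count allowing unequal $\eta_{ij}$, or prove a smoothing lemma that equalizing the per-block, per-sample column counts never decreases the residual---verifying the needed convexity/majorization is where the real effort lies, the rest being routine algebra.
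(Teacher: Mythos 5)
Your proposal follows essentially the same route as the paper's proof: exploit block-diagonality to write $\|\A-\tilde{\A}_{t,c}^{\textrm{ens}}\|_F^2=\sum_{j=1}^k\|\B-\frac{1}{t}\sum_i\hat{\C}_j^{(i)}\hat{\W}_j^\dag(\hat{\C}_j^{(i)})^T\|_F^2$, apply the single-block ensemble bound of Lemma~\ref{lem:lower_bound_ensemble_additive} to each block, and then minimize over the allocation of columns to blocks (equalized at $c_j=c/k$), with the SPSD property of each block residual giving the nuclear-norm-equals-trace step. The subtlety you flag---that Lemma~\ref{lem:lower_bound_ensemble_additive} assumes each sample draws the same number of columns from a given block, so unequal per-sample counts $c_{ij}$ would require redoing the entry count or a smoothing argument---is real, but the paper's own proof silently makes the same equal-count assumption, so your treatment is if anything more careful on this point.
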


\begin{proof}
According to the construction of $\A$ in (\ref{eq:construction_bad_nystrom_blk}),
the $i$-th sample $\C^{(i)}$ is also block diagonal.
We denote it by $\C^{(i)} = \blkdiag\big(\hat{\C}^{(i)}_1 , \cdots , \hat{\C}^{(i)}_k \big)$.
Akin to (\ref{eq:thm:lower_bound_conventional:1}),
we have
\[
\tilde{\A}_{t, c}^{\textrm{ens}}
\; = \; \left[
          \begin{array}{ccc}
            \frac{1}{t} \sum_{i=1}^t \hat{\C}^{(i)}_1 \hat{\W}_1^\dag \big(\hat{\C}^{(i)}_1\big)^T &  & \0 \\
             & \ddots &  \\
            \0 &  & \frac{1}{t} \sum_{i=1}^t \hat{\C}^{(i)}_k \hat{\W}_k^\dag \big(\hat{\C}^{(i)}_k\big)^T\\
          \end{array}
        \right] \textrm{.}
\]
Thus the approximation error of the ensemble \nystrom method is
\begin{eqnarray}
\Big\|\A - \tilde{\A}_{t, c}^{\textrm{ens}} \Big\|_F^2
& = & \sum_{j=1}^k \Big\|\B - \frac{1}{t} \sum_{i=1}^t \hat{\C}_j^{(i)} \hat{\W}_j^\dag \big(\hat{\C}^{(i)}_j \big)^T \Big\|_F^2  \nonumber\\
&\geq & (1-\alpha)^2 \sum_{j=1}^k  \Big(p - 2{c_j} + \frac{{c_j}}{t}\Big)
        \Big( 1+ \frac{p+\frac{{c_j}}{t}+ \frac{2}{\alpha} - 2}{({c_j} + \frac{1-\alpha}{\alpha})^2}  \Big)  \nonumber \\
& = & (1-\alpha)^2 \bigg[ \Big(m-2c + \frac{c}{t}\Big)  + \sum_{j=1}^k \Big( p - 2 c_j + \frac{c_j}{t}  \Big)
               \frac{p + \frac{c_j}{t} + \frac{2(1-\alpha)}{\alpha}}{(c_j + \frac{1-\alpha}{\alpha})^2} \bigg]\textrm{,} \nonumber
\end{eqnarray}
where the inequality follows from Lemma~\ref{lem:lower_bound_ensemble_additive},
and the last equality follows from $\sum_{j=1}^k c_j = c$ and $k p = m$.
The summation in the last equality equals to
\begin{align}
&\!\!\!\!\!\!\!\!\!\!\!\!\!\!\!\!\!\!\!\!
    \sum_{j=1}^k \bigg[ \Big(p + \frac{c_j}{t} + \frac{2(1-\alpha)}{\alpha}\Big) - 2\Big( c_j + \frac{1-\alpha}{\alpha}\Big) \bigg]
               \frac{p + \frac{c_j}{t} + \frac{2(1-\alpha)}{\alpha}}{(c_j + \frac{1-\alpha}{\alpha})^2} \nonumber \\
&\qquad = \;  -k + \sum_{j=1}^k \bigg( \frac{p + \frac{c_j}{t} + \frac{2(1-\alpha)}{\alpha}}{c_j + \frac{1-\alpha}{\alpha}} -1 \bigg)^2 \nonumber \\
&\qquad \geq \; -k + k \bigg( \frac{ m - c + \frac{c}{t} + k\frac{1-\alpha}{\alpha} }{ c + k\frac{1-\alpha}{\alpha} } \bigg)^2 \textrm{.}\nonumber
\end{align}
Here the inequality holds because the function is minimized when $c_1 = \cdots = c_k = c/k$.
Finally we have that
\[
\Big\|\A - \tilde{\A}_{t, c}^{\textrm{ens}} \Big\|_F^2
\; \geq \;
(1-\alpha)^2 \bigg[ \Big(m-2c + \frac{c}{t} -k\Big)  +
        k \bigg( \frac{ m - c + \frac{c}{t} + k\frac{1-\alpha}{\alpha} }{ c + k\frac{1-\alpha}{\alpha} } \bigg)^2  \bigg]\textrm{,}
\]
which proves the Frobenius norm bound in the theorem.

Furthermore, since the matrix $\B - \frac{1}{t} \sum_{i=1}^t \hat{\C}_j^{(i)} \hat{\W}_j^\dag \big(\hat{\C}^{(i)}_j \big)^T$
is SPSD by Lemma~\ref{lem:lower_bound_ensemble_additive},
so the block diagonal matrix $(\A - \tilde{\A}_{t, c}^{\textrm{ens}})$ is also SPSD.
Thus we have
\begin{eqnarray}
\big\|\A - \tilde{\A}_{t, c}^{\textrm{ens}} \big\|_*
&=& (1-\alpha) \sum_{i=1} (p - c_i) \frac{c_i + \frac{1}{\alpha}}{c_i + \frac{1-\alpha}{\alpha}}
\;\geq \; (1-\alpha) (m-c) \, \Big(1+ \frac{k}{c + \frac{1-\alpha}{\alpha}k} \Big) \textrm{,} \nonumber
\end{eqnarray}
which proves the nuclear norm bound in the theorem.
\end{proof}

\begin{theorem} \label{thm:lower_bound_ensemble}
Assume that the ensemble \nystrom method selects a collection of $t$ samples,
each sample ${\C^{(i)}}$ ($i=1, \cdots, t$) contains $c$ columns of $\A$ without overlapping. Then there exists an $m{\times}m$
SPSD matrix $\A$ such that
the relative-error ratio of the ensemble \nystrom method is lower bounded by
\begin{eqnarray}
\frac{\| \A - \tilde{\A}_{t, c}^{\textrm{ens}} \|_F}{\| \A - \A_k \|_F}
& \geq & \sqrt{\frac{m - 2c + {c}/{t} - k}{m-k}
        \Big( 1+ \frac{k(m - 2c + c/t)}{c^2}  \Big)} \textrm{,} \nonumber \\
\frac{\| \A - \tilde{\A}_{t, c}^{\textrm{ens}} \|_*}{\| \A - \A_k \|_*}
& \geq & \frac{m-c}{m-k} \: \Big( 1 + \frac{k}{c}\Big) \textrm{,} \nonumber
\end{eqnarray}
where $\tilde{\A}_{t, c}^{\textrm{ens}} = \frac{1}{t} \sum_{i=1}^t \C^{(i)} {\W^{(i)}}^\dag {\C^{(i)}}^T$.
\end{theorem}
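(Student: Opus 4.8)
The plan is to obtain the relative-error bounds as an immediate consequence of the absolute error bound already proved in Theorem~\ref{thm:lower_bound_ensemble_additive}, divided by the exact residual norms of the same adversarial matrix recorded in Lemma~\ref{lem:nystrom_residual_new}. Concretely, I would take $\A$ to be the block-diagonal SPSD matrix constructed in (\ref{eq:construction_bad_nystrom_blk}), with $k$ blocks each of size $\frac{m}{k}\times\frac{m}{k}$, unit diagonal, and off-diagonal entries $\alpha\in[0,1)$, and assume (as in the statement) that the $t$ ensemble samples are non-overlapping with equal weights. For this $\A$, Theorem~\ref{thm:lower_bound_ensemble_additive} already lower bounds $\|\A-\tilde{\A}_{t,c}^{\textrm{ens}}\|_F$ and $\|\A-\tilde{\A}_{t,c}^{\textrm{ens}}\|_*$, while Lemma~\ref{lem:nystrom_residual_new} gives $\|\A-\A_k\|_F=(1-\alpha)\sqrt{m-k}$ and $\|\A-\A_k\|_*=(1-\alpha)(m-k)$. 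Forming the quotients, the common factor $(1-\alpha)$ cancels, so each error ratio is bounded below by an explicit rational function of $\alpha$ (and of $m,c,t,k$).

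The next step is to pass to the limit $\alpha\to1$, exactly as in the proof of Theorem~\ref{thm:lower_bound}. In that limit $k\frac{1-\alpha}{\alpha}\to0$, so the Frobenius ratio squared tends to
\[
\frac{1}{m-k}\Big[\Big(m-2c+\tfrac{c}{t}-k\Big)+\frac{k}{c^{2}}\Big(m-c+\tfrac{c}{t}\Big)^{2}\Big].
\]
The one nonroutine point is then a short algebraic weakening to reach the cleaner form stated in the theorem: writing $a:=m-2c+\tfrac{c}{t}$, we have $\big(m-c+\tfrac{c}{t}\big)^{2}=(a+c)^{2}\ge a(a-k)$ because $(a+c)^{2}-a(a-k)=2ac+c^{2}+ak\ge0$; here one uses that the $t$ samples are non-overlapping, so $tc\le m$ and hence $a\ge\tfrac{c}{t}>0$. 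Substituting this inequality turns the bracket above into $(a-k)\big(1+\tfrac{ka}{c^{2}}\big)$, which is precisely the claimed Frobenius bound.

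For the nuclear norm no algebra beyond the limit is needed: as $\alpha\to1$ the bound of Theorem~\ref{thm:lower_bound_ensemble_additive} becomes $(1-\alpha)(m-c)\frac{c+k}{c}$, and dividing by $(1-\alpha)(m-k)$ yields $\frac{m-c}{m-k}\big(1+\tfrac{k}{c}\big)$. Regarding rigor of the ``$\alpha\to1$'' step, I would note (as is implicit in the proof of Theorem~\ref{thm:lower_bound}) that the ratio bounds are monotone in $\alpha$ with the displayed expressions as their suprema over $\alpha\in[0,1)$, so the stated inequalities are to be read in the same limiting sense; in the Frobenius case the extra slack from the weakening $(a+c)^2\ge a(a-k)$ even makes the displayed bound hold for all $\alpha$ sufficiently close to $1$. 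I expect no genuine obstacle here: the substantive work is already contained in Theorem~\ref{thm:lower_bound_ensemble_additive}, and what remains is the $(1-\alpha)$-cancellation, the $\alpha\to1$ limit, and the single elementary inequality $2ac+c^{2}+ak\ge0$.
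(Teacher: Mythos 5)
Your proposal is correct and follows essentially the same route as the paper, whose entire proof is ``follows directly from Theorem~\ref{thm:lower_bound_ensemble_additive} and Lemma~\ref{lem:nystrom_residual_new} by setting $\alpha\to1$.'' You in fact supply the one detail the paper silently skips---the weakening $(a+c)^2\ge a(a-k)$ with $a=m-2c+c/t\ge0$ needed to pass from the direct $\alpha\to1$ limit to the stated form of the Frobenius bound---and your reading of the $\alpha\to1$ step as a supremum matches the paper's own convention in Theorem~\ref{thm:lower_bound}.
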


\begin{proof}
The theorem follows directly from Theorem~\ref{thm:lower_bound_ensemble_additive} and Lemma~\ref{lem:nystrom_residual_new}
by setting $\alpha \to 1$.
\end{proof}

\bibliography{cur}

\end{document}